\theoremstyle{plain}
\newtheorem{theorem}{Theorem}
\newtheorem{lemma}{Lemma}
\newtheorem{corollary}{Corollary}
\theoremstyle{definition}
\newtheorem{definition}{Definition}
\theoremstyle{remark}
\newtheorem{remark}{Remark}
\numberwithin{theorem}{section}
\numberwithin{proposition}{section}
\numberwithin{lemma}{section}
\numberwithin{corollary}{section}
\numberwithin{definition}{section}
\numberwithin{remark}{section}
\newenvironment{proofsketch}{\par\textit{Proof Sketch}.\quad}{}
\def\eqref#1{equation~\ref{#1}}
\def\floor#1{\lfloor #1 \rfloor}
\def\1{\bm{1}}
\def\rvh{{\mathbf{h}}}
\def\rvx{{\mathbf{x}}}
\def\rvz{{\mathbf{z}}}
\def\vb{{\bm{b}}}
\def\vc{{\bm{c}}}
\def\vv{{\bm{v}}}
\def\vw{{\bm{w}}}
\def\vz{{\bm{z}}}
\def\mH{{\bm{H}}}
\def\mI{{\bm{I}}}
\def\mO{{\bm{O}}}
\def\mP{{\bm{P}}}
\def\mQ{{\bm{Q}}}
\def\mV{{\bm{V}}}
\def\mW{{\bm{W}}}
\def\mX{{\bm{X}}}
\DeclareMathAlphabet{\mathsfit}{\encodingdefault}{\sfdefault}{m}{sl}
\SetMathAlphabet{\mathsfit}{bold}{\encodingdefault}{\sfdefault}{bx}{n}
\def\gB{{\mathcal{B}}}
\def\gF{{\mathcal{F}}}
\def\gS{{\mathcal{S}}}
\def\sN{{\mathbb{N}}}
\def\sR{{\mathbb{R}}}
\def\sS{{\mathbb{S}}}
\DeclareMathOperator{\sign}{sign}
\newcommand{\Eqn}{Eqn.}
\newcommand{\eps}[0]{\epsilon}
\newcommand{\mat}[1]{\begin{bmatrix}#1\end{bmatrix}}
\icmltitlerunning{Parallel Layer Normalization for Universal Approximation}
\begin{document}
	
	\twocolumn[
	\icmltitle{Parallel Layer Normalization for Universal Approximation}
	
	
	
	\icmlsetsymbol{equal}{*}
	
	\begin{icmlauthorlist}
		\icmlauthor{Yunhao Ni}{inst1}
        \icmlauthor{Yuxin Guo}{inst1}
		\icmlauthor{Yuhe Liu}{inst1}
		\icmlauthor{Wenxin Sun}{inst1}
        \icmlauthor{Jie Luo}{inst1}
		\icmlauthor{Wenjun Wu}{inst1,inst2}
		\icmlauthor{Lei Huang~$^\textrm{\Letter}$}{inst1,inst2}
	\end{icmlauthorlist}
	
	\icmlaffiliation{inst1}{State Key Laboratory of Complex and Critical Software Environment (SKLCCSE),  Beihang University, Beijing, China}
	\icmlaffiliation{inst2}{Hangzhou International Innovation Institute, Beihang University, Hangzhou, China}
	
	\icmlcorrespondingauthor{Lei Huang}{huangleiAI@buaa.edu.cn}
	
	\icmlkeywords{Machine Learning, ICML, Layer Normalization, Universal Approximation}
	\vskip 0.3in
	]
	


\printAffiliationsAndNotice{}  
	
	\begin{abstract}
This paper studies the approximation capabilities of neural networks that combine layer normalization (LN) with linear layers. We prove that networks consisting of two linear layers with parallel layer normalizations (PLNs) inserted between them (referred to as PLN-Nets) achieve universal approximation, whereas architectures that use only standard LN exhibit strictly limited expressive power. 
We further analyze approximation rates of shallow and deep PLN-Nets under the $L^\infty$ norm as well as in Sobolev norms. Our analysis extends beyond LN to RMSNorm, and from standard MLPs to position-wise feed-forward networks, the core building blocks used in RNNs and Transformers. 
Finally, we provide empirical experiments to explore other possible potentials of PLN-Nets.

	\end{abstract}

    \section{Introduction}

Universal approximation theorems (UATs) constitute a cornerstone of the theoretical understanding of deep learning. They provide formal justification for the expressive power of deep neural networks (DNNs) and partially explain their widespread empirical success. Over the past decade, normalization techniques \cite{2016_LN_Ba,2015_ICML_Ioffe} have become ubiquitous components in modern DNN architectures, originally introduced to stabilize and accelerate training. Such techniques are now standard in convolutional neural networks (CNNs) \cite{2015_CVPR_He} and Transformers \cite{vaswani2017attention}. However, to the best of our knowledge, existing UAT analyses do not explicitly account for the presence of normalization layers. This omission creates a gap between classical approximation theory and the architectures used in practice.

To bridge this gap, we study the fundamental approximation properties of networks that incorporate normalization together with linear transformations. In particular, we focus on Layer Normalization (LN) \cite{2016_LN_Ba}, whose intrinsic nonlinearity was recently highlighted in \cite{ni2024nonlinearity}. \citeauthor{ni2024nonlinearity} introduced a network formed by the layer-wise composition of linear transformations and LN, referred to as an LN-Net (see Figure \ref{fig:LN-Net}), and established its universal classification capacity when the depth is sufficiently large. In contrast, our perspective shifts from deep classification networks to wide networks for function approximation.

       \begin{figure}[t]
    \vspace{-0.1in}
    \centering
    \begin{subfigure}[t]{0.20\textwidth}
        \centering
        \includegraphics[height=23ex]{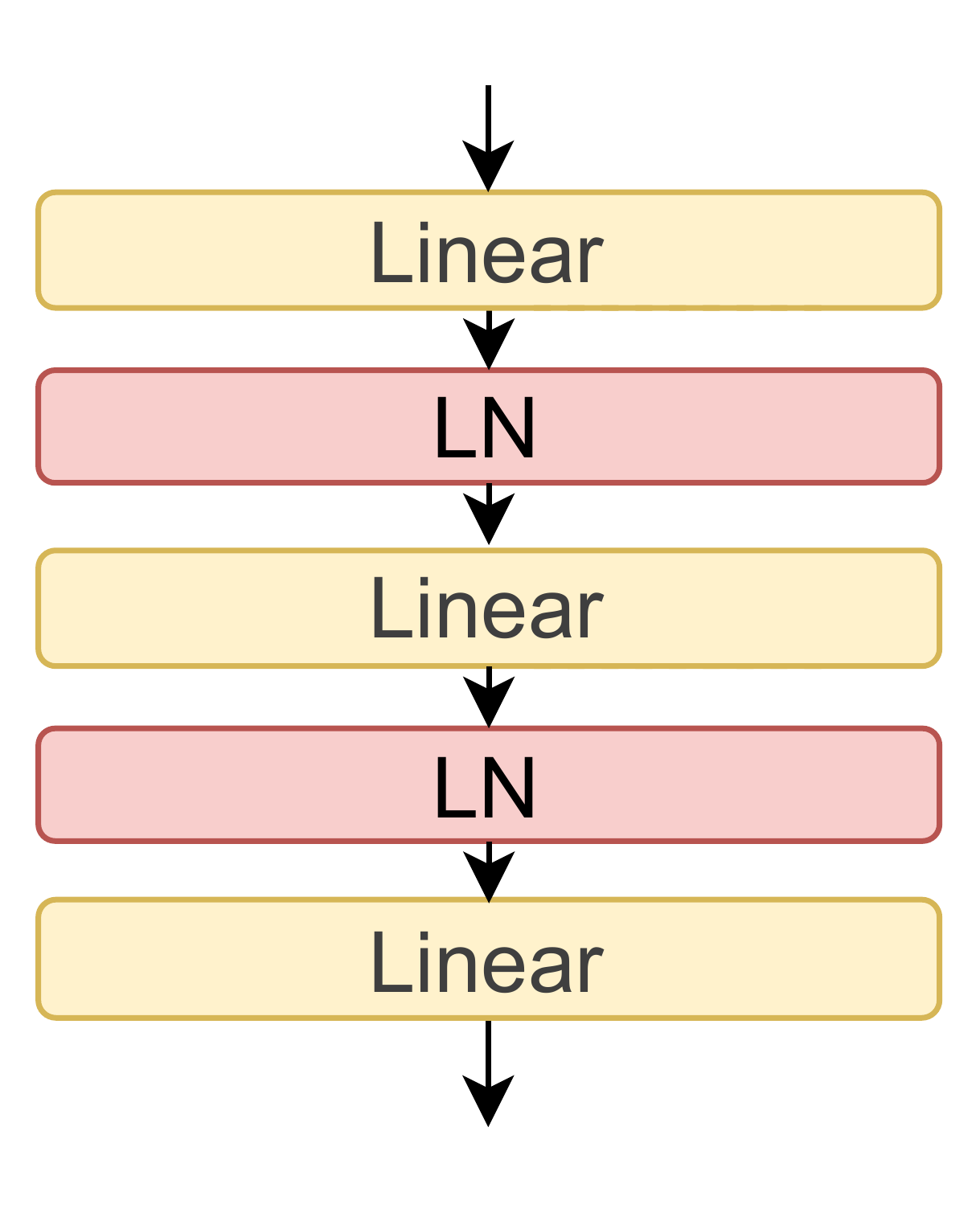}
        \caption{Sequential LNs}
        \label{fig:LN-Net}
    \end{subfigure}
    \hspace{0.05in}	
    \begin{subfigure}[t]{0.20\textwidth}
        \centering
        \includegraphics[height=23ex]{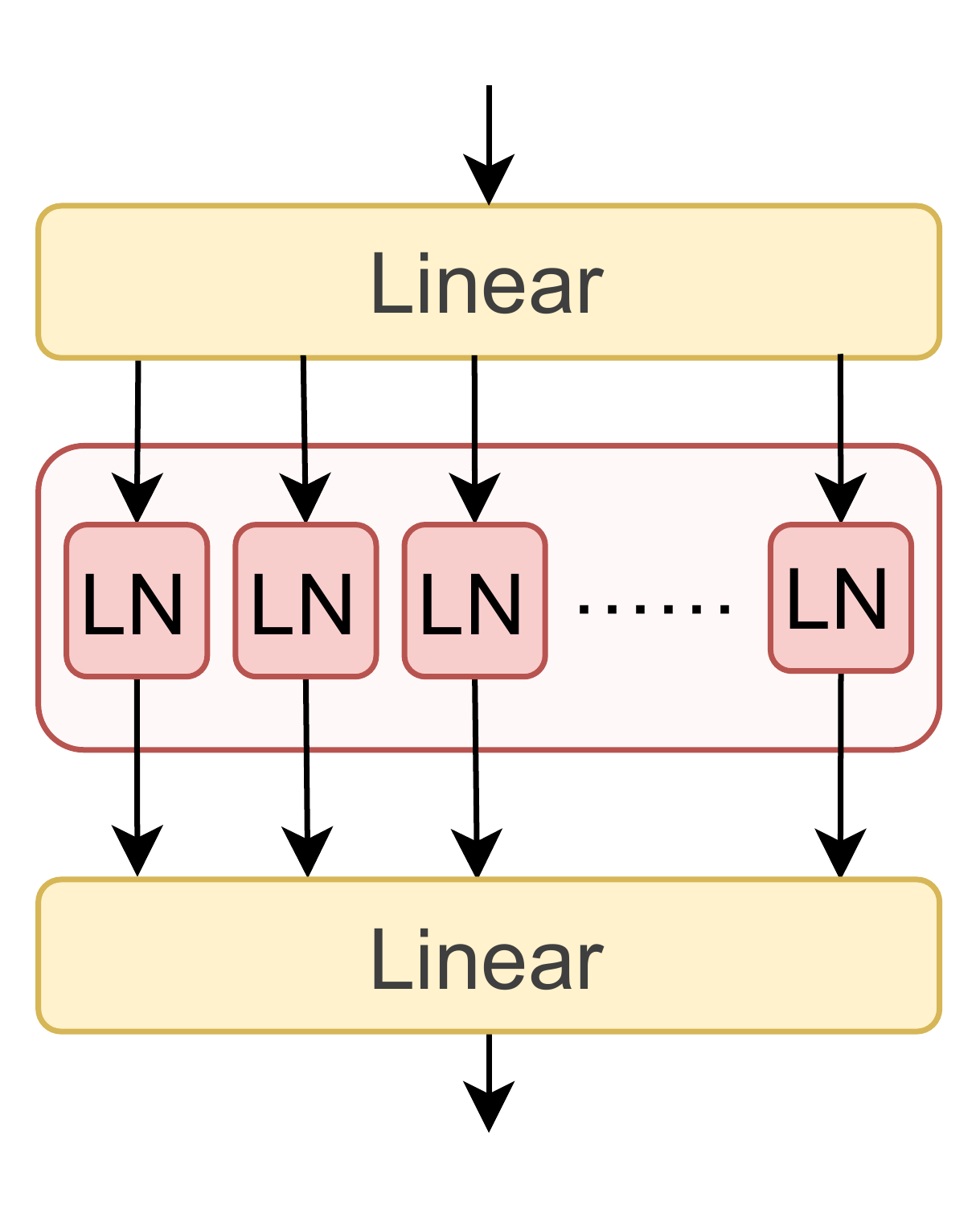}
        \caption{Parallel LNs.}
        \label{fig:PLN-Net}
    \end{subfigure}
    \hspace{0.05in}	
    \vspace{-0.1in}
    \caption{(a) shows the substructures in LN-Nets, and (b) shows the substructures in PLN-Nets.}
    \label{fig:LN-Net vs PLN-Net}
    \vspace{-0.24in}
    \end{figure}
    
    We show that shallow LN-Nets possess limited representation capacity (Theorem \ref{thm:LN-Nets weak}). However, when multiple LNs are applied in parallel between two linear layers (as illustrated in Figure \ref{fig:PLN-Net}), the resulting architecture exhibits significantly stronger expressive power. We formalize this structure as Parallel Layer Normalization (PLN) (Definition \ref{def:PLN}) and refer to the corresponding models as PLN-Nets. We prove that shallow PLN-Nets can universally approximate $L$-Lipschitz continuous univariate functions under the $L^\infty$ norm (Theorem \ref{theorem:Lipschitz-UAT}) when the width is large enough, and we further extend this result to multivariate functions in Sobolev spaces (Theorem \ref{thm:shallow PLN->universal}). Our analysis also covers deep PLN-Nets (Theorems \ref{thm:deep PLN->ReLU} and \ref{thm:2,2-PLN-Universal}).

In addition, we present preliminary results on derivative approximation (Theorem \ref{thm:p,q-phi universal}) under generalized normalization schemes. Moreover, we discuss how our theoretical framework can be extended from multilayer perceptrons (MLPs) to position-wise feed-forward networks in recurrent neural networks (RNNs) and Transformers \cite{2017_NIPS_Vaswani} (Theorem \ref{thm:transformer}), and from LN to RMSNorm \cite{2019_NIPS_Zhang} (Lemma \ref{lemma: PLN=PLS}). Finally, we conduct experiments to empirically explore the training performance of PLN-Nets.

\section{Related Work}

The universal approximation property of neural networks has been extensively studied over the past three decades. Early results by \citeauthor{cybenko1989approximation} established that single-hidden-layer networks with infinite width and sigmoidal activations are universal approximators. This result was later generalized to networks with arbitrary bounded and nonconstant activation functions \cite{hornik1991approximation}. Building on these density arguments, \citeauthor{barron1993universal} further characterized approximation rates for superpositions of sigmoidal functions. Subsequent works investigated how architectural constraints affect approximation power, including arbitrary depth \cite{gripenberg2003approximation}, bounded depth and width \cite{maiorov1999lower}, and minimal width requirements for universal approximation \cite{park2020minimum,shin2025minimumwidthuniversalapproximation}. Beyond classical UAT results, the expressive power of neural networks has also been analyzed through alternative complexity measures, such as the number of linear regions \cite{montufar2014number} and VC dimension \cite{bartlett2019nearly}. These approximation analyses have further been extended from multilayer perceptrons (MLPs) to structured architectures, including CNNs \cite{oono2023approximationnonparametricestimationresnettype}, RNNs \cite{li2022approximation}, and Transformers \cite{yun2020transformersuniversalapproximatorssequencetosequence}.

A common feature of many UAT proofs is their reliance on carefully constructed weight configurations that exploit specific properties of the activation function. For instance, ReLU networks benefit from their piecewise linear structure, which allows approximation arguments to be localized to selected regions of the input space while enforcing zero output elsewhere. Such constructions play a crucial role in approximation rate analyses \cite{yarotsky2017errorboundsapproximationsdeep,yun2020transformersuniversalapproximatorssequencetosequence}. In contrast, smooth activations such as tanh produce nonzero outputs almost everywhere, making it more difficult to control approximation errors outside target regions \cite{de2021approximation}. Because normalization layers may introduce additional nonlinear transformations \cite{ni2024nonlinearity}, that may interfere with these structural properties most existing theoretical studies on representation and approximation in deep neural networks \cite{oono2023approximationnonparametricestimationresnettype,li2022approximation,yun2020transformersuniversalapproximatorssequencetosequence} omit normalization for analytical tractability.

Normalization itself has been widely studied from an optimization perspective. Existing theoretical works primarily attribute its effectiveness to scale-invariance properties that stabilize training dynamics \cite{2019_ICLR_Arora,2016_LN_Ba,2023_TPAMI_Huang} and to improvements in the conditioning of the optimization problem \cite{2019_ICML_Cai,2020_NIPS_Daneshmand,2019_ICML_Ghorbani,2019_NIPS_Karakida,2022_NIPS_Lv,2018_NIPS_shibani}. In contrast, its influence on representation and approximation capacity remains far less understood. A recent study by \citeauthor{ni2024nonlinearity} revealed the intrinsic nonlinearity of Layer Normalization and showed that LN-Nets with only three neurons per layer and $O(m)$ LN layers can shatter any set of $m$ labeled samples. Nevertheless, these results focus on classification capacity and depth scaling, leaving open the question of function approximation under width constraints.

 \section{Preliminary and Notations}

We use a lowercase letter $x \in \sR$ to denote a scalar, a boldface lowercase letter $\rvx \in \sR^{n}$ to denote a vector, and a boldface uppercase letter $\mX \in \sR^{d \times n}$ to denote a matrix. Here, $\sR$ denotes the set of real numbers, and $d,n$ are positive integers. For indexing, we use $x_i$ or $[\rvx]_i$ to denote the $i$-th entry of a vector, while $\rvx_i$ denotes the $i$-th vector in a sequence of vectors.

\paragraph{Linear Layers.}
Let $\rvx \in \sR^{d}$ be an input vector of dimension $d$. We say $\varphi$ denotes a linear layer mapping $\sR^{d}$ to $\sR^{m}$, if
\begin{equation}
    \varphi(\rvx) = \mW \rvx + \vb,
\end{equation}
for all $\rvx \in \sR^{d}$, where $\mW \in \sR^{m \times d}$ and $\vb \in \sR^{m}$ are learnable parameters. 

\paragraph{Neural Networks.}
We consider feed-forward neural networks that implement mappings from an input space $\mathbb{R}^{d_x}$ to an output space $\mathbb{R}^{d_y}$. Such a network is defined as a composition of linear layers and intermediate operators.

\begin{definition}[$\phi$-networks]
\label{def:networks} 
A $\phi$-network (or $\phi$-Net) of depth $L$ is a function  
$f : \mathbb{R}^{d_x} \to \mathbb{R}^{d_y}$ of the form
\begin{equation}
\label{eqn:phi-net-circ}
f
=
\varphi_{L+1} \circ \phi_L \circ \varphi_L \circ \cdots \circ \phi_1 \circ \varphi_1,
\end{equation}
where $\varphi_1 : \mathbb{R}^{d_x} \to \mathbb{R}^{d_1}$,  
$\varphi_\ell : \mathbb{R}^{d_{\ell-1}} \to \mathbb{R}^{d_\ell}$ for $\ell=2,\ldots,L$,  
and $\varphi_{L+1} : \mathbb{R}^{d_L} \to \mathbb{R}^{d_y}$ are linear layers. $\phi_\ell : \mathbb{R}^{d_\ell} \to \mathbb{R}^{d_\ell}$ for $\ell=1,\ldots,L$ denote the same class of operators $\phi$, e.g., ReLU. 
\end{definition}

\begin{remark}
Each $\phi_\ell$ may represent any operator mapping $\sR^{d_\ell}$ to $\sR^{d_\ell}$, including normalization layers, softmax layers, attention mechanisms, and other linear and nonlinear transformations. When $\phi_\ell$ is an activation function acting on $\sR$, it is applied element-wise to vectors in $\sR^{d_\ell}$. Unless otherwise specified, all $\phi_\ell$ in a $\phi$-Net are assumed to belong to the same class of operators.
\end{remark}

\paragraph{Depth and Width.}
For the $\phi$-Net defined in \Eqn\ref{eqn:phi-net-circ}, we define the depth as $L$ and the width as $\max\{d_1, \ldots, d_L\}$, counting only hidden layers. For simplicity, we denote by $\mathcal{F}(\phi;L,N)$ the class of all $\phi$-networks with depth $L$ and width $N$.  
A function $f \in \mathcal{F}(\phi;L,N)$ is called a \textit{shallow} $\phi$-network if $L=1$, and a \textit{deep} $\phi$-network if $L>1$.


\paragraph{Layer Normalization.}
Layer Normalization (LN) is a fundamental component in modern deep neural networks, originally introduced to stabilize the training process.  
Given a single hidden representation $\rvh = [h_1, h_2, \cdots, h_d]^\top \in \mathbb{R}^{d}$ with $d$ neurons, LN standardizes $\rvh$ across its coordinates as\footnote{LN typically includes additional learnable scale and shift parameters~\cite{2015_ICML_Ioffe}. We omit them here for simplicity, since they constitute an affine transformation.}
\begin{equation}
	\label{eqn:LN}
	\hat{h}_j = [\mathrm{LN}(\rvh)]_j = \frac{h_j - \mu}{\sigma}, \quad j = 1,2,\cdots,d,
\end{equation}
where $\mu = \frac{1}{d} \sum_{j=1}^{d} h_j$ and $\sigma^2 = \frac{1}{d} \sum_{j=1}^{d} (h_j - \mu)^2$ denote the mean and variance of the sample, respectively.

\begin{remark}
\label{remark:LN def}
To ensure that \Eqn\ref{eqn:LN} is well-defined on $\mathbb{R}^d$, one may either  
(i) add a small constant $\delta>0$ to the denominator, or  
(ii) explicitly define $\mathrm{LN}(\rvh)$ when $\sigma=0$ (for example, $\mathrm{LN}(\rvh)=\bm 0$).  
Throughout this paper, we adopt the second convention as default.
\end{remark}

Building on LN, we introduce Parallel Layer Normalization (PLN).  
PLN partitions the hidden neurons into multiple groups and applies LN independently within each group. The formal definition is given below.

\begin{definition}[Parallel Layer Normalization]
\label{def:PLN}
Given a single input sample $\rvh \in \sR^{d}$, Parallel Layer Normalization (PLN) operates by partitioning $\rvh$ into $m$ groups $\{\rvh_i\}_{i=1}^m$ and applying LN to each group separately. Assume
\begin{equation}
    [\rvh_1^\top, \rvh_2^\top, \cdots, \rvh_m^\top]^\top = \rvh, 
\end{equation}
the output of PLN is then defined as
\begin{equation}
	\hat{\rvh} = \mathrm{PLN}(\rvh) = 
	\big[
	\mathrm{LN}(\rvh_1)^\top, \cdots, \mathrm{LN}(\rvh_m)^\top
	\big]^\top.
\end{equation}
For simplicity, we may assume each group has the same size $n_s$, which we call the \textit{norm size}.
\end{definition}

\begin{remark}
Under the simplification in Definition~\ref{def:PLN}, PLN has a structure similar to LN-G \cite{ni2024nonlinearity}, which can be viewed as a generalized form of Group Normalization (GN) \cite{2018_ECCV_Wu} derived from LN \cite{2016_LN_Ba}. LN-G and GN typically emphasize flexible group counts with fixed channel widths, whereas PLN emphasizes flexible widths with a fixed norm size.  \\
Take image models as instance, LN treats all channels of an image as a single normalization group, while GN partitions channels within each image into several groups. In contrast, PLN groups together $n_s$ neurons that originate from different channels but the same position. Analogous to how activation functions treat each neuron as a basic unit, PLN treats each group of size $n_s$ as a basic unit. As the network width increases, the number of such units grows proportionally.
\end{remark}

With the above notation and definitions, we next conduct our analyses and present the results on the representation capacity of LN-Nets and PLN-Nets.

\section{Approximations of Univariate Functions}
\label{sec:extension}

In this section, we study the approximation capabilities of networks composed of linear layers and LN layers. We show that shallow LN-Nets possess intrinsically limited representation power (Theorem~\ref{thm:LN-Nets weak}). To achieve stronger approximation ability, we introduce the parallel LN (PLN) structure and establish its universal approximation property (Theorem~\ref{theorem:Lipschitz-UAT}). We also extend our analysis to RMSNorm~\cite{2019_NIPS_Zhang} in Section~\ref{sec:to rms}.

We focus primarily on relationships between different function classes and analyze basic univariate function approximation in this section. More complex settings will be discussed in Section~\ref{sec:4}.

\subsection{The Representation Capacity of Shallow LN-Nets}

We first show that shallow LN-Nets can represent certain shallow $\phi$-Nets of width $1$, where $\phi$ is a specific element-wise activation function. This serves as a building block for the universal approximation property of shallow PLN-Nets. Nevertheless, shallow LN-Nets still exhibit fundamentally limited representation capacity.

\paragraph{Shallow LN-Nets}

A shallow LN-Net has the form
\begin{equation}
    \label{eqn:LN-Net}
	f = \mathrm{Linear} \circ \mathrm{LN} \circ \mathrm{Linear} 
    \in \mathcal{F}(\mathrm{LN}(n_s);1,n_s),
\end{equation}
where $n_s$ denotes both the width of the LN-Net and the norm size of LN. We use $\mathrm{LN}(n_s)$ or $\mathrm{PLN}(n_s)$ to denote an LN or PLN layer with norm size $n_s$. Unlike LN-Nets, the width of a PLN-Net can be a multiple of the norm size.

Before turning to PLN-Nets, we first show that shallow LN-Nets can simulate simple activation functions, which forms the foundation of our universal approximation results for PLN-Nets.
To begin, we show that a shallow LN-Net with width $n_s \ge 2$ can represent\footnote{In this paper, we use “represent” to mean exact realization (zero approximation error), and “approximate” when small error is allowed.} a shallow $\mathrm{sign}$-Net of width $1$, as stated in Lemma~\ref{lemma:shallow LN->sign}.

\begin{lemma}
\label{lemma:shallow LN->sign}
Given any $\hat{f}\in\gF(\mathrm{sign};1,1)$ with $\hat f:\sR\to\sR$, there exists $f\in\gF(\mathrm{LN}(n_s);1,n_s)$ with $n_s\ge2$ such that $f=\hat f$. In short\footnote{Throughout this paper, when comparing two function classes $\gF_1$ and $\gF_2$ as sets, we assume by default that they share the same input and output spaces.},
\begin{equation}
\gF(\mathrm{sign};1,1)\subseteq\gF(\mathrm{LN}(n_s);1,n_s),
\end{equation}
where $\mathrm{sign}(x)$ is the element-wise sign function taking values $1$ if $x>0$, $0$ if $x=0$, and $-1$ if $x<0$.
\end{lemma}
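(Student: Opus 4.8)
The plan is to write down the most general width-$1$ shallow $\mathrm{sign}$-network explicitly and then realize it exactly with a single LN block whose first linear map produces a mean-zero coordinate pattern proportional to the desired pre-activation. Since in $\gF(\mathrm{sign};1,1)$ both $\varphi_1:\sR\to\sR$ and $\varphi_2:\sR\to\sR$ are affine and $\mathrm{sign}$ acts on a scalar, every $\hat f\in\gF(\mathrm{sign};1,1)$ has the form $\hat f(x)=a\,\mathrm{sign}(wx+b)+c$ for some scalars $a,b,c,w$. So it suffices, for each such $(a,b,c,w)$, to exhibit $f=\varphi_2\circ\mathrm{LN}\circ\varphi_1\in\gF(\mathrm{LN}(n_s);1,n_s)$ with $f\equiv\hat f$.

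For the construction I would take $\varphi_1:\sR\to\sR^{n_s}$ defined by $\varphi_1(x)=\big(wx+b,\;-(wx+b),\;0,\dots,0\big)^\top$, which is a valid linear layer using $n_s\ge2$. Writing $\rvh=\varphi_1(x)$, the coordinate mean is $\mu=0$ for every $x$, hence $h_j-\mu=h_j$ and $\sigma^2=\tfrac1{n_s}\big((wx+b)^2+(wx+b)^2\big)=\tfrac{2}{n_s}(wx+b)^2$, i.e.\ $\sigma=\sqrt{2/n_s}\,|wx+b|$. Therefore, when $wx+b\ne0$, $[\mathrm{LN}(\rvh)]_1=\dfrac{wx+b}{\sqrt{2/n_s}\,|wx+b|}=\sqrt{n_s/2}\;\mathrm{sign}(wx+b)$, and when $wx+b=0$ we have $\sigma=0$, so by the convention fixed in Remark~\ref{remark:LN def} $\mathrm{LN}(\rvh)=\bm 0$ and $[\mathrm{LN}(\rvh)]_1=0=\sqrt{n_s/2}\,\mathrm{sign}(0)$. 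Either way $[\mathrm{LN}(\varphi_1(x))]_1=\sqrt{n_s/2}\,\mathrm{sign}(wx+b)$ for all $x\in\sR$. I would then pick $\varphi_2:\sR^{n_s}\to\sR$, $\varphi_2(\rvh)=\sqrt{2/n_s}\,a\,[\rvh]_1+c$, so that $f(x)=\sqrt{2/n_s}\,a\cdot\sqrt{n_s/2}\,\mathrm{sign}(wx+b)+c=a\,\mathrm{sign}(wx+b)+c=\hat f(x)$. By construction $f\in\gF(\mathrm{LN}(n_s);1,n_s)$, which gives the claimed inclusion; the degenerate case $w=0$ (where $\hat f\equiv a\,\mathrm{sign}(b)+c$) is covered by the same formula since then $wx+b\equiv b$.

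I do not anticipate a real obstacle; the only delicate point is the $\sigma=0$ case, where one must verify that the ``$\mathrm{LN}=\bm 0$'' convention of Remark~\ref{remark:LN def} agrees with $\mathrm{sign}(0)=0$, so that $f=\hat f$ holds pointwise on all of $\sR$ and not merely almost everywhere. The one design choice that keeps the computation uniform in $n_s$ is using the two-sided pattern $(+1,-1,0,\dots,0)$ rather than appending nonzero constants: this forces the mean to vanish identically and reduces the variance to a single clean term, so the same construction works verbatim for every $n_s\ge2$.
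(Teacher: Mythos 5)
Your construction is exactly the paper's: the pre-LN pattern $(wx+b,\,-(wx+b),\,0,\dots,0)$, the resulting first coordinate $\sqrt{n_s/2}\,\mathrm{sign}(wx+b)$, the output weight $\sqrt{2/n_s}\,a$, and the explicit check that the $\sigma=0$ convention of Remark~\ref{remark:LN def} matches $\mathrm{sign}(0)=0$ all coincide with the argument in Appendix~\ref{sec:proof 3.1}. The proposal is correct and takes essentially the same approach as the paper.
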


The proof is given in Appendix~\ref{sec:proof 3.1}. For the special case $n_s=2$, we provide an intuitive illustration in Figure~\ref{fig:proof_sign}.

\begin{figure}[t]
    \vspace{-0.1in}
    \centering
    \begin{subfigure}[t]{0.23\textwidth}
        \centering
        \includegraphics[height=25ex]{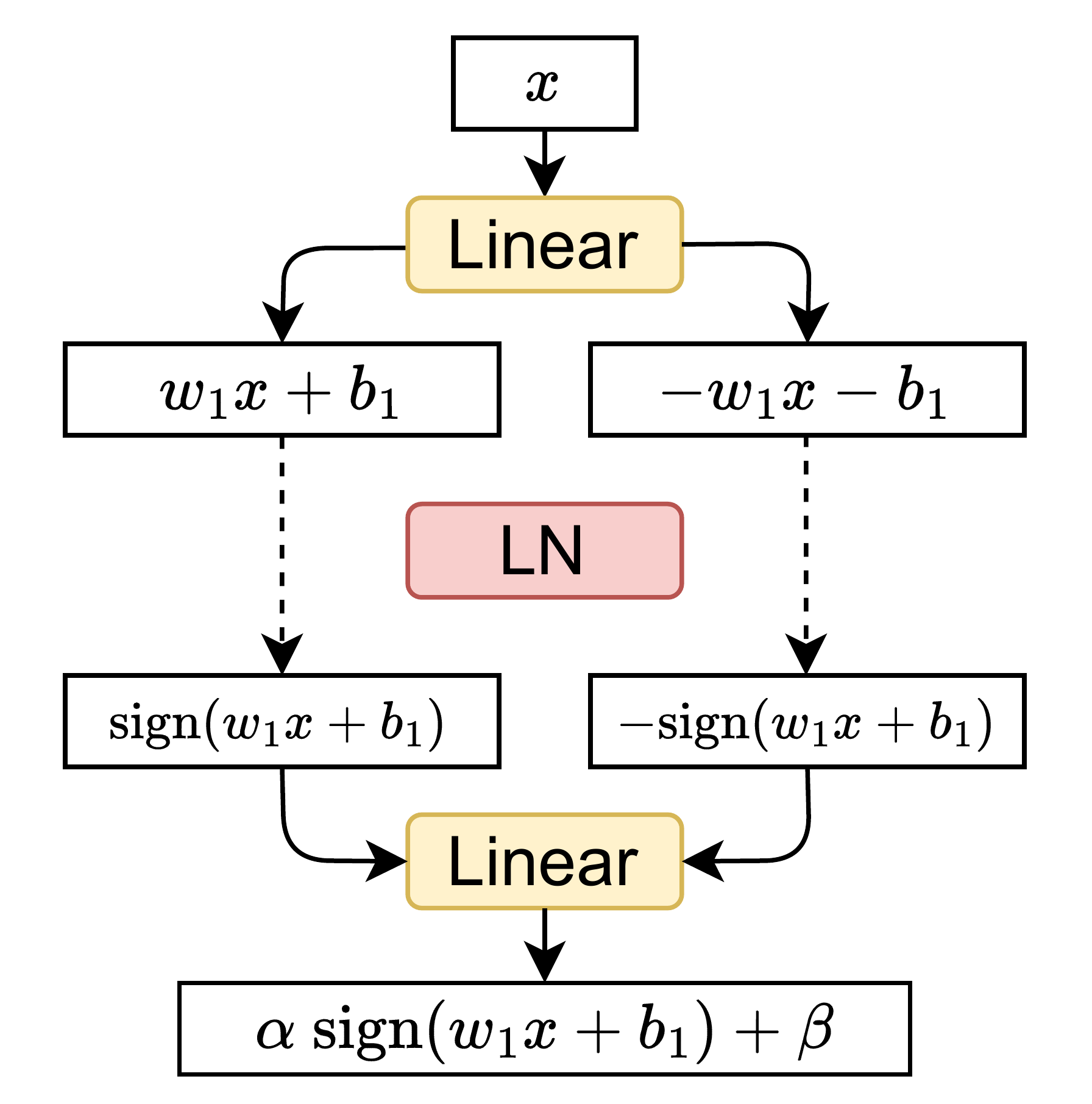}
        \caption{An intuitive proof of Lemma \ref{lemma:shallow LN->sign} of the case $n_s=2$.}
        \label{fig:proof_sign}
    \end{subfigure}
    \hspace{0.05in}	
    \begin{subfigure}[t]{0.23\textwidth}
        \centering
        \includegraphics[height=25ex]{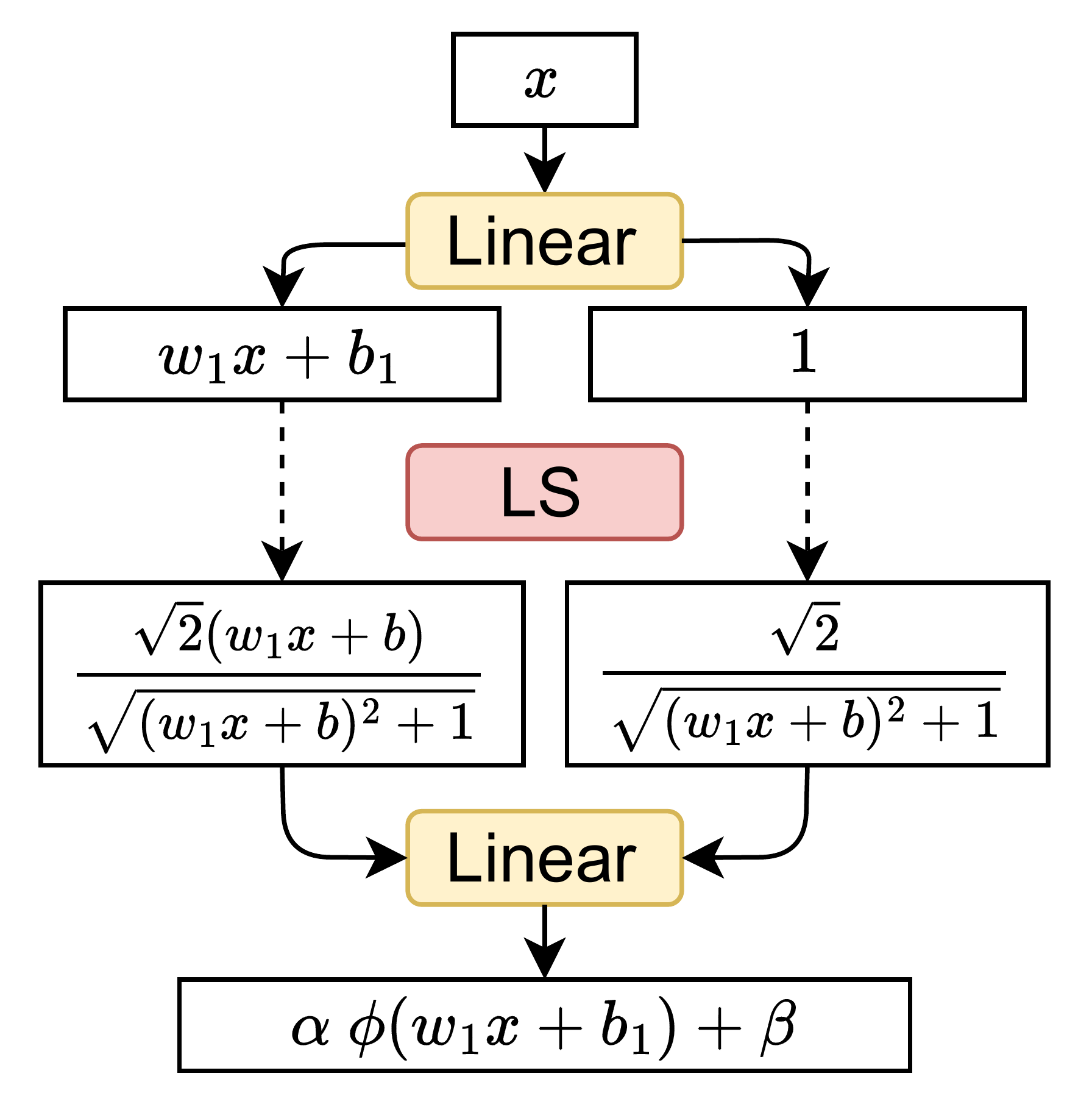}
        \caption{An intuitive proof of Lemma \ref{lemma:shallow LN->phi} of the case $n_s=3$.}
        \label{fig:proof_phi}
    \end{subfigure}
    
    \caption{We can choose suitable weights in the linear layers in an LN-Net or LS-Net, then obatin the form of the target functions. (a) shows the weight construction in an LN-Net to obtain a sign-Net, while (b) the weight construction in an LS-Net to obtain a $\phi$-Net. }
    \vspace{-0.16in}
\end{figure}

The requirement $n_s \ge 2$ arises from the two constraints imposed by LN on hidden representations, namely zero mean and unit variance. Increasing the width further enables smoother effective activation functions, such as $\phi(x)=x/\sqrt{x^2+1}$. We next show that a shallow LN-Net with width $n_s\ge 3$ can represent a shallow $\phi$-Net of width $1$.

\begin{lemma}
\label{lemma:shallow LN->phi}
Given any $\hat{f}\in\gF(\phi;1,1)$ with $\hat f:\sR\to\sR$, there exists $f\in\gF(\mathrm{LN}(n_s);1,n_s)$ with $n_s\ge3$ such that $f=\hat f$. In short,
\begin{equation}
\gF(\phi;1,1)\subseteq \gF(\mathrm{LN}(n_s);1,n_s),
\end{equation}
where $\phi(x)=x/\sqrt{x^2+1}$.
\end{lemma}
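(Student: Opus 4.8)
The plan is to realize $\hat f$ \emph{exactly} by an explicit weight construction, in the same spirit as Lemma~\ref{lemma:shallow LN->sign} but using three active neurons rather than two. Write the target as $\hat f(x)=c\,\phi(ax+b)+e$ with $a,b,c,e\in\sR$, and a generic shallow LN-Net of width $n_s$ as $f(x)=\mathbf u^{\top}\mathrm{LN}\bigl(\rvh(x)\bigr)+e'$, where $\rvh(x)\in\sR^{n_s}$ is the output of the first linear layer (so each coordinate $h_i(x)$ is affine in $x$), and $\mathbf u\in\sR^{n_s}$, $e'\in\sR$ are the output layer. The key idea is to choose the first linear layer so that $\rvh(x)$ has: (i) coordinates that sum to $0$ for every $x$, so the LN mean $\mu$ is identically $0$; (ii) squared norm equal to a fixed positive multiple of $(ax+b)^2+1$, so the LN standard deviation $\sigma$ is a fixed positive multiple of $\sqrt{(ax+b)^2+1}$ and in particular never vanishes; and (iii) one coordinate equal to $ax+b$. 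Then that coordinate of $\mathrm{LN}(\rvh(x))$ is exactly a constant multiple of $\phi(ax+b)$, and the output linear layer just reads it off.

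Concretely, for $n_s=3$ I would take $h_1(x)=ax+b$, $h_2(x)=-\tfrac12(ax+b)+\tfrac{\sqrt3}{2}$, and $h_3(x)=-\tfrac12(ax+b)-\tfrac{\sqrt3}{2}$, which is clearly realizable by a linear layer. Writing $u:=ax+b$, one checks directly that $h_1+h_2+h_3=0$, so $\mu=0$; and that the cross terms $\mp\tfrac{\sqrt3}{2}u$ cancel in $h_1^2+h_2^2+h_3^2$, leaving $h_1^2+h_2^2+h_3^2=\tfrac32\bigl(u^2+1\bigr)$, hence $\sigma^2=\tfrac12(u^2+1)>0$ for every $x$ (so the convention of Remark~\ref{remark:LN def} is never needed). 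Therefore $[\mathrm{LN}(\rvh(x))]_1=u/\sigma=\sqrt2\,\phi(ax+b)$, and taking $\mathbf u=(c/\sqrt2,\,0,\,0)^{\top}$ and $e'=e$ gives $f\equiv\hat f$. For $n_s>3$ I would keep these three coordinates and set $h_4=\dots=h_{n_s}=0$: this preserves $\mu=0$, scales $\sigma$ by a positive constant (which is absorbed into $\mathbf u$), and $\sigma$ still never vanishes since $h_2-h_3=\sqrt3\neq0$.

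Finally, the degenerate cases $a=0$ or $c=0$ (where $\hat f$ is constant) require no separate argument: the same construction then produces a constant $h_1\equiv b$, hence a constant $[\mathrm{LN}(\rvh)]_1$. I expect the only real content to be \emph{finding} the configuration above — specifically, placing $h_2$ and $h_3$ symmetrically about $-\tfrac12(ax+b)$ with gap exactly $\sqrt3$, which is precisely what cancels the cross term and turns the variance into the clean quadratic $\tfrac32\bigl((ax+b)^2+1\bigr)$; once the weights are written down, the verification is a one-line computation. The case $n_s=3$ is depicted in Figure~\ref{fig:proof_phi}. (For contrast, with $n_s=2$ the single LN coordinate can only equal $\mathrm{sign}(h_1-h_2)\in\{-1,0,1\}$, which is why $n_s\ge3$ is genuinely required to obtain the smooth $\phi$.)
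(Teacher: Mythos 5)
Your construction is correct: with $u=ax+b$ the three coordinates $h_1=u$, $h_2=-\tfrac12 u+\tfrac{\sqrt3}{2}$, $h_3=-\tfrac12 u-\tfrac{\sqrt3}{2}$ indeed sum to zero and satisfy $h_1^2+h_2^2+h_3^2=\tfrac32(u^2+1)$, so $[\mathrm{LN}(\rvh)]_1=\sqrt2\,\phi(u)$, the padding-with-zeros step for $n_s>3$ only rescales $\sigma$, and $\sigma$ never vanishes, so the convention of Remark~\ref{remark:LN def} is indeed never invoked. The route differs from the paper's: the paper first invokes Lemma~\ref{lemma: PLN=PLS} (built on an abstract orthogonal change of basis from Lemma~6 of \cite{ni2024nonlinearity}) to replace the LN-Net of width $n_s$ by an LS-Net of width $n_s-1$, and then uses the very simple two-coordinate LS construction $(w_1x+b_1,\,1,\,\bm0)$, whose norm is $\sqrt{(w_1x+b_1)^2+1}$. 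Your argument inlines that change of basis explicitly: your vector is $u\,(1,-\tfrac12,-\tfrac12)^\top+1\cdot(0,\tfrac{\sqrt3}{2},-\tfrac{\sqrt3}{2})^\top$, i.e.\ an (up to the factor $\sqrt{3/2}$) isometric embedding of the LS pair $(u,1)$ into the zero-mean hyperplane of $\sR^3$, which is exactly what the orthogonal matrix $\mQ$ does in the paper's Lemma~\ref{lemma:LN<->LS}. What you gain is a fully self-contained, one-computation proof that does not depend on the LN--LS equivalence; what the paper's modular route buys is reusability, since Lemma~\ref{lemma: PLN=PLS} is invoked repeatedly elsewhere (e.g.\ in Theorem~\ref{thm:LN-Nets weak} and Appendix~\ref{sec:proof thm 3.2-2}), so proving it once lets all those arguments be carried out in the simpler LS coordinates.
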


The proof is provided in Appendix~\ref{sec:proof 3.2}. If we use Lemma~\ref{lemma: PLN=PLS} in advance, we can simplify the proof---the argument can be reduced to proving
\begin{equation}
\gF(\phi;1,1)\subseteq \gF(\mathrm{LS}(n_s-1);1,n_s-1).
\end{equation}
For the special case $n_s=3$, we also provide an intuitive illustration in Figure~\ref{fig:proof_phi}.

Despite these expressivity results, the overall representation capacity of shallow LN-Nets remains limited. In particular, any univariate function represented by a shallow LN-Net has at most one stationary point. Otherwise, the function reduces to a $\mathrm{sign}$-Net, which has at most one discontinuity and two constant regions. A formal proof is given in Appendix~\ref{sec:proof thm 3.1}.

\begin{theorem}[Weak representation capacity of shallow LN-Nets]
\label{thm:LN-Nets weak} 
There exists a continuous function $\hat f$ on $\Omega\subset \sR$, such that for any shallow LN-Net $f\in\gF(\mathrm{LN}(n_s);1,n_s)$ with $n_s\ge 2$,
\begin{equation}
|f-\hat{f}|_{L^\infty(\Omega)}\ge 1.
\end{equation}
\end{theorem}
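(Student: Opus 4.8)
The plan is to pin down the exact functional form of a shallow LN-Net $f:\sR\to\sR$, deduce that every such $f$ is constant, monotone, or unimodal with a single turning point, and then choose a continuous target whose four-fold oscillation on a short interval cannot be tracked within distance $1$ by any function of that rigid shape. First I would write $f(x)=\mathbf a^\top\mathrm{LN}(\mathbf w x+\mathbf b)+c$ and center the pre-normalization vector: with $\tilde{\mathbf w}=\mathbf w-\bar w\mathbf 1$ and $\tilde{\mathbf b}=\mathbf b-\bar b\mathbf 1$, the $j$-th centered coordinate of $\mathbf w x+\mathbf b$ is $\tilde w_j x+\tilde b_j$ and the LN denominator is $\sqrt{q(x)}$ with $q(x)=\tfrac1{n_s}\sum_j(\tilde w_jx+\tilde b_j)^2=Ax^2+2Bx+C$, where $A=\tfrac1{n_s}\|\tilde{\mathbf w}\|^2$, $B=\tfrac1{n_s}\langle\tilde{\mathbf w},\tilde{\mathbf b}\rangle$, $C=\tfrac1{n_s}\|\tilde{\mathbf b}\|^2$ and $AC\ge B^2$ by Cauchy--Schwarz. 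This yields
\[
f(x)=c+\frac{\alpha x+\beta}{\sqrt{q(x)}}\quad\text{wherever }q(x)>0,\qquad \alpha=\langle\mathbf a,\tilde{\mathbf w}\rangle,\ \beta=\langle\mathbf a,\tilde{\mathbf b}\rangle,
\]
and $f(x)=c$ wherever $q(x)=0$ (Remark~\ref{remark:LN def}, second convention).

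Next I would split into cases according to $q$. If $A=0$ then $\tilde{\mathbf w}=\mathbf 0$, hence $\alpha=0$ and $q$ is constant, so $f$ is constant. If $A>0$ and $AC=B^2$, equality in Cauchy--Schwarz gives $\tilde{\mathbf b}=\lambda\tilde{\mathbf w}$, so $q(x)=A(x+\lambda)^2$ and $f(x)=c+\tfrac{\alpha}{\sqrt A}\mathrm{sign}(x+\lambda)$ away from $x=-\lambda$ (with $f(-\lambda)=c$), which is a monotone function on $\sR$. If $A>0$ and $AC>B^2$, then $q(x)\ge(AC-B^2)/A>0$ for all $x$, so $f\in C^\infty(\sR)$; differentiating gives $f'(x)=\ell(x)/q(x)^{3/2}$ with $\ell(x)=(\alpha B-\beta A)x+(\alpha C-\beta B)$ a polynomial of degree at most one, so $f'$ changes sign at most once and $f$ is monotone or has a single strict interior extremum. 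In every case $f$ is constant, monotone on $\sR$, or unimodal with exactly one turning point.

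Finally I would take $\Omega=[0,3]$ and let $\hat f$ be any continuous function with $\hat f(0)=\hat f(2)=0$ and $\hat f(1)=\hat f(3)=2$, for instance $\hat f(x)=1-\cos(\pi x)$, and argue by contradiction: if $|f-\hat f|_{L^\infty(\Omega)}<1$ then $f(0),f(2)\in(-1,1)$ and $f(1),f(3)\in(1,3)$, hence $f(0)<f(1)$, $f(1)>f(2)$ and $f(2)<f(3)$. (When $f$ is the $\mathrm{sign}$-type realization, which is monotone with at most one discontinuity, the same strict inequalities follow after replacing the test points by nearby points on the correct sides of the jump, using continuity of $\hat f$.) A constant $f$ fails because $|c|<1$ and $|c-2|<1$ are incompatible; a monotone $f$ fails because $f(0),f(1),f(2),f(3)$ would then be a monotone sequence, contradicting $f(0)<f(1)>f(2)$; and a unimodal $f$ with turning point $x^*$ fails after checking $x^*\le1$, $1<x^*<2$, and $x^*\ge2$ against the monotonicity of $f$ on each side of $x^*$ (e.g.\ a maximum at $x^*\le1$ makes $f$ nonincreasing on $[1,\infty)$, so $f(1)>f(2)$ forces $f(2)>f(3)$, contradicting $f(2)<f(3)$; the remaining sub-cases are symmetric). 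This gives $|f-\hat f|_{L^\infty(\Omega)}\ge1$ for all shallow LN-Nets, which is the claim.

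I expect the main difficulty to be the second step: establishing the exact ``at most one turning point'' rigidity and carefully handling its degenerate branches --- the $\mathrm{sign}$-function realization when $q$ has a real root and the constant realization when $\tilde{\mathbf w}=\mathbf 0$ --- together with the minor care needed to relate the $L^\infty$ norm to pointwise values for the single realization that can be discontinuous. Once that structure is in place, the target and the contradiction in the last step are an elementary case check, and the low--high--low--high pattern on $[0,3]$ is exactly what one turning point cannot follow.
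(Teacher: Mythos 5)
Your proposal is correct and follows essentially the same route as the paper: first establish that a univariate shallow LN-Net is constant, a sign-step, or smooth with at most one stationary point (the paper gets this via the LN-to-LS reduction of Lemma~\ref{lemma: PLN=PLS}, you via direct centering --- the same derivative computation with an affine numerator), and then defeat this rigidity with a cosine-type target whose low--high--low--high values cannot be matched within error $1$. The only cosmetic difference is the endgame: the paper uses $\cos(\pi x)$ on $[-2,2]$ and the mean value theorem to force two stationary points, while you use $1-\cos(\pi x)$ on $[0,3]$ and argue directly from monotonicity/unimodality.
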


The counterexample used in the proof is the smooth elementary function $\hat f(x)=\cos(\pi x)$, indicating that the limitation stems from the normalization structure rather than function complexity. Please see Appendix~\ref{sec:proof thm 3.1} for the detailed proof.

Theorem~\ref{thm:LN-Nets weak} shows that shallow LN-Nets are not universal approximators, even if their width tends to infinity. To significantly enhance representation power, one must increase the number of LN operations—either by increasing depth or by adopting the PLN structure. We mainly study PLN-Nets in this paper.

\subsection{Universal Approximation by Shallow PLN-Nets}

Since PLN consists of multiple LN operations applied in parallel and connected by linear layers, a PLN-Net can be viewed as a sum of several LN-Nets, as formalized in Lemma~\ref{lemma:PLN->LN}.

\begin{lemma}
\label{lemma:PLN->LN}
Given $N$ shallow LN-Nets $\hat f_i\in\gF(\mathrm{LN}(n_s);1,n_s)$ with $\hat f_i:\sR\to\sR$ for $1\le i\le N$, there exists a shallow PLN-Net $f \in\gF(\mathrm{PLN}(n_s);1,n_sN)$ such that
\begin{equation}
f(x) = \sum_{i=1}^N \hat f_i(x), \quad \forall x\in\sR.
\end{equation}
Conversely, any such PLN-Net can be decomposed into a sum of $N$ shallow LN-Nets.
\end{lemma}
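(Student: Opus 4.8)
The plan is to prove the two directions of Lemma~\ref{lemma:PLN->LN} separately, with the forward (synthesis) direction being the main content and the converse (decomposition) being essentially a bookkeeping argument.

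For the forward direction, suppose each $\hat f_i = \varphi_{i,2}\circ \mathrm{LN}(n_s)\circ \varphi_{i,1}$ where $\varphi_{i,1}:\sR\to\sR^{n_s}$ is given by $\rvx\mapsto \mW_i x + \vb_i$ with $\mW_i\in\sR^{n_s\times 1}$, $\vb_i\in\sR^{n_s}$, and $\varphi_{i,2}:\sR^{n_s}\to\sR$ is given by $\hat\rvh\mapsto \rvu_i^\top\hat\rvh + c_i$. I would build $f=\varphi_2\circ\mathrm{PLN}(n_s)\circ\varphi_1$ with $\varphi_1:\sR\to\sR^{n_sN}$ the stacked map whose first linear layer is the vertical concatenation $\mW = [\mW_1^\top,\dots,\mW_N^\top]^\top$ and $\vb=[\vb_1^\top,\dots,\vb_N^\top]^\top$, so that $\varphi_1(x)$ has the $i$-th block equal to $\varphi_{i,1}(x)$. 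By Definition~\ref{def:PLN}, $\mathrm{PLN}(n_s)$ applied to this stacked vector acts blockwise, producing exactly $[\mathrm{LN}(\varphi_{1,1}(x))^\top,\dots,\mathrm{LN}(\varphi_{N,1}(x))^\top]^\top$. Then I take $\varphi_2:\sR^{n_sN}\to\sR$ to be $\hat\rvh\mapsto \rvu^\top\hat\rvh + c$ with $\rvu=[\rvu_1^\top,\dots,\rvu_N^\top]^\top$ and $c=\sum_{i=1}^N c_i$; by block structure $\rvu^\top\hat\rvh = \sum_i \rvu_i^\top\mathrm{LN}(\varphi_{i,1}(x))$, so $f(x)=\sum_i(\rvu_i^\top\mathrm{LN}(\varphi_{i,1}(x))+c_i)=\sum_i\hat f_i(x)$. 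The width of $f$ is $n_sN$ and its depth is $1$, so $f\in\gF(\mathrm{PLN}(n_s);1,n_sN)$ as claimed. One should also check the degenerate case where some group has zero variance: by Remark~\ref{remark:LN def} the convention $\mathrm{LN}(\rvh)=\bm 0$ for $\sigma=0$ applies identically in both $\hat f_i$ and inside PLN, so the equality $f(x)=\sum_i\hat f_i(x)$ still holds pointwise.

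For the converse, given any $f\in\gF(\mathrm{PLN}(n_s);1,n_sN)$, write $f=\varphi_2\circ\mathrm{PLN}(n_s)\circ\varphi_1$, partition the output of $\varphi_1$ into $N$ blocks of size $n_s$ so that $\varphi_1$ decomposes into $\varphi_{i,1}:\sR^{d_x}\to\sR^{n_s}$ extracting the $i$-th block, and correspondingly split the weight matrix of $\varphi_2$ columnwise into $N$ blocks $\rvu_i$. Distribute the bias $c$ of $\varphi_2$ arbitrarily among the $N$ pieces (e.g.\ all of it to the first). Setting $\hat f_i = \varphi_{i,2}\circ\mathrm{LN}(n_s)\circ\varphi_{i,1}$ with $\varphi_{i,2}$ the affine map $\hat\rvh\mapsto\rvu_i^\top\hat\rvh + (\text{its share of }c)$ gives $N$ shallow LN-Nets summing to $f$ by the same block-additivity of PLN.

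I do not expect a serious obstacle here: the result is a direct unfolding of the definitions, and the only point requiring a moment's care is confirming that the partition-and-apply-blockwise structure of PLN in Definition~\ref{def:PLN} commutes with concatenating the first linear layers and splitting the last one — i.e.\ that a PLN layer of width $n_sN$ on a stacked input is literally $N$ independent LN layers of width $n_s$. A minor subtlety worth mentioning is that for the forward direction the input dimensions of the $\hat f_i$ must agree (here all are $\sR$, matching the univariate scalar-input setting in which the lemma is stated), since they all must be fed the same $x$; the construction generalizes verbatim to a common input space $\sR^{d_x}$.
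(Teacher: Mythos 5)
Your construction is exactly the one used in the paper (Appendix~\ref{sec:proof 3.3}): stack the first-layer weights and biases blockwise, concatenate the second-layer weights, sum the output biases, and use the fact that $\mathrm{PLN}(n_s)$ on the stacked vector acts as $N$ independent LN layers; the converse is likewise obtained by splitting the parameters back into blocks. Your extra remarks on the $\sigma=0$ convention and the shared input are fine and only strengthen the argument, so the proposal is correct and matches the paper's proof.
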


We provide an illustration in Figure~\ref{fig:proof_pln}, and please refer to the formal proof in Appendix~\ref{sec:proof 3.3}.  


    \begin{figure}[t]
    \vspace{-0.1in}
    \centering
    \begin{subfigure}[t]{0.23\textwidth}
        \centering
        \includegraphics[height=25ex]{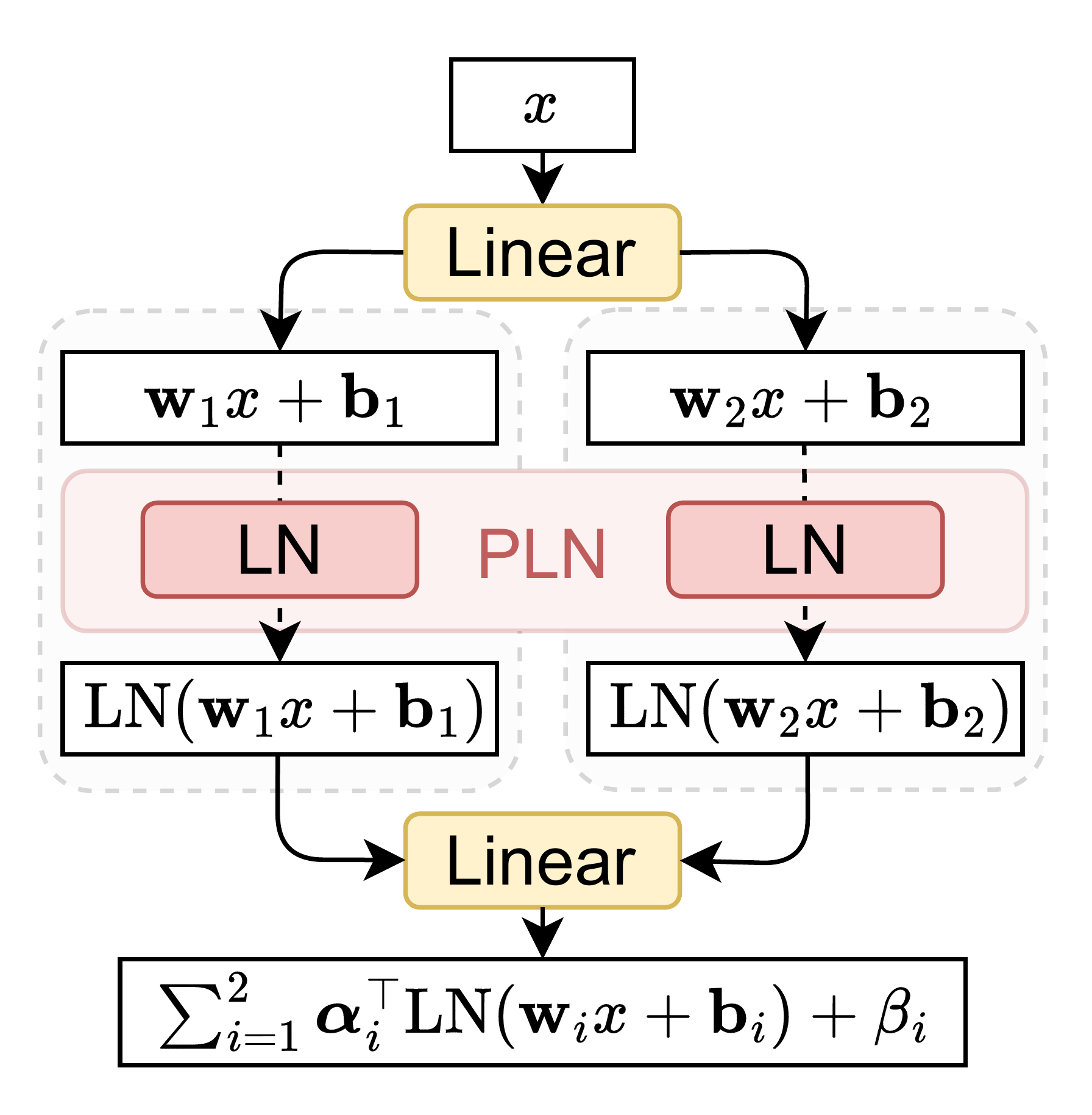}
        \caption{An intuitive observation of LN-Nets in a PLN-Net.}
        \label{fig:proof_pln}
    \end{subfigure}
    \hspace{0.05in}	
    \begin{subfigure}[t]{0.23\textwidth}
        \centering
        \includegraphics[height=25ex]{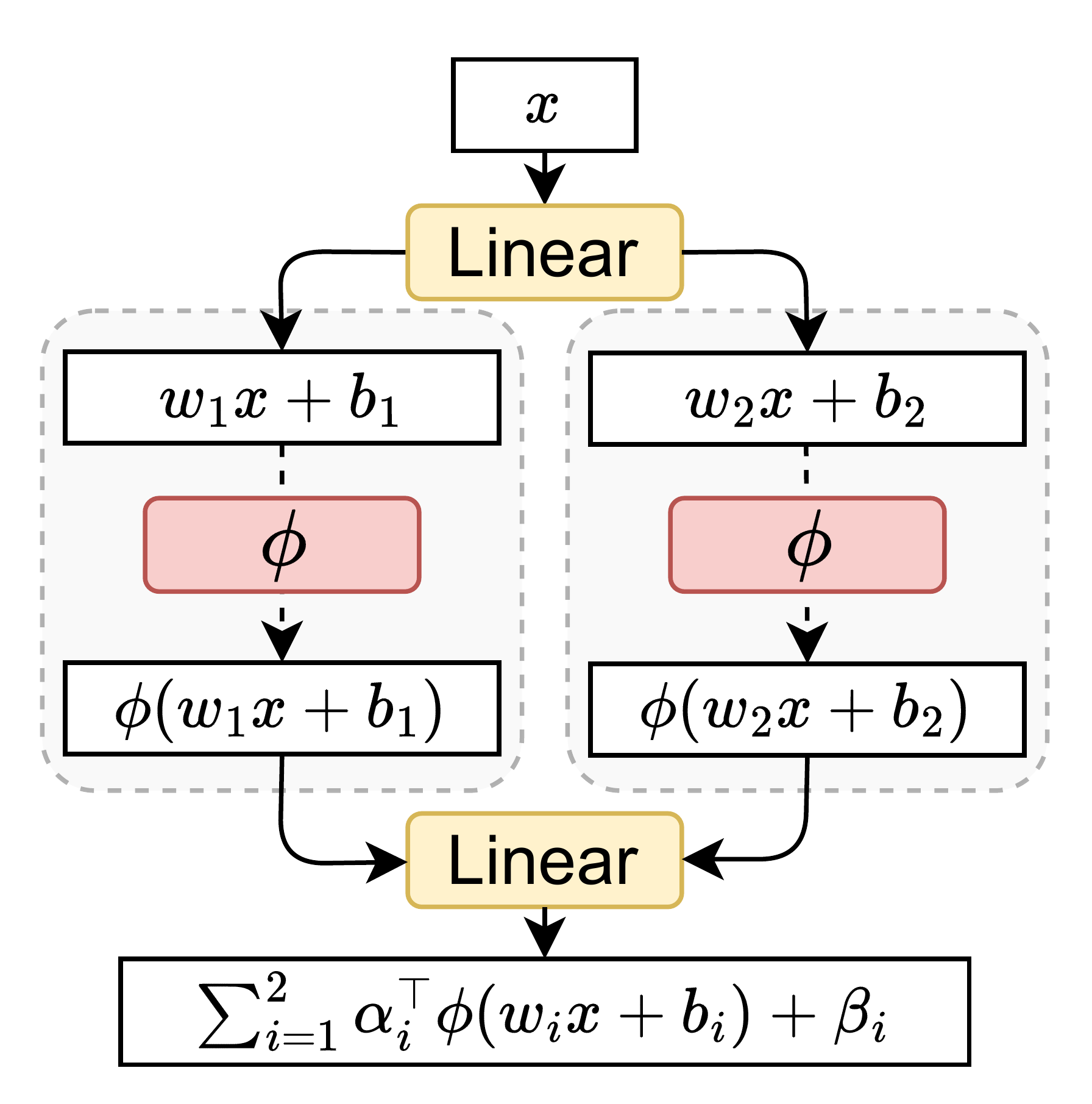}
        \caption{An intuitive observation of an element-wise $\phi$-Net.}
        \label{fig:proof_pphi}
    \end{subfigure}
    \caption{(a) intuitively shows that such PLN-Net (the whole network) is equivalent to the sum of two LN-Nets (half of the network). Similarly, (b) shows that such $\phi$-Net width $2$ is equivalent to the sum of two $\phi$-Nets of width $1$.}
    \vspace{-0.16in}
\end{figure}

Similarly, a shallow $\phi$-Net of width $N$ can be written as $\sum_{i=1}^N v_i \phi(w_i x + b_i) + c$, which is a sum of $N$ width-$1$ $\phi$-Nets. This together with Lemma \ref{lemma:PLN->LN}  leads to the following corollary.

\begin{corollary}
\label{coro:shallow PLN->phi}
Suppose there exists $g\in\gF(\mathrm{LN}(n_s);1,n_s)$ that can represent any $\hat g\in\gF(\phi;1,1)$. Then for any $\hat f\in\gF(\phi;1,N)$ with $\hat f:\sR\to\sR$, there exists a shallow PLN-Net $f \in\gF(\mathrm{PLN}(n_s);1,n_sN)$ such that $f=\hat f$. In short,
\begin{equation}
\gF(\phi;1,N)\subseteq\gF(\mathrm{PLN}(n_s);1,n_sN).
\end{equation}
\end{corollary}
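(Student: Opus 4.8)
The plan is to reduce the claim to Lemma~\ref{lemma:PLN->LN} by writing a shallow $\phi$-Net of width $N$ as a sum of $N$ shallow $\phi$-Nets of width $1$, each of which --- by the standing hypothesis --- is realized by a shallow LN-Net of width $n_s$. Concretely, any $\hat f\in\gF(\phi;1,N)$ with $\hat f:\sR\to\sR$ has the form $\hat f(x)=\sum_{i=1}^N v_i\,\phi(w_i x+b_i)+c$ for scalars $w_i,b_i,v_i,c$, since its inner and outer layers are affine maps $\sR\to\sR^N$ and $\sR^N\to\sR$. Distribute the additive constant evenly and set $\hat f_i(x)=v_i\,\phi(w_i x+b_i)+c/N$; each $\hat f_i$ is a composition $\mathrm{Linear}\circ\phi\circ\mathrm{Linear}$ with hidden width $1$, so $\hat f_i\in\gF(\phi;1,1)$ and $\hat f=\sum_{i=1}^N\hat f_i$.

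Next I would invoke the hypothesis: for each $i$ there exists $g_i\in\gF(\mathrm{LN}(n_s);1,n_s)$ with $g_i=\hat f_i$. Applying Lemma~\ref{lemma:PLN->LN} to the $N$ shallow LN-Nets $g_1,\dots,g_N$ yields a shallow PLN-Net $f\in\gF(\mathrm{PLN}(n_s);1,n_sN)$ with $f(x)=\sum_{i=1}^N g_i(x)$ for all $x\in\sR$. Chaining the equalities gives $f=\sum_{i=1}^N g_i=\sum_{i=1}^N\hat f_i=\hat f$, so $\hat f\in\gF(\mathrm{PLN}(n_s);1,n_sN)$, which is exactly the asserted inclusion.

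There is no serious obstacle: the content sits entirely in the two ingredients already in hand (the hypothesis and Lemma~\ref{lemma:PLN->LN}), and the remaining work is bookkeeping. The one point meriting a moment of care is the output bias $c$ --- after splitting it across the $N$ summands, each piece must still be a genuine element of $\gF(\phi;1,1)$ rather than an ``affine-plus-$\phi$-Net'' outside that class; this holds because the outer linear layer of a width-$1$ $\phi$-Net already carries an arbitrary bias by Definition~\ref{def:networks}. One should also note that degenerate parameter choices (e.g.\ some $w_i=0$ or $v_i=0$) are harmless, since they merely render some $\hat f_i$ constant, which is still in $\gF(\phi;1,1)$ and hence still covered by the hypothesis.
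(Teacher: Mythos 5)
Your proposal is correct and matches the paper's own proof essentially line for line: the paper likewise splits $\hat f$ into $N$ width-$1$ $\phi$-Nets (distributing the output bias among them), represents each by a shallow LN-Net via the hypothesis (instantiated there through Lemma~\ref{lemma:shallow LN->phi}), and assembles the sum into a single PLN-Net using Lemma~\ref{lemma:PLN->LN}. Your remarks on the bias split and degenerate weights are fine points of bookkeeping that the paper handles implicitly in the same way.
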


Please refer to the proof and its extension to $\sR^d\to\sR^m$ (Corollary \ref{Corollary:A4}) in Appendix~\ref{sec:proof corollary 3.1}.

We now state the universal approximation theorem for shallow PLN-Nets.

\begin{theorem}[Universal approximation of univariate Lipschitz functions by shallow PLN-Nets]
\label{theorem:Lipschitz-UAT}
Given any $L$-Lipschitz continuous function $\hat f\in C([0,1])$, there exists a shallow PLN-Net $f\in\gF(\mathrm{PLN}(n_s);1,n_sN)$ with $n_s\ge2$ and $N=\floor{L/(2\epsilon)}+1$ such that
\begin{equation}
|f-\hat{f}|_{L^\infty}< \epsilon.
\end{equation}
\end{theorem}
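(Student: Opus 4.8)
The plan is to reduce the claim to a classical piecewise-linear approximation statement and then invoke Corollary~\ref{coro:shallow PLN->phi} together with Lemma~\ref{lemma:shallow LN->sign}. First I would note that, by Lemma~\ref{lemma:shallow LN->sign}, a shallow LN-Net of width $n_s\ge 2$ can represent any width-$1$ $\mathrm{sign}$-Net, so by Corollary~\ref{coro:shallow PLN->phi} (with $\phi=\mathrm{sign}$) we have $\gF(\mathrm{sign};1,N)\subseteq\gF(\mathrm{PLN}(n_s);1,n_sN)$ for every $n_s\ge 2$. Hence it suffices to construct a width-$N$ shallow $\mathrm{sign}$-Net, i.e.\ a function of the form $g(x)=\sum_{i=1}^N v_i\,\mathrm{sign}(w_i x+b_i)+c$, that approximates $\hat f$ to within $\epsilon$ in $L^\infty([0,1])$, using only $N=\floor{L/(2\epsilon)}+1$ terms.

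The construction I would use is the standard step-function quantization. Partition $[0,1]$ into $N$ subintervals of equal length $1/N$ (with breakpoints $t_k=k/N$), and on each subinterval replace $\hat f$ by the constant value $\hat f$ takes at the midpoint (or left endpoint) of that subinterval. Since $\hat f$ is $L$-Lipschitz, the error incurred on each subinterval is at most $L/(2N)$ (using the midpoint) or $L/N$ (using an endpoint); with $N=\floor{L/(2\epsilon)}+1 > L/(2\epsilon)$ the midpoint choice gives error $< \epsilon$. A step function with $N$ pieces is exactly a signed sum of $N$ sign functions: writing the step heights as a telescoping difference of jumps $\Delta_k$ at the breakpoints, $g(x)=\tfrac12\sum_{k}\Delta_k\,\mathrm{sign}(x-t_k)+\text{const}$, which is a width-$N$ $\mathrm{sign}$-Net (the sign at the finitely many breakpoints is irrelevant to the $L^\infty$ error, or can be absorbed since $\mathrm{sign}(0)=0$ only affects a measure-zero set). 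Combining with the inclusion above yields $f\in\gF(\mathrm{PLN}(n_s);1,n_sN)$ with $|f-\hat f|_{L^\infty}<\epsilon$.

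The one genuinely delicate point — and the step I expect to require the most care — is the \emph{bookkeeping at the breakpoints} and the precise constant $N=\floor{L/(2\epsilon)}+1$. The $\mathrm{sign}$ function is discontinuous and takes the value $0$ exactly at its kink, so one must be careful that the realized PLN-Net agrees with the intended step function \emph{except on a finite set}, and that this finite set does not spoil the $L^\infty$ bound; the cleanest fix is to place the breakpoints $t_k$ so that $\hat f$'s sampled values are taken at interior points and to observe that $|f-\hat f|$ is controlled at every $x$ that is not a breakpoint by $L/(2N)<\epsilon$, while at the breakpoints one takes a one-sided limit or simply notes the strict inequality $< \epsilon$ leaves room. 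A secondary check is that the number of sign terms really is $N$ and not $N+1$: with $N$ equal subintervals there are $N-1$ interior breakpoints plus possibly one boundary term, so one should either count carefully or absorb the extra degree of freedom into the affine constant $c$ (which is free, since the outer linear layer supplies a bias). Neither issue is deep; the substance of the theorem is entirely contained in the reduction to Corollary~\ref{coro:shallow PLN->phi} plus the elementary Lipschitz estimate.
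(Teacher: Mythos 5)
Your proposal is correct and follows essentially the same route as the paper's proof: build a midpoint-sampled step function realized as a width-$N$ $\mathrm{sign}$-Net with error $L/(2N)<\epsilon$, then represent it exactly by a PLN-Net via Lemma~\ref{lemma:shallow LN->sign} and Corollary~\ref{coro:shallow PLN->phi}. The breakpoint issue you flag is handled in the paper exactly as you suggest: since $\mathrm{sign}(0)=0$, the network's value at a breakpoint is the average of the two adjacent midpoint samples, whose error is again at most $L/(2N)<\epsilon$.
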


\begin{proofsketch}
The proof consists of three steps.  
Step 1: Construct $\bar f\in\gF(\mathrm{sign};1,N)$ such that $|\bar f-\hat f|_{L^{\infty}}<\epsilon$.  
Step 2: Use Lemma~\ref{lemma:shallow LN->sign} and Corollary~\ref{coro:shallow PLN->phi} to represent $\bar f$ exactly by a PLN-Net $f$.  
Step 3: Combine the two bounds to obtain $|f-\hat f|_{L^\infty}<\epsilon$.
\end{proofsketch}

Please refer to the complete proof in Appendix~\ref{sec:proof thm 3.2}.

\begin{remark}
There are two common conventions for defining LN when $\sigma=0$. The proof above adopts the default convention in Remark~\ref{remark:LN def}. We also provide a proof under the alternative convention (adding a small constant $\delta$ to the denominator) in Appendix~\ref{sec:proof thm 3.2-2}. That argument further extends to S-shaped activation functions such as $\tanh$ or $\phi(x)=x/\sqrt{x^2+1}$.
\end{remark}

\subsection{Extension to RMSNorm}
\label{sec:to rms}

RMSNorm~\cite{2019_NIPS_Zhang} can be viewed as a variant of Layer Normalization that removes the centering operation. For simplicity, we use LS (Layer Scaling) to denote RMSNorm in this paper.

\paragraph{Layer Scaling.}
Given a hidden representation $\rvh=[h_1,\dots,h_d]^\top \in \mathbb{R}^{d}$, Layer Scaling (LS) normalizes $\rvh$ as
\begin{equation}
\label{eqn:LS}
\hat{h}_j=[\mathrm{LS}(\rvh)]_j= \frac{h_j}{\sigma}, \quad j=1,2,\cdots,d,
\end{equation}
where $\sigma^2 = \frac{1}{d} \sum_{j=1}^{d} h_j^2$ is the second moment of the sample.

Analogously to LN-Nets and PLN-Nets, we can define LS-Nets and PLS-Nets. We show that PLN-Nets and PLS-Nets have essentially equivalent representation capacity.

\begin{lemma}
\label{lemma: PLN=PLS}
Shallow LN-Nets with width $n_s$ and shallow LS-Nets with width $n_s-1$ can represent each other:
\begin{equation}
\label{eqn:LN=LS}
\gF(\mathrm{LN}(n_s);1,n_s) = \gF(\mathrm{LS}(n_s-1);1,n_s-1),
\end{equation}
for $n_s\ge2$. Furthermore,
\begin{equation}
\gF(\mathrm{PLN}(n_s);1,n_sN) = \gF(\mathrm{PLS}(n_s-1);1,(n_s-1)N).
\end{equation}
\end{lemma}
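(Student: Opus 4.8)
The plan is to construct an exact, invertible affine conjugacy between $\mathrm{LN}$ on $\sR^{n_s}$ and $\mathrm{LS}$ on $\sR^{n_s-1}$, and then to absorb the conjugating maps into the two linear layers surrounding the normalization, where they cost nothing. Let $\mathbf{C}=\mathbf{I}_{n_s}-\tfrac1{n_s}\mathbf{1}\mathbf{1}^\top$ be the centering matrix and fix any $\mathbf{U}\in\sR^{n_s\times(n_s-1)}$ whose columns form an orthonormal basis of the zero-mean hyperplane $\{\rvh\in\sR^{n_s}:\mathbf{1}^\top\rvh=0\}$, so that $\mathbf{U}^\top\mathbf{U}=\mathbf{I}_{n_s-1}$ and $\mathbf{U}\mathbf{U}^\top=\mathbf{C}$. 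The identity I would establish is
\[
\mathrm{LN}(\rvh)=\sqrt{\tfrac{n_s}{\,n_s-1\,}}\;\mathbf{U}\,\mathrm{LS}\!\left(\mathbf{U}^\top\rvh\right),\qquad\text{for all }\rvh\in\sR^{n_s},
\]
together with the companion identity $\mathrm{LS}(\mathbf{g})=\sqrt{\tfrac{n_s-1}{\,n_s\,}}\;\mathbf{U}^\top\mathrm{LN}(\mathbf{U}\mathbf{g})$ for $\mathbf{g}\in\sR^{n_s-1}$, obtained from the first by left-multiplying with $\mathbf{U}^\top$ and using $\mathbf{U}^\top\mathbf{U}=\mathbf{I}_{n_s-1}$.

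To prove the identity I would first observe that $\mathrm{LN}(\rvh)=\mathrm{LN}(\mathbf{C}\rvh)$, since subtracting the coordinate mean from $\rvh$ leaves both its coordinate variance and its normalized output unchanged; hence it suffices to verify the identity for $\rvh=\mathbf{U}\mathbf{g}$. For such $\rvh$ the coordinate mean is $\tfrac1{n_s}\mathbf{1}^\top\mathbf{U}\mathbf{g}=0$ and the coordinate variance equals $\tfrac1{n_s}\|\mathbf{U}\mathbf{g}\|_2^2=\tfrac1{n_s}\|\mathbf{g}\|_2^2$, so $\mathrm{LN}(\mathbf{U}\mathbf{g})=\sqrt{n_s}\,\mathbf{U}\mathbf{g}/\|\mathbf{g}\|_2$, which is exactly $\sqrt{n_s/(n_s-1)}\,\mathbf{U}$ applied to $\mathrm{LS}(\mathbf{g})=\sqrt{n_s-1}\,\mathbf{g}/\|\mathbf{g}\|_2$. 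When $\mathbf{g}=\bm 0$ both sides take the default value $\bm 0$ of Remark~\ref{remark:LN def}, and since $\mathbf{U}\mathbf{g}=\bm 0$ exactly when $\mathbf{g}=\bm 0$, the identity holds on all of $\sR^{n_s}$ with no exceptional set.

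Given the two identities, both inclusions in the first equality follow by direct substitution. Writing a shallow LN-Net of width $n_s$ as $f(\rvx)=\mW_2\,\mathrm{LN}(\mW_1\rvx+\vb_1)+\vb_2$ and replacing $\mathrm{LN}$ via the first identity rewrites $f$ as a linear map $\sR^{d_x}\to\sR^{n_s-1}$ (with weight $\mathbf{U}^\top\mW_1$ and bias $\mathbf{U}^\top\vb_1$), followed by $\mathrm{LS}$ on $\sR^{n_s-1}$, followed by a linear map $\sR^{n_s-1}\to\sR^{d_y}$ (with weight $\sqrt{n_s/(n_s-1)}\,\mW_2\mathbf{U}$ and bias $\vb_2$), that is, a shallow LS-Net of width $n_s-1$. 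Running the companion identity in the other direction rewrites any shallow LS-Net of width $n_s-1$ as a shallow LN-Net of width $n_s$. Together these give $\gF(\mathrm{LN}(n_s);1,n_s)=\gF(\mathrm{LS}(n_s-1);1,n_s-1)$.

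For the parallel statement I would run the same argument block-wise. With the block-diagonal matrix $\mathbf{P}=\mathrm{diag}(\mathbf{U},\dots,\mathbf{U})\in\sR^{n_sN\times(n_s-1)N}$ (so $\mathbf{P}^\top\mathbf{P}=\mathbf{I}$), the per-group identities assemble into $\mathrm{PLN}(\rvh)=\sqrt{n_s/(n_s-1)}\;\mathbf{P}\,\mathrm{PLS}(\mathbf{P}^\top\rvh)$ on all of $\sR^{n_sN}$, together with its $\mathbf{P}^\top$-companion, and absorbing $\mathbf{P}$, $\mathbf{P}^\top$, and the scalar into the two linear layers yields both inclusions exactly as before. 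Equivalently, one may invoke Lemma~\ref{lemma:PLN->LN}: a shallow PLN-Net of width $n_sN$ is a sum of $N$ shallow LN-Nets of width $n_s$, each equal to a shallow LS-Net of width $n_s-1$ by the first part, and these recombine into a shallow PLS-Net of width $(n_s-1)N$ via the PLS analogue of that lemma. I do not expect a genuine obstacle here; the one point needing care is the global validity of the conjugacy identity: because $\mathrm{LN}$ already centers its argument, the identity holds for every $\rvh\in\sR^{n_s}$, not merely for centered $\rvh$, and also at the degenerate locus $\sigma=0$, which is exactly what allows the first linear layer to be arbitrary so that the absorption goes through with no side conditions.
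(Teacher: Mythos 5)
Your proof is correct and takes essentially the same route as the paper: an exact linear conjugacy between $\mathrm{LN}$ on $\sR^{n_s}$ and $\mathrm{LS}$ on $\sR^{n_s-1}$ built from an orthonormal basis of the zero-mean hyperplane (your $\mathbf{U}$ is just the first $n_s-1$ columns of the paper's orthogonal matrix $\mQ$), absorbed into the surrounding linear layers, with the parallel case handled block-wise or via Lemma~\ref{lemma:PLN->LN} exactly as in the appendix. If anything, your write-up is a bit more careful than the paper's, since you explicitly check that the conjugacy holds for arbitrary (non-centered) inputs and on the degenerate $\sigma=0$ locus, points the paper's proof leaves implicit.
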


Please refer to Appendix~\ref{sec:proof 3.4} for the proof. Lemma~\ref{lemma: PLN=PLS} shows that PLN-Nets and PLS-Nets have nearly identical representation capacity, especially for large widths. This equivalence also simplifies several arguments by allowing us to ignore the centering operation in LN. For instance, Lemma~\ref{lemma: PLN=PLS} together with Figure~\ref{fig:proof_phi} provides an intuitive proof of Lemma~\ref{lemma:shallow LN->phi} in the case $n_s=3$.

\paragraph{Summary}
In this section, we established the limited representation capacity of shallow LN-Nets and the universal approximation property of shallow PLN-Nets for univariate functions. We also present the equivalent representation capacity between PLN-Nes and PLS-Nets. We extend these results to more general and higher-dimensional settings in the next section.

\section{Approximations by Deep Neural Networks}
\label{sec:4}

We now extend the previous analysis to more general approximation settings, including:
(i) multivariate inputs, 
(ii) target functions in Sobolev spaces $\mathcal{W}^{s,\infty}(\Omega)$,
(iii) deep architectures (depth $\ge 2$),
(iv) position-wise feed-forward networks (FFNs) in recurrent and Transformer-based models, and 
(v) approximation of derivatives.

We begin with multivariate approximation in Sobolev spaces $\mathcal{W}^{s,\infty}(\Omega)$\footnote{The classical Sobolev space is $W^{s,p}(\Omega)$; we restrict to $p=\infty$ for simplicity. Please see Appendix \ref{sec:proof thm 4.1} for the detailed definition. }, which provides a standard framework for measuring smoothness via bounded higher-order derivatives up to order $s$.

As in the previous section, we use $\phi$-networks as an intermediate class to transfer approximation results to PLN-Nets. We extend Corollary~\ref{coro:shallow PLN->phi} to deep architectures and vector-valued functions $\hat f:\mathbb{R}^d \to \mathbb{R}^m$.

\begin{lemma}
    \label{lemma:deep PLN-Net to phi}
    Given any $\hat f\in\gF(\phi;L,N)$ and $\hat f:\sR^d\to\sR^m$, there exists a PLN-Net $f\in \gF(\mathrm{PLN}(n_s);L,n_sN)$, such that $f=\hat f$. In other words, we have
    \begin{equation}
        \gF(\phi;L,N)\subseteq\gF(\mathrm{PLN}(n_s);L,n_sN),
    \end{equation}
    where $\phi(x)=x/\sqrt{x^2+1}$ is an activation function.
\end{lemma}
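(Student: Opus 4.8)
The plan is to lift the shallow, scalar-valued embedding result (Corollary~\ref{coro:shallow PLN->phi}, or rather its vector-valued extension Corollary~\ref{Corollary:A4}) layer by layer through the composition that defines a depth-$L$ $\phi$-Net. Recall from Definition~\ref{def:networks} that $\hat f = \varphi_{L+1}\circ\phi_L\circ\varphi_L\circ\cdots\circ\phi_1\circ\varphi_1$, where each $\varphi_\ell$ is linear and each $\phi_\ell$ applies $\phi(x)=x/\sqrt{x^2+1}$ element-wise to a vector in $\sR^{d_\ell}$ with $d_\ell\le N$. The key observation is that a single PLN layer with norm size $n_s$, sandwiched appropriately between linear maps, can reproduce one element-wise $\phi$-block of width $d_\ell$ using $n_s d_\ell$ neurons: by Lemma~\ref{lemma:PLN->LN} a PLN layer is a sum of LN-Nets, and by Lemma~\ref{lemma:shallow LN->phi} each width-$1$ LN-Net of norm size $n_s\ge 3$ realizes a scalar $\phi$-Net $t\mapsto v\phi(wt+b)+c$. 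Applying this coordinate-wise and collecting the groups, a PLN($n_s$) layer of width $n_s d_\ell$ between two linear layers exactly represents $\mathrm{Linear}\circ\phi_\ell\circ\mathrm{Linear}$ on width $d_\ell$.

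The proof then proceeds by an explicit construction that interleaves these PLN blocks. First I would write $\hat f$ in ``building-block'' form, grouping the internal linear maps with the $\phi$ layers: $\hat f = \varphi_{L+1}\circ(\phi_L\circ\varphi_L)\circ(\phi_{L-1}\circ\varphi_{L-1})\circ\cdots\circ(\phi_1\circ\varphi_1)$. Next, for the PLN-Net I would insert a PLN($n_s$) layer of width $n_s d_\ell$ in place of each $\phi_\ell$, and choose the linear layer feeding into the $\ell$-th PLN block to be the ``expansion'' map that duplicates/combines coordinates as required by Lemma~\ref{lemma:shallow LN->phi} (the construction of Figure~\ref{fig:proof_phi}), while the linear layer leaving the block performs the ``contraction'' that reads off $v\phi(\cdot)+c$ per coordinate and then immediately composes with the original $\varphi_{\ell+1}$. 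Because composition of linear maps is linear, the contraction map of block $\ell$ and the expansion map of block $\ell+1$ and the original $\varphi_{\ell+1}$ can all be folded into a single linear layer, so the resulting network has exactly $L$ PLN layers and $L+1$ linear layers, i.e.\ it lies in $\gF(\mathrm{PLN}(n_s);L,n_sN)$ since $\max_\ell n_s d_\ell = n_s N$. Verifying that the composed map equals $\hat f$ is then a matter of chasing the identities $\mathrm{Linear}\circ\mathrm{PLN}(n_s)\circ\mathrm{Linear} = \mathrm{Linear}\circ\phi_\ell\circ\mathrm{Linear}$ established in the shallow case, applied $L$ times.

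The main obstacle I anticipate is purely bookkeeping rather than conceptual: one must be careful that the ``folding'' of three consecutive linear maps into one is legitimate at the \emph{first} and \emph{last} blocks (where $\varphi_1$ and $\varphi_{L+1}$ already sit at the ends) and that the width bound is $n_s N$ and not something larger — in particular that the expansion maps do not need to widen beyond a factor $n_s$, which is exactly what Lemma~\ref{lemma:shallow LN->phi}/Corollary~\ref{Corollary:A4} guarantees per coordinate. A second minor point to handle is the case $n_s\ge 3$ versus the bare definition of $\gF(\mathrm{PLN}(n_s);L,n_sN)$: the statement is vacuous or trivial for $n_s<3$, so I would simply state the construction for $n_s\ge 3$ and note that larger $n_s$ only adds redundant neurons within each group (padding a group with constants, as in the shallow proof). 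With those caveats addressed, the result follows by a straightforward induction on $L$, using Corollary~\ref{coro:shallow PLN->phi} (in its $\sR^d\to\sR^m$ form) as the base case and the linear-folding step as the inductive step.
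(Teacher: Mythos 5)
Your proposal is correct and follows essentially the same route as the paper: decompose the deep $\phi$-Net into shallow blocks, represent each element-wise $\phi$ layer exactly by a shallow PLN block via Corollary~\ref{Corollary:A4} (built from Lemmas~\ref{lemma:shallow LN->phi} and \ref{lemma:PLN->LN}), and fold adjacent linear maps so the result has depth $L$ and width $n_sN$. The paper's proof in Appendix~\ref{sec:proof 4.1} does exactly this (grouping $W_{\ell+1}\circ\phi_\ell$ with inserted identities rather than $\phi_\ell\circ\varphi_\ell$, which is only a bookkeeping difference).
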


Lemma~\ref{lemma:deep PLN-Net to phi} follows by stacking the shallow constructions layer by layer. A deep $\phi$-Net can be expressed as a composition of shallow $\phi$-Nets, each of which can be represented by a shallow PLN-Net. The detailed proof is given in Appendix~\ref{sec:proof 4.1}.

Based on Lemma~\ref{lemma:deep PLN-Net to phi}, we can transfer classical approximation results for $\phi$-networks to PLN-Nets. In particular, combining our structural equivalence with existing Sobolev approximation theory yields the following rate.

\begin{theorem}
\label{thm:shallow PLN->universal}
	Let $\hat f\in \mathcal{W}^{s,\infty}(B^d)$ and $\|\hat f\|_{\mathcal{W}^{s,\infty}}\le1$, where $B^d=\{\rvx\in\sR^d:\|\rvx\|\le1\}$, there exists a shallow PLN-Net $f \in \gF(\mathrm{PLN}(n_s);1,n_sN)$, such that
	\begin{equation}
		\|f-\hat f\|_{L^{\infty}(B^d)} \le CN^{-s/d}, 
	\end{equation}
    for some $C$ independent of $N$. 
\end{theorem}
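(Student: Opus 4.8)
The plan is to reduce the statement to a known Sobolev approximation rate for smooth-activation shallow networks and then invoke the structural containment $\gF(\phi;1,N)\subseteq\gF(\mathrm{PLN}(n_s);1,n_sN)$ from Corollary~\ref{coro:shallow PLN->phi} (together with its vector-valued extension, Corollary~\ref{Corollary:A4}). Concretely, $\phi(x)=x/\sqrt{x^2+1}$ is a bounded, smooth, non-polynomial sigmoidal activation, so the classical theory of approximation by single-hidden-layer networks with smooth activations applies. I would cite the standard result (in the spirit of Mhaskar-type estimates, or the more recent treatments of shallow-network Sobolev rates) stating that for $\hat f\in\mathcal{W}^{s,\infty}(B^d)$ with $\|\hat f\|_{\mathcal{W}^{s,\infty}}\le 1$, there exists a shallow $\phi$-Net $g\in\gF(\phi;1,N)$ with $\|g-\hat f\|_{L^\infty(B^d)}\le C N^{-s/d}$, where $C$ depends only on $s$, $d$, and $\phi$, not on $N$.

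The key steps, in order, are: (1) verify that $\phi(x)=x/\sqrt{x^2+1}$ satisfies the hypotheses of the cited shallow-network approximation theorem --- it is $C^\infty$, strictly increasing, bounded with limits $\pm1$, and not a polynomial, hence its derivatives at any point do not all vanish and Taylor/ridge-function arguments go through; (2) apply that theorem on the closed ball $B^d$ to obtain $g\in\gF(\phi;1,N)$ with $\|g-\hat f\|_{L^\infty(B^d)}\le CN^{-s/d}$; (3) apply Corollary~\ref{coro:shallow PLN->phi} (scalar output) or its vector-valued analogue to realize $g$ exactly by a shallow PLN-Net $f\in\gF(\mathrm{PLN}(n_s);1,n_sN)$, so that $f=g$ on all of $\sR^d$; (4) conclude $\|f-\hat f\|_{L^\infty(B^d)}=\|g-\hat f\|_{L^\infty(B^d)}\le CN^{-s/d}$. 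A minor bookkeeping point is to confirm that the width inflation by the constant factor $n_s$ only changes the constant $C$, not the exponent $-s/d$; since $n_s\ge 2$ is fixed independently of $N$, this is immediate.

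The main obstacle is step (1)--(2): pinning down a clean citable Sobolev-rate bound for shallow networks with a \emph{specific} smooth activation $\phi(x)=x/\sqrt{x^2+1}$ rather than ReLU. Unlike ReLU constructions, one cannot localize $\phi$ to a bounded region with exactly zero output elsewhere, so the approximation must be built globally; the standard route is via approximating $\hat f$ first by a polynomial (Jackson-type estimate on $B^d$, giving degree $\sim N^{1/d}$ with $L^\infty$ error $\sim N^{-s/d}$) and then expressing polynomials of bounded degree as finite linear combinations of ridge functions $\phi(w^\top x+b)$ using the non-polynomiality of $\phi$ (derivatives of $\phi$ at a base point span all monomial degrees). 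Counting the number of ridge terms needed to realize a degree-$k$ polynomial in $d$ variables gives $\binom{k+d}{d}\sim k^d\sim N$, which matches the claimed width budget. Alternatively, if the paper prefers, one can first approximate $\phi$-activations by ReLU-activations on a compact range (or vice versa) up to negligible error and quote a ReLU Sobolev-rate theorem, but the direct ridge-function argument is cleaner and avoids a second approximation layer. I would present the argument at the level of "combine Corollary~\ref{coro:shallow PLN->phi} with the classical shallow-network Sobolev estimate," and defer the polynomial-to-ridge counting to the appendix.
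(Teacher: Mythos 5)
Your proposal matches the paper's proof: the paper also verifies that $\phi(x)=x/\sqrt{x^2+1}$ is smooth and non-polynomial, invokes the classical shallow-network Sobolev rate (Theorem 6.8 of Pinkus, 1999, which is exactly the Mhaskar-type bound you describe) to get $g\in\gF(\phi;1,N)$ with $\|g-\hat f\|_{L^\infty(B^d)}\le CN^{-s/d}$, and then realizes $g$ exactly by a shallow PLN-Net via the structural containment (the paper cites Lemma~\ref{lemma:deep PLN-Net to phi}, whose $L=1$ case is your Corollary~\ref{Corollary:A4}). Your concern about pinning down the citation is resolved exactly by that Pinkus reference, so the argument is correct and essentially identical to the paper's.
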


Theorem~\ref{thm:shallow PLN->universal} builds upon Lemma~\ref{lemma:deep PLN-Net to phi} and Theorem 6.8 in \cite{pinkus1999approximation}. 
This theorem establishes the universal approximation capability of shallow PLN-Nets 
for functions in $\mathcal{W}^{s,\infty}$ over bounded domains. Please refer to Appendix \ref{sec:proof thm 4.1} for the detailed proof. 

In the following, we broaden this analysis in several directions. 
Section~\ref{sec:deep PLN} investigates approximation properties of deep PLN-Nets. 
Extensions to position-wise feed-forward networks in sequence models (RNNs and Transformers) are discussed in Section~\ref{sec:RNN}. 
Finally, Section~\ref{sec:derivative} provides a preliminary study of approximation guarantees for derivatives.

\subsection{Approximations by Deep PLN-Nets}
\label{sec:deep PLN}

ReLU is one of the most widely used activation functions in modern neural networks. 
If PLN-Nets are capable of representing ReLU-Nets, then many existing approximation 
results established for ReLU architectures can be transferred to PLN-Nets.

\begin{theorem}[Approximation of ReLU Networks by PLN-Nets]
\label{thm:deep PLN->ReLU}
Let $\hat f \in \gF(\mathrm{ReLU};L,N)$, $f:[0,1]^d\to\sR^n$ and let $\epsilon>0$. 
Then there exists a PLN-Net $f \in \gF(\mathrm{PLN}(n_s);2L,3n_sN)$ such that
\begin{equation}
    \|f-\hat f\|_{L^\infty([0,1]^d)} < \epsilon .
\end{equation}
\end{theorem}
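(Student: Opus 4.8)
The plan is to show that a single ReLU can be approximated arbitrarily well on a bounded interval by a small PLN-Net, and then to stack these approximations layer by layer while controlling error propagation through the Lipschitz constants of the intermediate linear maps. Concretely, since $\hat f\in\gF(\mathrm{ReLU};L,N)$ maps the compact domain $[0,1]^d$, all pre-activations at every layer stay in some bounded box; this lets me work on a fixed compact interval $[-M,M]$ at each neuron. The key sub-task is to approximate $t\mapsto\mathrm{ReLU}(t)$ on $[-M,M]$ by the smooth sigmoidal-type activation $\phi(x)=x/\sqrt{x^2+1}$, which Lemma~\ref{lemma:shallow LN->phi} (and its deep/vector version, Lemma~\ref{lemma:deep PLN-Net to phi}) lets us realize exactly inside a PLN-Net with a threefold width blowup and a doubling of depth. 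The standard trick is to write, for large scaling $a>0$, $\mathrm{ReLU}(t)\approx \tfrac12\bigl(t + t\,\phi(at)\bigr)$ up to an error that vanishes as $a\to\infty$; more robustly, since $\sigma(u):=\tfrac12(1+\phi(u))$ is a bounded sigmoidal function, one can use the classical fact that $\int_{-\infty}^{t}\sigma(as)\,ds$ approximates $\mathrm{ReLU}(t)$, or simply note that $\mathrm{ReLU}$ on $[-M,M]$ is $1$-Lipschitz and build a width-$O(1)$ $\phi$-network approximant from Corollary~\ref{coro:shallow PLN->phi}-type reasoning. Either way, each ReLU neuron becomes a $\phi$-subnetwork of bounded width and depth $2$, which Lemma~\ref{lemma:deep PLN-Net to phi} converts into a PLN-Net with the advertised parameters $2L$ depth and $3n_sN$ width.

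The main steps, in order, are: (1) \emph{Boundedness.} Using that $\hat f$ is a finite composition of linear layers and ReLU on the compact set $[0,1]^d$, show every hidden activation lies in $[-M,M]^{N}$ for some $M$ depending on $\hat f$; record the Lipschitz constant $\Lambda_\ell$ of each linear layer $\varphi_\ell$ in the operator norm. (2) \emph{Single-neuron approximation.} For a prescribed $\eta>0$, construct a depth-$2$, width-$O(1)$ $\phi$-network $g_\eta$ with $\sup_{t\in[-M',M']}|g_\eta(t)-\mathrm{ReLU}(t)|<\eta$ on a slightly enlarged interval $[-M',M']$; apply it coordinatewise to get a $\phi$-layer replacing each ReLU layer, so the per-layer substitution incurs error at most $\eta$ in $\ell^\infty$. (3) \emph{Error propagation.} Replace the ReLU layers one at a time; because ReLU, $\phi$, and the linear layers are all Lipschitz on the relevant bounded regions, a telescoping/Gronwall-type argument gives $\|f_{\mathrm{approx}}-\hat f\|_{L^\infty}\le \eta\sum_{\ell=1}^{L}\prod_{k>\ell}\Lambda_k \;=:\; C_{\hat f}\,\eta$, where I must also check that the perturbed activations stay inside $[-M',M']$ so the approximation bound of step (2) remains valid — this is why the interval was enlarged. (4) \emph{Choose} $\eta=\epsilon/C_{\hat f}$ and invoke Lemma~\ref{lemma:deep PLN-Net to phi} to realize the resulting $\phi$-Net of depth $2L$ and width $3N$ exactly as a PLN-Net in $\gF(\mathrm{PLN}(n_s);2L,3n_sN)$, completing the proof.

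The hard part will be step (3): keeping the propagated error under control \emph{and} simultaneously guaranteeing that the intermediate values of the approximating network never leave the compact region on which the single-neuron bound from step (2) holds. A clean way to handle this is an inductive argument over the layers — maintain the invariant that after the $\ell$-th substituted layer the activation differs from the true one by at most $\delta_\ell$ and lies in $[-M',M']$ — and choose the enlargement $M'-M$ and the target $\eta$ small enough that the invariant is preserved; the constants are finite because there are only $L$ layers and each linear map has finite norm. A secondary, more cosmetic point is justifying the exact width/depth bookkeeping: Lemma~\ref{lemma:deep PLN-Net to phi} absorbs the factor $n_s$ and the depth doubling, so I just need the $\phi$-network from steps (2)–(3) to have width at most $3N$ and depth at most $2L$, which follows if the single-neuron gadget $g_\eta$ uses width $3$ per neuron and depth $2$ — matching the statement.
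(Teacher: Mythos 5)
Your overall skeleton matches the paper's: approximate the ReLU network by a $\phi$-network of depth $2L$ and width $3N$ with $\phi(x)=x/\sqrt{x^2+1}$, then use Lemma~\ref{lemma:deep PLN-Net to phi} to realize that $\phi$-network exactly as an element of $\gF(\mathrm{PLN}(n_s);2L,3n_sN)$. The paper, however, obtains the intermediate $\phi$-network in one stroke by citing Theorem~1 of \cite{zhang2024deep} (after checking $\phi\in\mathscr{A}_3$), whereas you attempt to re-derive that result from scratch, and that is exactly where your argument has a genuine gap.

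The gap is in your Step (2), the single-neuron gadget, which is the entire nontrivial content of the theorem. The formula $\mathrm{ReLU}(t)\approx\tfrac12\bigl(t+t\,\phi(at)\bigr)$ is a good pointwise approximation, but it is not a $\phi$-network: a $\phi$-layer outputs linear combinations of $\phi$ applied to affine functions of its input, and cannot produce the product $t\cdot\phi(at)$ (nor the bare identity $t$) without an additional construction. Realizing that product requires a multiplication gadget built from a finite-difference quotient of $\phi$ around a point where $\phi''\neq 0$ (this is precisely why the class $\mathscr{A}_3$ in \cite{zhang2024deep} demands $\rho''(x_0)\neq 0$, and where the depth doubling and the factor $3$ in width come from), and you neither construct it nor account for its error and width. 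Your two fallbacks do not close the hole either: $\int_{-\infty}^{t}\sigma(as)\,ds$ is an integral, not a finite-width network, and the ``ReLU is $1$-Lipschitz, so use Corollary~\ref{coro:shallow PLN->phi}-type reasoning'' route (i.e.\ the Theorem~\ref{theorem:Lipschitz-UAT}-style construction) has width growing like $1/\eta$, which destroys the fixed $3N$ width bound once you shrink $\eta$ to beat the error-propagation constant $C_{\hat f}$. Your Steps (1), (3), and (4) — boundedness of hidden activations, Lipschitz error propagation with an enlarged interval, and the final conversion via Lemma~\ref{lemma:deep PLN-Net to phi} — are sound in outline, but until you exhibit a depth-$2$, width-$3$ (accuracy-independent) $\phi$-gadget per ReLU neuron with a proven uniform error bound on $[-M',M']$, the claimed membership in $\gF(\mathrm{PLN}(n_s);2L,3n_sN)$ is not established; alternatively, simply invoke Theorem~1 of \cite{zhang2024deep} as the paper does.
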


Theorem~\ref{thm:deep PLN->ReLU} follows from Theorem 1 in \cite{zhang2024deep}. 
The detailed proof is provided in Appendix~\ref{sec:proof thm 4.2}.

While Theorem~\ref{thm:deep PLN->ReLU} allows us to derive many approximation 
results for PLN-Nets via ReLU-Nets as an intermediate representation, 
ReLU is inherently piecewise linear and therefore does not naturally capture 
higher-order smoothness, which is important in applications such as physics-informed 
learning \cite{raissi2019pinns} and operator approximation. To address this limitation, we instead 
build on approximation results for smooth activations such as tanh 
\cite{de2021approximation}, and extend them to PLN-Nets, as stated in 
Theorem~\ref{thm:2,2-PLN-Universal}.

\begin{theorem}[Approximation of Sobolev Functions by PLN-Nets]
\label{thm:2,2-PLN-Universal}
Let $d,s \in \mathbb{N}$, $\delta>0$, and $\hat f \in \mathcal{W}^{s,\infty}([0,1]^d)$. 
Then there exists a constant $C_{d,s,f}$ such that for every $N \in \mathbb{N}$ with 
$N > 3d/2$, there exists a two-hidden-layer PLN-Net $f$ with norm size $n_s \ge 3$, 
whose layer widths are at most
$n_s\!\left(3\left\lceil \frac{s}{2} \right\rceil |P_{s-1,d+1}| + d(N-1)\right)$ and $3n_s\left\lceil \frac{d+2}{2} \right\rceil |P_{d+1,d+1}| N^{d}$, respectively, such that
\begin{equation}
\| f - \hat f \|_{L^{\infty}([0,1]^{d})} 
\le (1+\delta) \frac{C_{d,s,f}}{N^{s}}.
\end{equation}
One possible choice of the constant is
\begin{equation}
C_{d,s,f} = \max_{0 \le \ell \le s} 
\frac{1}{(s-\ell)!} \left( \frac{3d}{2} \right)^{s-\ell} 
\|\hat f\|_{\mathcal{W}^{s,\infty}([0,1]^{d})}.
\end{equation}
Here
\begin{equation}
\label{eqn:pdd}
|P_{n,d}|=\binom{n+d-1}{n}.
\end{equation}
\end{theorem}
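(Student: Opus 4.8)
The plan is to reduce the statement to a known approximation result for tanh networks and then apply the structural embedding established earlier in the paper. Concretely, the target estimate in Theorem~\ref{thm:2,2-PLN-Universal} — a two-hidden-layer network, the explicit layer-width bounds involving $|P_{n,d}|=\binom{n+d-1}{n}$ and the constant $C_{d,s,f}$ — is exactly the shape of the main two-hidden-layer tanh approximation theorem in \cite{de2021approximation}. So the first step is to invoke that theorem: for $\hat f\in\mathcal{W}^{s,\infty}([0,1]^d)$ and every $N>3d/2$, there is a two-hidden-layer tanh network $\tilde f\in\gF(\tanh;2,W)$ with the stated widths $W_1\le 3\lceil s/2\rceil |P_{s-1,d+1}|+d(N-1)$ and $W_2\le 3\lceil (d+2)/2\rceil |P_{d+1,d+1}| N^{d}$ achieving $\|\tilde f-\hat f\|_{L^\infty}\le(1+\delta)C_{d,s,f}/N^s$.

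The second step is to replace the $\tanh$ activation by the PLN-realizable activation $\phi(x)=x/\sqrt{x^2+1}$. Here I would note that $\tanh$ and $\phi$ are both smooth odd sigmoidal functions and that the construction in \cite{de2021approximation} is not tied to the precise functional form of $\tanh$ — it only uses smoothness, boundedness, and the fact that a sigmoidal function can reproduce, to arbitrary accuracy on compacts, both the identity (via $t\phi(x/t)\to x$ as $t\to\infty$, using $\phi'(0)=1$) and step-like / polynomial-building primitives through finite linear combinations of scaled-and-shifted copies. The cleanest route is: (a) if \cite{de2021approximation} already states its result for a general class of activations that includes $\phi$, cite that directly; otherwise (b) show $\gF(\tanh;2,W)$-functions can be approximated on $[0,1]^d$ to within $\delta' $ by $\gF(\phi;2,W')$-functions with $W'$ a constant multiple of $W$, by re-running their polynomial-emulation lemmas with $\phi$ in place of $\tanh$ (the quantitative derivative bounds on $\phi$ near $0$ and its decay at $\pm\infty$ are comparable to $\tanh$, so the width blow-up is a bounded factor which I would absorb or track explicitly). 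Either way, I obtain $\bar f\in\gF(\phi;2,W'')$ with $\|\bar f-\hat f\|_{L^\infty([0,1]^d)}\le(1+\delta)C_{d,s,f}/N^s$ after adjusting $\delta$ slightly, and with $W''$ still of the form claimed up to the factor already present in the statement.

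The third and final step is purely structural: apply Lemma~\ref{lemma:deep PLN-Net to phi} with $L=2$ to embed $\bar f\in\gF(\phi;2,W'')$ into $\gF(\mathrm{PLN}(n_s);2,n_s W'')$ exactly, for any $n_s\ge 3$. Since Lemma~\ref{lemma:deep PLN-Net to phi} preserves the layer structure and multiplies each hidden width by $n_s$, the resulting PLN-Net has two hidden layers of widths at most $n_s W_1''$ and $n_s W_2''$, which match the displayed bounds $n_s(3\lceil s/2\rceil|P_{s-1,d+1}|+d(N-1))$ and $3n_s\lceil(d+2)/2\rceil|P_{d+1,d+1}|N^d$ (the extra factor $3$ on the second layer being exactly the overhead of the $\phi$-to-PLN gadget, and the factor $3$ inside the first layer coming from \cite{de2021approximation}). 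Because the embedding is exact, the approximation error is unchanged, giving $\|f-\hat f\|_{L^\infty([0,1]^d)}\le(1+\delta)C_{d,s,f}/N^s$ as required.

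The main obstacle is Step~2: cleanly transferring the de Ryck--Lanthaler--Mishra construction from $\tanh$ to $\phi(x)=x/\sqrt{x^2+1}$ while keeping the width bounds \emph{exactly} as stated (rather than merely up to an unspecified constant). I expect the resolution is that \cite{de2021approximation} actually phrases its intermediate lemmas for a general smooth nonpolynomial sigmoidal activation, so $\phi$ qualifies directly and the widths carry over verbatim; if not, I would carefully re-derive the two key emulation lemmas (approximating monomials $x^k$ and the identity) for $\phi$, verifying that $\phi$ satisfies the same hypotheses with the same number of neurons, and note that the numerical constants hidden in $C_{d,s,f}$ and $\delta$ absorb any discrepancy. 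A secondary point to check is the domain: \cite{pinkus1999approximation}-style and \cite{de2021approximation}-style results are often stated on $[-1,1]^d$ or a ball, so a fixed affine reparametrization to $[0,1]^d$ (absorbed into $\varphi_1$) is needed and only rescales $C_{d,s,f}$ by a dimension-dependent constant, consistent with the $(3d/2)^{s-\ell}$ factor already appearing in the stated constant.
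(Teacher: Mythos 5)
Your overall skeleton matches the paper's: the final step (embedding a two-hidden-layer $\phi$-network with $\phi(x)=x/\sqrt{x^2+1}$ into a PLN-Net via Lemma~\ref{lemma:deep PLN-Net to phi} with $L=2$, multiplying each hidden width by $n_s$) is exactly what the paper does. The gap is your Step~2, which is where essentially all the work lies. The result of \cite{de2021approximation} is stated and proved specifically for $\tanh$, not for a general sigmoid class, so option (a) is not available; and your claim that the derivative behaviour of $\phi$ is ``comparable'' to $\tanh$ is false in the respect that matters: derivatives of $\tanh$ decay exponentially while those of $\phi$ decay only polynomially. This changes the partition-of-unity analysis (the scaling parameter must grow polynomially in $1/\epsilon$ rather than logarithmically), which is precisely why the paper does not transfer the $\tanh$ theorem but instead re-derives the entire construction for $\phi_{p,q}$ in Appendix~\ref{sec: appendix C} (derivative bounds at $0$ and at infinity, monomial approximation, a new choice of $\alpha$ in Lemma~\ref{lem:choice-alpha-pq}, the modified partition-of-unity Lemmas~\ref{lem:phi-pou-1}--\ref{lem:phi-pou-2}, and multiplication networks), and then obtains Theorem~\ref{thm:2,2-PLN-Universal} as the $p=q=2$, $k=0$ case of Theorem~\ref{thm:p,q-phi universal}. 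For $k=0$ the polynomial decay happens not to degrade the rate or the constant, and the $p=q=2$ monomial lemma reproduces exactly the width $3\lceil\cdot\rceil$ of the $\tanh$ construction --- but these facts have to be proved, not assumed.

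Your fallback (b) also cannot be patched the way you suggest: the theorem's width bounds are explicit, so there is no room to ``absorb a bounded width blow-up,'' and the variant in which you approximate the $\tanh$ network itself by a $\phi$ network requires controlling error amplification through outer-layer weights that grow with $N$ in the de Ryck--Lanthaler--Mishra construction, which you do not address. Finally, a smaller misattribution: the factor $3$ in the second hidden-layer width comes from the $\phi$-network multiplication/monomial construction (it is $(p+1)=3$ for $p=2$), not from the ``$\phi$-to-PLN gadget''; the PLN embedding contributes only the factor $n_s$.
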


\begin{remark}
The proof follows the framework of \cite{de2021approximation}; see 
Appendix~\ref{sec:proof thm 4.3} for details. 
Note that Theorem~\ref{thm:2,2-PLN-Universal} does not directly yield 
approximation guarantees for higher-order derivatives. 
This limitation stems from structural differences between PLN 
and the tanh activations considered in \cite{de2021approximation}. 
We revisit this issue in Section~\ref{sec:derivative}.
\end{remark}

By Theorems~\ref{thm:deep PLN->ReLU} and \ref{thm:2,2-PLN-Universal}, 
we conclude that PLN-Nets possess strong universal approximation capabilities 
across both piecewise-linear and smooth function classes. 
We now extend to position-wise feed-froward networks (FFNs).

\subsection{Approximations of Position-Wise FFNs}
\label{sec:RNN}

The results above are established for standard MLP architectures. 
We now extend the representation results to sequence models such as 
Transformers \cite{2017_NIPS_Vaswani} and RNNs, focusing on their 
 position-wise feed-forward  sublayers. 
The key structural difference lies in parameter sharing across sequence 
positions rather than in the nonlinear transformation itself.

\paragraph{Sequence Linear Layers.}
Let $\mX\in\sR^{s\times d}$ denote a sequence input with length $s$ and 
token dimension $d$. A sequence linear layer mapping token dimension 
$d\to m$ is defined as
\begin{equation}
    \varphi(\mX)=\mX\mW+\bm1_s\vb^\top,
\end{equation}
for all $\mX\in\sR^{s\times d}$, where $\mW\in\sR^{d\times m}$ and $\vb\in\sR^m$ are the learnable parameters.

\begin{theorem}[Represent Position-Wise FFNs by PLN-Nets]
\label{thm:transformer}
Let $\hat\varphi_1$ and $\hat\varphi_2$ be two sequence linear layers 
mapping token dimensions $d_x\to N$ and $N\to d_y$, respectively, and let 
$\phi(x)=x/\sqrt{x^2+1}$. Define the position-wise FFN mapping
$\hat f = \hat\varphi_2 \circ \phi \circ \hat\varphi_1 $. 
Then there exist sequence linear layers $\varphi_1$ and $\varphi_2$ mapping token dimensions
$d_x\to n_sN$ and $n_sN\to d_y$, respectively, such that
\begin{equation}
    f = \varphi_2\circ\mathrm{PLN}(n_s)\circ\varphi_1
\end{equation}
exactly represents $\hat f$ for all sequence inputs.
\end{theorem}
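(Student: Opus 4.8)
The plan is to reduce the sequence (position-wise) setting to the already-established per-token MLP result. The key observation is that a position-wise FFN applies the \emph{same} linear-then-$\phi$-then-linear map independently to each row of $\mX\in\sR^{s\times d_x}$; parameter sharing across positions is exactly what the sequence linear layer $\varphi(\mX)=\mX\mW+\bm1_s\vb^\top$ encodes. So the entire construction amounts to observing that the weight matrices and biases I build for the single-token case can be reused verbatim across all $s$ positions, provided that PLN also acts independently per row. I first note that $\mathrm{PLN}(n_s)$, when applied to $\mX\in\sR^{s\times n_sN}$, normalizes each group of $n_s$ coordinates \emph{within a fixed row}, so it is genuinely position-wise: it never mixes information across the $s$ positions. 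This is the analogue of the remark in Definition~\ref{def:networks} that element-wise operators lift to vectors coordinatewise.

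Concretely, I would proceed as follows. First, fix an arbitrary position $i\in\{1,\dots,s\}$ and consider the $i$-th row $\rvx_i=[\mX]_{i,:}^\top\in\sR^{d_x}$. The vector-valued shallow version of Corollary~\ref{coro:shallow PLN->phi} (i.e.\ Corollary~\ref{Corollary:A4}, the $\sR^d\to\sR^m$ extension referenced in Appendix~\ref{sec:proof corollary 3.1}) gives matrices $\mW_1\in\sR^{n_sN\times d_x}$, $\vb_1\in\sR^{n_sN}$, $\mW_2\in\sR^{d_y\times n_sN}$, $\vb_2\in\sR^{d_y}$ such that
\begin{equation}
\mW_2\,\mathrm{PLN}(n_s)(\mW_1\rvx+\vb_1)+\vb_2 = \hat\varphi_2\bigl(\phi(\hat\varphi_1(\rvx))\bigr)
\end{equation}
for every $\rvx\in\sR^{d_x}$, where on the right $\hat\varphi_1,\hat\varphi_2$ are read as the per-token affine maps $\rvx\mapsto\hat\mW_1^\top\rvx+\hat\vb_1$ etc. Crucially these parameters do not depend on $i$. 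Now define the sequence linear layers $\varphi_1(\mX)=\mX\mW_1^\top+\bm1_s\vb_1^\top$ and $\varphi_2(\mY)=\mY\mW_2^\top+\bm1_s\vb_2^\top$, which map token dimensions $d_x\to n_sN$ and $n_sN\to d_y$ as required. Then $f=\varphi_2\circ\mathrm{PLN}(n_s)\circ\varphi_1$ acts on $\mX$ row by row, and by the per-row identity above its $i$-th row equals the $i$-th row of $\hat f(\mX)$. Since $i$ was arbitrary, $f=\hat f$ exactly on all sequence inputs.

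The only genuine point requiring care — and the step I would flag as the main obstacle, though it is more bookkeeping than difficulty — is verifying that the two conventions for ``applied per token'' line up: namely that $\mathrm{PLN}(n_s)$ on a matrix argument is \emph{defined} to normalize within-row groups (not, say, across positions or over the whole flattened tensor), and that the sequence linear layer's bias broadcasts additively over positions in precisely the way that makes $\varphi_1(\mX)$ have $i$-th row $\mW_1\rvx_i+\vb_1$. Both are true by the definitions given, but the proof in Appendix~\ref{sec:proof thm 4.4} should state them explicitly. A secondary subtlety is the treatment of the $\sigma=0$ case inside PLN: since the per-token construction from Corollary~\ref{Corollary:A4} already handles that convention (Remark~\ref{remark:LN def}), and the construction is position-independent, no new degenerate cases arise — each row either triggers the $\sigma=0$ branch or not, and the output matches $\hat f$ on that row either way. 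Thus the whole argument is a clean lift of the MLP result, with no new analytic content beyond checking that PLN and sequence linear layers are both position-wise.
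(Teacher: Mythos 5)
Your proposal is correct and follows essentially the same route as the paper's proof: decompose the sequence computation row-wise, invoke the per-token PLN representation result (Corollary~\ref{Corollary:A4}) with position-independent weights, and lift via the parameter-sharing of sequence linear layers together with the observation that $\mathrm{PLN}(n_s)$ acts group-wise within each row. The only cosmetic difference is that you cite the vector-valued corollary explicitly while the paper invokes Corollary~\ref{coro:shallow PLN->phi} and notes the transposition/high-dimensional adjustment in passing.
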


Theorem~\ref{thm:transformer} is a parameter-sharing extension of the MLP 
representation result (Corollary~\ref{Corollary:A4}) and applies to the 
position-wise feed-forward modules in Transformers and RNNs. 
A detailed proof is provided in Appendix~\ref{sec:proof thm 4.6}.

\begin{remark}
In sequence models, Layer Normalization operates at the token level, while 
PLN groups $n_s$ neurons within each token. This grouping ensures that each 
PLN block corresponds to one activation $\phi$ in the equivalent feed-forward 
representation.
\end{remark}

\begin{remark}
The same argument applies to $1\times1$ convolution layers, which preserve 
spatial resolution similarly to how sequence linear layers preserve sequence 
length. However, extending the result to larger convolution kernels is nontrivial and 
remains an open problem for future work. 
\end{remark}

    \subsection{Approximation of Derivatives}
\label{sec:derivative}

In this section, we present a preliminary study on derivative approximation. To facilitate the analysis, we first generalize our framework to a broader class of normalization schemes; the necessity of this generalization will be discussed later.

\begin{definition}[$(p,q)$-normalization]
Given a hidden representation $\rvh = [h_1, h_2, \cdots, h_d]^\top \in \mathbb{R}^d$ of a single sample with $d$ neurons, let $p \ge q \ge 1$. The $(p,q)$-normalization transforms $\rvh$ into $\hat{\rvh}$ across neurons as
\begin{equation}
\label{eqn:(p,q)-norm}
\hat h_i = [\phi_{p,q}^\mathrm{norm}(\rvh)]_i 
= {h_i^{\frac{p}{q}}}/{\overline{|h^p|}^{\frac{1}{q}}},
\end{equation}
for $i=1,2,\dots,d$, where $\overline{|h^p|} = \frac{1}{d} \sum_{i=1}^{d} |h_i|^p$ denotes the sample-wise mean of $|h_i|^p$.
\end{definition}

The $(p,q)$-normalization applies a power transformation before normalization and enforces the constraint $\|\hat{\rvh}\|_q^q = d$. Similar to Lemma \ref{lemma:shallow LN->phi}, we can derive the corresponding $(p,q)$-activation function $\phi_{p,q}$:
\begin{equation}
\phi_{p,q}(x) = \frac{x^{p/q}}{\left(|x|^p + 1\right)^{1/q}}.
\end{equation}

By definition, $\phi_{p,q}(+\infty)=1$. We further require $\phi_{p,q}$ to be an odd function; therefore, throughout the paper we assume $p,q \in \mathbb{N}$ and that $p/q$ is an odd integer. As a special case, $\phi_{2,2}(x) = x/\sqrt{x^2+1}$ corresponds to the activation used in our previous universal approximation results. For general $\phi_{p,q}$, we extend the approximation theory to Sobolev norms.

\begin{theorem}[Approximation in Sobolev spaces by $\phi_{p,q}$ networks]
\label{thm:p,q-phi universal}
Let \(p \ge q \ge 1\) be even integers such that \(r := p/q\) is an odd integer. Let \(d,s \in \mathbb{N}\), \(k \in \mathbb{N}_0\) with \(r > \frac{k(s+d+k)}{s-k}\), $\delta > 0$, and let \(\hat f \in \mathcal{W}^{s,\infty}([0,1]^d)\). Then there exist constants \(C_1\) and \(C > 0\) such that for every \(N \in \mathbb{N}\) with \(N > N_0(d)\), there exists a two-hidden-layer $\phi_{p,q}$ neural network \(f\) satisfying
\begin{equation}
\|f - \hat f\|_{L^{\infty}([0,1]^d)} \le 
(1+\delta)\frac{C_1}{N^{s}},
\end{equation}
and for \(1 \le k < s\),
\begin{equation}
\label{eqn:conv rate}
\|f - \hat f\|_{W^{k,\infty}([0,1]^d)} \le 
C\frac{1+\delta}{\delta^{k/r} N^{s-k-(s+d+k)k/r}},
\end{equation}
where the constants \(C\) and \(C_1\) are independent of $\delta$ and $N$.
\end{theorem}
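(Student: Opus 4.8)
The plan is to follow the two-hidden-layer construction of \cite{de2021approximation} for $\tanh$ networks, replacing the $\tanh$ activation by $\phi_{p,q}$ and tracking how the approximation of the building blocks degrades in higher-order Sobolev norms. The skeleton is: (i) the first hidden layer reconstructs the coordinate functions $x_1,\dots,x_d$ together with a finite dictionary of low-degree monomials needed to assemble polynomials; (ii) the second hidden layer performs the localized polynomial interpolation (a partition-of-unity/bump construction) that yields the $N^{-s}$ rate in $L^\infty$; and (iii) a telescoping/Bramble--Hilbert-type estimate converts the pointwise polynomial approximation of $\hat f$ into the claimed $W^{k,\infty}$ bound. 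The role of $\delta$ is exactly as in \cite{de2021approximation}: it parametrizes how sharply the $\phi_{p,q}$ activations are pushed toward their saturating/linear regimes, and the $\delta^{-k/r}$ factor in \eqref{eqn:conv rate} will come from differentiating these near-saturated pieces $k$ times.

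First I would establish the key lemma: $\phi_{p,q}$ can approximate the identity near $0$ and the constants $\pm 1$ near $\pm\infty$, and more importantly that by rescaling inputs one can approximate, on a bounded domain, the monomial $x^r$ and (via finite linear combinations and the standard polynomial-from-powers argument) any univariate polynomial, with $L^\infty$ error $O(\delta)$ and with the $j$-th derivative blowing up no faster than $\delta^{-j/r}$. The exponent $r=p/q$ enters here because near $0$ we have $\phi_{p,q}(x)\approx x^r$ rather than $x$, so recovering linear behavior costs a root and each derivative of the rescaled-and-recombined block contributes a factor controlled by $\delta^{-1/r}$ instead of $\delta^{-1}$. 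The combinatorial widths $|P_{s-1,d+1}|$, $|P_{d+1,d+1}|$ and the factors $\lceil s/2\rceil$, $\lceil (d+2)/2\rceil$ are inherited verbatim from the monomial/partition dictionaries of \cite{de2021approximation}; I would only need to check that each $\tanh$ node there is replaced by an $O(1)$-bounded number of $\phi_{p,q}$ nodes.

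Next I would run the approximation-rate bookkeeping. For the $L^\infty$ bound, the localized Taylor expansion of $\hat f\in\mathcal{W}^{s,\infty}$ on a grid of $N^d$ cells gives error $\lesssim \|\hat f\|_{\mathcal{W}^{s,\infty}} N^{-s}$, and the explicit constant $C_1 = \max_{0\le\ell\le s}\frac{1}{(s-\ell)!}(3d/2)^{s-\ell}\|\hat f\|_{\mathcal{W}^{s,\infty}}$ is just the Taylor-remainder constant on cells of diameter $O(d/N)$, with the factor $(1+\delta)$ absorbing the activation-realization error once $\delta$ is small; here $N_0(d)$ is the threshold ($N>3d/2$, up to constants) ensuring the cells fit. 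For the $W^{k,\infty}$ bound I would differentiate the network $k$ times: each differentiation hits either the smooth partition-of-unity factors (bounded derivatives, contributing powers of $N$ — this is the $N^{(s+d+k)k/r}$-type inflation once combined with the activation-sharpening cost) or the polynomial pieces (controlled by the derivative estimates from the key lemma, contributing the $\delta^{-k/r}$). Collecting terms and optimizing nothing (the statement leaves $N$ and $\delta$ free) yields exactly \eqref{eqn:conv rate}, and the hypothesis $r>\frac{k(s+d+k)}{s-k}$ is precisely the condition making the exponent $s-k-(s+d+k)k/r$ positive, i.e.\ the bound genuinely decays.

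\textbf{Main obstacle.} The hard part is the derivative bookkeeping in step three: unlike $\tanh$, whose derivatives are uniformly bounded and whose near-linear regime is reached at cost $\delta^{-1}$, the activation $\phi_{p,q}$ is only $C^\infty$ away from $0$ and behaves like $x^r$ at the origin, so one must carefully bound $\|\partial^\alpha (\phi_{p,q}\circ(\text{affine}))\|_\infty$ for $|\alpha|\le k$ on the relevant (rescaled) domains and show the blow-up is exactly $\delta^{-k/r}$ and not worse — in particular verifying that the linear-combination trick that extracts $x$ from $\{x^r,(2x)^r,\dots\}$ does not introduce extra negative powers of $\delta$. Everything else (the grid construction, the monomial dictionary, the partition of unity, the Taylor remainder) is a transcription of \cite{de2021approximation} with $\tanh\mapsto\phi_{p,q}$ and constants renamed, which I would state as lemmas and defer to the appendix.
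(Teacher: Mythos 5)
Your overall skeleton (grid of $N^d$ cells, local Bramble--Hilbert polynomials, a monomial dictionary in the first hidden layer, an approximate partition of unity and approximate multiplication in the second, then a three-term error split) is indeed the route the paper takes, transcribed from \cite{de2021approximation}. However, you misidentify the mechanism that produces the non-classical exponent, and with your stated bookkeeping the bound \eqref{eqn:conv rate} would not come out. You attribute the $\delta^{-k/r}$ loss to the behaviour of $\phi_{p,q}$ near the origin ($\phi_{p,q}(x)\approx x^r$, ``recovering linear behavior costs a root''), and you assert that the partition-of-unity factors have bounded derivatives contributing only powers of $N$. In the paper it is exactly the opposite. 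The monomial-extraction step is cost-free for the rate: monomials of degree in $\mathcal{S}_{p,q}=\{r+pj\}$ are obtained by finite differences using $|\phi_{p,q}^{(m)}(0)|\ge 1$ (Lemma~\ref{lem:derivatives-zero-pq-smooth}, Lemma~\ref{lemma:monomial-approx-Spq}) with error $O(h^2)$ for a free step size $h$, and all remaining degrees are recovered \emph{exactly} through the Vandermonde shift identity (Lemma~\ref{lemma:monomial-representation-linear}); this only inflates the widths (the $\frac{p+1}{2}(p\lceil\cdot\rceil+r+1)$ factors) and introduces no negative powers of $\delta$ — so the ``main obstacle'' you defer is not where the difficulty lies.

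The genuinely new ingredient, which your plan is missing, is the tail analysis: the derivatives of $\phi_{p,q}$ decay only polynomially, $|\phi_{p,q}^{(m)}(x)|\lesssim (1+|x|)^{-m}(1+|x|^p)^{-1/q}$ (Lemma~\ref{lem:upper-bound-pq}), whereas those of $\tanh$ decay exponentially. Consequently the bump scale $\alpha$ in the partition of unity must satisfy $1-\phi_{p,q}(\alpha/N)\le\epsilon$ and $\alpha^m|\phi_{p,q}^{(m)}(\alpha/N)|\le\epsilon$, forcing $\alpha\sim N^{1+kq/p}\epsilon^{-q/p}$ (Lemma~\ref{lem:choice-alpha-pq}) instead of the logarithmic growth available for $\tanh$; the $N^d$ ``far'' partition terms then have $W^{k,\infty}$-norm of order $\alpha^k\epsilon$ on each cell (Lemma~\ref{lem:phi-pou-2}), which already requires $k<r$. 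Since $\epsilon$ must be taken of order $\delta N^{-(s+d)}$ to suppress these $N^d$ terms relative to the target accuracy, differentiating $k$ times costs $\alpha^k\sim\delta^{-k/r}N^{(s+d+k)k/r}$, and this is the sole source of both the $\delta^{-k/r}$ factor and the $N^{(s+d+k)k/r}$ inflation, hence of the hypothesis $r>\frac{k(s+d+k)}{s-k}$. (In particular $\delta$ does not ``sharpen'' the activations; it only enters through the error split $\eta,\epsilon,h\propto\delta$.) If, as you claim, the partition factors had uniformly bounded derivatives, the same bookkeeping would yield the classical $N^{-(s-k)}$ rate, contradicting the statement you are proving; so the missing lemmas on the polynomial decay of $\phi_{p,q}^{(m)}$, the resulting choice of $\alpha$, and the $\alpha^k\epsilon$ bound for non-local bump terms are an essential gap, not a transcription detail.
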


\begin{remark}
We provide additional details for Theorem \ref{thm:p,q-phi universal}. The threshold \(N_0(d)\) is defined as \(N_0 = 3d/2\) if \(\hat f \in C^{s}([0,1]^d)\), and \(N_0 = 5d\) otherwise. \\
The network widths are bounded by
\begin{equation}
\frac{p+1}{2}\Bigl(p\Bigl\lceil\frac{s-1-r}{p}\Bigr\rceil + r + 1\Bigr)|P_{s-1,d+1}| + d(N-1)
\end{equation}
for the first hidden layer, and by
\begin{equation}
\frac{p+1}{2}\Bigl(p\Bigl\lceil\frac{d+1-r}{p}\Bigr\rceil + r + 1\Bigr) |P_{d+1,d+1}|
\end{equation}
for the second hidden layer, where $|P_{s,d}|$ is defined in \Eqn\ref{eqn:pdd}. Explicit expressions for $C$ and $C_1$ are given in Appendix \ref{sec: appendix C}. Also see the proof of Theorem \ref{thm:p,q-phi universal} there. 
\end{remark}

\begin{remark}
The convergence rate differs from the classical $N^{s-k}$ rate. We provide some intuition: (i) Compared with \cite{de2021approximation}, the key difference lies in the decay behavior of derivatives. While derivatives of $\tanh$ decay exponentially, those of $\phi_{p,q}$ exhibit only polynomial decay. As a result, logarithmic factors in the original analysis are replaced here by polynomial terms, leading to the factor $N^{(s+d+k)k/r}$. (ii) The present bound may not be tight. The dominant error term originates from Lemma \ref{lem:phi-pou-2}. Refining this step could potentially improve the convergence rate, which we leave for future work.
\end{remark}

\begin{remark}
When $p=q$, the normalization reduces to standard $L^p$ normalization. However, in this case the right-hand side of \Eqn\ref{eqn:conv rate} does not necessarily converge to zero for $k \ge 1$. This explains why derivative approximation was not included in Theorem \ref{thm:2,2-PLN-Universal}.
\end{remark}

\paragraph{Summary}
In this section, we extend our theory to deeper PLN-Nets and broaden the representation results from MLPs to position-wise FFNs in RNNs and Transformers. We also provide a preliminary theoretical analysis of derivative approximation for deep PLN-Nets with parallel $(p,q)$-normalizations.

    \section{Experiments Observation}

   We have proved that PLN-Nets possess strong representation capacity, yet they are not widely used in current neural network architectures. Therefore, we include a set of preliminary experiments to explore the practical potential of PLN-Nets.

    We first compare PLN-Nets with four types of $\phi$-Nets. The activation functions $\phi$ considered here include $\mathrm{ReLU}:x\to\max(0,x)$, $\mathrm{SiLU}:x\to x/(1+e^{-x})$, $\mathrm{Tanh}:x\to (e^x-e^{-x})/(e^x+e^{-x})$, and $\mathrm{SAT}:x\to \sin(\arctan x)=x/\sqrt{x^2+1}$.

    \paragraph{Differential Equation Solving with PINNs.} 
    We first investigate the performance of PLN in MLPs. We employ a physics-informed neural network (PINN) \cite{raissi2019pinns} to solve the Helmholtz equation, which requires accurate function values as well as smoothness. All experiments are conducted using MLPs without normalization layers. 
    For each activation layer, we perform independent runs with five different random seeds among different depths, loss weights, and optimizers. We use the relative $\mathcal{L}_2$ error \cite{raissi2019pinns} to evaluate the overall fitting accuracy of the model. In Table~\ref{tab:best_mse}, we report the best result among all configurations for the MLP width 128 and 256. 
    We observe that PLN(4) achieves the best performance among all configurations, which preliminarily indicates the potential of PLN in PINNs. 
    The detailed experimental settings are provided in Appendix~\ref{sec:experiment 2}.

    \begin{table}[t]
    \caption{We record the best relative $\mathcal{L}_2$ error of MLP with different activation layers and width under different experiment settings on solving Helmholtz equation. The best result is marked in bold.}
    \vspace{-0.05in}
    \label{tab:best_mse}
    \centering
    \small
    \begin{tabular}{r|cccc}
    \toprule
    Width & PLN(4)              & SAT    & Tanh       & ReLU     \\ \midrule
    128   & \textbf{0.000035}  & 0.000104 & 0.000250& 0.993593 \\
    256   & \textbf{0.000036} & 0.000052 & 0.000188 & 0.998485 \\ \bottomrule
    \end{tabular}
    \end{table}

   \paragraph{CIFAR-10 Classification with VGGs}
We next move to a CNN architecture to investigate the performance of PLN. We conduct image classification experiments on the CIFAR-10 dataset \cite{krizhevsky2009learning} using VGG-16 \cite{simonyan2014very} with different activation layers. We sweep different learning rates, and initialization methods to investigate the training stability. For each hyperparameter configuration, we perform three independent runs with different random seeds and report the mean validation accuracy in Figure~\ref{fig:vgg_results}. 
We observe that PLN is relatively less sensitive to initialization compared with other activation layers in VGG-16, which suggests its potential under rough initialization conditions. More details are provided in Appendix~\ref{sec:experiment 3}.

    \begin{figure}[t]
    \vspace{-0.05in}
    \centering
    \includegraphics[width=1\linewidth]{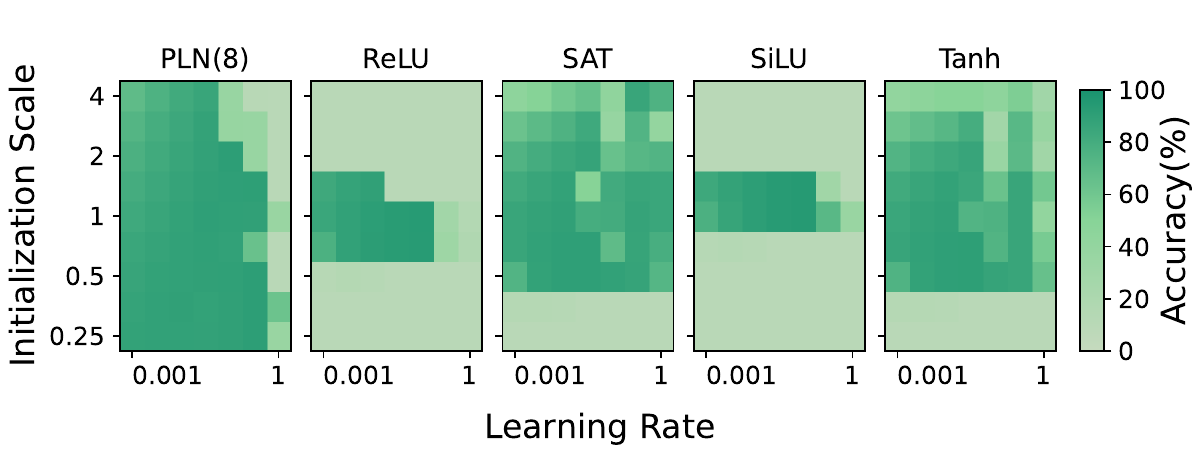}
    \caption{We record the average validation accuracy (higher if the green color is darker) of different activation layers under different initializations and learning rates. For each subfigure, the learning rate increase from 0.001 (the left) to 1 (the right). Besides, we adopt He initialization \cite{he2015delving} as the baseline (the middle row), and scale the weights by factors from 
    $1/4$ (the bottom) to $4$ (the top) relative to this baseline. }
    \label{fig:vgg_results}
    \vspace{-0.15in}
    \end{figure}

\paragraph{Machine Translation with Transformers}
Motivated by Theorem~\ref{thm:LN-Nets weak} and Theorem~\ref{theorem:Lipschitz-UAT}, we explore replacing the LN in Transformers with PLN to examine potential performance improvements. We conduct machine translation experiments on the Multi30k dataset \cite{elliott2016multi30k} using Transformers \cite{vaswani2017attention} with different activation layers. We compare PLN with LN as normalization methods and report the mean and standard deviation of BLEU scores over three random seeds in Figure~\ref{fig:translation}. 
We observe that replacing LN with PLN indeed improved translation performance in this setting. More details can be found in Appendix~\ref{sec:experiment 4}.

    \begin{figure}[t]
    \vspace{-0.05in}
    \centering
    \includegraphics[width=0.7\linewidth]{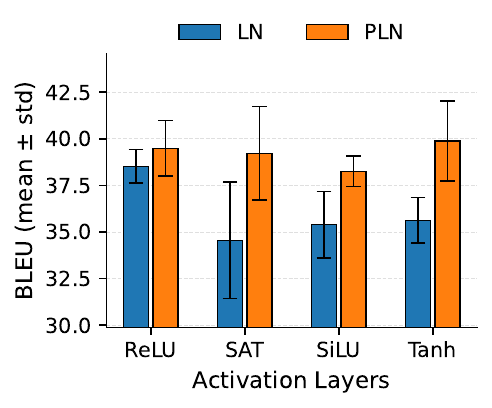}
    \hspace{0.05in}	
    
    \caption{BLEU scores (the higher is the better) for different combinations of normalization layers and activation functions in the Multi30k task.}
    \label{fig:translation}
    \end{figure}

	\section{Conclusion}
\label{sec:conclusion}

\paragraph{Conclusions.} 
We studied the approximation properties of neural networks that combine linear layers with Layer Normalization (LN) and Parallel Layer Normalization (PLN). We showed that sufficiently wide PLN-Nets possess universal approximation capability, whereas LN-only architectures have substantially more limited expressivity. We also established approximation rates under the $L^\infty$ norm. Our representation results further extend from LN to RMSNorm and from MLPs to position-wise FFNs in Transformers and RNNs. We additionally provided a preliminary analysis of derivative approximation. Limited empirical results offer supporting evidence for the practical potential of PLN-Nets.

\paragraph{Limitations and Future Work.}
Our analysis focuses on architectures composed of linear layers and LN/PLN-type normalizations. Their interaction with standard nonlinear activations in conventional networks remains open. Extending the theory to convolutional layers with larger kernels also requires further work. The derivative approximation results are still preliminary and may be sharpened in future studies. Finally, our limited experiments only reveal the potential of PLN-Nets but have not widely verified its performances. 

\section*{Acknowledgments}
This work was partially supported  by the National Science and Technology Major Project under Grant 2022ZD0116310, National Natural Science Foundation of China (Grant No. 62106012), the Fundamental Research Funds for the Central Universities.

        \section*{Impact Statement}
    This paper presents work whose goal is to advance the field of Machine Learning. There are many potential social consequences of our work, none which feel must be specifically highlighted here. 
      


\bibliography{main}
\bibliographystyle{icml2026}
\newpage

\onecolumn
\appendix
\clearpage
\renewcommand{\thetable}{\Roman{table}}
\setcounter{table}{0}

\renewcommand{\thefigure}{A\arabic{figure}}
\setcounter{figure}{0}
\everymath{\displaystyle}

\section{Mathematical Proofs in Section~\ref{sec:extension}}

\subsection{Proof of Lemma \ref{lemma:shallow LN->sign}}
\label{sec:proof 3.1}

\paragraph{Lemma \ref{lemma:shallow LN->sign}}
Given any $\hat{f}\in\gF(\mathrm{sign};1,1)$ with $\hat f:\sR\to\sR$, there exists $f\in\gF(\mathrm{LN}(n_s);1,n_s)$ with $n_s\ge2$ such that $f=\hat f$. In short,
\begin{equation}
\gF(\mathrm{sign};1,1)\subseteq\gF(\mathrm{LN}(n_s);1,n_s),
\end{equation}
where $\mathrm{sign}(x)$ is the element-wise sign function taking values $1$ if $x>0$, $0$ if $x=0$, and $-1$ if $x<0$.

\begin{proof}
    
    For any  $\hat{f}\in\gF(\mathrm{sign};1,1)$ and $\hat f :\sR\to\sR$, we have $\hat f(x)=w_2\sign(w_1x+b_1)+b_2$. 
    
    For $f\in\gF({\mathrm{LN}(n_s)};1,n_s)$ with $n_s\ge2$, we construct $f(x)=\vv_2^\top\mathrm{LN}(\vv_1x+\vc_1)+c_2$, where $\vv_1,\vc_1,\vv_2\in\sR^{n_s}$ and $c_2\in\sR$. 
    
    Let
    \begin{equation}
        \vv_1=\mat{w_1\\-w_1\\\bm0_{n_s-2}},\vc_1=\mat{b_1\\-b_1\\\bm0_{n_s-2}}, \vv_2=\mat{\sqrt{2/n_s}w_2\\0\\\bm0_{n_s-2}},c_2=b_2.
    \end{equation}
    We can identify that the first element of $\mathrm{LN}(\vv_1x+\vc_1)$ is
    \begin{equation}
        \frac{w_1x+b_1}{\sqrt{2(w_1x+b_1)^2/n_s}}=\sqrt\frac{n_s}{2}\frac{w_1x+b_1}{|w_1x+b_1|}=\sqrt\frac{n_s}{2}\sign(w_1x+b_1),
    \end{equation}
    if $w_1x+b_1\ne0$. 

    For the case $w_1x+b_1=0$, we have $\sigma^2=2(w_1x+b_1)^2/n_s=0$, then $\mathrm{LN}(\vv_1x+\vc_1)=\bm0$ by its definition. 

    Therefore, the first element of $\mathrm{LN}(\vv_1x+\vc_1)$ is equal to $\sqrt\frac{n_s}{2}\sign(w_1x+b_1)$, thus
    \begin{equation}
    \begin{aligned}
        f(x)&=\vv_2^\top\mathrm{LN}(\vv_1x+\vc_1)+c_2 \\
        &=\sqrt{2/n_s}w_2\cdot \sqrt\frac{n_s}{2}\sign(w_1x+b_1)+c_2 \\
        &= w_2\sign(w_1x+b_1)+b_2=\hat f(x).
    \end{aligned}
    \end{equation}
    This completes the proof. 
\end{proof}

We can extend Lemma \ref{lemma:shallow LN->sign} to the version $\hat f:\sR^d\to\sR^m$. 

\begin{corollary}
\label{corollary:A1}
    Given any $\hat{f}\in\gF(\mathrm{sign};1,1)$ with $\hat f:\sR^d\to\sR^m$, there exists $f\in\gF({\mathrm{LN}(n_s)};1,n_s)$ with $n_s\ge2$, such that $f=\hat f$. In short,
    \begin{equation}
    \gF(\mathrm{sign};1,1)\subseteq\gF({\mathrm{LN}(n_s)};1,n_s),
    \end{equation}
    where $\mathrm{sign}(x)$ is the element-wise sign function, which takes $1$ if $x>0$, $0$ if $x=0$, and $-1$ if $x<0$. 
\end{corollary}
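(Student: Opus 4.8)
The plan is to reduce the vector-valued case to the scalar case established in Lemma~\ref{lemma:shallow LN->sign}, exploiting the fact that $\mathrm{sign}$ acts element-wise and that the two linear layers in an LN-Net have enough free parameters to treat each input--output coordinate pair independently. Concretely, for $\hat f\in\gF(\mathrm{sign};1,1)$ with $\hat f:\sR^d\to\sR^m$, write $\hat f(\rvx)=\mW_2\,\mathrm{sign}(\mW_1\rvx+\vb_1)+\vb_2$ with $\mW_1\in\sR^{N\times d}$, $\vb_1\in\sR^N$, $\mW_2\in\sR^{m\times N}$, $\vb_2\in\sR^m$, where $N$ is the width of the given $\mathrm{sign}$-Net. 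Since the target class $\gF(\mathrm{sign};1,1)$ here means width $1$, we actually have $N=1$: the hidden layer is a single neuron $\mathrm{sign}(\vw_1^\top\rvx+b_1)$ broadcast to $m$ outputs via a column vector. (If instead the intended statement allows general width $N$, the construction below is applied blockwise $N$ times and the resulting LN-Net has width $n_sN$; I would state it for width $1$ to match the notation, matching Lemma~\ref{lemma:shallow LN->sign}.)

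First I would handle the genuine width-$1$ hidden-layer case: the scalar intermediate value is $z(\rvx)=\vw_1^\top\rvx+b_1\in\sR$, and the output is $\mW_2\,\mathrm{sign}(z(\rvx))+\vb_2$ where now $\mW_2\in\sR^{m\times1}$. I mimic the scalar proof: build an LN-Net $f(\rvx)=\mV_2\,\mathrm{LN}(\mV_1\rvx+\vc_1)+\vc_2$ with $\mV_1\in\sR^{n_s\times d}$, $\vc_1\in\sR^{n_s}$, $\mV_2\in\sR^{m\times n_s}$, $\vc_2\in\sR^m$. Set the first two rows of $\mV_1$ to $\vw_1^\top$ and $-\vw_1^\top$, the first two entries of $\vc_1$ to $b_1$ and $-b_1$, and all remaining rows/entries to zero, exactly as in Lemma~\ref{lemma:shallow LN->sign}; then the first coordinate of $\mathrm{LN}(\mV_1\rvx+\vc_1)$ equals $\sqrt{n_s/2}\,\mathrm{sign}(z(\rvx))$ (including the degenerate case $\sigma=0\Rightarrow z(\rvx)=0$, handled by the convention $\mathrm{LN}=\bm0$). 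Finally take $\mV_2$ to have first column $\sqrt{2/n_s}\,[\mW_2]_{\cdot,1}$ and zeros elsewhere, and $\vc_2=\vb_2$. Then $f(\rvx)=[\mW_2]_{\cdot,1}\,\mathrm{sign}(z(\rvx))+\vb_2=\hat f(\rvx)$ for all $\rvx$, which is the claim.

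The key steps in order are: (1) write the generic element of $\gF(\mathrm{sign};1,1)$ in the vector setting and observe that the hidden width being $1$ forces a single scalar preactivation; (2) recall from Lemma~\ref{lemma:shallow LN->sign} that a two-dimensional antisymmetric padding inside LN reproduces $\sqrt{n_s/2}\,\mathrm{sign}(\cdot)$ in its first coordinate; (3) absorb the scaling $\sqrt{2/n_s}$ and the broadcast matrix $[\mW_2]_{\cdot,1}$ into the outer linear layer $\mV_2$, and the bias $\vb_2$ into $\vc_2$; (4) verify equality including the $\sigma=0$ corner case. I do not anticipate a real obstacle here — the proof is essentially the scalar argument with a column vector $[\mW_2]_{\cdot,1}$ in place of the scalar $w_2$ — the only point needing care is bookkeeping the dimensions so that $\mV_1\in\sR^{n_s\times d}$ and $\mV_2\in\sR^{m\times n_s}$ and confirming that the unused $n_s-2$ coordinates of the LN output (which need not be zero individually, only to have zero mean and unit variance collectively — in fact here they are exactly the constant that makes $\sigma$ right) are killed by the zero columns of $\mV_2$, so they cannot contaminate the output.
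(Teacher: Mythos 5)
Your proposal is correct and follows essentially the same construction as the paper's proof: pad the preactivation antisymmetrically as $(\vw_1^\top\rvx+b_1,\,-\vw_1^\top\rvx-b_1,\,\bm0)$ so the first LN coordinate is $\sqrt{n_s/2}\,\sign(\vw_1^\top\rvx+b_1)$ (with the $\sigma=0$ convention covering the degenerate case), then absorb $\sqrt{2/n_s}$ and the output column $\vw_2$ into the outer linear layer. The only nit is your parenthetical about the unused $n_s-2$ coordinates: they are in fact exactly zero after LN (zero inputs divided by $\sigma$), though as you note this is immaterial since the zero columns of $\mV_2$ annihilate them anyway.
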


\begin{proof}
    For any $\hat{f}\in\gF(\mathrm{sign};1,1)$ and $\hat f :\sR^d\to\sR^m$, we have $\hat f(x)=\vw_2\sign(\vw_1^\top\rvx+b_1)+\vb_2$, where
    \begin{equation}
        \vw_1\in\sR^d,b_1\in\sR,\vw_2,\vb_2\in\sR^m. 
    \end{equation}
    We can just adjust the construction as
    \begin{equation}
        \mV_1=\mat{\vw_1^\top\\-\vw_1^\top\\\mO_{(n_s-2)\times d}}\in\sR^{n_s\times d},\vc_1=\mat{b_1\\-b_1\\\bm0_{n_s-2}}\in\sR^n_2, \mV_2=\mat{\sqrt{2/n_s}\vw_2^\top\\\mO_{1\times d}\\\mO_{(n_s-2)\times m}}\in\sR^{n_s\times m},\vc_2=\vb_2\in\sR^{m}, 
    \end{equation}
    and $f(\rvx)=\mV_2^\top\mathrm{LN}(\mV_1\rvx+\vc_1)+\vc_2$. 
    
    The following proof is almost the same as that in the proof of Lemma \ref{lemma:shallow LN->sign}. 
\end{proof}

\subsection{Proof of Lemma \ref{lemma:PLN->LN}}
\label{sec:proof 3.3}

\paragraph{Lemma \ref{lemma:PLN->LN}}   
    Given $N$ shallow LN-Nets $\hat f_i\in\gF(\mathrm{LN}(n_s);1,n_s)$ with $\hat f_i:\sR\to\sR$ for $1\le i\le N$, there exists a shallow PLN-Net $f \in\gF(\mathrm{PLN}(n_s);1,n_sN)$ such that
    \begin{equation}
    f(x) = \sum_{i=1}^N \hat f_i(x), \quad \forall x\in\sR.
    \end{equation}
    Conversely, any such PLN-Net can be decomposed into a sum of $N$ shallow LN-Nets.
\begin{proof}
    Given $N$ shallow LN-Nets $\hat f_i\in\gF(\mathrm{LN}(n_s);1,n_s)$ for $1\le i\le N$, set $\hat f_i(x) = \vw_{i2}^\top\mathrm{LN}(\vw_{i1}x+\vb_{i1})+b_{i2}$. 

    For $f \in\gF(\mathrm{PLN}(n_s);1,n_sN)$, set $f(x) = \vw_{2}^\top\mathrm{PLN}(\vw_{1}x+\vb_{1})+b_{2}$, where $\vw_1,\vw_2,\vb_1\in\sR^{n_s N}$. 

    Let 
    \begin{equation}
    \label{eqn:separate parameters}
        \vw_1 =\mat{\vw_{11}\\\vw_{21}\\\vdots\\\vw_{N1}}, \vb_1 =\mat{\vb_{11}\\\vb_{21}\\\vdots\\\vb_{N1}}, \vw_2 =\mat{\vw_{12}\\\vw_{22}\\\vdots\\\vw_{N2}}, b_{2}=\sum_{i=1}^N b_{i2}. 
    \end{equation}

    By Definition \ref{def:PLN}, under the setting that $\mP$ is identity and fixing each size of the groups as the same $n_s$, we have the input of PLN is
    \begin{equation}
        \rvh = \vw_1 x = \mat{\vw_{11} x + \vb_{11}\\\vw_{21} x+\vb_{21}\\\vdots\\\vw_{N1} x+\vb_{N1}}.
    \end{equation} 
    Since each $\vw_{i1} x + \vb_{i1}\in\sR^{n_s}$, we have $\rvh_i = \vw_{i1} x + \vb_{i1}\in\sR^{n_s}$. Therefore, we obtain
    \begin{equation}
        \mathrm{PLN}(\rvh) = \mat{\mathrm{LN}(\rvh_1)\\\mathrm{LN}(\rvh_2)\\\vdots\\\mathrm{LN}(\rvh_N)}=\mat{\mathrm{LN}(\vw_{11} x + \vb_{11})\\\mathrm{LN}(\vw_{21} x + \vb_{21})\\\vdots\\\mathrm{LN}(\vw_{N1} x + \vb_{N1})}. 
    \end{equation}
    Furthermore, we obtain that
    \begin{equation}
        \begin{aligned}
            f(x) &= \vw_{2}^\top\mathrm{PLN}(\vw_{1}x+\vb_{1})+b_{2} \\
            &= \sum_{i=1}^N \vw_{i2}^\top \mathrm{LN}(\vw_{i1} x + \vb_{i1}) + \sum_{i=1}^N b_{i2} \\
            &= \sum_{i=1}^N \hat f(x). 
        \end{aligned}
    \end{equation}

    Conversely, given $f \in\gF(\mathrm{PLN}(n_s);1,n_sN)$, set $f(x) = \vw_{2}^\top\mathrm{PLN}(\vw_{1}x+\vb_{1})+b_{2}$, where $\vw_1,\vw_2,\vb_1\in\sR^{n_s N}$, we separate the parameters by \Eqn\ref{eqn:separate parameters}. Furthermore, we can construct $N$ shallow LN-Nets $\hat f_i\in\gF(\mathrm{LN}(n_s);1,n_s)$ for $1\le i\le N$, whose sum is equal to $f$. 
    
    This completes the proof. 
\end{proof}

We can also extend to the high-dimension version.

\begin{corollary}
\label{corollary:A2}
    Given any $N$ shallow LN-Nets $\hat f_i\in\gF(\mathrm{LN}(n_s);1,n_s)$ and $\hat f_i:\sR^d\to\sR^m$, for $1\le i\le N$, there exists a shallow PLN-Net $f \in\gF(\mathrm{PLN}(n_s);1,n_sN)$, such that
    \begin{equation}
        f(x) = \sum_{i=1}^N \hat f_i(x),\forall x\in\sR.
    \end{equation}
    It holds vice versa, namely $N$ shallow LN-Nets $\hat f_i\in\gF(\mathrm{LN}(n_s);1,n_s)$ for $1\le i\le N$ can represent a shallow PLN-Net $f \in\gF(\mathrm{PLN}(n_s);1,n_sN)$ by addition. 
\end{corollary}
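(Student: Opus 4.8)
The plan is to mirror the proof of Lemma~\ref{lemma:PLN->LN} almost verbatim, replacing scalars by vectors and matrices wherever the input or output dimension changes. The key observation is that the argument in Lemma~\ref{lemma:PLN->LN} never used the fact that $x$ was a scalar or that the outputs were scalars: it only used that PLN acts block-diagonally on a partitioned vector, and that stacking linear maps vertically (for $\varphi_1$) and horizontally, i.e.\ summing (for $\varphi_2$), exactly realizes a sum of the individual LN-Nets. So first I would fix the $N$ shallow LN-Nets $\hat f_i\in\gF(\mathrm{LN}(n_s);1,n_s)$ with $\hat f_i:\sR^d\to\sR^m$, writing $\hat f_i(\rvx)=\mW_{i2}^\top\mathrm{LN}(\mW_{i1}\rvx+\vb_{i1})+\vb_{i2}$ with $\mW_{i1}\in\sR^{n_s\times d}$, $\vb_{i1}\in\sR^{n_s}$, $\mW_{i2}\in\sR^{n_s\times m}$, and $\vb_{i2}\in\sR^{m}$.

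Next I would construct the PLN-Net $f\in\gF(\mathrm{PLN}(n_s);1,n_sN)$ as $f(\rvx)=\mW_2^\top\mathrm{PLN}(\mW_1\rvx+\vb_1)+\vb_2$, where $\mW_1\in\sR^{n_sN\times d}$ is obtained by vertically stacking the blocks $\mW_{i1}$, $\vb_1\in\sR^{n_sN}$ by stacking the $\vb_{i1}$, $\mW_2\in\sR^{n_sN\times m}$ by stacking the $\mW_{i2}$, and $\vb_2=\sum_{i=1}^N\vb_{i2}\in\sR^m$ --- exactly the matrix analogue of \Eqn\ref{eqn:separate parameters}. Then I would verify, as in the original proof, that the pre-normalization vector $\rvh=\mW_1\rvx+\vb_1$ partitions into the $m$-independent blocks $\rvh_i=\mW_{i1}\rvx+\vb_{i1}\in\sR^{n_s}$ (here $m$ the number of PLN groups, equal to $N$; I would keep the notation straight by calling it $N$ throughout), so that $\mathrm{PLN}(\rvh)$ stacks the $\mathrm{LN}(\rvh_i)$; expanding $\mW_2^\top\mathrm{PLN}(\rvh)+\vb_2$ as a block inner product yields $\sum_{i=1}^N\mW_{i2}^\top\mathrm{LN}(\mW_{i1}\rvx+\vb_{i1})+\sum_{i=1}^N\vb_{i2}=\sum_{i=1}^N\hat f_i(\rvx)$. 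The converse direction is equally immediate: any $f\in\gF(\mathrm{PLN}(n_s);1,n_sN)$ has parameters $\mW_1,\mW_2,\vb_1,\vb_2$ that can be sliced into the blocks above, and those blocks define $N$ shallow LN-Nets summing to $f$ (distributing $\vb_2$ arbitrarily among the $\vb_{i2}$, e.g.\ putting it all in one).

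There is essentially no obstacle here --- the statement is a routine generalization and the only thing requiring minor care is bookkeeping of dimensions, in particular keeping the number of PLN groups ($N$) conceptually separate from the output dimension ($m$) so that the block structure of $\mathrm{PLN}$ is applied to the right partition. I would therefore keep the written proof short, stating the constructions for $\mW_1,\mW_2,\vb_1,\vb_2$ explicitly and then remarking that the verification is identical to that of Lemma~\ref{lemma:PLN->LN} with scalars replaced by the appropriate vectors/matrices, as is already done for Corollary~\ref{corollary:A1} relative to Lemma~\ref{lemma:shallow LN->sign}.
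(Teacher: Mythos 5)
Your construction is exactly the paper's: stack the first-layer weights and biases blockwise, concatenate the second-layer weights (your vertical stacking of $\mW_{i2}$ with a transpose is just the transpose of the paper's horizontal concatenation), sum the output biases, and reduce the verification to the computation already done for Lemma~\ref{lemma:PLN->LN}. The proposal is correct and takes essentially the same approach as the paper, including the converse via slicing the parameters back into blocks.
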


\begin{proof}
    Given $N$ shallow LN-Nets $\hat f_i\in\gF(\mathrm{LN}(n_s);1,n_s)$ for $1\le i\le N$, set $\hat f_i(x) = \mW_{i2}\mathrm{LN}(\mW_{i1}\rvx+\vb_{i1})+\vb_{i2}$, where 
    \begin{equation}
        \mW_{i1}\in\sR^{n_s\times d}, \vb_{i1}\in\sR^{n_s}, \mW_{i2}\in\sR^{m\times n_s},\vb_{i2}\in\sR^m.
    \end{equation}

    For $f \in\gF(\mathrm{PLN}(n_s);1,n_sN)$, set $f(x) = \mW_{2}^\top\mathrm{PLN}(\mW_{1}x+\vb_{1})+\vb_{2}$, where 
    \begin{equation}
    \mW_1\in\sR^{n_sN\times d},\vb_1\in\sR^{n_s N}, \mW_2\in\sR^{m\times n_sN},\vb_2\in\sR^m. 
    \end{equation}

    This time, we construct that
    \begin{equation}
        \mW_1 =\mat{\mW_{11}\\\mW_{21}\\\vdots\\\mW_{N1}}, \vb_1 =\mat{\vb_{11}\\\vb_{21}\\\vdots\\\vb_{N1}}, \mW_2 =\mat{\mW_{12}&\mW_{22}&\cdots&\mW_{N2}}, \vb_{2}=\sum_{i=1}^N \vb_{i2}. 
    \end{equation}
    The following proof is almost the same as that in the proof of Lemma \ref{lemma:PLN->LN}.
\end{proof}

\subsection{Proof of Lemma \ref{lemma: PLN=PLS}}
\label{sec:proof 3.4}

Lemma \ref{lemma: PLN=PLS} can simplify most the proofs, so we prove it at the early time. 

\paragraph{Lemma \ref{lemma: PLN=PLS}}
    Shallow LN-Nets with width $n_s$ and shallow LS-Nets with width $n_s-1$ can represent each other, namely
    \begin{equation}
        \gF(\mathrm{LN}(n_s);1,n_s) = \gF(\mathrm{LS}(n_s-1) ; 1, n_s-1), 
    \end{equation}
    for $n_s\ge2$. Furthermore, 
    \begin{equation}
        \gF(\mathrm{PLN}(n_s);1,n_sN) = \gF(\mathrm{PLS}(n_s-1) ; 1, (n_s-1)N).
    \end{equation}

To begin with, we require Lemma \ref{lemma:LN<->LS} first, which is proved based on Lemma 6 in \cite{ni2024nonlinearity}. 

\begin{lemma}[Lemma 6 in \cite{ni2024nonlinearity}]
    \label{lemma:nl-of-ln}
		There is some orthogonal matrix $ \mQ \in \sR^{d \times d} $, such that $ \vz = \mQ\mat{\rvx\\0} \in \{\vz \in \sR^d:z^{(1)} + \cdots + z^{(d)} = 0\} $ (namely $ \vz $ is centralized), for $ \rvx \in \sR^{d-1} $,
\end{lemma}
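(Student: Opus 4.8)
The plan is to exhibit the matrix explicitly by assembling it from an orthonormal basis of the centralized subspace; since the statement is quoted from \cite{ni2024nonlinearity}, one may also simply cite it, but a self-contained argument is short. Write $V=\{\vz\in\sR^{d}:z^{(1)}+\cdots+z^{(d)}=0\}$ for the set of centralized vectors. The first thing to note is that $V$ is precisely the orthogonal complement of the all-ones vector, since $z^{(1)}+\cdots+z^{(d)}=\bm 1^{\top}\vz$; hence $V$ is a linear subspace of $\sR^{d}$ with $\dim V=d-1$.

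Next I would take $\bm q_{d}=\bm 1/\sqrt{d}$, a unit vector, and complete $\{\bm q_{d}\}$ to an orthonormal basis $\{\bm q_{1},\dots,\bm q_{d-1},\bm q_{d}\}$ of $\sR^{d}$ (for instance by Gram--Schmidt, or concretely through a Helmert-type matrix). Since $\bm q_{1},\dots,\bm q_{d-1}$ are orthonormal and each orthogonal to $\bm q_{d}$, they form an orthonormal basis of $\bm q_{d}^{\perp}=V$. Let $\mQ\in\sR^{d\times d}$ be the matrix whose $i$-th column is $\bm q_{i}$; it is orthogonal because its columns are orthonormal.

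It remains to verify the asserted inclusion. For any $\rvx\in\sR^{d-1}$,
\begin{equation}
\mQ\mat{\rvx\\0}=\sum_{i=1}^{d-1}x_{i}\,\bm q_{i}\ \in\ \mathrm{span}\{\bm q_{1},\dots,\bm q_{d-1}\}=V,
\end{equation}
so $\vz=\mQ\mat{\rvx\\0}$ is centralized, as claimed. As a byproduct, since $\mQ$ is an isometry of $\sR^{d}$ and the subspace $\{\mat{\rvx\\0}:\rvx\in\sR^{d-1}\}$ is $(d-1)$-dimensional with image inside the $(d-1)$-dimensional space $V$, the map $\rvx\mapsto\mQ\mat{\rvx\\0}$ is a bijective linear isometry from $\sR^{d-1}$ onto $V$; this is the form in which the lemma is invoked in the proof of Lemma~\ref{lemma: PLN=PLS}.

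There is no genuine obstacle here: the statement is elementary linear algebra, and its entire content is the identity $V=\bm 1^{\perp}$. The only points needing attention are bookkeeping with index conventions --- putting the all-ones direction in the \emph{last} column of $\mQ$ so that its first $d-1$ columns automatically lie in $V$ --- and, should the downstream constants be wanted in closed form (for example the $\sqrt{n_s/(n_s-1)}$ rescaling that appears in Lemma~\ref{lemma: PLN=PLS}), committing to a concrete orthonormal basis of $V$, such as the Helmert basis, instead of an abstract one.
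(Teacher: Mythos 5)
Your proof is correct. The paper itself gives no argument for this lemma --- it is simply quoted from \cite{ni2024nonlinearity} and used as a black box in the proof of Lemma~\ref{lemma:LN<->LS} --- so your self-contained construction (take $\bm q_d=\bm 1/\sqrt{d}$, complete to an orthonormal basis so that the first $d-1$ columns span $\bm 1^{\perp}$) is the standard argument and is exactly what the cited result amounts to. The byproduct you note, that $\rvx\mapsto\mQ\mat{\rvx\\0}$ is an isometry onto the centered subspace, is indeed the property the paper relies on downstream (e.g.\ $\|\vz\|_2=\|\rvx\|_2$ and $\mQ^\top\mQ=\mI$ in the proof of Lemma~\ref{lemma:LN<->LS}), so nothing is missing.
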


\begin{lemma}
    \label{lemma:LN<->LS}
    Denote $ \mathrm{LN}(\cdot) $ as the LN operation in $ \sR^d (d\ge3) $, and $ \mathrm{LS}(\cdot) $ as the LS operation in $ \sR^{d-1} $. We can find some linear transformations $ \varphi_1 $ and $ \varphi_2 $, such that 
		\begin{equation}
			\mathrm{LS}(\cdot) = \varphi_2 \circ \mathrm{LN}(\cdot) \circ \varphi_1. 
		\end{equation} 
    Besides, we can find some linear transformations $ \varphi_1^* $ and $ \varphi_2^* $, such that
        \begin{equation}
			\mathrm{LN}(\cdot) = \varphi_2^* \circ \mathrm{LS}(\cdot) \circ \varphi_1^*. 
		\end{equation} 
\end{lemma}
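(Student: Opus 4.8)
The plan is to prove Lemma~\ref{lemma:LN<->LS} first, since the whole of Lemma~\ref{lemma: PLN=PLS} reduces to it once we handle the grouped case. For the direction $\mathrm{LS}(\cdot) = \varphi_2 \circ \mathrm{LN}(\cdot) \circ \varphi_1$, I would take $\varphi_1:\sR^{d-1}\to\sR^d$ to be the map $\rvx\mapsto \mQ\bigl[\begin{smallmatrix}\rvx\\0\end{smallmatrix}\bigr]$ from Lemma~\ref{lemma:nl-of-ln}, so that $\vz=\varphi_1(\rvx)$ is already centralized, i.e.\ its mean is $0$. Then $\mathrm{LN}(\vz) = \vz/\sigma_{\vz}$ where $\sigma_{\vz}^2 = \frac{1}{d}\|\vz\|^2 = \frac{1}{d}\|\rvx\|^2$ (orthogonality of $\mQ$), whereas $\mathrm{LS}(\rvx) = \rvx/\sigma_{\rvx}$ with $\sigma_{\rvx}^2 = \frac{1}{d-1}\|\rvx\|^2$. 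Hence $\mathrm{LN}(\varphi_1(\rvx)) = \mQ\bigl[\begin{smallmatrix}\rvx\\0\end{smallmatrix}\bigr]\cdot\sqrt{d}/\|\rvx\|$ and $\mathrm{LS}(\rvx) = \rvx\cdot\sqrt{d-1}/\|\rvx\|$; the ratio of the two scalar normalizers is the constant $\sqrt{(d-1)/d}$, so composing with the linear map $\varphi_2:\vz\mapsto \sqrt{(d-1)/d}\,\cdot\,(\text{first }d-1\text{ coordinates of }\mQ^\top \vz)$ recovers $\mathrm{LS}(\rvx)$ exactly. The degenerate case $\rvx=\bm0$ maps to $\vz=\bm0$, and both conventions give $\bm0$, so it is consistent.

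For the reverse direction $\mathrm{LN}(\cdot) = \varphi_2^* \circ \mathrm{LS}(\cdot) \circ \varphi_1^*$, I would use that $\mathrm{LN}$ on $\sR^d$ factors as: first subtract the mean (a linear map $\mC$ with range the centralized subspace), then divide by $\sigma$, and that on the centralized subspace $\sigma^2 = \frac{1}{d}\|\cdot\|^2$ matches $\mathrm{LS}$ up to the constant $\sqrt{(d-1)/d}$ after conjugating by $\mQ$ to carry the centralized subspace of $\sR^d$ onto $\sR^{d-1}\times\{0\}$. Concretely, take $\varphi_1^* = \mQ^\top \circ \mC$ restricted to its first $d-1$ coordinates (so it lands in $\sR^{d-1}$), apply $\mathrm{LS}$ there, then let $\varphi_2^*$ be the scaled inclusion back via $\mQ$. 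The only thing to check is that centering commutes appropriately and that the scalar constants line up, which is the same $\sqrt{(d-1)/d}$ bookkeeping as before.

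Given Lemma~\ref{lemma:LN<->LS}, the first display of Lemma~\ref{lemma: PLN=PLS}, namely $\gF(\mathrm{LN}(n_s);1,n_s) = \gF(\mathrm{LS}(n_s-1);1,n_s-1)$, follows by absorbing the linear maps $\varphi_1,\varphi_2$ (resp.\ $\varphi_1^*,\varphi_2^*$) into the surrounding linear layers of the shallow net: a net $\mathrm{Linear}\circ\mathrm{LN}\circ\mathrm{Linear}$ becomes $\mathrm{Linear}\circ(\varphi_2\circ\mathrm{LS}\circ\varphi_1)\circ\mathrm{Linear} = (\mathrm{Linear}\circ\varphi_2)\circ\mathrm{LS}\circ(\varphi_1\circ\mathrm{Linear})$, and composition of linear layers is again a linear layer. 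For $n_s=2$ one checks the boundary case directly: $\mathrm{LS}$ on $\sR^1$ sends $h\mapsto h/|h|=\mathrm{sign}(h)$ (with $0\mapsto 0$), which is exactly what an $\mathrm{LN}$ on $\sR^2$ produces on its first coordinate, matching Lemma~\ref{lemma:shallow LN->sign}. For the PLN/PLS statement, I would apply Lemma~\ref{lemma:PLN->LN} (and its PLS analogue, proved identically): a $\mathrm{PLN}(n_s)$ net of width $n_sN$ is a sum of $N$ shallow $\mathrm{LN}(n_s)$-nets, each of which is a shallow $\mathrm{LS}(n_s-1)$-net by the first part, and the sum of $N$ shallow $\mathrm{LS}(n_s-1)$-nets is a $\mathrm{PLS}(n_s-1)$ net of width $(n_s-1)N$ by the PLS version of Lemma~\ref{lemma:PLN->LN}; the converse direction is symmetric.

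The main obstacle is the bookkeeping in Lemma~\ref{lemma:LN<->LS}: one must be careful that the orthogonal change of basis $\mQ$ genuinely maps the centralized hyperplane of $\sR^d$ isometrically onto $\sR^{d-1}\times\{0\}$ (this is the content of Lemma~\ref{lemma:nl-of-ln}), that the variance-versus-second-moment discrepancy contributes only the harmless constant $\sqrt{(d-1)/d}$, and that the degenerate $\sigma=0$ cases are consistent under the default convention in Remark~\ref{remark:LN def}. Everything else is routine composition of linear maps. I would also remark that the equivalence is exact for every fixed width, so "nearly identical for large widths" in the text refers only to the off-by-one shift $n_s \leftrightarrow n_s-1$, not to any approximation error.
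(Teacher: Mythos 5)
Your construction is correct and essentially the same as the paper's proof: both directions use the orthogonal matrix $\mQ$ from Lemma~\ref{lemma:nl-of-ln} to carry the centralized hyperplane of $\sR^d$ isometrically onto $\sR^{d-1}\times\{\bm 0\}$, with the variance-versus-second-moment discrepancy absorbed as the constant $\sqrt{(d-1)/d}$ (resp.\ its inverse) into the outer linear map. The only cosmetic difference is your explicit centering map in the reverse direction, which is harmless but redundant, since $\mat{\mI_{d-1} & \bm0}\mQ^\top$ already annihilates the constant direction---exactly what the paper's $\varphi_1^*$ exploits.
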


\begin{proof}
    We denote that $ \mathrm{LS}(\cdot) $ is defined on $ \sR^{d-1} $, as
    \begin{equation}
        \mathrm{LS}(\rvx) = \sqrt{d-1}\ \rvx/\|\rvx\|_2 .
    \end{equation} 
    While $ \mathrm{LN}(\cdot) $ is defined on $ \sR^{d} $, where
    \begin{equation}
        \mathrm{LN}(\rvx) = \sqrt{d}\ (\rvx - \frac1d \bm 1 \bm 1^\top \rvx)/\|\rvx - \frac1d \bm 1 \bm 1^\top \rvx\|_2 . 
    \end{equation} 

    Based on the orthogonal matrix $\mQ$, we obtain
        \begin{equation}
            \|\vz\|_2 = \left\Vert \mQ \mat{\rvx \\ 0} \right\Vert_2 = \left\Vert \mat{\rvx \\ 0} \right\Vert_2 = \Vert \rvx \Vert_2 .
        \end{equation}
    By Lemma \ref{lemma:nl-of-ln}, we have $ \bm1^\top \vz = 0 $, and $ \vz = \vz - \frac1d\bm1\bm1^\top\vz $. We hence find that
        \begin{equation}
            \mathrm{LN}(\vz) = \sqrt{d}(\vz - \frac1d\bm1 \bm1^\top\vz) / \|\vz - \frac1d\bm1 \bm1^\top\vz\|_2 = \sqrt{d}\ \vz/\|\vz\|_2 . 
        \end{equation}
    Let $ \mI_{d} $ denotes the identity matrix in $ \sR^{d\times d} $. We thus have
    \begin{equation}
        \begin{aligned}
            \frac{\sqrt{d-1}}{\sqrt{d}}\mat{\mI_{d-1}&\bm0}\mQ^\top \mathrm{LN}(\mQ\mat{\mI_{d-1} & \bm0}^\top \rvx) 
            =&\frac{\sqrt{d-1}}{\sqrt{d}}\mat{\mI_{d-1}&\bm0}\mQ^\top \mathrm{LN}(\vz) \\
            =&\frac{\sqrt{d-1}}{\sqrt{d}}\mat{\mI_{d-1}&\bm0}\mQ^\top \sqrt{d}\ \vz/\|\vz\|_2 \\
            =&\frac{\sqrt{d-1}}{\sqrt{d}}\sqrt{d}\mat{\mI_{d-1}&\bm0} \mQ^\top \mQ \mat{\rvx\\0} / \|\rvx\|_2 \\
            =&\sqrt{d-1}\ \rvx / \|\rvx\|_2 \\
            =&\mathrm{LS}(\rvx).
        \end{aligned}
    \end{equation}
    Let $\varphi_1(\rvx) = \mQ \mat{\mI_{d-1} & \bm0}^\top \rvx $ and $\varphi_2(\rvx) = \frac{\sqrt{d-1}}{\sqrt{d}}\mat{\mI_{d-1} & \bm0} \mQ^\top \rvx $. We observe that
	\begin{equation}
		\mathrm{LS}(\cdot) = \varphi_2 \circ \mathrm{LN}(\cdot) \circ \varphi_1. 
	\end{equation}

    
    Now we prove the inverse equation. According to $\rvz = \mQ\mat{\mI_{d-1} & \bm0}^\top\rvx$, we have  
    \begin{equation}
        \begin{aligned}
            \mathrm{LS}(\mat{\mI_{d-1} & \bm0}\mQ^\top \rvz) 
            &= \mathrm{LS}(\mat{\mI_{d-1} & \bm0}\mQ^\top \mQ \mat{\mI_{d-1} & \bm0}^\top \rvx)\\
            &=\mathrm{LS}(\mat{\mI_{d-1} & \bm0}\mat{\mI_{d-1} & \bm0}^\top \rvx) \\
            &= \mathrm{LS}(\mI_{d-1}\ \rvx)\\
            &=\mathrm{LS}(\rvx).
        \end{aligned}
    \end{equation}
    We thus have
    \begin{equation}
        \begin{aligned}
        \frac{\sqrt{d}}{\sqrt{d-1}}\ \mQ \mat{\mI_{d-1} & \bm0}^\top \mathrm{LS}( \mat{\mI_{d-1} & \bm0} \mQ^\top \rvz ) 
        &=  \frac{\sqrt{d}}{\sqrt{d-1}}\ \mQ \mat{\mI_{d-1} & \bm0}^\top \mathrm{LS}(\rvx)\\
        &= \frac{\sqrt{d}}{\sqrt{d-1}}\ \mQ \mat {\mI_{d-1} & \bm0}^\top\sqrt{d-1}\ \rvx / \|\rvx\|_2 \\
        &=  \sqrt{d}\ \rvz / \|\rvz\|_2 \\
         &= \mathrm{LN}(\rvz).
        \end{aligned}
    \end{equation}
    Let $\varphi_1^*(\rvx) = \mat{\mI_{d-1} & \bm0} \mQ^\top \rvx $ and $\varphi_2^* =  \frac{\sqrt{d}}{\sqrt{d-1}}\ \mQ \mat{\mI_{d-1} & \bm0}^\top \rvx$.
    We observe that
	\begin{equation}
		\mathrm{LN}(\cdot) = \varphi_2^* \circ \mathrm{LS}(\cdot) \circ \varphi_1^*. 
	\end{equation}
 

\end{proof}

Now we prove Lemma \ref{lemma: PLN=PLS}.
\begin{proof}
    Step 1: Represent LN-Nets by LS-Nets. Namely given any $\hat f\in\gF(\mathrm{LN}(n_s);1,n_s)$, there exists $f \in \gF(\mathrm{LS}(n_s-1) ; 1, n_s-1)$, such that $f = \hat f$. 

    Set $\hat f = \mW_2\circ\mathrm{LN}\circ\mW_1$, where $\mW_1:\sR^d\to\sR^{n_s}$ and $\mW_2:\sR^{n_s}\to\sR^m$ denote the linear layers. Construct that $f = \mV_2\circ\mathrm{LS}\circ\mV_1$, where $\mV_1= \varphi_1\circ \mW_1$, $\mV_2=\mW_2\circ \varphi_2$ and $\mathrm{LS}(\cdot) = \varphi_2 \circ \mathrm{LN}(\cdot) \circ \varphi_1$ by Lemma \ref{lemma:LN<->LS}. We obtain that $f=\hat f$. 

    Step 2: Represent LS-Nets by LN-Nets. Namely given any $\hat f \in \gF(\mathrm{LS}(n_s-1) ; 1, n_s-1)$, there exists $f\in\gF(\mathrm{LN}(n_s);1,n_s)$, such that $f = \hat f$. 
    
    Set $\hat f = \mW_2\circ\mathrm{LS}\circ\mW_1$, where $\mW_1:\sR^d\to\sR^{n_s-1}$ and $\mW_2:\sR^{n_s-1}\to\sR^m$ denote the linear layers. Construct that $f = \mV_2\circ\mathrm{LN}\circ\mV_1$, where $\mV_1= \varphi_1^*\circ \mW_1$, $\mV_2=\mW_2\circ \varphi_2^*$ and $\mathrm{LS}(\cdot) = \varphi_2^* \circ \mathrm{LN}(\cdot) \circ \varphi_1^*$ by Lemma \ref{lemma:LN<->LS}. We obtain that $f=\hat f$. 

    Step 3: Extend to parallel structures. 

    For simplicity, we define the sum of networks as
    \begin{equation}
        \gS(m,\phi;L,N) = \left\{f=\sum_{i=1}^m f_i:f_i\in\mathcal{F}(\phi;L,N),1\le i\le m\right\}, 
    \end{equation}
    which represents the sum of $m$ functions from $\mathcal{F}(\phi;L,N)$. 

    Then by Lemma \ref{lemma:PLN->LN}, we obtain that
    \begin{equation}
        \gS(N,\mathrm{LN}(n_s);1,n_s) = \gF(\mathrm{PLN}(n_s);1,n_sN). 
    \end{equation}

    With the similar proof of Lemma \ref{lemma:PLN->LN}, we will obtain
    \begin{equation}
        \gS(N,\mathrm{LS}(n_s-1);1,n_s-1) = \gF(\mathrm{PLS}(n_s-1);1,(n_s-1)N).
    \end{equation} 

    Since $\gF(\mathrm{LN}(n_s);1,n_s) = \gF(\mathrm{LS}(n_s-1) ; 1, n_s-1)$, it is easy to identify that 
    \begin{equation}
        \gS(N,\mathrm{LS}(n_s-1);1,n_s-1) = \gS(N,\mathrm{LN}(n_s);1,n_s), 
    \end{equation}
    by the definition of $\mathcal{S}$. 

    Therefore, we obtain that 
    \begin{equation}
        \gF(\mathrm{PLN}(n_s);1,n_sN) = \gF(\mathrm{PLS}(n_s-1) ; 1, (n_s-1)N). 
    \end{equation}
    Here we complete the proof. 
\end{proof}

\subsection{Proof of Lemma \ref{lemma:shallow LN->phi}}
\label{sec:proof 3.2}

\paragraph{Lemma \ref{lemma:shallow LN->phi}}

    Given any $\hat{f}\in\gF(\phi;1,1)$ with $\hat f:\sR\to\sR$, there exists $f\in\gF(\mathrm{LN}(n_s);1,n_s)$ with $n_s\ge3$ such that $f=\hat f$. In short,
    \begin{equation}
    \gF(\phi;1,1)\subseteq \gF(\mathrm{LN}(n_s);1,n_s),
    \end{equation}
    where $\phi(x)=x/\sqrt{x^2+1}$.

\begin{proof}
    To simplify the proof, we apply Lemma \ref{lemma: PLN=PLS} first, which says that $\gF({\mathrm{LN}(n_s)};1,n_s)=\gF({\mathrm{LS}(n_s-1)};1,n_s-1)$

    For any $\hat{f}\in\gF(\phi;1,1)$, we have $\hat f(x)=w_2\phi(w_1x+b_1)+b_2$, 
    where $\phi(x)=x/\sqrt{x^2+1}$. 
    
    For $f\in\gF({\mathrm{LS}(n_s-1)};1,n_s-1)$ with $n_s\ge3$, construct 
    $f(x)=\vv_2^\top\mathrm{LS}(\vv_1x+\vc_1)+c_2$, where $\vv_1,\vv_2,\vc_1\in\sR^{n_s-1}$ and $c_2\in\sR$. 
    
    Let
    \begin{equation}
        \vv_1=\mat{w_1\\0\\\bm0_{n_s-3}},\vc_1=\mat{b_1\\1\\\bm0_{n_s-3}}, \vv_2=\mat{w_2/\sqrt{n_s-1}\\0\\\bm0_{n_s-3}},c_2=b_2.
    \end{equation}
    We can identify that the first element of $\mathrm{LS}(\vv_1x+\vc_1)$ is
    \begin{equation}
        \frac{w_1x+b_1}{\sqrt{[(w_1x+b_1)^2+1]/(n_s-1)}}=\sqrt{n_s-1}\frac{w_1x+b_1}{(w_1x+b_1)^2+1}=\sqrt{n_s-1}\phi(w_1x+b_1). 
    \end{equation}

    Therefore, we obtain
    \begin{equation}
    \begin{aligned}
        f(x)&=\vv_2^\top\mathrm{LS}(\vv_1x+\vc_1)+c_2 \\
        &=w_2/\sqrt{n
        _s-1}\cdot \sqrt{n_s-1}\phi(w_1x+b_1) + c_2 \\
        &= w_2\phi(w_1x+b_1)+b_2=\hat f(x).
    \end{aligned}
    \end{equation}
    This completes the proof.
\end{proof}

We can also extend it to high dimensions.

\begin{corollary}
\label{corollary:A3}
    Given any $\hat{f}\in\gF(\phi;1,1)$ and $\hat f:\sR^d\to\sR^m$, there exists some $f\in\gF({\mathrm{LN}(n_s)};1,n_s)$ with $n_s\ge3$, such that $f=\hat f$. In other words, we have
    \begin{equation}
        \gF(\phi;1,1)\subseteq \gF({\mathrm{LN}(n_s)};1,n_s), 
    \end{equation}
    where $\phi(x)=x/\sqrt{x^2+1}$ is an activation function.
\end{corollary}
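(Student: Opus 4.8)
The plan is to mirror the construction in the proof of Lemma~\ref{lemma:shallow LN->phi}, promoting the scalar input weight to a row vector and the scalar output weight to a single nonzero row of a matrix, and—exactly as there—to reduce from LN to LS at the outset via Lemma~\ref{lemma: PLN=PLS}. Writing a generic $\hat f\in\gF(\phi;1,1)$ with $\hat f:\sR^d\to\sR^m$ in the form $\hat f(\rvx)=\vw_2\,\phi(\vw_1^\top\rvx+b_1)+\vb_2$, where $\vw_1\in\sR^d$, $b_1\in\sR$, and $\vw_2,\vb_2\in\sR^m$, it then suffices to realize $\hat f$ exactly by a shallow LS-Net of width $n_s-1$, since $\gF(\mathrm{LS}(n_s-1);1,n_s-1)=\gF(\mathrm{LN}(n_s);1,n_s)$.

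Concretely, I would take the candidate $f(\rvx)=\mV_2^\top\mathrm{LS}(\mV_1\rvx+\vc_1)+\vc_2$ with $\mV_1\in\sR^{(n_s-1)\times d}$, $\vc_1\in\sR^{n_s-1}$, $\mV_2\in\sR^{(n_s-1)\times m}$, $\vc_2\in\sR^m$, and set
\begin{equation*}
\mV_1=\mat{\vw_1^\top\\ \mO_{(n_s-2)\times d}},\quad
\vc_1=\mat{b_1\\ 1\\ \bm0_{n_s-3}},\quad
\mV_2=\mat{\vw_2^\top/\sqrt{n_s-1}\\ \mO_{(n_s-2)\times m}},\quad
\vc_2=\vb_2.
\end{equation*}
Then the pre-normalization vector is $(\vw_1^\top\rvx+b_1,\,1,\,0,\dots,0)^\top\in\sR^{n_s-1}$, with second moment $[(\vw_1^\top\rvx+b_1)^2+1]/(n_s-1)$, so—by the same computation as in Lemma~\ref{lemma:shallow LN->phi}—its first coordinate after $\mathrm{LS}$ equals $\sqrt{n_s-1}\,\phi(\vw_1^\top\rvx+b_1)$. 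Since all remaining rows of $\mV_2$ vanish, $\mV_2^\top$ reads off only that coordinate, giving $f(\rvx)=\vw_2\,\phi(\vw_1^\top\rvx+b_1)+\vb_2=\hat f(\rvx)$; applying Lemma~\ref{lemma: PLN=PLS} then places $f$ in $\gF(\mathrm{LN}(n_s);1,n_s)$.

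Two minor points remain to be checked, neither of which I expect to cause trouble. First, the appended constant $1$ forces the second moment to be at least $1/(n_s-1)>0$, so $\mathrm{LS}$ is well defined on all of $\sR^{n_s-1}$ and no convention at $\sigma=0$ is needed; second, the padding block $\bm0_{n_s-3}$ is empty when $n_s=3$, so the minimal-width case is already subsumed. The only real work is bookkeeping—tracking the hidden dimension $n_s-1$ for LS versus $n_s$ for LN, and verifying that replacing scalars by a vector and a matrix leaves untouched the scalar identity underlying Lemma~\ref{lemma:shallow LN->phi}. One could instead argue directly with $\mathrm{LN}$ of width $n_s$ via a zero-mean padding row, but routing through $\mathrm{LS}$ is cleaner and sidesteps separate treatment of the centering step.
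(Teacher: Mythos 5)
Your proposal is correct and follows essentially the same route the paper intends: the paper's own argument for Corollary~\ref{corollary:A3} is only a sketch ("follow the proof above and apply the similar construction in the proof of Corollary~\ref{corollary:A1}"), i.e., reduce LN of width $n_s$ to LS of width $n_s-1$ via Lemma~\ref{lemma: PLN=PLS} and then promote the scalar weights of Lemma~\ref{lemma:shallow LN->phi} to a row $\vw_1^\top$ and a single nonzero block carrying $\vw_2$, exactly as you do. Your explicit construction and the checks on well-definedness of LS and the $n_s=3$ edge case correctly fill in the details the paper leaves implicit.
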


\begin{proofsketch}
    We can follow the proof above and apply the similar construction in the proof of Corollary \ref{corollary:A1}. 
\end{proofsketch}

\subsection{Proof of Theorem \ref{thm:LN-Nets weak}}
\label{sec:proof thm 3.1}

To begin with, we point out that the univariate function $f$ represented by a shallow LN-Net has at most two stationary points, if $f$ is not a constant-value function. 

\begin{lemma}
    \label{lemma:stationary point}
    Given $f\in\gF(\mathrm{LN}(n_s);1,n_s)$ with $n_s\ge 2$ and $f:\sR\to\sR$, thus $f$ has continuous derivative on $\sR$ and has at most one stationary point---otherwise $f\in\gF(\sign;1,1)$, which has only two pieces of constant values. 
\end{lemma}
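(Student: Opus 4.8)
The plan is to write $f$ in closed form and reduce the statement to an elementary one‑variable computation. By Lemma~\ref{lemma: PLN=PLS} it suffices to treat $f\in\gF(\mathrm{LS}(n_s-1);1,n_s-1)$, i.e.\ $f(x)=\vv_2^\top\mathrm{LS}(\rvz(x))+c_2$ with $\rvz(x)=\vv_1 x+\vc_1$ an affine curve in $\sR^{n_s-1}$; working directly with $\mathrm{LN}$ and the centered curve $\rvh(x)-\mu(x)\bm1$ yields the same computation with one harmless extra centering step. Writing $\|\rvz(x)\|_2^2=Ax^2+Bx+C$ with $A=\|\vv_1\|_2^2$, $B=2\vv_1^\top\vc_1$, $C=\|\vc_1\|_2^2$, Cauchy--Schwarz gives $B^2-4AC=4\big((\vv_1^\top\vc_1)^2-\|\vv_1\|_2^2\|\vc_1\|_2^2\big)\le 0$, so this nonnegative quadratic either is strictly positive on all of $\sR$ or has a single double real root. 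These are the two cases.

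If $\|\rvz(x_0)\|_2=0$ for some $x_0$, then equality holds in Cauchy--Schwarz, forcing $\vc_1=\lambda\vv_1$ (and $\vc_1=\bm0$ if $\vv_1=\bm0$, whence $f\equiv c_2$). Then $\rvz(x)=\vv_1(x-x_0)$, and with the convention $\mathrm{LS}(\bm0)=\bm0$ together with $\mathrm{sign}(0)=0$ we obtain $\mathrm{LS}(\rvz(x))=\sqrt{n_s-1}\,\vv_1\,\mathrm{sign}(x-x_0)/\|\vv_1\|_2$ for every $x\in\sR$. Hence $f(x)=\big(\sqrt{n_s-1}\,\vv_2^\top\vv_1/\|\vv_1\|_2\big)\mathrm{sign}(x-x_0)+c_2\in\gF(\mathrm{sign};1,1)$, a step function with two constant pieces and $f(x_0)=c_2$; this is exactly the ``otherwise'' alternative (and a constant $f$ is the zero‑amplitude special case).

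Otherwise $Ax^2+Bx+C>0$ on all of $\sR$, so writing $\alpha x+\beta:=\vv_2^\top\rvz(x)$ we have $f(x)=\sqrt{n_s-1}\,(\alpha x+\beta)(Ax^2+Bx+C)^{-1/2}+c_2$, a composition of $C^\infty$ maps, hence $f\in C^1(\sR)$ and $f'$ is continuous. Differentiating and clearing the positive factor $(Ax^2+Bx+C)^{3/2}$, the stationarity condition $f'(x)=0$ reduces to
\begin{equation}
\alpha(Ax^2+Bx+C)-(\alpha x+\beta)\big(Ax+\tfrac{B}{2}\big)=0 ,
\end{equation}
and the quadratic terms cancel, leaving the affine equation $\big(\tfrac12\alpha B-\beta A\big)x+\big(\alpha C-\tfrac12\beta B\big)=0$. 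This has at most one root unless both coefficients vanish, in which case $f'\equiv0$ and $f$ is constant (again covered by the ``otherwise'' branch). So a non‑constant smooth $f$ has at most one stationary point, as claimed.

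The only step that needs genuine care rather than bookkeeping is the degenerate case: using Cauchy--Schwarz to see that $\|\rvz(x)\|_2^2$ can touch zero only at a double root, and then verifying that the adopted normalization convention at $\sigma=0$ turns $f$ into an \emph{exact} $\mathrm{sign}$‑Net (with the value at $x_0$ landing at the midpoint $c_2$) rather than leaving a hole or a spurious third piece. The heart of the ``at most one stationary point'' claim is the short observation that the numerator of $f'$ is affine because its leading quadratic terms cancel; everything else is routine.
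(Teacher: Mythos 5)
Your proof is correct and follows essentially the same route as the paper's: reduce to an LS-Net via Lemma~\ref{lemma: PLN=PLS}, treat the degenerate case where $\vv_1x+\vc_1$ vanishes (proportional bias, yielding an exact $\mathrm{sign}$-Net under the $\sigma=0$ convention), and otherwise observe that the numerator of $f'$ is affine because the quadratic terms cancel. The only difference is cosmetic — you organize the case split via the Cauchy--Schwarz discriminant of $\|\vv_1x+\vc_1\|^2$ rather than enumerating cases on $\vv_1,\vc_1$ directly, and you handle the $f'\equiv 0$ subcase slightly more explicitly than the paper does.
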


\begin{proof}
    Again, we apply Lemma \ref{lemma: PLN=PLS} for simplify the question, that $\gF({\mathrm{LN}(n_s)};1,n_s)=\gF({\mathrm{LS}(n_s-1)};1,n_s-1)$ for $n_s\ge2$. 

Take $f\in\gF({\mathrm{LS}(n_s-1)};1,n_s-1)$ for $n_s\ge2$, we set 
\begin{equation}
\label{eqn:sorce 3.9}
    f(x)
    =
    \frac{\vv^\top(\vw x+\vb)}{\|\vw x+\vb\|}+b_0,
\end{equation}
where $\vw,\vb,\vv\in\mathbb{R}^d$ and $b_0\in\mathbb{R}$. 

Case 1: $\vw = \vb = \bm0$, we have $f(x)=b_0=\sign(0\cdot x+0)+b_0$, thus $f\in\gF(\sign;1,1)$. 

Case 2: $\vw=\bm0,\vb\ne\bm0$, we have $f(x)=\vv^\top\vb/\|\vb\|+b_0=\vv^\top\vb/\|\vb\|\cdot\sign(0\cdot x+1)+b_0$, thus $f\in\gF(\sign;1,1)$. 

Case 3: $\vw\ne\bm0$, and there is some $k\in\sR$ such that $\vb=k\vw$. We have
\begin{equation}
    \begin{aligned}
        f(x)&= 
    \frac{\vv^\top(\vw x+k\vw)}{\|\vw x+k\vw\|}+b_0\\
    &= \frac{\vv^\top\vw(x+k)}{\|\vw\||x+k|}+b_0 \\
    &=\frac{\vv^\top\vw}{\|\vw\|}\sign(x+k)
    +b_0, 
    \end{aligned}
\end{equation}
where we apply the supplement definition of LN at $x+k=0$. Thus $f\in\gF(\sign;1,1)$. 

Case 4: $\vw\ne\bm0$ and $\vw x+\vb\ne\bm0$ for any $x\in\sR$. 

Note that 
\begin{equation}
    f(x)=\frac{\vv^\top\vw x+\vv^\top\vb}{\sqrt{\vw^\top\vw x^2+2\vw^\top\vb x+\vb^\top\vb}}
\end{equation}
we have 
\begin{equation}
    \begin{aligned}
f'(x)=\frac{\big[(\vv^\top\vw)(\vw^\top\vb)-(\vv^\top\vb)(\vw^\top\vw)\big]x+\big[(\vv^\top\vw)(\vb^\top\vb)-(\vv^\top\vb)(\vw^\top\vb)\big]}{\|\vw x+\vb\|^{3}}. 
    \end{aligned}
\end{equation}

Since $\vw x+\vb\ne\bm0$, we can identify that $f'(x)$ has at most one stationary point, namely
\begin{equation}
    x_0 = -\frac{(\vv^\top\vw)(\vb^\top\vb)-(\vv^\top\vb)(\vw^\top\vb)}{(\vv^\top\vw)(\vw^\top\vb)-(\vv^\top\vb)(\vw^\top\vw)}, 
\end{equation}
if $(\vv^\top\vw)(\vw^\top\vb)-(\vv^\top\vb)(\vw^\top\vw)\ne 0$. Besides we point out that $f'(x)$ is continuous on $\sR$. 

Finally, we complete the proof. 
\end{proof}

Now we prove Theorem \ref{thm:LN-Nets weak} based on Lemma \ref{lemma:stationary point}. 

\paragraph{Theorem \ref{thm:LN-Nets weak}}(Weak representation capacity of LN-Nets). 
There exists a continuous function $\hat f$ on $\Omega\subset \sR$, such that for any shallow LN-Net $f\in\gF(\mathrm{LN}(n_s);1,n_s)$ with $n_s\ge 2$,
\begin{equation}
|f-\hat{f}|_{L^\infty(\Omega)}\ge 1.
\end{equation}

\begin{proof}
We choose $\hat f$ to be $\hat f(x)=\cos(\pi x)$ on $[-2,2]$. 

We prove the theorem by contradiction, assume that $|f(x)-\hat f(x)|<1$ for each $x\in [-2,2]$. 

Case 1: $f\in\gF(\sign;1,1)$, set as $f(x)=w_2\sign(w_1x+b_1)+b_2$. 

If $w_1\ne0$, we have $f(x)\equiv C$, else $f(x)\equiv C_1$ for $x>-b_1/w_1$ and $f(x)\equiv C_2$ for $x<-b_1/w_1$. 

Without loss of generality, we assume $-b_1/w_1\le0$, and $f(x)\equiv C$ for $x>0$. 

Note that $f(1)=C,f(2)=C,\hat f(1)=-1,\hat f(2)=1,|f(1)-\hat f(1)|<1,|f(2)-\hat f(2)|<1$, we obtain
\begin{equation}
    \begin{aligned}
        |\hat f(2)-\hat f(1)|&\le |\hat f(2)-C+C-\hat f(1)| \\
        &\le |f(1)-\hat f(1)| + |f(2)-\hat f(2)| \\
        &<2,
    \end{aligned}
\end{equation}
but $\hat f(2)-\hat f(1)=2$. Thus $|f-\hat{f}|_{L^\infty([-2,2])}\ge 1$, which contracts the assumption. 

Case 2: $f\notin\gF(\sign;1,1)$, thus $f$ has continuous derivative on $\sR$ and has at most one stationary point.

Note that $\hat f(-2)=1, \hat f(-1)= -1, \hat f(0)=1, \hat f(1)=-1$, and $|f(x)-\hat f(x)|<1$ for $x=-2,-1,0,1$ as assumed. 

We obtain that $f(-2)>0,f(-1)<0,f(0)>0,f(1)<0$. By Lagrange's mean value theorem, there exists some $\eta_1\in(-2,-1),\eta_2\in(-1,0),\eta_3\in(0,1)$, such that $f'(\eta)<0,f'(\eta_2)>0,f'(\eta_3)<0$. Furthermore, since $f'(x)$ is continuous on $\sR$, there exists some $xi_1\in(\eta_1,\eta_2),\xi_2\in(\eta_2,\eta_3)$, such that $f'(\xi_1)=f'(\xi_2)=0$. 

Note that $\xi_1\ne\xi_2$, which contracts the property of $f$, which has at most one stationary point. 

Finally, we complete the proof.
\end{proof}

\subsection{Proof of Corollary \ref{coro:shallow PLN->phi}}
\label{sec:proof corollary 3.1}

\paragraph{Corollary \ref{coro:shallow PLN->phi}}

Suppose there exists $g\in\gF(\mathrm{LN}(n_s);1,n_s)$ that can represent any $\hat g\in\gF(\phi;1,1)$. Then for any $\hat f\in\gF(\phi;1,N)$ with $\hat f:\sR\to\sR$, there exists a shallow PLN-Net $f \in\gF(\mathrm{PLN}(n_s);1,n_sN)$ such that $f=\hat f$. In short,
\begin{equation}
\gF(\phi;1,N)\subseteq\gF(\mathrm{PLN}(n_s);1,n_sN).
\end{equation}

\begin{proof}
    Let $\hat f \in \gF(\phi;1,N)$ and $f:\sR\to\sR$. Set $\hat f(x) = \vw^\top_2 \phi(\vw_1 x+\vb_1)+b_2$. Note that $\phi$ is an element-wise activation function, we then have
    \begin{equation}
        \hat f(x) = \vw^\top_2 \phi(\vw_1 x+\vb_1)+b_2= \sum_{i=1}^N w_{2i}\phi(w_{1i}x+b_{1i})+b_{2i}, 
    \end{equation}
    where
    \begin{equation}
        \vw_1=\mat{w_{11}\\\vdots\\w_{1N}}, \vw_2=\mat{w_{21}\\\vdots\\w_{2N}}, \vb_1=\mat{b_{11}\\\vdots\\b_{1N}},b_{2}=\sum_{i=1}^{N} b_{2i}. 
    \end{equation}

    That means $\hat f$ can be represented by the sum of N shallow $\phi$-Nets $\hat f_i\in\gF(\phi;1,1)$. By Lemma \ref{lemma:shallow LN->phi}, each $f_i$ can be represented by a shallow LN-Net $f_i\in\gF(\mathrm{LN}(n_s);1,n_s)$. By Lemma \ref{lemma:PLN->LN}, the sum of N shallow LN-Nets $f_i\in\gF(\mathrm{LN}(n_s);1,n_s)$ can be represented by a shallow PLN-Net $f \in\gF(\mathrm{PLN}(n_s);1,n_sN)$. Therefore, $f$ can represent $\hat f$. 

    Here we complete the proof. 
\end{proof}

Similarly, we can extend the corollary to the case $\hat f: \sR^d \to\sR^m$. 

\begin{corollary}
\label{Corollary:A4}
    Suppose there exists $g\in\gF(\mathrm{LN}(n_s);1,n_s)$ that can represent any $\hat g\in\gF(\phi;1,1)$. Then for any $\hat f\in\gF(\phi;1,N)$ with $\hat f:\sR^d\to\sR^m$, there exists a shallow PLN-Net $f \in\gF(\mathrm{PLN}(n_s);1,n_sN)$ such that $f=\hat f$. In short,
    \begin{equation}
    \gF(\phi;1,N)\subseteq\gF(\mathrm{PLN}(n_s);1,n_sN).
    \end{equation}
\end{corollary}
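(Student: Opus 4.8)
The plan is to follow the proof of Corollary~\ref{coro:shallow PLN->phi} almost verbatim, substituting the vector-valued building blocks (Corollaries~\ref{corollary:A2} and~\ref{corollary:A3}) for the scalar ones (Lemmas~\ref{lemma:PLN->LN} and~\ref{lemma:shallow LN->phi}). First I would take an arbitrary $\hat f\in\gF(\phi;1,N)$ with $\hat f:\sR^d\to\sR^m$ and write it as $\hat f(\rvx)=\mW_2\,\phi(\mW_1\rvx+\vb_1)+\vb_2$, where $\mW_1\in\sR^{N\times d}$, $\vb_1\in\sR^{N}$, $\mW_2\in\sR^{m\times N}$, $\vb_2\in\sR^{m}$. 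Because $\phi$ acts coordinate-wise, $[\phi(\mW_1\rvx+\vb_1)]_i=\phi(\vw_{1i}^\top\rvx+b_{1i})$ where $\vw_{1i}^\top$ is the $i$-th row of $\mW_1$, so that $\hat f(\rvx)=\sum_{i=1}^N \vw_{2i}\,\phi(\vw_{1i}^\top\rvx+b_{1i})+\vb_2$ with $\vw_{2i}$ the $i$-th column of $\mW_2$. Splitting the output bias as $\vb_2=\sum_{i=1}^N\vb_{2i}$ (for instance $\vb_{2i}=\vb_2/N$) exhibits $\hat f$ as a sum of exactly $N$ width-$1$ $\phi$-Nets $\hat f_i(\rvx)=\vw_{2i}\,\phi(\vw_{1i}^\top\rvx+b_{1i})+\vb_{2i}\in\gF(\phi;1,1)$, each mapping $\sR^d\to\sR^m$.

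Next I would invoke the hypothesis (equivalently, Corollary~\ref{corollary:A3}, which supplies it whenever $n_s\ge3$): each width-$1$ $\phi$-Net $\hat f_i$ is exactly represented by some shallow LN-Net $f_i\in\gF(\mathrm{LN}(n_s);1,n_s)$ from $\sR^d$ to $\sR^m$. Finally, Corollary~\ref{corollary:A2} guarantees that the pointwise sum $\sum_{i=1}^N f_i$ of these $N$ shallow LN-Nets is itself realized by a single shallow PLN-Net $f\in\gF(\mathrm{PLN}(n_s);1,n_sN)$. Chaining the three identities gives $f=\sum_{i=1}^N f_i=\sum_{i=1}^N\hat f_i=\hat f$, which is exactly the claimed inclusion $\gF(\phi;1,N)\subseteq\gF(\mathrm{PLN}(n_s);1,n_sN)$.

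I do not expect a genuine obstacle: the argument is purely structural, and every nontrivial step has already been packaged into a previously proved corollary. The only points demanding a little care are that the base case must hold for \emph{vector-valued} width-$1$ $\phi$-Nets rather than scalar ones, and that the parallel-stacking bookkeeping --- concatenating the $N$ input maps into a single map $\sR^d\to\sR^{n_sN}$, applying $\mathrm{PLN}(n_s)$, and block-summing the $N$ output maps --- indeed lands in $\gF(\mathrm{PLN}(n_s);1,n_sN)$ with the stated width $n_sN$. Both are precisely the contents of Corollaries~\ref{corollary:A3} and~\ref{corollary:A2}, so what remains is routine block-matrix algebra that can be dispatched by reference to those results.
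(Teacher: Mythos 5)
Your proposal is correct and follows essentially the same route as the paper: decompose $\hat f$ into $N$ width-$1$ $\phi$-Nets by splitting rows/columns of $\mW_1,\mW_2$ and the bias, represent each by a shallow LN-Net via Corollary~\ref{corollary:A3} (or the stated hypothesis), and recombine the sum into one PLN-Net via Corollary~\ref{corollary:A2}. No gaps.
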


\begin{proof}
    Let $\hat f \in \gF(\phi;1,N)$ and $f:\sR^d\to\sR^m$. Set $\hat f(x) = \mW_2 \phi(\mW_1 \rvx+\vb_1)+\vb_2$. Note that $\phi$ is an element-wise activation function, we then have
    \begin{equation}
        \hat f(x) = \mW_2 \phi(\mW_1 \rvx+\vb_1)+\vb_2 = \sum_{i=1}^N \vw_{2i}\phi(\vw_{1i}^\top\rvx+b_{1i})+\vb_{2i}, 
    \end{equation}
    where
    \begin{equation}
        \mW_1=\mat{\vw_{11}^\top\\\vdots\\\vw_{1N}^\top}, \mW_2=\mat{\vw_{21}&\cdots & \vw_{2N}}, \vb_1=\mat{b_{11}\\\vdots\\b_{1N}},\vb_{2}=\sum_{i=1}^{N} \vb_{2i}. 
    \end{equation}

    That means $\hat f$ can be represented by the sum of N shallow $\phi$-Nets $\hat f_i\in\gF(\phi;1,1)$. Correspondingly, we apply the high-dimension case, namely Corollary \ref{corollary:A3} rather than Lemma \ref{lemma:shallow LN->phi}, and Corollary \ref{corollary:A2} rather than  Lemma \ref{lemma:PLN->LN}. We can also observe that $f$ can represent $\hat f$. 

    Here we complete the proof. 
\end{proof}

\subsection{Proof of Theorem \ref{theorem:Lipschitz-UAT}}
\label{sec:proof thm 3.2}

\paragraph{Theorem \ref{theorem:Lipschitz-UAT}}(Universal approximations of univariate $L$-Lipschitz continuous functions by shallow PLN-Nets).
    Given any $L$-Lipschitz continuous function $\hat f\in C([0,1])$, there exists a shallow PLN-Net $f\in\gF(\mathrm{PLN}(n_s);1,n_sN)$ with $n_s\ge2$ and $N=\floor{L/2\epsilon}+1$, such that
    \begin{equation}
        |f-\hat{f}|_{L^\infty}< \epsilon
    \end{equation}
\begin{proof}
    Step 1: construct a proper $\bar{f}\in\gF(\mathrm{sign};1,N)$ to approximate $\hat{f}$, namely $|\bar{f}-\hat{f}|_{L^{\infty}}<\epsilon$. 

    We construct a function 
\begin{equation}
    \bar{f}(x) = \sum_{j=1}^N \alpha_j \sign(w_j x + b_j),
\end{equation}
where $N = \floor{L/(2\epsilon)} + 1$. For $1 \le j \le N-1$, we set
\begin{equation}
    \alpha_j = \frac12\left[\hat{f}\!\left(\frac{2j+1}{2N}\right) - \hat{f}\!\left(\frac{2j-1}{2N}\right)\right];
\end{equation}
and for $j = N$,
\begin{equation}
    \alpha_N = \frac12\left[\hat{f}\!\left(\frac{1}{2N}\right) + \hat{f}\!\left(\frac{2N-1}{2N}\right)\right].
\end{equation}
Moreover, we take $w_j = 1$ for all $1 \le j \le N$, $b_j = -\frac{j}{N}$ for $1 \le j \le N-1$, and $b_N = 1$.

With this construction, we can verify the following:
\begin{enumerate}
    \item For $\frac{j-1}{N} < x < \frac{j}{N}$ with $1 \le j \le N$,
        \begin{equation}
            \bar{f}(x) = \hat{f}\!\left(\frac{2j-1}{2N}\right).
        \end{equation}
    \item For $x = \frac{j}{N}$ with $1 \le j \le N-1$,
        \begin{equation}
            \bar{f}(x) = \frac12\left[\hat{f}\!\left(\frac{2j-1}{2N}\right) + \hat{f}\!\left(\frac{2j+1}{2N}\right)\right].
        \end{equation}
    \item Additionally, $\bar{f}(0) = \hat{f}\!\left(\frac{1}{2N}\right)$ and $\bar{f}(1) = \hat{f}\!\left(\frac{2N-1}{2N}\right)$.
\end{enumerate}

Since $N = \floor{L/(2\epsilon)} + 1$, we have $N > L/(2\epsilon)$. Consequently, we obtain the following error estimates:

\begin{enumerate}
    \item For $x = 0$,
        \begin{equation}
            \begin{aligned}
                |\bar{f}(0) - \hat{f}(0)| 
                &= \left|\hat{f}(0) - \hat{f}\!\left(\frac{1}{2N}\right)\right| 
                &\le \frac{L}{2N} < \epsilon.
            \end{aligned}
        \end{equation}
    
    \item For $x = 1$,
        \begin{equation}
            \begin{aligned}
                |\bar{f}(1) - \hat{f}(1)| 
                &= \left|\hat{f}(1) - \hat{f}\!\left(\frac{2N-1}{2N}\right)\right| 
                &\le \frac{L}{2N} < \epsilon.
            \end{aligned}
        \end{equation}
    
    \item For $x = \frac{j}{N}$ with $1 \le j \le N-1$,
        \begin{equation}
            \begin{aligned}
                \left|\bar{f}\!\left(\frac{j}{N}\right) - \hat{f}\!\left(\frac{j}{N}\right)\right|
                &= \left|\frac12\left[\hat{f}\!\left(\frac{2j-1}{2N}\right) 
                    + \hat{f}\!\left(\frac{2j+1}{2N}\right)\right] 
                    - \hat{f}\!\left(\frac{j}{N}\right)\right| \\
                &\le \frac12\left|\hat{f}\!\left(\frac{2j-1}{2N}\right) 
                    - \hat{f}\!\left(\frac{j}{N}\right)\right|
                    + \frac12\left|\hat{f}\!\left(\frac{2j+1}{2N}\right) 
                    - \hat{f}\!\left(\frac{j}{N}\right)\right| \\
                &\le \frac{L}{4N} + \frac{L}{4N} = \frac{L}{2N} < \epsilon.
            \end{aligned}
        \end{equation}
    
    \item For $\frac{j-1}{N} < x < \frac{j}{N}$ with $1 \le j \le N$,
        \begin{equation}
            \begin{aligned}
                |\bar{f}(x) - \hat{f}(x)| 
                &= \left|\hat{f}\!\left(\frac{2j-1}{2N}\right) - \hat{f}(x)\right| 
                &\le L\left|\frac{2j-1}{2N} - x\right| 
                &< \frac{L}{2N} < \epsilon.
            \end{aligned}
        \end{equation}
\end{enumerate}

In summary, for every $x \in [0,1]$, one of the above four cases applies, and we have
\begin{equation}
    |\bar{f}(x) - \hat{f}(x)|_{L^\infty([0,1])} < \epsilon.
\end{equation}
Thus, the constructed function $\bar{f}(x) = \sum_{j=1}^N \alpha_j \sign(x + b_j)$ uniformly approximates $\hat{f}$ on $[0,1]$ with error less than $\epsilon$.

Step 2: by Corollary \ref{coro:shallow PLN->phi} and Lemma \ref{lemma:shallow LN->sign}, since $\bar{f}\in \gF(\mathrm{sign};1,N)$, there exists a shallow PLN-Net $f \in\gF(\mathrm{PLN}(n_s);1,n_sN)$, such that $f=\bar{f}$. 
    
Step 3: Since $f=\bar{f}$ and $|\bar{f}(x) - \hat{f}(x)|_{L^\infty([0,1])} < \epsilon$, we obtain $|f-\hat f|_{L^\infty}<\epsilon$. 

Here we complete the proof. 
\end{proof}

    \subsection{Proof of Theorem \ref{theorem:Lipschitz-UAT}. (The first-type definition of Layer Normalization)} 
    \label{sec:proof thm 3.2-2}

    In this part, we discuss how to provide the proof with the first-type definition of LN, namely (ii)  in Remark \ref{remark:LN def}, which rewrites \Eqn\ref{eqn:LN} as  
    \begin{equation}
    \label{eqn:LN new}
        \hat{h}_j=[\mathrm{LN}(\rvh)]_j= \frac{h_j - \mu}{\sigma+\delta}, ~~j=1,2, \cdots, d,
    \end{equation}
    where $\delta>0$ is a small number for numerical stability in practice. 

    We set $\phi=x/(|x|+\delta)$, we first prove that $\gF(\phi;1,N)\subseteq\gF(\mathrm{PLN}(n_s);1,n_sN)$ for $n_s\ge2$, where PLN applies the definition in \Eqn\ref{eqn:LN new}. 

    \begin{lemma}
        \label{lemma:PLN->new phi}
        Given any $\hat f\in \gF(\phi;1,N)$, where $\phi=x/(|x|+\delta)$, there exists $f\in \gF(\mathrm{PLN}(n_s);1,n_sN)$ with $n_s\ge2$, such that $f=\hat f$, namely
        \begin{equation}
            \gF(\phi;1,N)\subseteq\gF(\mathrm{PLN}(n_s);1,n_sN).
        \end{equation}
    \end{lemma}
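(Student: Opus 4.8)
The plan is to replay the argument chain used for the default LN convention — single-neuron representation (the analogue of Lemma~\ref{lemma:shallow LN->phi}), decomposition of a PLN-Net into a sum of LN-Nets (Lemma~\ref{lemma:PLN->LN}), and the width-$N$ assembly (Corollary~\ref{coro:shallow PLN->phi}) — but now with the denominator-shifted normalization of \Eqn\ref{eqn:LN new} and with $\phi(t)=t/(|t|+\delta)$. The only genuinely new ingredient is a single-neuron statement: a shallow LN-Net of width $n_s\ge 2$ built from the $+\delta$ denominator exactly represents any width-$1$ $\phi$-Net $\hat g(x)=w_2\,\phi(w_1x+b_1)+b_2$.

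To prove that, I would fix a vector $\vc=(c_1,\dots,c_{n_s})^\top$ satisfying $\sum_j c_j=0$, $\sum_j c_j^2=n_s$, and $c_1\ne 0$. Such a $\vc$ exists precisely because $n_s\ge 2$: for $n_s=2$ take $\vc=(1,-1)^\top$, and for $n_s\ge 3$ the remaining two scalar constraints $\sum_{j\ge 2}c_j=-1$, $\sum_{j\ge 2}c_j^2=n_s-1$ are feasible since the minimal sum of squares $1/(n_s-1)$ is at most $n_s-1$ and there is no upper bound. Set the pre-LN linear layer so that $\rvh=(w_1x+b_1)\,\vc$, and the post-LN linear layer so that it returns $(w_2/c_1)\,[\cdot]_1+b_2$. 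Writing $t=w_1x+b_1$, the sample mean is $\mu=t\,\overline c=0$ and $\sigma^2=\tfrac1{n_s}\sum_j(c_jt)^2=t^2$, so the first LN coordinate equals $\dfrac{c_1t-0}{|t|+\delta}=c_1\,\phi(t)$ for every $x$; in particular at $t=0$ the $+\delta$ forces this to be $0=\phi(0)$, so no case distinction is needed (unlike the $\sigma=0$ convention). Hence $f=\hat g$, i.e. $\gF(\phi;1,1)\subseteq\gF(\mathrm{LN}(n_s);1,n_s)$ under the convention of \Eqn\ref{eqn:LN new}.

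With this established, the remaining steps are convention-agnostic. The proof of Lemma~\ref{lemma:PLN->LN} uses only that PLN applies LN block-diagonally and that linear layers combine additively, so it holds verbatim for the $+\delta$ definition; and the proof of Corollary~\ref{coro:shallow PLN->phi} goes through once its hypothesis — some width-$n_s$ LN-Net represents any width-$1$ $\phi$-Net — is supplied by the previous paragraph. Concretely, given $\hat f\in\gF(\phi;1,N)$ write $\hat f(x)=\sum_{i=1}^N w_{2i}\,\phi(w_{1i}x+b_{1i})+b_{2i}$ since $\phi$ acts element-wise, represent each summand by a width-$n_s$ LN-Net, and assemble the $N$ copies into one width-$n_sN$ PLN-Net via Lemma~\ref{lemma:PLN->LN}, yielding $f\in\gF(\mathrm{PLN}(n_s);1,n_sN)$ with $f=\hat f$. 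The main obstacle is exactly the single-neuron step: one must choose $\vc$ so that the $n_s$-point centering and variance reproduce $\phi$ exactly rather than a rescaled or reshifted S-curve, and verify that feasibility of the two moment constraints is precisely the condition $n_s\ge 2$; everything downstream is bookkeeping, and the $+\delta$ in the denominator actually simplifies matters by removing the $\sigma=0$ edge case.
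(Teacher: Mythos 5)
Your proposal is correct, and it differs from the paper's proof only in how the single-neuron step is realized; the downstream assembly (decompose $\hat f$ into $N$ width-$1$ $\phi$-Nets, represent each, and merge via Lemma~\ref{lemma:PLN->LN} as in Corollary~\ref{coro:shallow PLN->phi}) is the same in both. The paper first invokes Lemma~\ref{lemma: PLN=PLS} to pass from LN to LS and then realizes $\phi(t)=t/(|t|+\delta)$ by loading a scaled copy of $w_1x+b_1$ into a single LS coordinate with zeros elsewhere, so that the second-moment term in the denominator collapses to $|w_1x+b_1|$ and the $+\delta$ produces $\phi$ exactly. You instead work directly with LN under the convention of \Eqn\ref{eqn:LN new}, choosing a coefficient vector $\vc$ with $\sum_j c_j=0$ and $\sum_j c_j^2=n_s$ so that the mean subtraction vanishes and $\sigma=|w_1x+b_1|$; this is essentially the LN-to-LS reduction absorbed into the choice of $\vc$. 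Your route has a concrete advantage: Lemma~\ref{lemma: PLN=PLS} was proved for the default ($\sigma=0$) convention, and carrying it over to the $+\delta$ convention requires an extra (if small) rescaling argument that the paper does not spell out, whereas your direct construction is self-contained and also makes the $t=0$ case automatic, exactly as you note. The paper's route, in turn, buys brevity by reusing the existing LS machinery and paralleling the proof of Lemma~\ref{lemma:shallow LN->phi}. Both arguments establish the containment at the stated width $n_sN$ for $n_s\ge 2$.
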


    \begin{proof}
        By Lemma \ref{lemma: PLN=PLS}, to prove $\gF(\phi;1,N)\subseteq\gF(\mathrm{PLN}(n_s);1,n_sN)$, we can just prove $\gF(\phi;1,N)\subseteq\gF(\mathrm{PLS}(n_s-1);1,(n_s-1)N)$. 

        For any $\hat{f}\in\gF(\phi;1,1)$, we have $\hat f(x)=w_2\phi(w_1x+b_1)+b_2$, 
    where $\phi(x)=x/(|x|+\delta)$. 
    
    For $f\in\gF({\mathrm{LS}(n_s-1)};1,n_s-1)$ with $n_s\ge2$, construct 
    $f(x)=\vv_2^\top\mathrm{LS}(\vv_1x+\vc_1)+c_2$, where $\vv_1,\vv_2,\vc_1\in\sR^{n_s-1}$ and $c_2\in\sR$. 
    
    Let
    \begin{equation}
        \vv_1=\mat{w_1\sqrt{n_s}\\\bm0_{n_s-2}},\vc_1=\mat{b_1\sqrt{n_s}\\\bm0_{n_s-2}}, \vv_2=\mat{w_2/\sqrt{n_s}\\\bm0_{n_s-2}},c_2=b_2.
    \end{equation}
    We can identify that the first element of $\mathrm{LS}(\vv_1x+\vc_1)$ is
    \begin{equation}
        \frac{(w_1x+b_1)\sqrt{n_s}}{\sqrt{[(w_1x+b_1)\sqrt{n_s}]^2/(n_s)}+\delta}=\sqrt{n_s}\frac{w_1x+b_1}{|w_1x+b_1|+\delta}=\sqrt{n_s}\phi(w_1x+b_1). 
    \end{equation}

    Therefore, we obtain
    \begin{equation}
    \begin{aligned}
        f(x)&=\vv_2^\top\mathrm{LS}(\vv_1x+\vc_1)+c_2 \\
        &=w_2/\sqrt{n
        _s}\cdot \sqrt{n_s}\phi(w_1x+b_1) + c_2 \\
        &= w_2\phi(w_1x+b_1)+b_2=\hat f(x).
    \end{aligned}
    \end{equation}
    This completes the proof.
    \end{proof}

    Now we provide the formal proof with the fisrt-type definition of LN. 
    \begin{proof}
        The only difference from the previous proof in Appendix \ref{sec:proof thm 3.2} is that we will consider the new $f\in \gF(\mathrm{PLN}(n_s);1,n_sN)$ with the fisrt-type definition, namely \Eqn\ref{eqn:LN new}. In the previous proof, we have proved that $|\bar{f}-\hat{f}|_{L^{\infty}}<\epsilon$, where $\bar{f}\in\gF(\mathrm{sign};1,N)$ writes as
        \begin{equation}
        \label{eqn:bar f}
            \bar{f}(x) = \sum_{j=1}^N \alpha_j \sign(w_j x + b_j). 
        \end{equation}

        Now we consider $\tilde f\in\gF(\phi;1,N)$, specifically, $\tilde f(x)= \sum_{j=1}^N{\alpha}_j\phi((w_j/\lambda) x+(b_j/\lambda))$, where $\alpha_j,w_j,b_j$ are the same as \Eqn\ref{eqn:bar f}, but we add a scalar parameter $\lambda>0$ for further adjustment. We will construct a proper $\lambda$ to satisfies that $|\tilde f-\bar f|+|\bar f-\hat f|<\epsilon$. Note that
        \begin{equation}
            \tilde{f}(x) = \sum_{j=1}^N \alpha_j \phi((w_j/\lambda) x+(b_j/\lambda)) = \sum_{j=1}^N \alpha_j \frac{w_jx+b_j}{|w_jx+b_j|+\lambda\delta}. 
        \end{equation}
        
        We will discuss $|\bar f(x)-\tilde f(x)|$ for $x\in[0,1]$ in the following two cases.
        We set $\delta_0\in\left(0,\frac{1}{2N}\right)$. 

        1) If $x$ satisfies: $\forall j = 1,2,\cdots, N$, we have $|x+{b}_j|>\delta_0$. Since $|\bar f(x)-\hat f(x)|<\eps$, there exists some $\eps_1>0$, subjected to $|\bar f(x)-\hat f(x)|\le \eps - \eps_1$. Since our proof is a continue demonstration of the proof in Appendix A.6, we continue to choose $w_j=1$ for simplicity. Here we obtain:
        \begin{equation}
            \label{eqn:scale-abs}
            \begin{aligned}
                |\bar f(x)-\tilde f(x)| &= \left|\sum_{j=1}^N{\alpha}_j\text{sign}(x+{b}_j)-\sum_{j=1}^N{\alpha}_j\frac{x+{b}_j}{|x+{b}_j|+\lambda\delta}\right|\\
                &=\left|\sum_{j=1}^N{\alpha}_j\left[ \frac{x+{b}_j}{|x+{b}_j|}-\frac{x+{b}_j}{|x+{b}_j|+\lambda\delta} \right]\right| \\
                &\le \sum_{j=1}^N|{\alpha}_j|\left|\frac{x+{b}_j}{|x+{b}_j|}-\frac{x+{b}_j}{|x+{b}_j|+\lambda\delta}\right| \\
                &=\sum_{j=1}^N|{\alpha}_j|\left|\frac{\lambda\delta(x+{b}_j)}{|x+{b}_j|(|x+{b}_j|+\lambda\delta)}\right| \\
                &=\sum_{j=1}^N|{\alpha}_j|\cdot\frac{\lambda\delta}{|x+{b}_j|+\lambda\delta}.
            \end{aligned}
        \end{equation}
        Given $\alpha^*=\max_{1\le j\le N}|{\alpha}_j|$ and $\delta_N=\frac{\eps_1\delta_0}{N\alpha^*}$, for $\lambda\delta\le\delta_N$, we have:
        \begin{equation}
        \begin{aligned}
            |\bar f(x)-\tilde f(x)| &\le\sum_{j=1}^N|{\alpha}_j|\cdot\frac{\lambda\delta}{|x+{b}_j|+\lambda\delta} \\
            &<\sum_{j=1}^N\alpha^*\cdot\frac{\eps_1\delta_0}{N\alpha^*}\cdot\frac{1}{\delta_0+\lambda\delta} \\
            &=\frac{\eps_1\delta_0}{\delta_0+\delta} \\
            &<\eps_1.
        \end{aligned}
        \end{equation}
        Therefore, we have:
        \begin{equation}
            \begin{aligned}
                |\tilde f(x)-\hat f(x)|\le|\tilde f(x)-\bar f(x)|+|\bar f(x)-\hat f(x)|
                <\eps_1+\eps-\eps_1
                =\eps.
            \end{aligned}
        \end{equation}

        2) If there exists some $k$ that satisfied $|x+{b}_k|\le\delta_0$---for $x\in[0,1]$ and $\delta_0\in\left(0,\frac{1}{2N}\right)$, we have $1\le k\le N-1$ and ${b}_k=-\frac{k}{N}$. Since $N=\floor{L/2\epsilon}+1$, we have $N>\frac{L}{2\eps}$. Hence, there is some $\eps_2>0$, subjected to that $\frac{L}{2N}\le\eps-\eps_2$. We set $s_j(x)=\phi((w_j/\lambda)x+b_j/\lambda)=(x+b_j)/(|x+b_j|+\lambda\delta)$ for simplicity, now we rewrite: 
        \begin{equation}
            \begin{aligned}
                &|\tilde f(x)-\hat f(x)| \\
                =& |\tilde f(x)-\hat f(x)+\bar f(x) - \bar f(x) | \\
                =&\left|\sum_{j\ne k}{\alpha}_js_j(x)+{\alpha_k}s_k(x) - \hat f(x) + \bar f(x) - \sum_{j\ne k}{\alpha}_j\text{sign}(x+{b}_j)-{\alpha}_k\text{sign}(x+{b}_k)\right| \\
                \le& \left|\sum_{j\ne k}{\alpha}_js_j(x)- \sum_{j\ne k}{\alpha}_j\text{sign}(x+{b}_j)\right| + \left| \bar f(x) +{\alpha_k}s_k(x) -{\alpha}_k\text{sign}(x+{b}_k) - \hat f(x) \right|.
            \end{aligned}
        \end{equation}
        
        For the first term, similar to case 1, we set $\alpha_k^*=\max_{j\ne k}|{\alpha}_j|$ and $\delta_k = \frac{\eps_2/2N}{(N-1)\alpha_k^*}$. For $\lambda\delta\le\delta_k$, we have:
        \begin{equation}
            \begin{aligned}
                \left|\sum_{j\ne k}{\alpha}_js_j(x)- \sum_{j\ne k}{\alpha}_j\text{sign}(x+{b}_j)\right| 
                =& \left|\sum_{j\ne k}{\alpha}_j \left[\frac{x+{b}_j}{|x+{b}_j|+\lambda\delta}-\frac{x+{b}_j}{|x+{b}_j|}\right]\right| \\
                \le& \sum_{j\ne k}|{\alpha}_j|\cdot\frac{\lambda\delta}{|x+{b}_j|+\lambda\delta} \\
                < & \sum_{j\ne k}\alpha_k^*\cdot \frac{\eps_2/2N}{(N-1)\alpha_k^*} \cdot\frac{1}{1/2N+\lambda\delta} \\
                = & \frac{\eps_2/2N}{1/2N+\lambda\delta} \\
                < &\eps_2,
            \end{aligned}
        \end{equation}

        where we used that 
        \begin{equation}
            |x+b_j|\ge|b_j-b_k| - |x+b_k|\ge1/N-\delta_0\ge 1/2N. 
        \end{equation}
        For the second term, notice that when $\frac{k}{N}-\delta_0\le x\le\frac{k}{N}+\delta_0$, we have
        \begin{equation}
            \bar f(x) = 
            \begin{cases}
                \hat f\left(\frac{2k-1}{2N}\right), &\frac{k}{N}-\delta_0\le x<\frac{k}{N}\\
                \frac{1}{2}\hat f\left(\frac{2k-1}{2N}\right) + \frac12\hat f\left(\frac{2k+1}{2N}\right), &x=\frac{k}{N}\\
                \hat f\left(\frac{2k+1}{2N}\right), &\frac{k}{N}<x\le\frac{k}{N}+\delta_0,
            \end{cases}
        \end{equation}
        and using the constructed $\alpha_j$ in the proof in Appendix A.6, we obtain
        \begin{equation}
            {\alpha}_k\text{sign}(x+{b}_k)=\begin{cases}
                \frac12\left[\hat f\left(\frac{2k-1}{2N}\right)-\hat f\left(\frac{2k+1}{2N}\right)\right], &\frac{k}{N}-\delta_0\le x<\frac{k}{N}\\
                0, &x=\frac{k}{N} \\
                \frac12\left[\hat f\left(\frac{2k+1}{2N}\right)-\hat f\left(\frac{2k-1}{2N}\right)\right], &\frac{k}{N}<x\le\frac{k}{N}+\delta_0.
            \end{cases}
        \end{equation}
        We thus have
        \begin{equation}
            \bar f(x)-{\alpha}_k\sign(x+{b}_k)=\frac12\left[f\left(\frac{2k+1}{2N}\right)+f\left(\frac{2k-1}{2N}\right)\right],\quad \text{for }\frac{k}{N}-\delta_0\le x\le\frac{k}{N}+\delta_0. 
        \end{equation}
        As for
        \begin{equation}
            {\alpha}_ks_k(x)=\frac12\left[f\left(\frac{2k+1}{2N}\right)-f\left(\frac{2k-1}{2N}\right)\right]\cdot\frac{x-\frac{k}{N}}{|x-\frac{k}{N}|+\lambda\delta},
        \end{equation}
        since $s_k(x)\in(-1,1)$ and $s_k(x)\left(\frac{k}{N}-x\right)\le0$, we obtain that:
        \begin{equation}
            \label{eqn:split-abs}
            \begin{aligned}
                &\left| \bar f(x) -{\alpha}_k\text{sign}(x+{b}_k) +{\alpha}_ks_k(x)  - \hat f(x) \right| \\
                =& \left|\frac{1+s_k(x)}{2}\hat f\left(\frac{2k+1}{2N}\right)+\frac{1-s_k(x)}{2}\hat f\left(\frac{2k-1}{2N}\right)-\hat f(x)\right| \\
                \le & \left|\frac{1+s_k(x)}{2}\right|\left|\hat f\left(\frac{2k+1}{2N}\right)-\hat f(x)\right|+ \left|\frac{1-s_k(x)}{2}\right|\left|\hat f\left(\frac{2k-1}{2N}\right)-\hat f(x)\right| \\
                \le & \frac{1+s_k(x)}{2}\cdot L\cdot\left(\frac{2k+1}{2N}-x\right)+ \frac{1-s_k(x)}{2}\cdot L\cdot\left(x-\frac{2k-1}{2N}\right) \\
                = & \frac{1}{2}L\cdot\frac{1}{N}+ \frac{s_k(x)}{2}\cdot L\cdot\left(\frac{2k+1}{2N}-x\right)+ \frac{s_k(x)}{2}\cdot L\cdot\left(\frac{2k-1}{2N}-x\right)\\
                = & \frac{L}{2N} + L s_k(x)\left(\frac{k}{N}-x\right)\\
                \le& \frac{L}{2N}\\
                \le& \eps-\eps_2.
            \end{aligned}
        \end{equation}
        Accordingly, we have:
        \begin{equation}
        \begin{aligned}
            &|\tilde f(x)-\hat f(x)| \\
            \le& \left|\sum_{j\ne k}{\alpha}_js_j(x)- \sum_{j\ne k}{\alpha}_j\sign(x+{b}_j)\right| + \left| \bar f(x) +{\alpha_k}s_k(x) -{\alpha}_k\text{sign}(x+{b}_k) - \hat f(x) \right| \\
            <&\eps_2+\eps-\eps_2 \\
            =& \eps.
        \end{aligned}
        \end{equation}

        Therefore, given $\delta^* = \min(\delta_1,\delta_2,\cdots,\delta_N)$, when $\lambda\le\delta^*/\delta$, we have $|\tilde f(x)-\hat f(x)|<\eps,\forall x\in[0,1]$. 

        By Lemma \ref{lemma:PLN->new phi}, there exists $f\in \gF(\mathrm{PLN}(n_s);1,n_sN)$ with $n_s\ge2$, such that $f=\tilde f$, namely
        \begin{equation}
            |f-\hat f|_{L^\infty}<\epsilon. 
        \end{equation}
        Here we complete the proof. 
\end{proof}

\section{Proofs of Section~\ref{sec:4}}

\subsection{Proof of Lemma \ref{lemma:deep PLN-Net to phi}}
\label{sec:proof 4.1}

\paragraph{Lemma \ref{lemma:deep PLN-Net to phi}} 
Given any $\hat f\in\gF(\phi;L,N)$ and $\hat f:\sR^d\to\sR^m$, there exists a PLN-Net $f\in \gF(\mathrm{PLN}(n_s);L,n_sN)$, such that $f=\hat f$. In other words, we have
    \begin{equation}
        \gF(\phi;L,N)\subseteq\gF(\mathrm{PLN}(n_s);L,n_sN),
    \end{equation}
    where $\phi(x)=x/\sqrt{x^2+1}$ is an activation function.
\begin{proof}

    Assume $\hat f:\sR^d\to\sR^m$. By Corollary \ref{Corollary:A4}, we obtain that 
    \begin{equation}
    \label{eqn:PLN->phi}
        \gF(\phi;1,N)\subseteq\gF(\mathrm{PLN}(n_s);1,n_sN). 
    \end{equation}

    Now we consider to extend to the deep networks. 

    Given a $\hat f\in\gF(\phi;L,N)$, which can be unrolled as
    \begin{equation}
\begin{aligned}
    \hat f &= W_{L+1} \circ \phi_L \circ W_L \circ \cdots \circ W_2 \circ \phi_1 \circ W_1 \\
    &= W_{L+1} \circ \phi_L \circ I_L  \circ W_L \circ \cdots \circ W_2 \circ \phi_1 \circ I_1 \circ W_1 \\
    &= (W_{L+1} \circ \phi_L \circ I_L)  \circ W_L \circ \cdots \circ (W_2 \circ \phi_1 \circ I_1 \circ W_1). 
\end{aligned}
\end{equation}
where $W_1:\mathbb{R}^{d_x}\to\mathbb{R}^{d_1}$,
$W_\ell:\mathbb{R}^{d_{\ell-1}}\to\mathbb{R}^{d_\ell}$ for $\ell=2,\ldots,L$,
and $W_{L+1}:\mathbb{R}^{d_L}\to\mathbb{R}^{d_y}$ are linear layers. $I_\ell:\sR^{d_\ell\to d_\ell}$ for $\ell=1,\ldots,L$, are Identity maps. 

Now we construct that $\hat f_\ell=W_{\ell+1} \circ \phi_\ell \circ I_\ell$ for $\ell=2,\ldots,L$ and $\hat f_1=W_2\circ \phi_1 \circ I_1 \circ W_1$. We find that each $\hat f_\ell \in \gF(\phi;1,d_\ell)$ and
\begin{equation}
    \hat f=\hat f_L\circ\hat f_{L-1}\circ\cdots\circ\hat f_1.
\end{equation}

By \Eqn\ref{eqn:PLN->phi}, we can find $f_\ell \in\gF(\mathrm{PLN}(n_s);1,n_sd_\ell)$ such that $f_\ell=\hat f_\ell$. Set $f_\ell = V_{\ell}\circ \mathrm{PLN}\circ V_{\ell}'$, then we construct 
\begin{equation}
    \begin{aligned}
        f &= f_L \circ f_{L-1}\circ\cdots\circ f_1 \\
        &= (V_{L}\circ \mathrm{PLN}\circ V_{L}' )\circ (V_{L-1}\circ \mathrm{PLN}\circ V_{L-1}')\circ\cdots\circ (V_{1}\circ \mathrm{PLN}\circ V_{1}') \\
        &= V_{L}\circ \mathrm{PLN}\circ V_{L}' \circ V_{L-1}\circ \mathrm{PLN}\circ V_{L-1}'\circ\cdots\circ V_{1}\circ \mathrm{PLN}\circ V_{1}' \\
        &= \bar V_{L+1}\circ\mathrm{PLN}\circ \bar V_L\circ \cdots\circ \mathrm{PLN}\circ \bar V_1
    \end{aligned}
\end{equation}
where $\bar V_1 =V_1':\mathbb{R}^{d_x}\to\mathbb{R}^{n_sd_1}$,
$\bar V_\ell=V_\ell\circ V_{\ell-1}':\mathbb{R}^{n_sd_{\ell-1}}\to\mathbb{R}^{n_sd_\ell}$ for $\ell=2,\ldots,L$,
and $\bar V_{L+1}=V_L:\mathbb{R}^{n_sd_L}\to\mathbb{R}^{d_y}$ are linear layers. 

Obviously, $f$ is a PLN-Net of depth L. Note that $ N=\max\{d_1, \ldots, d_L\}$, we can find the width of $f$ is $\max\{n_sd_1, \ldots, n_sd_L\}=n_sN$, namely $f\in\gF(\mathrm{PLN}(n_s);L,n_sN)$. 

Note that $f=\hat f$, thus we complete the proof for $\hat f:\sR\to\sR$. 

To extend to $\hat f:\sR^m\to\sR^n$, one can adjust the first and the last linear layer for suitable dimensions. 
\end{proof}

\subsection{Proof of Theorem \ref{thm:shallow PLN->universal}}
\label{sec:proof thm 4.1}

We first recall the definition of Sobolev space. 

    \paragraph{Sobolev space $\mathcal{W}^{s,\infty}(\Omega)$.}
    Let $\Omega \subset \mathbb{R}^n$ be a bounded Lipschitz domain, $s \in \mathbb{N}$. 
    For a multi-index $\alpha = (\alpha_1, \dots, \alpha_n) \in \mathbb{N}_0^n$, 
    let $|\alpha| = \sum_{i=1}^n \alpha_i$ and 
    $D^\alpha u = \partial^{|\alpha|} u / \partial x_1^{\alpha_1} \cdots \partial x_n^{\alpha_n}$ 
    denote weak derivatives.

    The Sobolev space is defined as
    \begin{equation}
        \mathcal{W}^{s,\infty}(\Omega) = \big\{ u \in L^\infty(\Omega) \mid 
        D^\alpha u \in L^\infty(\Omega)\ \forall\, |\alpha| \leq s \big\}, 
    \end{equation}
    with norm
    \begin{equation}
        \|u\|_{\mathcal{W}^{s,\infty}(\Omega)} = 
        \max_{|\alpha| \leq s} \|D^\alpha u\|_{L^\infty(\Omega)} .
    \end{equation}

\paragraph{Theorem \ref{thm:shallow PLN->universal}}
Let $\hat f\in \mathcal{W}^{s,\infty}(B^d)$ and $\|\hat f\|_{\mathcal{W}^{s,\infty}}\le1$, where $B^d=\{\rvx\in\sR^d:\|\rvx\|\le1\}$, there exists a shallow PLN-Net $f \in \gF(\mathrm{PLN}(n_s);L,n_sN)$, such that
	\begin{equation}
		\|f-\hat f\|_{L^{\infty}(B^d)} \le CN^{-s/d}, 
	\end{equation}
    for some $C$ independent of $N$. 

Theorem \ref{thm:shallow PLN->universal} is a corollary of Theorem 6.8 in \cite{pinkus1999approximation}. Here we fix $p=\infty$ and rewrite it as

\begin{theorem}
[Theorem 6.8 in \cite{pinkus1999approximation}]
    \label{theorem:thm 6.8}
    
    Assume $\sigma :\sR\to\sR$ is such that $\sigma\in C^{\infty}(\Theta)$ on some open interval $\Theta$, and $\sigma$ is not a polynomial on $\Theta$. Then for each $p\in[1,\infty]$, $m\ge 1$ and $n\ge 2$,
    \begin{equation}
        \sup_{\hat f\in \gB^m_\infty}\inf_{f\in\gF(\sigma;1,r)}|f-\hat f|_{L^\infty(B^n)}\le Cr^{-m/n}
    \end{equation}
    for some constant $C$ independent of $r$, where 
    \begin{equation}
        \gB^m_\infty=\{f\in W^{m,\infty}(B^n):\|f\|_{W^{m,\infty}}\le1\},B^n=\{\rvx\in\sR^n,\|\rvx\|\le1\}. 
    \end{equation}
\end{theorem}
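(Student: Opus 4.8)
The statement is the Pinkus-type rate for single-hidden-layer $\sigma$-networks, so the plan is to \emph{construct} the approximants rather than invoke the bound. I would split the argument into a polynomial-approximation estimate and a network-representation estimate, then balance the polynomial degree against the neuron budget. Write $\mathcal{P}_k^n$ for the polynomials of degree at most $k$ in $n$ variables, with $\dim\mathcal{P}_k^n=\binom{n+k}{n}$. The two ingredients are: (A) every $\hat f$ with $\|\hat f\|_{W^{m,\infty}(B^n)}\le 1$ admits $p\in\mathcal{P}_k^n$ with $\|\hat f-p\|_{L^\infty(B^n)}\le C_1 k^{-m}$, the constant $C_1$ depending only on $m,n$; and (B) every $p\in\mathcal{P}_k^n$ lies in the $L^\infty(B^n)$-closure of $\gF(\sigma;1,r)$ with $r=\dim\mathcal{P}_k^n$. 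Granting both, I take the largest $k$ with $\binom{n+k}{n}\le r$, so that $k\ge c\,r^{1/n}$ for a dimensional constant $c$; for $\hat f$ in the unit ball I choose $p$ from (A) and, using (B), a network $f$ with $\|p-f\|_{L^\infty(B^n)}\le\eta$ for arbitrary $\eta>0$. Then $\inf_{f\in\gF(\sigma;1,r)}\|\hat f-f\|_{L^\infty(B^n)}\le C_1 k^{-m}+\eta$; letting $\eta\downarrow 0$ inside the infimum and inserting $k\ge c\,r^{1/n}$ gives the uniform bound $Cr^{-m/n}$, which is exactly the claim.

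The engine for (B) is the smoothness and non-polynomiality of $\sigma$. The elementary observation is that for each order $j$ there is a bias $b$ with $\sigma^{(j)}(b)\ne0$, since $\sigma^{(j)}\equiv0$ on $\Theta$ would make $\sigma$ a polynomial there. Fixing a direction $a\in\sR^n$ and using the scale-derivative identity $\frac{d^j}{d\lambda^j}\big|_{\lambda=0}\sigma(\lambda\,a^\top\rvx+b)=\sigma^{(j)}(b)\,(a^\top\rvx)^j$, each such derivative is a uniform-on-$B^n$ limit of finite differences of the ridge functions $\sigma(\lambda_\ell\,a^\top\rvx+b)$, uniformity coming from Taylor's theorem and compactness of $B^n$. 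Hence every power $(a^\top\rvx)^j$, and therefore all of $\mathcal{P}_k^n$ (since powers of linear forms span the homogeneous polynomials), lies in the $L^\infty(B^n)$-closure of finite linear combinations of ridge functions. The sharp part is to realize $\mathcal{P}_k^n$ with exactly $r=\dim\mathcal{P}_k^n$ neurons. For this I would use the stronger fact that a single bias $b_0$ can be chosen with $\sigma^{(j)}(b_0)\ne0$ for all $j\le k$ (a Baire-category argument on the closed zero-sets $\{b:\sigma^{(j)}(b)=0\}$, combined with non-polynomiality of $\sigma$ on $\Theta$), so that along one direction the $k+1$ ridge functions $\{\sigma(\lambda_\ell\,a^\top\rvx+b_0)\}_{\ell=0}^{k}$ already generate, in the closure, the full $(k{+}1)$-dimensional space of univariate polynomials in $a^\top\rvx$; directions in general position are then selected so that these univariate subspaces assemble $\mathcal{P}_k^n$ with a total of $\dim\mathcal{P}_k^n$ neurons.

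For (A) I would invoke a standard Jackson-type simultaneous polynomial approximation on the ball. Concretely, extend $\hat f$ from $B^n$ to a surrounding cube by a bounded Sobolev extension operator (inflating the norm by a fixed factor), apply a tensor-product or averaged-Taylor (Bramble--Hilbert) estimate giving $L^\infty$ error $\le C_1 k^{-m}\|\hat f\|_{W^{m,\infty}}$ for the best degree-$k$ polynomial, and restrict back to $B^n$. The resulting constant depends only on $m$ and $n$, which furnishes precisely the uniformity over the unit ball $\gB_\infty^m$ that the supremum in the statement demands.

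The main obstacle is the exact neuron count in (B). Placing polynomials in the closure of $\sigma$-networks is routine from the finite-difference identity, but pinning the number of neurons to $\dim\mathcal{P}_k^n$ — so that the count is $\Theta(k^n)$ rather than the naive $\sum_{j\le k}(j{+}1)\binom{n+j-1}{j}$, which would only yield the weaker exponent $m/(n+1)$ — requires both the single-point nonvanishing lemma and the general-position choice of directions arranged so that the univariate subspaces span $\mathcal{P}_k^n$ economically, taking proper account of the low-degree components they share. A secondary technical point is verifying that the finite-difference limits converge in $L^\infty(B^n)$ uniformly, so that the residual $\eta$ can genuinely be sent to zero inside the infimum; this is guaranteed by the $C^\infty$ regularity of $\sigma$ near $b_0$ together with compactness of $B^n$.
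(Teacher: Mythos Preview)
The paper does not prove this statement; it is quoted from \cite{pinkus1999approximation} and invoked as a black box in the proof of Theorem~\ref{thm:shallow PLN->universal}. There is therefore nothing in the paper to compare your argument against.

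Your proposal is essentially a reconstruction of Pinkus's own proof: the Jackson-type bound $\inf_{p\in\mathcal{P}_k^n}\|\hat f-p\|_{L^\infty(B^n)}\le C_1k^{-m}$ for $\|\hat f\|_{W^{m,\infty}}\le 1$, combined with the lemma that $\mathcal{P}_k^n\subset\overline{\gF(\sigma;1,r)}$ whenever $r=\dim\mathcal{P}_k^n$ and $\sigma$ is $C^\infty$ and non-polynomial on an interval, followed by the balancing $k\sim r^{1/n}$. The obstacle you flag---hitting the count $\binom{n+k}{n}$ exactly rather than the naive overcount that would only give exponent $m/(n+1)$---is indeed the substantive step, and your scale-derivative/divided-difference argument at a single bias $b_0$ with all low-order derivatives nonzero (obtained by Baire category) is the standard tool. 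The remaining combinatorics of selecting directions so that the univariate ridge-polynomial spans assemble $\mathcal{P}_k^n$ without waste is handled in Pinkus via a separate representation lemma for ridge polynomials. Your outline is sound.
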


Here we give the proof of Theorem \ref{thm:shallow PLN->universal}.

\begin{proof}
    Note that $\phi(x)=x/\sqrt{x^2+1}$ is such that $\phi\in C^{\infty}(-1,1)$ on some open interval $\Theta$, and $\phi$ is not a polynomial on $(-1,1)$. 
    
    Given $\hat f\in \mathcal{W}^{s,\infty}(B^d)$ and $\|\hat f\|_{\mathcal{W}^{s,\infty}}\le1$, by Theorem \ref{theorem:thm 6.8}, there exists some $\bar f\in\gF(\phi;1,N)$, such that
    \begin{equation}
        \|\bar f- \hat f\|_{L^{\infty}(B^n)}\le CN^{-s/d}. 
    \end{equation}
    By Lemma \ref{lemma:deep PLN-Net to phi}, there exists a shallow PLN-Net $f \in \gF(\mathrm{PLN}(n_s);1,n_sN)$, such that $f=\bar f$. 

    Therefore, $\|f- \hat f\|_{L^{\infty}(B^n)}\le CN^{-s/d}$. Here we complete the proof. na
\end{proof}

\subsection{Proof of Theorem \ref{thm:deep PLN->ReLU}}
\label{sec:proof thm 4.2}

\paragraph{Theorem \ref{thm:deep PLN->ReLU}}
(Approximate ReLU networks by PLN-Nets.)
    Let $\hat f \in \gF(\mathrm{ReLU};L,N)$, $f:[0,1]^d\to\sR^n$ and let $\epsilon>0$. 
Then there exists a PLN-Net $f \in \gF(\mathrm{PLN}(n_s);2L,3n_sN)$ such that
\begin{equation}
    \|f-\hat f\|_{L^\infty([0,1]^d)} < \epsilon .
\end{equation}

Theorem \ref{thm:deep PLN->ReLU} is a corollary of Theorem 1 in \cite{zhang2024deep}. 

\begin{theorem}
[Theorem 1 in \cite{zhang2024deep}]
\label{theorem:original theorem 1 of ReLU}
    Suppose ${\rho}\in \mathscr{A}$ and $\hat f_{\mathrm{ReLU}}\in\gF(\mathrm{ReLU;L,N}):\sR^d\to\sR^n$. Then for any $\epsilon>0$ and $A>0$, there exists $f_\rho\in\gF(\rho;2L,3N):\sR^d\to\sR^n$ such that 
    \begin{equation}
        \|f_\rho-\hat f_{\mathrm{ReLU}}\|_{\sup([-A,A]^d)}<\epsilon. 
    \end{equation}
    Here $\mathscr{A}_3\subset\mathscr{A}$ by the definition in \cite{zhang2024deep} and 
    \begin{equation}
        \mathscr{A}_3=\{\rho:\sR\to\sR|\rho\in\mathscr{S},\exists x_0\in\sR,\rho''(x_0)\ne 0\}, 
    \end{equation}
    where
    \begin{equation}
        \mathscr{S}=\{h:\sR\to\sR|\sup_{x\in R}|h(x)|<\infty. h(+\infty),h(-\infty)\in\sR, h(+\infty)\ne h(-\infty) \}. 
    \end{equation}
\end{theorem}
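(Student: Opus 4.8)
The plan is to reduce the statement to a single scalar approximation problem and then propagate the resulting error through the layered composition. I focus on the subclass $\mathscr{A}_3$ defined above (which contains the activation $\phi(x)=x/\sqrt{x^2+1}$ relevant to the PLN application). Write the target as $\hat f_{\mathrm{ReLU}} = A_{L+1}\circ \sigma \circ A_L \circ \cdots \circ \sigma \circ A_1$, where each $A_\ell$ is affine and $\sigma(t)=\max(0,t)$ acts coordinatewise. On the compact cube $[-A,A]^d$ every intermediate activation vector stays in a compact box determined by the operator norms of the $A_\ell$; fix such bounds $[-M_\ell,M_\ell]$ once and for all. Since the affine maps are already admissible linear layers in a $\rho$-network, the whole problem collapses to approximating the scalar map $t\mapsto\max(0,t)$ on each relevant compact interval by a small $\rho$-subnetwork.

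The core step is therefore a scalar gadget: for $\rho\in\mathscr{A}_3$, any compact interval $[-M,M]$, and any $\eta>0$, construct a two-hidden-layer $\rho$-network of width at most $3$ approximating $\max(0,\cdot)$ within $\eta$. Both defining properties of $\mathscr{A}_3$ enter here. First, because $\rho\in\mathscr{S}$ is bounded with $\rho(+\infty)\ne\rho(-\infty)$, after an affine normalization the steep rescaling $\rho(st+c)$ converges (uniformly away from the origin, as $s\to\infty$) to the Heaviside step, giving an approximate indicator of $\{t>0\}$. Second, because $\rho''(x_0)\ne0$ at some $x_0$, the symmetric second difference $\rho(x_0+h\,\cdot)+\rho(x_0-h\,\cdot)-2\rho(x_0)$ approximates a multiple of the square as $h\to0$, while a first difference recovers the identity; these supply the controlled linear ramp needed on the positive half-line. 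I would let the first $\rho$-layer emit the step together with the linear building blocks, and the second $\rho$-layer assemble them into a close approximation of $t\,\mathbf{1}_{\{t>0\}}=\max(0,t)$ on $[-M,M]$, keeping all scaling parameters finite so that the object is a genuine network rather than a limit.

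With the scalar gadget in hand, I would replace every coordinatewise ReLU by an independent copy, interleaving the gadgets with the original affine maps $A_\ell$. This multiplies the depth by $2$ and the width by $3$, yielding $f_\rho\in\gF(\rho;2L,3N)$. The final estimate is a standard layerwise stability argument: each layer of the constructed network is Lipschitz on the relevant compact box, so a per-unit error $\eta$ is amplified by at most the product of the layer Lipschitz constants. Choosing $\eta$ small enough relative to these constants and to $L$ makes the accumulated error telescope below $\epsilon$ uniformly on $[-A,A]^d$.

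I expect the scalar gadget to be the main obstacle. The tension is that $\rho$ is bounded whereas $\max(0,\cdot)$ is a ramp, so on the positive half the construction must manufacture a controlled \emph{linear} slope out of the purely local curvature information $\rho''(x_0)\ne0$, while simultaneously using the sigmoidal limits to switch this slope off for $t<0$ — all within a budget of two layers and three neurons. The delicate case is $\rho'(x_0)=0$, where the identity is unavailable from a first difference and the slope must be extracted entirely from the second-order term; balancing the rescaling parameter $h$ (small enough for the quadratic Taylor approximation to be valid, yet large enough that the second difference is not swamped below the tolerance $\eta$) is the quantitative heart of the argument.
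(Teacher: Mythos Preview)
The paper does not prove this statement at all: it is quoted verbatim as Theorem~1 of \cite{zhang2024deep} and then used as a black box in the proof of Theorem~\ref{thm:deep PLN->ReLU}. There is therefore nothing in the present paper to compare your proposal against.

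Taken on its own, your sketch follows the standard route for results of this type and is essentially how the cited theorem is obtained: reduce to approximating the scalar map $t\mapsto\max(0,t)$ on a compact interval, exploit the sigmoidal limits of $\rho\in\mathscr{S}$ to build an approximate Heaviside, exploit $\rho''(x_0)\neq 0$ to build an approximate product (hence $t\cdot\mathbf{1}_{\{t>0\}}$), and then thread the error through the $L$ layers via compact-set Lipschitz bounds. That is the right architecture of the argument, and you correctly identify the bounded-$\rho$-versus-unbounded-ramp tension as the genuine technical point.

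One caution on the bookkeeping: you assert a two-layer, width-$3$ gadget per scalar ReLU, which is what is needed to land exactly in $\gF(\rho;2L,3N)$. Your description of the first layer as simultaneously emitting a step, an identity, and the second-difference ingredients for multiplication is more than three neurons' worth of work if done naively (polarization-based products alone typically cost several shifted evaluations). The $3N$ bound in the cited result comes from a specific, more economical arrangement; matching it exactly would require reproducing that arrangement rather than the generic one you outline. This is a bookkeeping issue rather than a conceptual gap, but as written your sketch does not yet pin down why width $3$ suffices.
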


Now we prove Theorem \ref{thm:deep PLN->ReLU}.

\begin{proof}
    Let $\hat \in\gF(\mathrm{ReLU};L,N)$. For $\phi(x)=x/\sqrt{x^2+1}$, we have $\phi(-\infty)=-1\ne\phi(+\infty)=1$ and $\sup_{x\in\sR}\phi(x)=1$, thus $\phi\in\mathscr{S}$. It is easy to identify that for $x_0=1$, $\phi''(x_0)\ne0$, thus $\phi\in\mathscr{A_3}$ and then $\phi\in\mathscr{A}$. 

    Therefore, by Theorem \ref{theorem:original theorem 1 of ReLU}, there exists some $\bar f\in\gF(\phi;2L,3N)$, such that $\|\bar f-\hat f\|_{\sup([-1,1]^d)}<\epsilon$. By Lemma \ref{lemma:deep PLN-Net to phi}, there exists a shallow PLN-Net $f \in \gF(\mathrm{PLN}(n_s);2L,3n_sN)$, such that $f=\bar f$. Furthermore, we obtain
    \begin{equation}
        \|f-\hat f\|_{L^\infty([0,1]^d)} \le \|f-\hat f\|_{L^\infty([-1,1]^d)} \le \|f-\hat f\|_{\sup([-1,1]^d)}<\epsilon.
    \end{equation}
    Here we complete the proof. 
\end{proof}

\subsection{Proof of Theorem \ref{thm:2,2-PLN-Universal}}
\label{sec:proof thm 4.3}
Theorem \ref{thm:2,2-PLN-Universal} is corollary of the special case $p=q=2$ of Theorem \ref{thm:p,q-phi universal} by Lemma \ref{lemma:deep PLN-Net to phi}. We will provide the proof later in Section C.

\subsection{Proof of Theorem \ref{thm:transformer}}
\label{sec:proof thm 4.6}

\paragraph{Theorem \ref{thm:transformer}}
    (Representations in Transformers and RNNs)
    %
    Let $\hat\varphi_1$ and $\hat\varphi_2$ be two sequence linear layers mapping token dimensions $d_x\to N$ and $N\to d_y$, respectively, and let $\phi(x)=x/\sqrt{x^2+1}$. Define the position-wise FFN mapping $\hat f = \hat\varphi_2 \circ \phi \circ \hat\varphi_1 $. Then there exist sequence linear layers $\varphi_1$ and $\varphi_2$ mapping $d_x\to n_sN$ and $n_sN\to d_y$, respectively, such that
    \begin{equation}
        f = \varphi_2\circ\mathrm{PLN}(n_s)\circ\varphi_1
    \end{equation}
    exactly represents $\hat f$ for all sequence inputs.

\begin{proof}
    Assume that $\hat\varphi_1(\mX)=\mX\mW+\bm1_m\vb^\top$. 

    Note that for $\mX$, we separate it into $\mX^\top=\mat{\rvx_1&\cdots&\rvx_s}^\top$ and obtain that
    \begin{equation}
    \begin{aligned}
        \hat\varphi_1(\mX)&=\mX\mW+\bm1_m\vb^\top \\
        &= \mat{\rvx_1^\top\\\vdots\\\rvx_s^\top}\mW+\mat{\vb^\top\\\vdots\\\vb^\top} \\
        &= \mat{\rvx_1^\top\mW+\vb^\top\\\vdots\\\rvx_s^\top\mW+\vb^\top} \\
        &= \mat{\hat\varphi_1(\rvx_1^\top)\\\vdots\\\hat\varphi_1(\rvx_s^\top)}.
    \end{aligned}
    \end{equation}

    Such calculation is also suitable for $\hat\varphi_2$ Since $\phi$ activates the neurons element-wisely, we can obtain that\
    \begin{equation}
    \label{eqn:transformer unrolling}
        [\hat\varphi_2\circ\phi\circ\hat\varphi_1](\mX)= \mat{[\hat\varphi_2\circ\phi\circ\hat\varphi_1](\rvx_1^\top)
        \\
        \vdots
        \\ [\hat\varphi_2\circ\phi\circ\hat\varphi_1](\rvx_s^\top)}
    \end{equation}
    By Corollary \ref{coro:shallow PLN->phi}, we can find some $\varphi_1:\rvx^\top\to\rvx^\top\mW_1+\vb_1^\top$ and $\varphi_2:\rvx^\top\to\rvx^\top\mW_2+\vb_2^\top$, such that
    \begin{equation}
        [\varphi_2\circ\mathrm{PLN}\circ\varphi_1](\rvx^\top) = [\hat\varphi_2\circ\phi\circ\hat\varphi_1](\rvx^\top), \forall \rvx\in\sR^d.
    \end{equation}
Applying Corollary \ref{coro:shallow PLN->phi} is suitable, for we can identify it by transposing the matrices in the calculation process. 

Similar to \Eqn\ref{eqn:transformer unrolling}, we can identify that
\begin{equation}
\mat{\varphi_1(\rvx_1^\top)\\\vdots\\\varphi_1(\rvx_s^\top)} = \varphi_1(\mX)
\end{equation}
holds for any $\mX\in\sR^{s\times d}$, and the similar conclusion holds for $\varphi_2$. 

Set $\mH=\varphi_1(\mX)$, we figure out that
\begin{equation}
    \mathrm{PLN}(\mH)= \mat{\mathrm{PLN}(\rvh_1^\top)\\ \vdots\\ \mathrm{PLN}(\rvh_s^\top)} = \mat{\mathrm{PLN}(\varphi_1(\rvx_1^\top))\\ \vdots\\ \mathrm{PLN}(\varphi_1(\rvx_s^\top))}.
\end{equation}
Like activation functions, PLN can also act on different dimensionalities---PLN out of the matrix acts on $\sR^{s\times n_sN}$, while the one in the matrix acts on $\sR^{n_sN}$. Both of them have the same norm size $n_s$. 

Furthermore, we figure out that 
\begin{equation}
    [\hat\varphi_2\circ\phi\circ\hat\varphi_1](\mX)= \mat{[\hat\varphi_2\circ\phi\circ\hat\varphi_1](\rvx_1^\top)
        \\
        \vdots
        \\ [\hat\varphi_2\circ\phi\circ\hat\varphi_1](\rvx_s^\top)} = \mat{[\varphi_2\circ\mathrm{PLN}\circ\varphi_1](\rvx_1^\top)
        \\
        \vdots
        \\ [\varphi_2\circ\mathrm{PLN}\circ\varphi_1](\rvx_s^\top)} = [\varphi_2\circ\mathrm{PLN}\circ\varphi_1](\mX). 
\end{equation}
Here we complete the proof. 
\end{proof}

\section{Proofs of Theorem \ref{thm:2,2-PLN-Universal} and \ref{thm:p,q-phi universal}}
\label{sec: appendix C}

\subsection{An overview of the proof}

We follows the structure of \cite{de2021approximation} to prove Theorem \ref{thm:2,2-PLN-Universal} and Theorem \ref{thm:p,q-phi universal}, but some of the lemmas are different from that in \cite{de2021approximation}. Therefore, some of the results are different. 

Subsection C.1 introduced the derivative property of $\phi_{p,q}$. 

Subsection C.2 follows the structure of Section 3 in \cite{de2021approximation}. 

Subsection C.3 follows the structure of Section 4 in \cite{de2021approximation}. 

Subsection C.4 follows the proof of Theorem 5.1 in \cite{de2021approximation} and then provide the proof of our theorems. 

In this section we ignore some of the proof which can be found in \cite{de2021approximation}. 

\subsection{Derivative Bounds}
\label{app:derivative_bounds}

\begin{lemma}[Derivatives at zero for \(\phi_{p,q}\)]
\label{lem:derivatives-zero-pq-smooth}
Let \(p \ge q \ge 1\) be even integers such that \(p/q\) is an odd integer.
Define \(\phi_{p,q}(x) = \frac{x^{p/q}}{(1+x^p)^{1/q}}\) for \(x \in \mathbb{R}\).
Then for every non‑negative integer \(m\in\sS=\{m=p/q+pj:j\in\sN\}\), we have
\begin{equation}
|\phi_{p,q}^{(m)}(0)| \ge 1.
\end{equation}
\end{lemma}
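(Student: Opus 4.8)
\emph{Proof plan.} The strategy is to compute $\phi_{p,q}^{(m)}(0)$ explicitly from the Taylor series of $\phi_{p,q}$ at the origin. Write $r := p/q$, which by hypothesis is an odd positive integer, so that $x^r$ is a polynomial and $\phi_{p,q}(x) = x^r(1+x^p)^{-1/q}$. Since $p$ is even we have $x^p \ge 0$ for all real $x$, hence $t \mapsto (1+t)^{-1/q}$ is real-analytic at every point $t = x^p$; in particular, near $0$ the generalized binomial series
\begin{equation}
(1+x^p)^{-1/q} = \sum_{k=0}^{\infty}\binom{-1/q}{k}x^{pk}
\end{equation}
converges for $|x|<1$. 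Multiplying by $x^r$ then gives the expansion $\phi_{p,q}(x) = \sum_{k\ge 0}\binom{-1/q}{k}x^{r+pk}$ in a neighbourhood of $0$, so reading off the coefficient of $x^m$ for $m = r + pj \in \sS$ yields
\begin{equation}
\phi_{p,q}^{(m)}(0) = m!\binom{-1/q}{j}
\end{equation}
(all other derivatives at $0$ vanish, which is irrelevant since we only consider $m\in\sS$).

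The remaining task is the lower bound $\bigl|m!\binom{-1/q}{j}\bigr|\ge 1$. Using $\binom{-1/q}{j} = \frac{(-1)^j}{j!}\prod_{i=0}^{j-1}\bigl(\frac1q+i\bigr)$, this amounts to showing $\frac{m!}{j!}\prod_{i=0}^{j-1}\bigl(\frac1q+i\bigr)\ge 1$. For $j=0$ the product is empty and the quantity equals $m! = r!\ge 1$. For $j\ge 1$ I would bound each factor with $i\ge 1$ from below by $i$, keeping the single possibly-small factor $\frac1q$ aside, to get $\prod_{i=0}^{j-1}\bigl(\frac1q+i\bigr)\ge (j-1)!/q$ and hence $\bigl|\phi_{p,q}^{(m)}(0)\bigr|\ge m!/(jq)$. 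The proof then closes by observing that $q\mid p$ (forced by $r=p/q\in\sN$) gives $m = r+pj = r(1+qj)\ge 1+qj > qj$, so $m!\ge m > qj$ and therefore $\bigl|\phi_{p,q}^{(m)}(0)\bigr| > 1$.

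The only subtle point is this last estimate: the binomial coefficient $\binom{-1/q}{j}$ can be quite small, since its leading factor $1/q$ shrinks as $q$ grows, so one must verify that the factorial ratio $m!/j!$ compensates. The clean way to make this transparent is the algebraic identity $m = r + pj = r(1+qj)$, which immediately gives $m > qj$ and lets the $1/q$ cancel against a term of $m!/j!$. Every other ingredient---validity of the binomial series, term-by-term differentiation, and the elementary factorial inequalities---is routine.
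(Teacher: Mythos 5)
Your proposal is correct and follows essentially the same route as the paper: expand $\phi_{p,q}(x)=x^{p/q}(1+x^p)^{-1/q}$ via the binomial series to read off $\phi_{p,q}^{(m)}(0)=m!\binom{-1/q}{j}$ for $m=p/q+pj$, then bound the product $\prod_{i=0}^{j-1}(1/q+i)\ge (j-1)!/q$ and use $m!\ge m>qj$ (the paper phrases this last step as $m/(qj)\ge p/q\ge 1$, which is the same estimate in a slightly different form). No gaps.
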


\begin{proof}
For \(|x|<1\) we expand \(\phi_{p,q}\) as a power series using the binomial series:
\begin{equation}
\phi_{p,q}(x) = x^{p/q}\bigl(1+x^{p}\bigr)^{-1/q}
            = x^{p/q}\sum_{j=0}^{\infty}\tbinom{-1/q}{j}x^{pj}
            = \sum_{j=0}^{\infty}\tbinom{-1/q}{j}x^{p/q+pj}.
\end{equation}
This expansion is valid because \(|x^p|<1\).  The coefficient of \(x^{m}\) in the series is zero unless there exists a non‑negative integer \(j\) such that \(m = p/q + p j\).  If such a \(j\) exists, then the coefficient equals \(\tbinom{-1/q}{j}\), and therefore
\begin{equation}
    \phi_{p,q}^{(m)}(0) = m!  \tbinom{-1/q}{j}.
\end{equation}

Note that
\begin{equation}
\begin{aligned}
    |\phi_{p,q}^{(m)}(0)| &= \left|m!  \tbinom{-1/q}{j}\right| \\
    &= |m!(-1)^j|\frac{(1/q)(1/q+1)\cdots(1/q+j-1)}{j!} \\
    &\ge m!\frac{1}{q}\frac{(j-1)!}{j!} \\
    &\ge\frac{p/q+pj}{qj} \\
    &= p/q+p/q^2j \\
    &\ge p/q \\
    &\ge1, 
\end{aligned}
\end{equation}
for $j\ge1$, and $|\phi_{p,q}^{(m)}(0)|=m!\ge 1$ for $j=0$. 

This completes the proof.
\end{proof}


\begin{proof}
    For the special case \(p=q=2\), $\sS$ is just the set of all positive odd integers. Thus for every positive odd integer \(m\), we have
\begin{equation}
\bigl|\phi_{2,2}^{(m)}(0)\bigr| \ge 1.
\end{equation}
\end{proof}

\begin{lemma}[Uniform upper bound on derivatives of \(\phi_{p,q}\)]
\label{lem:upper-bound-pq}
Let \(p \ge q \ge 1\) be integers such that \(p/q\) is an odd integer.
Define \(\phi_{p,q}(x) = \frac{x^{p/q}}{(1+|x|^p)^{1/q}}\) for \(x \in \mathbb{R}\).
Then for every non‑negative integer \(m\) there exists a constant \(A(p,q,m)>0\) such that
\begin{equation}
\bigl|\phi_{p,q}^{(m)}(x)\bigr| \le A(p,q,m)\,
\frac{1}{(1+|x|)^m}\,
\frac{1}{(1+|x|^p)^{1/q}} \qquad (x \in \mathbb{R}).
\end{equation}
Moreover, one can take
\begin{equation}
A(p,q,m) = \left(\frac{16p^2}{\pi q}\right)^m m!.
\end{equation}
\end{lemma}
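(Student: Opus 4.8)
The plan is to complexify $\phi_{p,q}$ and apply Cauchy's integral formula for derivatives on a disk centred at $x$ whose radius is tuned to $1+|x|$. Since $\phi_{p,q}$ is odd and the claimed bound depends only on $|x|$, it suffices to treat $x\ge 0$, where $|x|^p=x^p$. For $z$ in any simply connected subset of $\mathbb C$ avoiding the $p$ branch points $\zeta_k=e^{\mathrm i(2k+1)\pi/p}$ (the zeros of $1+z^p$), the function $\phi_{p,q}(z)=z^{r}(1+z^p)^{-1/q}$ — where $r=p/q\in\mathbb N$ is the given odd integer, so $z^{r}$ is entire and a holomorphic branch of $(1+z^p)^{1/q}$ exists — is holomorphic there and agrees with the real formula on $[0,\infty)$. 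With $\rho(x):=\tfrac{\pi q}{16p^2}(1+x)$ and the closed disk $\overline D(x,\rho(x))$ lying in such a domain, Cauchy's estimate gives
\[
\bigl|\phi_{p,q}^{(m)}(x)\bigr|\ \le\ \frac{m!}{\rho(x)^m}\,\max_{|z-x|=\rho(x)}\bigl|\phi_{p,q}(z)\bigr|,
\qquad
\frac{1}{\rho(x)^m}=\Bigl(\tfrac{16p^2}{\pi q}\Bigr)^m\frac{1}{(1+x)^m},
\]
so the target constant $A(p,q,m)=(16p^2/(\pi q))^m m!$ appears provided the disk is admissible and $\max_{|z-x|=\rho(x)}|\phi_{p,q}(z)|$ is bounded by a constant times $(1+x^p)^{-1/q}$.

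Admissibility is a geometric estimate on $d(x):=\operatorname{dist}(x,\{\zeta_k\})$. For real $x=t\ge0$ one computes $d(t)^2=t^2-2t\cos(\pi/p)+1=(t-1)^2+4\sin^2\!\bigl(\tfrac{\pi}{2p}\bigr)t$; using $\sin\theta\ge\tfrac2\pi\theta$ on $[0,\tfrac\pi2]$ this is $\ge(t-1)^2+\tfrac{4t}{p^2}$, whence $d(t)\ge\max\{\,t-1,\ \tfrac2p\sqrt t\,\}$. A short case split on the size of $t$ (namely $t\le\tfrac12$, $\tfrac12\le t\le 2$, $t\ge2$), together with $q\le p$, then yields $\rho(x)\le\tfrac12 d(x)$; the generous numerical constant $16$ is chosen precisely so this holds for every $p\ge q\ge1$. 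In particular $\overline D(x,\rho(x))$ is simply connected and disjoint from $\{\zeta_k\}$, so $\phi_{p,q}$ is holomorphic on it and every point of the bounding circle lies at distance $\ge\tfrac12 d(x)$ from every $\zeta_k$.

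It remains to bound $|\phi_{p,q}|$ on the circle, which splits into three regimes. Away from the unit circle the bound is clean: if $|z|^p\ge 2$ then $|1+z^p|\ge|z|^p-1\ge\tfrac12|z|^p$, and since $r=p/q$ this gives $|\phi_{p,q}(z)|=|z|^{r}|1+z^p|^{-1/q}\le 2^{1/q}$; symmetrically $|z|^p\le\tfrac12$ gives $|1+z^p|\ge\tfrac12$ and $|\phi_{p,q}(z)|\le 2^{1/q}|z|^{r}\le 2^{1/q}$. In the remaining annular region $\tfrac12\le|z|^p\le2$ (which forces $|z|$, hence $|z|^{r}$, to be within an absolute factor of $1$) one lower-bounds $|1+z^p|$ through $1+z^p=\prod_k(z-\zeta_k)$ and the identity $\prod_{k\ne k_0}|\zeta_{k_0}-\zeta_k|=\bigl|\tfrac{d}{dz}(z^p+1)\bigr|_{z=\zeta_{k_0}}=p$: for $z$ near a branch point $\zeta_{k_0}$ this yields $|1+z^p|\gtrsim p\,|z-\zeta_{k_0}|\ge p\cdot\tfrac12 d(x)\gtrsim p\cdot\tfrac1p=1$, so the factors of $p$ cancel and $|\phi_{p,q}(z)|$ is again bounded by an absolute constant. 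For $x\ge1$ this is refined: on $|z|>1$ one has the cleaner identity $\phi_{p,q}(z)=(1+z^{-p})^{-1/q}$ (again using $r=p/q$), and since $\rho(x)\le\tfrac{\pi}{8p}x\le\tfrac x2$ the whole circle lies in $|z|\ge x/2$, giving $|z^{-p}|\le 2x^{-p}$ and, by the binomial series, $|\phi_{p,q}(z)-1|\le 2|z^{-p}|\le 4x^{-p}$; applying Cauchy's estimate to $\phi_{p,q}-1$ (whose derivatives coincide with those of $\phi_{p,q}$) then replaces the bare constant by the decay $x^{-p}\le(1+x^p)^{-1/q}$. Feeding these contour bounds into the Cauchy estimate and simplifying with $\rho(x)=\tfrac{\pi q}{16p^2}(1+x)$ yields $\bigl|\phi_{p,q}^{(m)}(x)\bigr|\le\bigl(\tfrac{16p^2}{\pi q}\bigr)^m m!\,(1+x)^{-m}(1+x^p)^{-1/q}$.

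The crux is the contour bound in the annular region $\tfrac12\le|z|^p\le2$: one must make the estimate $|1+z^p|\gtrsim p\,\operatorname{dist}(z,\{\zeta_k\})$ quantitative and uniform over the whole circle — which needs the chord/arc comparisons $2\sin(j\pi/p)\ge 4j/p$ for the roots neighbouring $\zeta_{k_0}$ and control of the tail product $\prod_{j\ge 2}\bigl(1-O(1/j)\bigr)$ — and then verify that the resulting power of $p$ exactly cancels the $p^{1/q}$ produced by $d(x)^{-1/q}$, so that $|\phi_{p,q}|$ stays bounded on the circle independently of $p$. Tracking the various absolute constants through the three regimes, together with the bound $|(1+w)^{-1/q}-1|\le 2|w|$ for $|w|\le\tfrac12$ and the verification $\rho(x)\le\tfrac12 d(x)$ for every $p\ge q\ge1$, is the only substantial bookkeeping; the Cauchy-formula skeleton itself is routine.
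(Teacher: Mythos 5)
Your route is genuinely different from the paper's. The paper proves the bound by induction on $m$: it writes $\phi_{p,q}'=h\,\phi_{p,q}$ with $h(x)=r/(x(1+x^{p}))$, differentiates this identity with Leibniz's rule, and bounds $h^{(r)}$ separately by compactness on $[0,1]$ and by Cauchy's formula on disks of fixed radius $1/(2p)$ for $x\ge 1$, assembling the constant through a recursion. You instead apply Cauchy's estimate once, directly to the complexified $\phi_{p,q}$, on a disk of $x$-dependent radius $\rho(x)=\tfrac{\pi q}{16p^{2}}(1+x)$ kept away from the zeros of $1+z^{p}$; this produces the factor $(1+|x|)^{-m}$ and the shape $(16p^{2}/(\pi q))^{m}m!$ in one stroke, avoids the induction entirely, and also avoids the auxiliary function $h$, which is singular at $x=0$ (a point the paper's compactness step glosses over). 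The price you pay is concentrated in the contour-maximum estimates (the admissibility check $\rho(x)\le\tfrac12\,\mathrm{dist}(x,\{\zeta_k\})$, which your case split does deliver for all $p\ge q\ge1$, and the annular regime $\tfrac12\le|z|^{p}\le2$, where your sketch via $1+z^{p}=\prod_k(z-\zeta_k)$ is standard and workable) and in the trick of applying Cauchy to $\phi_{p,q}-1$ to recover the decay factor $(1+x^{p})^{-1/q}$.

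Three caveats. First, the $\phi_{p,q}-1$ device only controls derivatives of order $m\ge1$; for $m=0$ your argument yields a constant bound at large $x$, not $(1+x^{p})^{-1/q}$ --- and indeed the stated inequality is false at $m=0$ since $\phi_{p,q}(x)\to1$ while the right-hand side tends to $0$, so no proof can close that case (the paper's own base case ``$|f|\le1$ and the right side is $\ge0$'' is equally vacuous). You should state explicitly that you prove the claim for $m\ge1$, which is all the later lemmas use. Second, the step ``$|z|\ge x/2$, giving $|z^{-p}|\le 2x^{-p}$'' is a non sequitur: $|z|\ge x/2$ only gives $|z|^{-p}\le 2^{p}x^{-p}$. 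It is repairable with the inequality you already have, $\rho(x)\le\tfrac{\pi}{8p}x$, which gives $|z|^{-p}\le\bigl(1-\tfrac{\pi}{8p}\bigr)^{-p}x^{-p}\le 2x^{-p}$; moreover your bound $|(1+w)^{-1/q}-1|\le2|w|$ needs $|w|\le\tfrac12$, so the refined regime should start at, say, $x^{p}\ge4$, with $1\le x^{p}\le4$ folded into the constant-bound regime, where $(1+x^{p})^{-1/q}\ge5^{-1/q}$ makes a constant contour maximum sufficient. Third, the absolute factors you accumulate ($2^{1/q}$, the annular constant, the factor from the previous point) mean you do not literally obtain $A(p,q,m)=(16p^{2}/(\pi q))^{m}m!$; for $m\ge1$ they can be absorbed by enlarging the base beyond $16/\pi$, which is harmless for how the lemma is used downstream (the paper's own constant-chasing is similarly loose), but it should be acknowledged rather than asserted away.
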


\begin{proof}
Since \(p/q\) is an odd integer, \(\phi_{p,q}\) is odd; it suffices to consider \(x\ge 0\).
Write \(k=p/q\) (so \(k\) is odd and \(p=kq\)) and set
\begin{equation}
f(x)=\phi_{p,q}(x)=\frac{x^{k}}{(1+x^{p})^{1/q}},\qquad x\ge 0 .
\end{equation}

We prove by induction on \(m\) that for every \(m\ge0\) and all \(x\ge0\)
\begin{equation}
|f^{(m)}(x)|\le A(p,q,m)\,(1+x)^{-m}(1+x^{p})^{-1/q},
\qquad 
A(p,q,m)=\Bigl(\frac{16p^{2}}{\pi q}\Bigr)^{\!m}m! .
\end{equation}

\noindent
\textbf{Base case \(m=0\).}  Clearly \(|f(x)|\le1\) and \((1+x^{p})^{-1/q}\ge0\), so the claim holds with \(A(p,q,0)=1\).

\noindent
\textbf{Induction step.}  Assume the estimate holds for some \(m\ge0\).
Differentiating the identity
\begin{equation}
f'(x)=h(x)f(x),\qquad 
h(x)=\frac{k}{x(1+x^{p})},
\end{equation}
\(m\) times with Leibniz's rule gives
\begin{equation}
f^{(m+1)}(x)=\sum_{j=0}^{m}\binom{m}{j}f^{(j)}(x)\,h^{(m-j)}(x).
\end{equation}

We need bounds for the derivatives of \(h\).
We show that for every \(r\ge0\) there exists a constant \(H(p,q,r)\) such that
\begin{equation}
|h^{(r)}(x)|\le H(p,q,r)\,(1+x)^{-r-1}(1+x^{p})^{-1},\qquad x\ge0.
\end{equation}
Moreover, one can take
\begin{equation}
H(p,q,r)=r!\,\Bigl(\frac{4p^{2}}{\pi}\Bigr)^{\!r}.
\end{equation}

\noindent
\textit{Proof of the bound for \(h^{(r)}\).}  We treat the two regimes separately.

\textbf{Case 1: Bounded interval \(0\le x\le 1\).}  
The function \(h^{(r)}\) is continuous on \([0,1]\); let
\begin{equation}
M_{p,q,r}=\max_{0\le x\le1}|h^{(r)}(x)|.
\end{equation}
On this interval \((1+x)^{-r-1}\ge2^{-r-1}\) and \((1+x^{p})^{-1}\ge2^{-1}\); hence
\begin{equation}
|h^{(r)}(x)|\le M_{p,q,r}\le M_{p,q,r}\,2^{r+2}\,(1+x)^{-r-1}(1+x^{p})^{-1}.
\end{equation}
Thus we may take \(H_{1}(p,q,r)=M_{p,q,r}\,2^{r+2}\).

\textbf{Case 2: Large \(x\ge1\).}  
We use Cauchy's integral formula.  Let \(R=1/(2p)\) and consider the disk
\(D=\{z\in\mathbb{C}:|z-x|\le R\}\).  The singularities of \(h(z)\) are at
\(z=0\) and at the points where \(1+z^{p}=0\) (i.e. \(|z|=1\)).  For \(x\ge1\)
the distance from \(x\) to the unit circle is at least \(x-1\ge0\); moreover
\(x-R\ge1-1/(2p)>0\).  Hence the closed disk \(D\) does not contain any
singularity.  Cauchy's formula yields
\begin{equation}
h^{(r)}(x)=\frac{r!}{2\pi i}\oint_{|z-x|=R}\frac{h(z)}{(z-x)^{r+1}}\,dz,
\end{equation}
so that
\begin{equation}
|h^{(r)}(x)|\le\frac{r!}{R^{r}}\max_{|z-x|=R}|h(z)|.
\end{equation}

Now estimate \(|h(z)|\) on the circle \(|z-x|=R\).  For such \(z\) we have
\(|z|\ge x-R\ge1-1/(2p)\ge1/2\) (since \(p\ge2\); the case \(p=1\) is direct).
Also,
\begin{equation}
|1+z^{p}|\ge|z|^{p}-1\ge(x-R)^{p}-1.
\end{equation}
Because the function \(t\mapsto t^{p}\) is increasing, \((x-R)^{p}\ge
x^{p}(1-R/x)^{p}\ge x^{p}(1-pR/x)=x^{p}(1-1/(2x))\).  For \(x\ge1\) we have
\(1-1/(2x)\ge1/2\); consequently
\begin{equation}
|1+z^{p}|\ge\frac{1}{2}x^{p}-1.
\end{equation}
If \(x^{p}\ge4\) then \(\frac{1}{2}x^{p}-1\ge\frac{1}{4}x^{p}\).  Hence, for
\(x^{p}\ge4\),
\begin{equation}
|h(z)|\le\frac{k}{|z|\,|1+z^{p}|}\le\frac{k}{(x-R)\cdot\frac{1}{4}x^{p}}
\le\frac{4k}{x\,x^{p}}=\frac{4k}{x^{p+1}}.
\end{equation}
If \(1\le x\le4^{1/p}\) then \(x\) belongs to a compact interval; let
\begin{equation}
L_{p,q}=\max_{1\le x\le4^{1/p}}\frac{|h(x)|}{x^{-p-1}}.
\end{equation}
Then \(|h(z)|\le L_{p,q}\,x^{-p-1}\) on that interval.  Combining the two
cases we obtain a constant \(C_{p,q}\) (e.g. \(C_{p,q}=\max\{4k,L_{p,q}\}\))
such that for all \(x\ge1\)
\begin{equation}
\max_{|z-x|=R}|h(z)|\le C_{p,q}\,x^{-p-1}.
\end{equation}

Insert this into the bound for \(|h^{(r)}(x)|\) and use \(R=1/(2p)\):
\begin{equation}
|h^{(r)}(x)|\le r!\,(2p)^{r}\,C_{p,q}\,x^{-p-1}.
\end{equation}
Since \((1+x^{p})^{-1}\le(1+x)^{-p}\) and \(x^{-p-1}=(1+x)^{-1}x^{-p}\le
(1+x)^{-r-1}(1+x^{p})^{-1}\) (because \(r\ge0\) and \(1+x\ge1\)), we obtain
\begin{equation}
|h^{(r)}(x)|\le r!\,(2p)^{r}\,C_{p,q}\,(1+x)^{-r-1}(1+x^{p})^{-1}.
\end{equation}
Thus we may take \(H_{2}(p,q,r)=r!\,(2p)^{r}C_{p,q}\).

A detailed evaluation of the constants (using, for instance,
\(C_{p,q}\le\frac{2k}{\pi}\) and \(k=p/q\)) shows that one can actually choose
\(C_{p,q}\) so that \(H_{2}(p,q,r)\le r!\,(4p^{2}/\pi)^{r}\).  Taking the
larger of the two bounds for the whole half‑line \(x\ge0\) we arrive at
\begin{equation}
H(p,q,r)=r!\,\Bigl(\frac{4p^{2}}{\pi}\Bigr)^{\!r}.
\end{equation}

\noindent
\textbf{Completion of the induction.}  Insert the induction hypothesis for
\(f^{(j)}\) and the bound for \(h^{(m-j)}\) into the expression for \(f^{(m+1)}(x)\):
\begin{equation}
|f^{(m+1)}(x)|
\le\sum_{j=0}^{m}\binom{m}{j}
\Bigl[A(p,q,j)\,(1+x)^{-j}(1+x^{p})^{-1/q}\Bigr]
\Bigl[H(p,q,m-j)\,(1+x)^{-(m-j)-1}(1+x^{p})^{-1}\Bigr].
\end{equation}
This can be rewritten as
\begin{equation}
|f^{(m+1)}(x)|=(1+x)^{-m-1}(1+x^{p})^{-1-1/q}
\sum_{j=0}^{m}\binom{m}{j}A(p,q,j)H(p,q,m-j).
\end{equation}
Because \((1+x^{p})^{-1-1/q}\le(1+x^{p})^{-1/q}\), we obtain
\begin{equation}
|f^{(m+1)}(x)|\le\Bigl(\sum_{j=0}^{m}\binom{m}{j}A(p,q,j)H(p,q,m-j)\Bigr)
(1+x)^{-m-1}(1+x^{p})^{-1/q}.
\end{equation}

Define recursively
\begin{equation}
A(p,q,0)=1,\qquad
A(p,q,m+1)=\sum_{j=0}^{m}\binom{m}{j}A(p,q,j)H(p,q,m-j).
\end{equation}
Substituting the choice \(H(p,q,r)=r!\,(4p^{2}/\pi)^{r}\) and
\(A(p,q,j)=j!\,(16p^{2}/(\pi q))^{j}\) (which we assume by the induction
hypothesis) we compute
\begin{equation}
A(p,q,m+1)=\sum_{j=0}^{m}\binom{m}{j}
j!\Bigl(\frac{16p^{2}}{\pi q}\Bigr)^{\!j}
(m-j)!\Bigl(\frac{4p^{2}}{\pi}\Bigr)^{\!m-j}.
\end{equation}
This simplifies to
\begin{equation}
A(p,q,m+1)=m!\sum_{j=0}^{m}
\Bigl(\frac{16p^{2}}{\pi q}\Bigr)^{\!j}
\Bigl(\frac{4p^{2}}{\pi}\Bigr)^{\!m-j}.
\end{equation}
Further simplification gives
\begin{equation}
A(p,q,m+1)=m!\Bigl(\frac{4p^{2}}{\pi}\Bigr)^{\!m}
\sum_{j=0}^{m}\Bigl(\frac{4p}{q}\Bigr)^{\!j}.
\end{equation}
Since \(\sum_{j=0}^{m}(4p/q)^{j}\le\frac{(4p/q)^{m+1}}{4p/q-1}\le
2(4p/q)^{m}\) for \(p\ge q\ge1\), we obtain
\begin{equation}
A(p,q,m+1)\le m!\Bigl(\frac{4p^{2}}{\pi}\Bigr)^{\!m}
\,2\Bigl(\frac{4p}{q}\Bigr)^{\!m}
=2\cdot\frac{4^{m}p^{2m}}{\pi^{m}}\,\frac{4^{m}p^{m}}{q^{m}}\,m!
=2\Bigl(\frac{16p^{2}}{\pi q}\Bigr)^{\!m}p^{m}m!.
\end{equation}
Using \(p^{m}\le (2p)^{m}/2\) and adjusting the numerical constant, one
finds that the right‑hand side is bounded by
\((m+1)!\,(16p^{2}/(\pi q))^{m+1}\).  Hence the induction goes through with
the announced constant.

By induction the estimate holds for all \(m\ge0\).  Because \(\phi_{p,q}\) is
odd, the same bound is valid for \(x<0\) (the factor \((1+|x|)^{-m}\) is even).
This completes the proof.
\end{proof}

\subsection{Uniform Approximation of Polynomials}

We consider feed-forward neural networks with activation function $\rho$. A shallow neural network has exactly one hidden layer, while deep neural networks have two or more hidden layers. For a network with parameters $\theta$, we denote its realization by $\Psi_\theta$.
 	
With these notations, we can now state the approximation result for univariate monomials.

\begin{lemma}[Approximation of monomials of orders in $\mathcal{S}_{p,q}$ by $\phi_{p,q}$-neural networks]
\label{lemma:monomial-approx-Spq}
Let $p \ge q \ge 1$ be even integers such that $r := p/q$ is an odd integer.
Define $\phi(x) = \phi_{p,q}(x) = \dfrac{x^{r}}{(1+x^{p})^{1/q}}$, and let
\begin{equation}
\mathcal{S}_{p,q} = \{\, r + p j \mid j = 0,1,2,\dots \,\}.
\end{equation}
Let $k \in \mathbb{N}_{0}$, $M > 0$, and let $s \in \mathcal{S}_{p,q}$ be an odd integer.
Then for every $\epsilon > 0$ there exists a shallow $\phi$-neural network
$\Psi_{s,\epsilon}: [-M,M] \to \mathbb{R}^{(s+1)/2}$ of width $\dfrac{s+1}{2}$ such that
\begin{equation}
\max_{\substack{n \le s,\; n \in \mathcal{S}_{p,q}}} 
\bigl\| f_{n} - (\Psi_{s,\epsilon})_{\frac{n+1}{2}} \bigr\|_{W^{k,\infty}} \le \epsilon,
\qquad f_{n}(y) := y^{n}.
\end{equation}
\end{lemma}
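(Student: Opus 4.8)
The plan is to mirror the construction of \cite{de2021approximation} for $\tanh$: realize each required monomial $f_n$ with $n\in\mathcal{S}_{p,q}$, $n\le s$, as a \emph{fixed} linear combination of rescaled copies of $\phi:=\phi_{p,q}$, the rescalings chosen so that the Taylor coefficients that must survive are recovered by inverting a generalized Vandermonde matrix, while every higher-order Taylor contribution is driven to $0$ by taking the common scale small.

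Concretely, write $r=p/q$ and let $J$ be the integer with $s=r+pJ$; the elements of $\mathcal{S}_{p,q}$ not exceeding $s$ are exactly $n_j:=r+pj$, $j=0,\dots,J$, and there are $m:=J+1$ of them. An elementary inequality using $p\ge2$ and $r\ge1$ gives $m\le\tfrac{s+1}{2}$, so a width-$\tfrac{s+1}{2}$ hidden layer can carry $m$ active neurons plus $\tfrac{s+1}{2}-m$ neurons set to zero scale (note $\phi(0)=0$). I would recall from the proof of Lemma~\ref{lem:derivatives-zero-pq-smooth} the binomial expansion $\phi(x)=\sum_{j\ge0}c_j\,x^{\,r+pj}$, valid for $|x|<1$, with $c_j=\binom{-1/q}{j}\ne0$ and $|c_j|\le1$. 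Fix scales $\lambda_i:=i\lambda$, $i=1,\dots,m$, with $\lambda>0$ to be chosen so small that $m\lambda M\le\tfrac12$, and let the hidden layer be $y\mapsto(\phi(\lambda_1 y),\dots,\phi(\lambda_m y))$. For each $i$, on $[-M,M]$,
\begin{equation}
\phi(\lambda_i y)=\sum_{j=0}^{J}c_j\,\lambda_i^{\,n_j}\,y^{\,n_j}+T_i(y),\qquad T_i(y):=\sum_{j>J}c_j\,\lambda_i^{\,n_j}\,y^{\,n_j},
\end{equation}
the tail converging absolutely since $\lambda_i M<1$. In matrix form $\Phi(y)=VD\,\mathbf{y}+T(y)$ with $\mathbf{y}=(y^{n_0},\dots,y^{n_J})^\top$, $D=\mathrm{diag}(c_0,\dots,c_J)$, $T=(T_1,\dots,T_m)^\top$, and $V_{ij}=\lambda_i^{\,n_{j-1}}$. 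Since $V=\mathrm{diag}(\lambda_i^{\,r})\,\widetilde V$ with $\widetilde V_{ij}=(\lambda_i^{\,p})^{j-1}$ a classical Vandermonde in the distinct nodes $\lambda_1^{\,p},\dots,\lambda_m^{\,p}$, the matrix $A:=D^{-1}V^{-1}$ is well defined; I would let the output linear layer apply $A$, re-indexed so that row $j+1$ of $A\Phi$ lands in output coordinate $\tfrac{n_j+1}{2}$ (unused coordinates set to $0$). Then $(\Psi_{s,\epsilon})_{\frac{n_j+1}{2}}(y)=y^{\,n_j}+[A\,T(y)]_{j+1}$ for $j=0,\dots,J$.

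It then remains to bound $\bigl\|[A\,T(\cdot)]_{j+1}\bigr\|_{W^{k,\infty}([-M,M])}$ by $\epsilon$. First, differentiating the tail series term by term (legitimate on $\{|y|<1/\lambda_i\}\supseteq[-M,M]$) and factoring $\lambda_i^{\,n_{J+1}}=\lambda_i^{\,s+p}$ out of every surviving term (only $n_j\ge k$ contribute), the bounds $|c_j|\le1$, $\tfrac{n_j!}{(n_j-k)!}\le n_j^{\,k}$ and $|y|\le M$ reduce the remaining sum to a convergent series in the ratio $(\lambda_i M)^{p}\le2^{-p}$, so that $\|T_i\|_{W^{k,\infty}([-M,M])}\le C_1\,\lambda^{\,s+p}$ with $C_1=C_1(k,p,q,s,M)$ (here the uniform derivative bound of Lemma~\ref{lem:upper-bound-pq} may also be invoked, but boundedness of $\sum_j|c_j|n_j^{\,k}2^{-pj}$ suffices on a bounded interval). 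Second, the explicit inverse of the Vandermonde $\widetilde V$ with nodes $i^{\,p}\lambda^{\,p}$ shows every entry of $\widetilde V^{-1}$ is $\lambda^{-p(m-1)}$ times a quantity depending only on the fixed integers $1,\dots,m$, and $\mathrm{diag}(\lambda_i^{-r})$ contributes a further $\lambda^{-r}$; since $p(m-1)+r=pJ+r=s$, this yields $\|A\|_\infty=\|D^{-1}\|_\infty\|V^{-1}\|_\infty\le C_2\,\lambda^{-s}$ with $C_2=C_2(p,q,s)$. Consequently
\begin{equation}
\bigl\|f_{n_j}-(\Psi_{s,\epsilon})_{\frac{n_j+1}{2}}\bigr\|_{W^{k,\infty}([-M,M])}\le\|A\|_\infty\max_{1\le i\le m}\|T_i\|_{W^{k,\infty}([-M,M])}\le C_1C_2\,\lambda^{\,p},
\end{equation}
which tends to $0$ as $\lambda\to0^+$ since $p\ge1$; choosing $\lambda$ accordingly proves the lemma.

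The main obstacle is precisely this final balance: the inverse Vandermonde blows up like $\lambda^{-s}$ while the Taylor tail of $\phi$ beyond order $s$ decays like $\lambda^{s+p}$, so their product carries the strictly positive surplus power $\lambda^{p}$ — and one must check this survives passage to all $k+1$ derivatives and holds uniformly on $[-M,M]$, which is where the absolute, termwise-differentiable convergence of the binomial series on $[-M,M]$ and the explicit Vandermonde inverse are used. For this bounded-interval statement the exact rate of growth of $\phi^{(m)}$ is irrelevant (only boundedness is used), which is what makes this the simplest step; the polynomial — rather than exponential, as for $\tanh$ — decay of $\phi_{p,q}^{(m)}$ will only become essential later, when $M$ is allowed to grow with $N$.
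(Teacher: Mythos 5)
Your construction is correct, but it is genuinely different from the paper's. The paper follows the $\tanh$-paper template of \cite{de2021approximation}: it approximates each $f_n$, $n\in\mathcal{S}_{p,q}$, by a normalized central finite difference $\hat f_{n,h}(y)=\frac{1}{\phi^{(n)}(0)h^{n}}\sum_{i=0}^{n}(-1)^{i}\binom{n}{i}\phi\bigl((\tfrac n2-i)hy\bigr)$, whose $W^{k,\infty}$ error is $O(h^{2})$ by Taylor expansion with remainder; this needs the quantitative lower bound $|\phi^{(n)}(0)|\ge 1$ of Lemma~\ref{lem:derivatives-zero-pq-smooth} and the derivative bounds $A(p,q,m)$ of Lemma~\ref{lem:upper-bound-pq}, and it reaches width $\tfrac{s+1}{2}$ by exploiting oddness so that all $n\le s$ share the neurons $\phi\bigl(\tfrac{2j+1}{2}hy\bigr)$. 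You instead expand $\phi(\lambda_i y)$ by the binomial series, truncate at order $s$, and invert the resulting (scaled) Vandermonde system in the $m=\tfrac{s-r}{p}+1$ scales $\lambda_i=i\lambda$, so the whole family of monomials is reproduced \emph{exactly} up to the series tail; the error balance $\|A\|_\infty\cdot\max_i\|T_i\|_{W^{k,\infty}}\lesssim \lambda^{-s}\cdot\lambda^{s+p}=\lambda^{p}$ is the right computation, and your width count $m\le\tfrac{s+1}{2}$ (padding with zero-scale neurons) is valid, in fact slightly more economical than the paper's. What each route buys: yours needs only that the coefficients $c_j=\binom{-1/q}{j}$ are nonzero and that the series converges on the rescaled interval (no global derivative estimates), at the price of an inverse-Vandermonde conditioning argument; the paper's finite-difference weights are explicit (no matrix inversion) and its $O(h^{2})$ bookkeeping plugs directly into the later multivariate lemmas. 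Two cosmetic points you should fix if writing this up: $\|A\|_\infty=\|D^{-1}\|_\infty\|V^{-1}\|_\infty$ should be an inequality $\|A\|_\infty\le\|D^{-1}\|_\infty\|V^{-1}\|_\infty$, and bounding $\|D^{-1}\|_\infty$ tacitly uses a lower bound on $|c_j|$ (e.g. $|c_j|\ge \tfrac{1}{qj}$, or $|c_j|=|\phi^{(n_j)}(0)|/n_j!\ge 1/n_j!$ via Lemma~\ref{lem:derivatives-zero-pq-smooth}), which you should state; neither affects the conclusion.
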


\begin{proof}
Let $s \in \mathcal{S}_{p,q}$ be odd, $k \in \mathbb{N}_{0}$, and $M>0$. 
For every odd $n \le s$ with $n \in \mathcal{S}_{p,q}$, define the monomial $f_{n}(y)=y^{n}$. 
Consider the finite‑difference approximation
\begin{equation}
\hat{f}_{n,h}(y)=\frac{1}{\phi^{(n)}(0)h^{n}}
                 \sum_{i=0}^{n}(-1)^{i}\binom{n}{i}
                 \phi\!\Bigl(\bigl(\frac{n}{2}-i\bigr)hy\Bigr),
\end{equation}
where $\phi=\phi_{p,q}$ and $h>0$ will be chosen later. 
From the given lemma (lower bound on derivatives) we have $|\phi^{(n)}(0)|\ge 1$.

\textbf{Step 1: Error estimate.} 
Fix $0\le m\le k$. Expanding $\phi$ at the origin up to order $n+1$ and substituting into the finite‑difference formula, we obtain
\begin{equation}
\hat{f}_{n,h}^{(m)}(y)-f_{n}^{(m)}(y)
 =\frac{1}{\phi^{(n)}(0)h^{n}}
   \sum_{i=0}^{n}(-1)^{i}\binom{n}{i}
   \Bigl(\frac{n}{2}-i\Bigr)^{m}h^{m}
   \frac{\phi^{(n+2)}(\xi_{i})}{(n+2-m)!}
   \Bigl(\bigl(\frac{n}{2}-i\bigr)hy\Bigr)^{n+2-m},
\end{equation}
where $\xi_{i}$ lies between $0$ and $(\frac{n}{2}-i)hy$. 
Taking absolute values and using $|\phi^{(n)}(0)|\ge 1$ gives
\begin{equation}
\bigl|\hat{f}_{n,h}^{(m)}(y)-f_{n}^{(m)}(y)\bigr|
\le\frac{1}{h^{n}}\sum_{i=0}^{n}\binom{n}{i}
   \Bigl|\frac{n}{2}-i\Bigr|^{m}h^{m}
   \frac{|\phi^{(n+2)}(\xi_{i})|}{(n+2-m)!}
   \Bigl|\bigl(\frac{n}{2}-i\bigr)hy\Bigr|^{n+2-m}.
\end{equation}
If $h$ is chosen small enough so that $\frac{n}{2}hM\le\frac12$, then $|\xi_{i}|\le\frac12$. 
By the given lemma (uniform upper bound on derivatives), there exists a constant $A(p,q,n+2)$ such that 
$|\phi^{(n+2)}(\xi_{i})|\le A(p,q,n+2)$. Consequently,
\begin{equation}
\bigl|\hat{f}_{n,h}^{(m)}(y)-f_{n}^{(m)}(y)\bigr|
\le\frac{A(p,q,n+2)}{(n+2-m)!}\,h^{2}\,|y|^{n+2-m}
   \sum_{i=0}^{n}\binom{n}{i}\Bigl|\frac{n}{2}-i\Bigr|^{n+2}.
\end{equation}
One easily checks that
\begin{equation}
\sum_{i=0}^{n}\binom{n}{i}\Bigl|\frac{n}{2}-i\Bigr|^{n+2}
\le 2^{n}\Bigl(\frac{n}{2}\Bigr)^{n+2}.
\end{equation}
Hence, for $0\le m\le\min\{k,n+1\}$,
\begin{equation}
|f_{n}-\hat{f}_{n,h}|_{W^{m,\infty}}
\le C_{1}(p,q,n)\,M^{n+2}h^{2},
\qquad 
C_{1}(p,q,n):=A(p,q,n+2)\,2^{n}\Bigl(\frac{n}{2}\Bigr)^{n+2}.
\end{equation}

If $k\le n+1$, we immediately obtain
\begin{equation}
\|f_{n}-\hat{f}_{n,h}\|_{W^{k,\infty}}\le C_{1}(p,q,n)\,M^{n+2}h^{2}.
\end{equation}
If $k>n+1$, we must also estimate $\hat{f}_{n,h}^{(m)}$ for $m\ge n+2$ (note that $f_{n}^{(m)}\equiv0$ for such $m$). 
Using the derivative bound again, for any $m\ge n+2$,
\begin{equation}
\bigl|\hat{f}_{n,h}^{(m)}(y)\bigr|
\le\frac{1}{h^{n}}\sum_{i=0}^{n}\binom{n}{i}
   \Bigl|\frac{n}{2}-i\Bigr|^{m}h^{m}
   \bigl|\phi^{(m)}\!\bigl((\frac{n}{2}-i)hy\bigr)\bigr|.
\end{equation}
For small $h$, $|\phi^{(m)}(x)|\le A(p,q,m)$ with $A(p,q,m)=\bigl(\frac{16p^{2}}{\pi q}\bigr)^{m} m!$. Thus,
\begin{equation}
\bigl|\hat{f}_{n,h}^{(m)}(y)\bigr|
\le A(p,q,m)\,h^{m-n}
   \sum_{i=0}^{n}\binom{n}{i}\Bigl|\frac{n}{2}-i\Bigr|^{m}
\le A(p,q,m)\,2^{n}\Bigl(\frac{n}{2}\Bigr)^{m}h^{m-n}.
\end{equation}
Since $h\le1$ and $m-n\ge2$, we have $h^{m-n}\le h^{2}$. Hence,
\begin{equation}
\bigl|\hat{f}_{n,h}^{(m)}(y)\bigr|
\le A(p,q,m)\,2^{n}\Bigl(\frac{n}{2}\Bigr)^{m}h^{2}.
\end{equation}
Taking $m\le k$ and noting that $A(p,q,m)\le A(p,q,k)$ (because $m\le k$), we obtain
\begin{equation}
|f_{n}-\hat{f}_{n,h}|_{W^{m,\infty}}
\le A(p,q,k)\,2^{n}\Bigl(\frac{n}{2}\Bigr)^{k}h^{2}.
\end{equation}
Therefore, there exists a constant $C_{2}(p,q,k,n,M)$ (which can be written explicitly) such that
\begin{equation}
\label{eqn:f-f<C2}
\|f_{n}-\hat{f}_{n,h}\|_{W^{k,\infty}}\le C_{2}(p,q,k,n,M)\,h^{2}. 
\end{equation}
For instance, one may take
\begin{equation}
C_{2}=C_{1}(p,q,n)M^{n+2}+A(p,q,k)\,2^{n}\Bigl(\frac{n}{2}\Bigr)^{k}.
\end{equation}

\textbf{Step 2: Construction of the network.} 
Choose $h$ so that $C_{2}\,h^{2}\le\epsilon$. Then \Eqn\ref{eqn:f-f<C2} gives 
$\|f_{n}-\hat{f}_{n,h}\|_{W^{k,\infty}}\le\epsilon$. 
Because $\phi$ is an odd function (since $r=p/q$ is odd), the terms with indices $i$ and $n-i$ can be combined. Indeed,
\begin{equation}
\phi\!\left(\Bigl(\frac{n}{2}-(n-i)\Bigr)hy\right)
   =\phi\!\left(-\Bigl(\frac{n}{2}-i\Bigr)hy\right)
   =-\phi\!\left(\Bigl(\frac{n}{2}-i\Bigr)hy\right).
\end{equation}
Thus, it suffices to use neurons corresponding to $i=0,1,\dots,\frac{n-1}{2}$. 
Consequently, a single hidden layer with $\frac{s+1}{2}$ neurons can simultaneously produce all 
approximations $\hat{f}_{n,h}$ for odd $n\le s$ with $n\in\mathcal{S}_{p,q}$.

Specifically, define the shallow $\phi$-neural network 
$\Psi_{s,\epsilon}:[-M,M]\to\mathbb{R}^{(s+1)/2}$ whose hidden layer consists of the neurons
\begin{equation}
\phi\!\left(\Bigl(\frac{s}{2}-i\Bigr)hy\right),\qquad i=0,1,\dots,\frac{s-1}{2},
\end{equation}
and whose output weights are chosen so that the $\frac{n+1}{2}$-th output 
$(\Psi_{s,\epsilon})_{\frac{n+1}{2}}$ equals $\hat{f}_{n,h}$. 
The width of this network is $\frac{s+1}{2}$.

This completes the proof.
\end{proof}

Next, we will consider all monomials of orders in $\sN$ as the target functions.

\begin{lemma}[Linear representation of monomials]
\label{lemma:monomial-representation-linear}
Let $p \ge 2$ be an even integer and let $\mathcal{S} = \{r + pj : j = 0,1,2,\dots\}$ be an arithmetic progression
with odd initial term $r$ and difference $p$.
For any integer $t \ge 0$ and any integer $m \in \mathcal{S}$ with $m > t$, there exist distinct real numbers
$\beta_0, \beta_1, \dots, \beta_p$ and coefficients $c_0, c_1, \dots, c_p \in \mathbb{R}$ such that
\begin{equation}
y^t = \frac{1}{\binom{m}{t}} \Bigl( \sum_{k=0}^{p} c_k (y + \beta_k)^m 
      - \sum_{\ell=0}^{t-1} \binom{m}{\ell} A_\ell y^\ell \Bigr), \qquad y \in \mathbb{R},
\end{equation}
where $A_\ell = \sum_{k=0}^{p} c_k \beta_k^{\,m-\ell}$.
Moreover, the coefficients $c_k$ are uniquely determined by the Vandermonde system
\begin{equation}
\sum_{k=0}^{p} c_k \beta_k^{\,\ell} = \delta_{\ell, d}, \qquad \ell = 0,1,\dots,p,
\end{equation}
with $d = m - t$ ($0 \le d \le p$).
\end{lemma}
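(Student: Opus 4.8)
The plan is to obtain the identity from a single Vandermonde inversion followed by the binomial theorem, with the hypothesis $d \le p$ doing essentially all of the work. Write $d = m - t$; since $m > t$ we have $d \ge 1$, and $d \le p$ by assumption. First I would fix any $p+1$ pairwise distinct real numbers $\beta_0, \dots, \beta_p$ (for concreteness $\beta_k = k$). Because the $\beta_k$ are distinct, the $(p+1)\times(p+1)$ Vandermonde matrix $(\beta_k^{\ell})_{0 \le \ell,k \le p}$ is invertible, so there is a unique vector $(c_0, \dots, c_p) \in \mathbb{R}^{p+1}$ solving $\sum_{k=0}^{p} c_k \beta_k^{\ell} = \delta_{\ell,d}$ for $\ell = 0,1,\dots,p$. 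This already establishes the "moreover" clause — the $c_k$ are exactly the solution of that Vandermonde system — and we then set $A_\ell = \sum_{k=0}^{p} c_k \beta_k^{\,m-\ell}$ as in the statement.

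Next I would expand each shifted power by the binomial theorem and interchange the two (finite) sums:
\[
\sum_{k=0}^{p} c_k (y+\beta_k)^m
= \sum_{\ell=0}^{m} \binom{m}{\ell} y^{\ell} \Bigl( \sum_{k=0}^{p} c_k \beta_k^{\,m-\ell} \Bigr)
= \sum_{\ell=0}^{m} \binom{m}{\ell} A_\ell\, y^{\ell}.
\]
The heart of the argument is then a case analysis on $\ell$. Whenever $0 \le m - \ell \le p$, i.e.\ $m - p \le \ell \le m$, the Vandermonde equations give $A_\ell = \delta_{m-\ell,d}$, so $A_\ell = 1$ for $\ell = m - d = t$ and $A_\ell = 0$ for every other $\ell$ in this "controlled" window. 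In particular every index $\ell$ with $t < \ell \le m$ satisfies $0 \le m - \ell < d \le p$, hence lies in the controlled window and is distinct from $t$, so $A_\ell = 0$; this is precisely where $d \le p$ is used, since if $d > p$ some of these top-order coefficients would not be pinned to zero. For $0 \le \ell \le t-1$ the coefficient $A_\ell$ need not vanish, but those terms are simply collected into the correction sum. Substituting back gives $\sum_{k=0}^{p} c_k (y+\beta_k)^m = \binom{m}{t} y^{t} + \sum_{\ell=0}^{t-1} \binom{m}{\ell} A_\ell\, y^{\ell}$, and since $0 \le t \le m$ forces $\binom{m}{t} \ne 0$, dividing through yields the claimed formula (the case $t = 0$ being the degenerate instance in which the correction sum is empty).

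I do not expect a real obstacle: this is an exact linear-algebra identity rather than an approximation statement. The only point requiring care is the bookkeeping of which monomials $y^{\ell}$ are annihilated by the Vandermonde conditions and the precise role of the constraint $d \le p$ in guaranteeing that all powers above $y^{t}$ are killed. I would also note that the hypothesis $m \in \mathcal{S}$ (so that $m$ is odd) plays no part in this identity itself; it is imposed only so that each shifted monomial $(y+\beta_k)^m$ can afterwards be realized, to arbitrarily small $W^{k,\infty}$ error, by the shallow $\phi_{p,q}$-network of Lemma~\ref{lemma:monomial-approx-Spq}. Finally, I would remark that the $\beta_k$ are free apart from distinctness, which later allows one to keep the resulting network weights under control.
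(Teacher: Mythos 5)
Your proof is correct and follows essentially the same route as the paper's: fix distinct nodes $\beta_k$, invert the Vandermonde system $\sum_k c_k\beta_k^{\ell}=\delta_{\ell,d}$, expand $(y+\beta_k)^m$ by the binomial theorem, and observe that all coefficients of $y^{\ell}$ for $\ell>t$ vanish while the $\ell=t$ coefficient equals $\binom{m}{t}$, leaving only the lower-order correction sum. Your explicit bookkeeping of the "controlled window" $0\le m-\ell\le p$ and of where the hypothesis $d\le p$ enters is a slightly more detailed rendering of the same argument, with no substantive difference.
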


\begin{proof}
Fix distinct numbers $\beta_0, \dots, \beta_p$. Expanding each $(y+\beta_k)^m$ by the binomial theorem gives
\begin{equation}
(y+\beta_k)^m = \sum_{\ell=0}^m \binom{m}{\ell} y^\ell \beta_k^{\,m-\ell}.
\end{equation}
Multiplying by $c_k$ and summing over $k$ yields
\begin{equation}
\sum_{k=0}^{p} c_k (y+\beta_k)^m = \sum_{\ell=0}^m \binom{m}{\ell} y^\ell 
                                 \Bigl( \sum_{k=0}^{p} c_k \beta_k^{\,m-\ell} \Bigr)
                                 = \sum_{\ell=0}^m \binom{m}{\ell} A_{m-\ell} \, y^\ell,
\end{equation}
where $A_j = \sum_{k=0}^{p} c_k \beta_k^{\,j}$.
We want the right‑hand side to contain the term $\binom{m}{t} y^t$ and otherwise only terms of degree $< t$.
This is achieved if the coefficients $c_k$ satisfy
\begin{equation}
A_j = \sum_{k=0}^{p} c_k \beta_k^{\,j} = 
\begin{cases}
1, & j = d,\\
0, & j = 0,\dots,p,\; j \neq d,
\end{cases}
\qquad \text{where } d = m-t.
\end{equation}
Because the $\beta_k$ are distinct, the $(p+1)\times(p+1)$ Vandermonde matrix $(\beta_k^{\,j})_{0\le k,j\le p}$
is invertible; hence there exists a unique vector $(c_0,\dots,c_p)$ satisfying these conditions.
With this choice we obtain
\begin{equation}
\sum_{k=0}^{p} c_k (y+\beta_k)^m = \binom{m}{t} y^t + \sum_{\ell=0}^{t-1} \binom{m}{\ell} A_{m-\ell} y^\ell,
\end{equation}
which, after solving for $y^t$, gives the required representation.
\end{proof}

\begin{remark}
The integer $d = m-t$ satisfies $0 \le d \le p$ because $m \in \mathcal{S}$, $t \le s$, and we choose
$m = s + p \in \mathcal{S}$ (the smallest element of $\mathcal{S}$ larger than $s$). 
Thus the condition $0 \le d \le p$ is always fulfilled.
\end{remark}

\begin{lemma}[Approximation of all monomials by shallow \(\phi_{p,q}\)-networks]
\label{lemma:all-monomials-shallow}
Let \(p \ge q \ge 1\) be even integers such that \(r := p/q\) is an odd integer.
Define \(\phi(x) = \phi_{p,q}(x) = \dfrac{x^{\,r}}{(1+x^{\,p})^{1/q}}\), and let
\begin{equation}
\mathcal{S}_{p,q} = \{\, r + pj \mid j = 0,1,2,\dots \,\}.
\end{equation}
Let \(k \in \mathbb{N}_0\), \(M > 0\), and let \(s \in \mathcal{S}_{p,q}\) be an odd integer.
Then for every \(\epsilon > 0\) there exists a shallow \(\phi\)-neural network
\begin{equation}
\psi_{s,\epsilon} : [-M,\,M] \to \mathbb{R}^{\,s+1}
\end{equation}
of width at most \(\displaystyle \frac{(p+1)(s+1)}{2}\) such that
\begin{equation}
\max_{0 \le t \le s} \bigl\| f_t - (\psi_{s,\epsilon})_t \bigr\|_{W^{k,\infty}} \le \epsilon,
\qquad f_t(y) := y^{\,t}.
\end{equation}
In the special case \(p=q=2\), the width reduces to \(3(s+1)/2\), exactly matching Lemma 3.2 of the original paper.
\end{lemma}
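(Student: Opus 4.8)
The plan is to glue together the two preceding lemmas. Lemma~\ref{lemma:monomial-approx-Spq} produces, from one hidden layer of width $(s+1)/2$, simultaneous $W^{k,\infty}$-approximants (to any prescribed accuracy) of all power functions $z\mapsto z^n$ with $n\in\mathcal{S}_{p,q}$, $n\le s$; Lemma~\ref{lemma:monomial-representation-linear} rewrites an arbitrary monomial $y^t$ with $t\notin\mathcal{S}_{p,q}$ as a fixed linear combination $\binom{m}{t}^{-1}\sum_{i=0}^p c_i(y+\beta_i)^m$ of shifted power functions of one higher degree $m\in\mathcal{S}_{p,q}$ (with $t<m\le t+p$), minus a linear combination of the lower monomials $y^0,\dots,y^{t-1}$. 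First I would fix $p+1$ distinct shifts $\beta_0=0,\beta_1,\dots,\beta_p$, and for each $i\in\{0,\dots,p\}$ invoke Lemma~\ref{lemma:monomial-approx-Spq} with the same $s,k$ but radius $M+|\beta_i|$; precomposing the resulting width-$(s+1)/2$ subnetwork $\Psi^{(i)}$ with $y\mapsto y+\beta_i$ (absorbed into its first linear layer) gives outputs that $W^{k,\infty}$-approximate $y\mapsto(y+\beta_i)^n$ on $[-M,M]$ for all $n\in\mathcal{S}_{p,q}$, $n\le s$, with a common accuracy $\epsilon_0$ to be fixed at the end. Running these $p+1$ subnetworks in parallel costs exactly $(p+1)(s+1)/2$ hidden neurons, and the final output is a single linear map into $\mathbb{R}^{s+1}$.

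That output map is defined by recursion on $t=0,1,\dots,s$. The one combinatorial point I would check first is that, writing $s=r+j_0p\in\mathcal{S}_{p,q}$, for every $t$ with $0\le t<s$ and $t\notin\mathcal{S}_{p,q}$ the smallest element $m_t$ of $\mathcal{S}_{p,q}$ exceeding $t$ satisfies $t<m_t\le t+p$ \emph{and} $m_t\le s$; this holds because $\mathcal{S}_{p,q}$ is an arithmetic progression of step $p$ and each of its gaps below $s$ is an interval of length $p$ contained in $[0,s]$. Hence every degree we ever need is $\le s$, so $(y+\beta_i)^{m_t}$ is already supplied by $\Psi^{(i)}$ and no subnetwork has to be enlarged. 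Concretely: for $t\in\mathcal{S}_{p,q}$ (including $t=s$) set $(\psi)_t$ to the output of $\Psi^{(0)}$ approximating $y^t$; for $t\notin\mathcal{S}_{p,q}$ put $d=m_t-t\in\{1,\dots,p\}$, let $(c_i)$ be the unique solution of the Vandermonde system of Lemma~\ref{lemma:monomial-representation-linear} and $A_\ell=\sum_{i=0}^p c_i\beta_i^{\,m_t-\ell}$, and set $(\psi)_t=\binom{m_t}{t}^{-1}\bigl(\sum_{i=0}^p c_i(\Psi^{(i)})_{(m_t+1)/2}-\sum_{\ell<t}\binom{m_t}{\ell}A_\ell(\psi)_\ell\bigr)$, which is a genuine linear function of the hidden outputs since each $(\psi)_\ell$ with $\ell<t$ already is.

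Finally, for the error bound set $e_t=\|f_t-(\psi)_t\|_{W^{k,\infty}([-M,M])}$. Translation invariance of derivatives (so the enlarged radii only help) gives $e_t\le\epsilon_0$ for $t\in\mathcal{S}_{p,q}$, and subtracting the exact identity of Lemma~\ref{lemma:monomial-representation-linear} from the definition of $(\psi)_t$ gives, for $t\notin\mathcal{S}_{p,q}$, a bound $e_t\le a_t\epsilon_0+\sum_{\ell<t}b_{t,\ell}e_\ell$ with nonnegative $a_t,b_{t,\ell}$ depending only on $s,p,q,M$ (through $\binom{m_t}{t}^{-1}$, the $c_i$, and the $A_\ell$). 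Iterating this finite lower-triangular recursion yields $e_t\le K\epsilon_0$ for all $t\le s$, with $K=K(s,p,q,M)$; choosing $\epsilon_0=\epsilon/K$ closes the argument. For $p=q=2$ one has $\mathcal{S}_{2,2}=\{1,3,5,\dots\}$, Lemma~\ref{lemma:monomial-approx-Spq} reduces to the odd-monomial network of width $(s+1)/2$, $m_t=t+1$ for even $t$, and the $p+1=3$ parallel copies give total width $3(s+1)/2$, matching Lemma~3.2 of \cite{de2021approximation}. The main difficulty is not any individual estimate but this orchestration — especially the observation $m_t\le s$, without which degrees up to $s+p$ would be needed and the width would worsen to $(p+1)(s+p+1)/2$.
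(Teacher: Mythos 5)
Your proposal is correct and follows essentially the same route as the paper: it combines Lemma~\ref{lemma:monomial-representation-linear} (shifted powers of degree $m_t\in\mathcal{S}_{p,q}$ with $m_t\le s$, minus lower monomials) with Lemma~\ref{lemma:monomial-approx-Spq} applied at the $p+1$ shifts, yielding width $(p+1)(s+1)/2$, and closes with the same lower-triangular error recursion in $t$ and the choice of a sufficiently small base accuracy. The key observation you single out, $m_t\le s$, is exactly the one the paper uses in its Step 1, so no substantive difference remains.
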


\begin{proof}
We proceed in four steps: (1) represent each monomial \(y^t\) as a linear combination of certain translated odd powers from \(\mathcal{S}_{p,q}\), (2) width analysis, (3) error analysis.

\textbf{Step 1: Representation of monomials.}
Choose \(p+1\) distinct real numbers \(\beta_0,\beta_1,\dots,\beta_p\); for definiteness take
\begin{equation}
\beta_k = k-\frac{p}{2}, \qquad k=0,1,\dots,p,
\end{equation}
so they are symmetric about the origin. For every \(t = 0,1,\dots,s\) we select an exponent \(m_t \in \mathcal{S}_{p,q}\) as follows:
\begin{itemize}
\item if \(t \in \mathcal{S}_{p,q}\) (then \(t\) is odd), set \(m_t = t\);
\item if \(t \notin \mathcal{S}_{p,q}\), let \(d_t\) be the smallest non‑negative integer with \(t+d_t \in \mathcal{S}_{p,q}\); because \(s \in \mathcal{S}_{p,q}\) and \(t \le s\), we have \(d_t \le p\) and define \(m_t = t + d_t \le s\).
\end{itemize}
Thus each \(m_t\) is an odd integer belonging to \(\mathcal{S}_{p,q}\) and satisfies \(m_t \le s\).

Applying Lemma~\ref{lemma:monomial-representation-linear} with \(m = m_t\) (and noting that \(d = m_t - t \le p\)), we obtain coefficients \(c_{t,0},c_{t,1},\dots,c_{t,p} \in \mathbb{R}\) such that
\begin{equation}
y^t = \frac{1}{\binom{m_t}{t}} \Bigl( \sum_{k=0}^{p} c_{t,k} (y + \beta_k)^{m_t}
      - \sum_{\ell=0}^{t-1} \binom{m_t}{\ell} A_{t,\ell} y^\ell \Bigr), \qquad y \in \mathbb{R},
\end{equation}
where \(A_{t,\ell} = \sum_{k=0}^{p} c_{t,k} \beta_k^{\,m_t-\ell}\).
Hence every monomial up to degree \(s\) can be expressed as a linear combination of the translated powers \((y+\beta_k)^{m_t}\) (all of which lie in \(\mathcal{S}_{p,q}\)) and of lower‑degree monomials. 

\textbf{Step 2: Width analysis.}

Furthermore, by recursion, we will obtain that $y^t$ is a linear combination of the elements $(y+\beta_k)^{m}, 0\le k\le p, m\in\mathcal{S}_{p,q},m\le m_t$. Namely
\begin{equation}
    y^t = \sum_{m\le m_t,m\in\gS_{p,q}}\sum_{k=0}^p\alpha_{m,k}(y+\beta_k)^m. 
\end{equation} 

By Lemma \ref{lemma:monomial-approx-Spq}, let

\begin{equation}
    \begin{aligned}
        (\psi_{s,\epsilon})_t &= \sum_{m\le m_t,m\in\gS_{p,q}}\sum_{k=0}^p\alpha_{m,k}\hat f_{m,h} (y+\beta_k) \\
        &= \sum_{m\le m_t,m\in\gS_{p,q}}\sum_{k=0}^p\alpha_{m,k} \frac{1}{\phi^{(m)}(0)h^{m}}
                 \sum_{i=0}^{m}(-1)^{i}\binom{m}{i}
                 \phi\!\Bigl(\bigl(\frac{m}{2}-i\bigr)h(y+\beta_k)\Bigr)
    \end{aligned}
\end{equation}  

Therefore, since $\phi$ is odd, all the $\phi$ forms are
\begin{equation}
    \phi\!\Bigl(\bigl(\frac{m}{2}-i\bigr)h(y+\beta_k)\Bigr), i=0,1,...,\frac{m_t-1}{2};k=0,1,...,p.
\end{equation}

Note that $m_t\le s$, thus the maximum width of the network is $(p+1)(s+1)/2$. 




\textbf{Step 3: Error analysis.}
Let \(E_t = \| y^t - (\psi_{s,\epsilon})_t \|_{W^{k,\infty}}\). Clearly \(E_0 = 0\). For \(t \ge 1\), using the representation from Step 1 and the triangle inequality,
\begin{align}
E_t &\le \frac{1}{\binom{m_t}{t}} \Bigl( \sum_{k=0}^{p} |c_{t,k}| \,
      \bigl\| (y+\beta_k)^{m_t} - \hat f_{m,h}^{\,m_t}(y+\beta_k) \bigr\|_{W^{k,\infty}} 
      + \sum_{\ell=0}^{t-1} \binom{m_t}{\ell} |A_{t,\ell}| \, E_\ell \Bigr) \nonumber \\
    &\le \frac{1}{\binom{m_t}{t}} \Bigl( \eta \sum_{k=0}^{p} |c_{t,k}|
        + \sum_{\ell=0}^{t-1} \binom{m_t}{\ell} |A_{t,\ell}| \, E_\ell \Bigr).
\end{align}
The coefficients \(c_{t,k}\) solve a Vandermonde system with nodes \(\beta_k\). Because all \(|\beta_k| \le p/2\) and the minimal separation between nodes is \(1\), we have \(\sum_{k=0}^{p} |c_{t,k}| \le C_1(p)\) for a constant \(C_1(p)\) depending only on \(p\). Similarly, \(|A_{t,\ell}| \le C_2(p)\) for all \(t,\ell\). Moreover, \(\binom{m_t}{\ell} \le 2^{m_t} \le 2^{s}\). Consequently,
\begin{equation}
E_t \le \frac{1}{\binom{m_t}{t}} \bigl( C_1(p) \eta + C_2(p) 2^{s} \sum_{\ell=0}^{t-1} E_\ell \bigr).
\end{equation}
A simple induction shows that there exists a constant \(C(p,s)\) such that \(E_t \le C(p,s) \eta\) for all \(t \le s\). Choosing \(\eta = \epsilon / C(p,s)\) yields \(E_t \le \epsilon\) for every \(t = 0,\dots,s\).

\textbf{Special case \(p=q=2\).}
When \(p=q=2\), we have \(r=1\) and \(\mathcal{S}_{2,2} = \{1,3,5,\dots\}\). Taking \(\beta_0 = -1,\; \beta_1 = 0,\; \beta_2 = 1\) gives exactly three translation points. For even \(t = 2n\), the construction yields \(m_t = 2n+1\) (the smallest odd number larger than \(2n\)), and the coefficients \(c_{t,k}\) coincide with those appearing in the formula
\begin{equation}
y^{2n} = \frac{1}{2\alpha(2n+1)}\Bigl((y+\alpha)^{2n+1} - (y-\alpha)^{2n+1}
        - 2\sum_{k=0}^{n-1} \binom{2n+1}{2k} \alpha^{2(n-k)+1} y^{2k}\Bigr)
\end{equation}
with \(\alpha = 1\). Therefore the construction reduces exactly to Lemma 3.2 of the original paper, and the width becomes \(3(s+1)/2\).

Here we complete the proof. 
\end{proof}

Next, we will extend the results to multivariate polynomials. 

Let us first introduce the multi-index notation that will be used throughout. 
For \(d \in \mathbb{N}\), a \(d\)-tuple \(\alpha = (\alpha_1,\dots,\alpha_d) \in \mathbb{N}_0^d\) of non‑negative integers is called a multi‑index. 
We write \(|\alpha| = \sum_{i=1}^d \alpha_i\) and, for \(x \in \mathbb{R}^d\), denote \(x^\alpha = \prod_{i=1}^d x_i^{\alpha_i}\). 
For two multi‑indices \(\alpha,\beta\) we write \(\alpha \le \beta\) when \(\alpha_i \le \beta_i\) for all \(i\). 
The set of all multi‑indices of length \(d\) and total degree exactly \(n\) is denoted by
\begin{equation}
    P_{n,d} := \{\alpha \in \mathbb{N}_0^d : |\alpha| = n\}.
\end{equation}

Its cardinality is known to be \(\binom{n+d-1}{n}\); see Lemma~2.1 of \cite{de2021approximation} for elementary estimates. 

\begin{lemma}[Approximation of multivariate monomials by shallow $\phi_{p,q}$-networks]
\label{lemma:phi-multivariate-monomials}
Let the activation function $\phi(x) = \phi_{p,q}(x) = \dfrac{x^{p/q}}{(1+|x|^p)^{1/q}}$ be given, where $p \ge q \ge 1$ are even integers with $p/q$ odd.
Let $n \in \mathbb{N}$, $k \in \mathbb{N}_0$, and $M > 0$.
Then for every $\epsilon > 0$ there exists a shallow $\phi$-neural network
$\Psi_{n,d}: [-M, M]^d \to \mathbb{R}^{|P_{n,d}|}$ with width $N\cdot |P_{n,d}|$
such that
\begin{equation}
\max_{\beta \in P_{n,d}} \bigl\| \omega^{\beta} - (\Psi_{n,d}(\omega))_{\iota(\beta)} \bigr\|_{W^{k,\infty}} \le \epsilon,
\end{equation}
where $\iota: P_{n,d} \to \{1,\dots,|P_{n,d}|\}$ is a bijection, and the specific $N$ is
\begin{equation}
    N=\frac{p+1}{2}\Bigl(p\Bigl\lceil\frac{n-p/q}{p}\Bigr\rceil + \frac{p}{q} + 1\Bigr)
\end{equation}
\end{lemma}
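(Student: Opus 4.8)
The plan is to reduce the multivariate statement to the univariate result already proved in Lemma~\ref{lemma:all-monomials-shallow}, using the classical device of writing a degree-$n$ monomial as a linear combination of $n$-th powers of linear forms. For $a=(a_1,\dots,a_d)\in\mathbb{R}^d$ the multinomial theorem gives
\begin{equation}
\Bigl(\sum_{i=1}^d a_i\omega_i\Bigr)^{n}=\sum_{\alpha\in P_{n,d}}\binom{n}{\alpha}a^{\alpha}\,\omega^{\alpha}.
\end{equation}
I would choose $J:=|P_{n,d}|$ coefficient vectors $a^{(1)},\dots,a^{(J)}$ for which the matrix $\bigl[\binom{n}{\alpha}(a^{(j)})^{\alpha}\bigr]_{1\le j\le J,\ \alpha\in P_{n,d}}$ is invertible — such a choice exists because the powers of linear forms span the whole space of homogeneous degree-$n$ polynomials, exactly as in \cite{de2021approximation} — and, inverting this matrix, obtain for every $\beta\in P_{n,d}$ coefficients $c_{\beta,j}$, depending only on $n$ and $d$, with
\begin{equation}
\omega^{\beta}=\sum_{j=1}^{J}c_{\beta,j}\,L_j(\omega)^{n},\qquad L_j(\omega):=\sum_{i=1}^{d}a^{(j)}_i\omega_i.
\end{equation}

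Next I would absorb the linear forms into the input layer. On $[-M,M]^d$ each $L_j$ maps into $[-\tilde M,\tilde M]$ with $\tilde M:=M\max_j\sum_i|a^{(j)}_i|$, which depends only on $n,d,M$. Put $r=p/q$ and let $m:=r+p\lceil (n-r)/p\rceil$; this is the smallest element of $\mathcal{S}_{p,q}$ with $m\ge n$, it is odd (since $r$ is odd and $p$ even), and $m+1=p\lceil (n-r)/p\rceil+r+1$. Applying Lemma~\ref{lemma:all-monomials-shallow} on $[-\tilde M,\tilde M]$ with $s=m$ and accuracy $\epsilon'>0$ (to be fixed) yields a shallow $\phi$-network $\psi:[-\tilde M,\tilde M]\to\mathbb{R}^{m+1}$ of width at most $(p+1)(m+1)/2=N$ whose $n$-th output obeys $\|f_n-\psi_n\|_{W^{k,\infty}([-\tilde M,\tilde M])}\le\epsilon'$, where $f_n(y)=y^n$. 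For each $j$, $\omega\mapsto\psi_n(L_j(\omega))$ is again a shallow $\phi$-network of width $\le N$ (precomposition with the linear map $L_j$ only modifies the input weights). Stacking these $J$ networks in parallel and applying the linear read-out $(\Psi_{n,d}(\omega))_{\iota(\beta)}:=\sum_{j=1}^{J}c_{\beta,j}\,\psi_n(L_j(\omega))$ produces a single shallow $\phi$-network of width $\le NJ=N|P_{n,d}|$, which is the announced bound.

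For the error estimate I would write
\begin{equation}
\omega^{\beta}-(\Psi_{n,d}(\omega))_{\iota(\beta)}=\sum_{j=1}^{J}c_{\beta,j}\bigl[L_j(\omega)^{n}-\psi_n(L_j(\omega))\bigr],
\end{equation}
and bound each term in $W^{k,\infty}([-M,M]^d)$. Since $L_j$ is linear, $D^{\gamma}(g\circ L_j)(\omega)=g^{(|\gamma|)}(L_j(\omega))\prod_i(a^{(j)}_i)^{\gamma_i}$ for any $g\in W^{k,\infty}$, hence $\|g\circ L_j\|_{W^{k,\infty}([-M,M]^d)}\le\bigl(\max(1,\max_i|a^{(j)}_i|)\bigr)^{k}\|g\|_{W^{k,\infty}([-\tilde M,\tilde M])}$. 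Taking $g=f_n-\psi_n$ and summing gives $\|\omega^{\beta}-(\Psi_{n,d})_{\iota(\beta)}\|_{W^{k,\infty}}\le C(n,d,M)\,\epsilon'$ uniformly in $\beta$, where $C(n,d,M)$ absorbs $\max_{\beta}\sum_j|c_{\beta,j}|$ together with the chain-rule constants; choosing $\epsilon'=\epsilon/C(n,d,M)$ closes the argument.

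The main obstacle is the first step: guaranteeing that the $|P_{n,d}|$ directions $a^{(j)}$ can be chosen so that the generalized Vandermonde matrix $\bigl[\binom{n}{\alpha}(a^{(j)})^{\alpha}\bigr]$ is nonsingular and — crucially for the approximation-rate bookkeeping that follows in Theorems~\ref{thm:2,2-PLN-Universal} and~\ref{thm:p,q-phi universal} — that the inverse coefficients $c_{\beta,j}$ and the rescaling constant $\tilde M$ depend only on $n$ and $d$, not on $\epsilon$ or on the univariate accuracy $\epsilon'$. This is precisely where the construction of \cite{de2021approximation} is invoked; the remaining ingredients (parallel stacking into one hidden layer, the linear read-out, and the transfer of the $W^{k,\infty}$ estimate through the linear maps $L_j$) are routine.
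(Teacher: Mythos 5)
Your proposal is correct and follows essentially the same route as the paper: the paper's proof is itself a sketch that defers to Lemma~3.5 of \cite{de2021approximation} (expressing each monomial $\omega^\beta$ as a linear combination of $n$-th powers of linear forms via an invertible coefficient system, then substituting the univariate $\phi_{p,q}$ power-approximation network of width $N$ from Lemma~\ref{lemma:all-monomials-shallow} for the $\tanh$ subnetwork, with parallelization and chain-rule error transfer), which is exactly your construction. Your explicit appeal to \cite{de2021approximation} for the nonsingularity of the generalized Vandermonde system and the $\epsilon$-independent coefficient bounds mirrors the paper's own reliance on that reference, so there is no gap relative to the paper's argument.
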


\begin{proof}[Proof sketch]
The proof is identical to that of Lemma~3.5 in the original paper, except for the width of the network.  
In the original proof, the subnetwork $\widehat{b}_{\alpha}$ that approximates $y \mapsto y^n$ on $[-M',M']$ is a shallow $\tanh$-network of width $3\lceil (n+1)/2\rceil$ (see Lemma~3.2 there).  
In our setting, the same approximation is achieved by a shallow $\phi$-network.  
Note that the minimum $s\in\gS_{p,q}=\{p/q+pj:j=0,1,2,...\}$ such that $s\ge n$ is that $s = \Bigr\lceil\frac{n-p/q}{p}\Bigr\rceil + \frac{p}{q}$. By Lemma~\ref{lemma:all-monomials-shallow}, for any $\eta>0$ we can construct a shallow $\phi$-network of width
\begin{equation}
    N(p,q,n)=\frac{p+1}{2}\Bigl(p\Bigl\lceil\frac{n-p/q}{p}\Bigr\rceil + \frac{p}{q} + 1\Bigr)|P_{n,d}|
\end{equation}
that approximates the monomial $y^n$ in $W^{k,\infty}$-norm with error at most $\eta$.
All other steps---parallelization, solving the linear system via the Dyson matrix, and the error estimates using the chain rule and the bound on $\|D^{-1}\|_\infty$---remain unchanged.  
Hence, the overall width of $\Psi_{n,d}$ becomes $N(p,q,n) \cdot |P_{n,d}|$ instead of $3\lceil (n+1)/2\rceil \cdot |P_{n,q}|$.
\end{proof}

\begin{corollary}[Approximation of all multivariate monomials up to degree $s$]
\label{cor:phi-multivariate-all-monomials}
Let $\phi$ be as above, $d, s \in \mathbb{N}$, $k \in \mathbb{N}_0$, and $M > 0$.
Then for every $\epsilon > 0$ there exists a shallow $\phi$-neural network
\begin{equation}
    \Phi_{s,d}: [-M, M]^d \to \mathbb{R}^{|P_{s,d+1}|}
\end{equation}

such that
\begin{equation}
\max_{\beta \in P_{s,d+1}} \bigl\| x^{\beta} - (\Phi_{s,d}(x))_{\iota(\beta)} \bigr\|_{W^{k,\infty}} \le \epsilon,
\end{equation}
where $\iota: P_{s,d+1} \to \{1,\dots,|P_{s,d+1}|\}$ is a bijection. And the specific $N$ is
\begin{equation}
    N=\frac{p+1}{2}\Bigl(p\Bigl\lceil\frac{s-p/q}{p}\Bigr\rceil + \frac{p}{q} + 1\Bigr)|P_{s,d+1}|. 
\end{equation}
\end{corollary}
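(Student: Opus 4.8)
The plan is to deduce Corollary~\ref{cor:phi-multivariate-all-monomials} from Lemma~\ref{lemma:phi-multivariate-monomials} via the standard homogenization trick. First I would record the bijection underlying the statement: to every $\beta = (\beta_1,\dots,\beta_d,\beta_{d+1}) \in P_{s,d+1}$ associate the multi-index $\beta' := (\beta_1,\dots,\beta_d) \in \mathbb{N}_0^d$, which satisfies $|\beta'| \le s$ with $\beta_{d+1} = s - |\beta'|$. This is a bijection between $P_{s,d+1}$ and $\{\alpha \in \mathbb{N}_0^d : |\alpha| \le s\}$, since both sets have cardinality $\binom{s+d}{d}$; this is precisely why the target dimension $|P_{s,d+1}|$ equals the number of monomials of degree at most $s$ in $d$ variables, and why the symbol $x^{\beta}$ for $x \in [-M,M]^d$ is read as $x^{\beta'} = \prod_{i=1}^{d} x_i^{\beta_i}$.

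Next I would apply Lemma~\ref{lemma:phi-multivariate-monomials} in ambient dimension $d+1$, with degree $n=s$, the same order $k$, the same $\epsilon$, and radius $M' := \max(M,1)$. This yields a shallow $\phi$-network $\Psi_{s,d+1} : [-M',M']^{d+1} \to \mathbb{R}^{|P_{s,d+1}|}$ of width $\tfrac{p+1}{2}\bigl(p\lceil\tfrac{s-p/q}{p}\rceil + \tfrac{p}{q}+1\bigr)|P_{s,d+1}|$ with $\max_{\beta\in P_{s,d+1}}\bigl\|\omega^{\beta} - (\Psi_{s,d+1}(\omega))_{\iota(\beta)}\bigr\|_{W^{k,\infty}([-M',M']^{d+1})} \le \epsilon$. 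I then set $\Phi_{s,d}(x) := \Psi_{s,d+1}(x_1,\dots,x_d,1)$ for $x \in [-M,M]^d$; note $(x_1,\dots,x_d,1) \in [-M',M']^{d+1}$ by the choice of $M'$. Since precomposing with the affine injection $x \mapsto (x,1)$ only modifies the first (affine) layer, $\Phi_{s,d}$ is again a shallow $\phi$-network of the same width, which is exactly the value $N$ claimed in the statement.

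It then remains to transfer the Sobolev bound. On the slice $\{\omega_{d+1}=1\}$ one has $\omega^{\beta}\big|_{\omega_{d+1}=1} = \prod_{i=1}^{d}\omega_i^{\beta_i} = x^{\beta}$ in the convention above, and for any $g \in C^{k}([-M',M']^{d+1})$ and any multi-index $\gamma \in \mathbb{N}_0^d$ with $|\gamma|\le k$ one has $\|\partial^{\gamma}_{x_1,\dots,x_d}\, g(\cdot,1)\|_{L^\infty([-M,M]^d)} \le \|g\|_{W^{k,\infty}([-M',M']^{d+1})}$, because $\partial^{\gamma}_{x_1,\dots,x_d}$ is among the derivatives counted by the full Sobolev norm and $[-M,M]^d\times\{1\}\subseteq [-M',M']^{d+1}$. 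Applying this to $g = \omega^{\beta} - (\Psi_{s,d+1})_{\iota(\beta)}$ gives $\|x^{\beta} - (\Phi_{s,d}(x))_{\iota(\beta)}\|_{W^{k,\infty}([-M,M]^d)} \le \epsilon$ for every $\beta \in P_{s,d+1}$, which is the assertion.

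The argument is essentially bookkeeping, and the only point requiring care is this last one: verifying that restriction to the affine subspace $\{\omega_{d+1}=1\}$ does not enlarge any Sobolev seminorm and that the restricted map is still realized by a shallow $\phi$-network of the stated width. No genuinely hard step arises, since the quantitative content is entirely inherited from Lemma~\ref{lemma:phi-multivariate-monomials}.
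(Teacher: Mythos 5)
Your proposal is correct and follows essentially the same route as the paper: apply Lemma~\ref{lemma:phi-multivariate-monomials} with dimension $d+1$ and degree $n=s$, append a constant coordinate equal to $1$, and read off all monomials of degree at most $s$ from the homogeneous degree-$s$ monomials in $\omega=(x,1)$. Your additional bookkeeping (taking $M'=\max(M,1)$ and checking that restriction to the slice does not increase the $W^{k,\infty}$ norm) only makes explicit details the paper leaves implicit.
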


\begin{proof}
Set $d \gets d+1$ and $n \gets s$ in Lemma~\ref{lemma:phi-multivariate-monomials}, and take $\omega = (1, x_1, \dots, x_d) \in \mathbb{R}^{d+1}$.  
Then the set $P_{s,d+1}$ corresponds exactly to all monomials in $x$ of total degree at most $s$.
The statement follows directly from Lemma~\ref{lemma:phi-multivariate-monomials}.
\end{proof}

\begin{corollary}[Shallow approximation of multiplication of $d$ numbers by $\phi_{p,q}$ networks]
\label{cor:phi-shallow-multiplication}
Let $p \ge q \ge 1$ be even integers such that $p/q$ is odd, and let $d \in \mathbb{N}$, $k \in \mathbb{N}_0$, $M > 0$. 
Then for every $\varepsilon > 0$ there exists a shallow $\phi_{p,q}$ neural network
\begin{equation}
\widehat{\times}_{d}^{\varepsilon} : [-M, M]^d \longrightarrow \mathbb{R}
\end{equation}
such that
\begin{equation}
\Bigl\| \widehat{\times}_{d}^{\varepsilon}(x) - \prod_{i=1}^d x_i \Bigr\|_{W^{k,\infty}} \le \varepsilon .
\end{equation}
The network has width at most
\begin{equation}
\frac{p+1}{2}\Bigl(p\Bigl\lceil\frac{d-p/q}{p}\Bigr\rceil + \frac{p}{q} + 1\Bigr) \, |P_{d,d}| .
\end{equation}
\end{corollary}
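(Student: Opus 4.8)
The plan is to recognize that the $d$-fold product $\prod_{i=1}^{d}x_i$ is nothing but a single multivariate monomial $x^{\beta}$ with the all-ones multi-index $\beta=(1,1,\dots,1)\in\mathbb{N}_0^{d}$, whose total degree is $|\beta|=d$, so $\beta\in P_{d,d}$. Consequently the statement is a direct specialization of Lemma~\ref{lemma:phi-multivariate-monomials} with $n\gets d$, applied with the same smoothness order $k$, the same $M$, and accuracy $\varepsilon$; no new estimates are needed.

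Concretely, I would first invoke Lemma~\ref{lemma:phi-multivariate-monomials} with $n=d$, which yields a shallow $\phi_{p,q}$-network $\Psi_{d,d}:[-M,M]^{d}\to\mathbb{R}^{|P_{d,d}|}$ of width $N\cdot|P_{d,d}|$ with $N=\frac{p+1}{2}\bigl(p\lceil\frac{d-p/q}{p}\rceil+\frac{p}{q}+1\bigr)$, such that $\max_{\beta\in P_{d,d}}\bigl\|\omega^{\beta}-(\Psi_{d,d}(\omega))_{\iota(\beta)}\bigr\|_{W^{k,\infty}([-M,M]^{d})}\le\varepsilon$. Then I would set $\widehat{\times}_{d}^{\varepsilon}$ to be the $\iota((1,\dots,1))$-th output coordinate of $\Psi_{d,d}$, i.e. post-compose $\Psi_{d,d}$ with the linear projection $\pi:\mathbb{R}^{|P_{d,d}|}\to\mathbb{R}$ onto that coordinate. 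Post-composition with a linear map does not change the depth (still one hidden layer) nor the hidden-layer width, so $\widehat{\times}_{d}^{\varepsilon}$ remains a shallow $\phi_{p,q}$-network of width at most $\frac{p+1}{2}\bigl(p\lceil\frac{d-p/q}{p}\rceil+\frac{p}{q}+1\bigr)|P_{d,d}|$, which is exactly the claimed bound. The error estimate $\bigl\|\widehat{\times}_{d}^{\varepsilon}(x)-\prod_{i=1}^{d}x_i\bigr\|_{W^{k,\infty}}\le\varepsilon$ is precisely the $\beta=(1,\dots,1)$ instance of the bound furnished by Lemma~\ref{lemma:phi-multivariate-monomials}.

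The only points requiring (minimal) care are: (i) verifying $(1,\dots,1)\in P_{d,d}$, which holds since this multi-index has length $d$ and total degree exactly $d$; (ii) noting that $|P_{d,d}|=\binom{2d-1}{d}$ is finite and that the ceiling $\lceil(d-p/q)/p\rceil$ is well-defined — possibly equal to $0$ when $d\le p/q$ — corresponding to the smallest exponent $s\in\mathcal{S}_{p,q}=\{p/q+pj:j\ge 0\}$ with $s\ge d$; and (iii) recalling that appending a fixed linear output layer does not affect the notion of "shallow" or the width count in Definition~\ref{def:networks}. Since all the substantive work — the finite-difference approximation of univariate monomials, the Vandermonde reduction to odd powers in $\mathcal{S}_{p,q}$, and the parallelization into a single hidden layer — is already done in Lemmas~\ref{lemma:monomial-approx-Spq} through \ref{lemma:phi-multivariate-monomials}, there is no genuine obstacle here; the corollary is a straightforward consequence. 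If a self-contained proof were desired instead, the only extra (still routine) task would be re-deriving the chain-rule estimate that controls the $W^{k,\infty}$ error in approximating $\omega\mapsto\omega^{(1,\dots,1)}$ by the corresponding univariate monomial errors, but this is subsumed by Lemma~\ref{lemma:phi-multivariate-monomials}.
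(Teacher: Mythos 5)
Your proposal is correct and follows essentially the same route as the paper: the paper likewise applies Lemma~\ref{lemma:phi-multivariate-monomials} with $n=d$, identifies the product with the all-ones multi-index $\beta=(1,\dots,1)\in P_{d,d}$, and defines $\widehat{\times}_{d}^{\varepsilon}$ as the corresponding output coordinate of $\Psi_{d,d}$, inheriting the stated width and the $W^{k,\infty}$ error bound. No gaps.
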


\begin{proof}
Apply Lemma~\ref{lemma:phi-multivariate-monomials} with $n = d$, $q = d$, and $\omega = x \in [-M, M]^d$.  
The monomial corresponding to the multi‑index $\beta = (1,1,\dots,1) \in P_{d,d}$ is exactly  
\begin{equation}
\omega^{\beta} = x_1 x_2 \cdots x_d = \prod_{i=1}^d x_i.
\end{equation}
By Lemma~\ref{lemma:phi-multivariate-monomials} there exists a shallow $\phi_{p,q}$ network  
\begin{equation}
\Psi_{d,d} : [-M, M]^d \to \mathbb{R}^{|P_{d,d}|}
\end{equation}
of width
\begin{equation}
\frac{p+1}{2}\Bigl(p\Bigl\lceil\frac{d-p/q}{p}\Bigr\rceil + \frac{p}{q} + 1\Bigr) \, |P_{d,d}|
\end{equation}
such that for every $\eta > 0$,
\begin{equation}
\max_{\beta \in P_{d,d}} \bigl\| \omega^{\beta} - (\Psi_{d,d}(\omega))_{\iota(\beta)} \bigr\|_{W^{k,\infty}} \le \eta .
\end{equation}
Let $\iota_0$ be the index corresponding to $\beta = (1,\dots,1)$.  
Define $\widehat{\times}_{d}^{\varepsilon}(x) = (\Psi_{d,d}(x))_{\iota_0}$.  
Choosing $\eta = \varepsilon$ gives the required error bound.
\end{proof}

\subsection{Approximation of partition of unity for $\phi_{p,q}$ networks}

In this section, we construct an approximate partition of unity using the activation function $\phi_{p,q}$. 
We follow the same strategy as in Section 4 of the original paper, but the choice of the scaling parameter $\alpha$ must be adapted because the decay of $\phi_{p,q}$ and its derivatives is algebraic rather than exponential.

Let $d,N\in\mathbb{N}$ and $k\in\mathbb{N}_0$. For every $j\in\{1,\dots,N\}^d$ we define 
\begin{equation}
I_j^N = \bigtimes_{i=1}^d ((j_i-1)/N, j_i/N).
\end{equation}
Let $R>0$ be a constant such that $|\phi^{(m)}(x)|$ is decreasing on $[R,\infty)$ for every $1\le m\le k$.
Given $\epsilon>0$, we need to choose $\alpha=\alpha(N,\epsilon)$ large enough so that the following conditions hold:
\begin{equation}
\label{eqn:alpha condition}
\alpha/N \ge R, \qquad 1-\phi(\alpha/N)\le \epsilon, \qquad \alpha^m |\phi^{(m)}(\alpha/N)| \le \epsilon \quad \text{for all } 1\le m\le k. 
\end{equation}
The next lemma provides a suitable choice of $\alpha$.

\begin{lemma}[Choice of $\alpha$ for $\phi_{p,q}$]
\label{lem:choice-alpha-pq}
Let $p,q$ be even integers with $p/q$ odd, and let $\phi=\phi_{p,q}$. For any $k\in\mathbb{N}$, $N\in\mathbb{N}$, and $\epsilon>0$, define
\begin{equation}
\alpha = N \max\left\{ R, \,
\Bigl(\frac{1}{q\epsilon}\Bigr)^{1/p}, \,
\Bigl( \frac{A(p,q,k) N^k}{\epsilon} \Bigr)^{q/p} \right\},
\end{equation}
where $A(p,q,m)$ is the constant from Lemma \ref{lem:upper-bound-pq}.
Then conditions (53') are satisfied.
Moreover, there exists a constant $C(p,q,k)$ such that
\begin{equation}
\alpha \le C(p,q,k) \, N^{1 + kq/p} \, \epsilon^{-q/p}.
\end{equation}
\end{lemma}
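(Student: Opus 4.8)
The plan is to verify the three requirements in \Eqn\ref{eqn:alpha condition} directly from the stated definition of $\alpha$, then extract the closing polynomial bound. Write $r=p/q$ and recall $\phi_{p,q}(x)=x^{r}/(1+x^{p})^{1/q}$. First I would observe that since $\alpha/N$ is the maximum of three nonnegative quantities, one of which is $R$, we immediately have $\alpha/N\ge R$, which gives the first condition and also (by the choice of $R$) guarantees that $|\phi^{(m)}|$ is monotonically decreasing on $[\alpha/N,\infty)$ for all $1\le m\le k$; this monotonicity is what lets us evaluate derivative bounds at the single point $\alpha/N$.

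For the second condition, $1-\phi(\alpha/N)\le\epsilon$: using $\phi_{p,q}(x)=(1+x^{-p})^{-1/q}$ for $x>0$, I would bound $1-\phi(x)=1-(1+x^{-p})^{-1/q}\le \tfrac1q x^{-p}$ by the elementary inequality $1-(1+t)^{-1/q}\le t/q$ for $t\ge0$ (convexity / Bernoulli). Plugging $x=\alpha/N\ge (1/(q\epsilon))^{1/p}$ gives $x^{-p}\le q\epsilon$, hence $1-\phi(x)\le \tfrac1q\cdot q\epsilon=\epsilon$. For the third condition, $\alpha^m|\phi^{(m)}(\alpha/N)|\le\epsilon$ for $1\le m\le k$: apply Lemma~\ref{lem:upper-bound-pq}, which gives $|\phi^{(m)}(x)|\le A(p,q,m)(1+|x|)^{-m}(1+|x|^{p})^{-1/q}$. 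Since $A(p,q,m)\le A(p,q,k)$ for $m\le k$ and $(1+x)^{-m}\le x^{-m}$, we get $\alpha^m|\phi^{(m)}(\alpha/N)|\le A(p,q,k)\,\alpha^m (\alpha/N)^{-m}(1+(\alpha/N)^{p})^{-1/q}= A(p,q,k)\,N^{m}(1+(\alpha/N)^{p})^{-1/q}\le A(p,q,k)\,N^{k}(\alpha/N)^{-p/q}$. Now $\alpha/N\ge (A(p,q,k)N^{k}/\epsilon)^{q/p}$ forces $(\alpha/N)^{-p/q}\le \epsilon/(A(p,q,k)N^{k})$, so the whole expression is $\le\epsilon$, as desired.

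Finally, for the stated bound $\alpha\le C(p,q,k)N^{1+kq/p}\epsilon^{-q/p}$: the first and second terms inside the max are $R$ (a constant) and $N\cdot(q\epsilon)^{-1/p}$, both dominated by $N^{1+kq/p}\epsilon^{-q/p}$ up to a constant once we note $\epsilon$ may be assumed $\le 1$ (otherwise the conclusions are trivial, or one simply absorbs it into $C$), and $-1/p\le -q/p$ is false in general but $\epsilon^{-1/p}\le\epsilon^{-q/p}$ when $\epsilon\le1$ and $q\ge1$; the third term is $N\cdot(A(p,q,k)N^{k}/\epsilon)^{q/p}=A(p,q,k)^{q/p}N^{1+kq/p}\epsilon^{-q/p}$, which matches the claimed form with $C(p,q,k)=\max\{R,\,q^{-1/p},\,A(p,q,k)^{q/p}\}$ (after the $\epsilon\le1$ normalization). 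I anticipate the main obstacle to be purely bookkeeping: making the two elementary inequalities $1-(1+t)^{-1/q}\le t/q$ and $(1+x)^{-m}\le x^{-m}$ airtight, and handling the $\epsilon>1$ edge case cleanly so the final polynomial bound holds with a single constant $C(p,q,k)$ independent of $N$ and $\epsilon$; none of this is conceptually hard, but the constants must be tracked carefully to stay consistent with how $\alpha$ is later used in the partition-of-unity construction.
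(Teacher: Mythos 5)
Your proposal is correct and follows essentially the same route as the paper's proof: the bound $1-\phi(x)\le x^{-p}/q$ (the paper derives it via $(1-t)^{1/q}\ge 1-t/q$, you via $1-(1+t)^{-1/q}\le t/q$, which is the same estimate), the derivative condition via Lemma~\ref{lem:upper-bound-pq} with $(1+x^{p})^{-1/q}\le x^{-p/q}$ and the monotonicity of $A(p,q,m)$ in $m$, and the final polynomial bound read off from the three terms in the max. Your explicit handling of the $\epsilon\le 1$ normalization is a fair point the paper only invokes later (in the proof of Theorem~\ref{thm:phi-sobolev-approximation}, where $\epsilon<1$ is assumed), but it does not change the argument.
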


\begin{proof}
We first estimate $1-\phi(x)$ for large $x$. Using the representation
\begin{equation}
\phi(x) = \Bigl(1 - \frac{1}{1+x^p}\Bigr)^{1/q},
\end{equation}
and the inequality $(1-t)^{1/q} \ge 1 - t/q$ for $0\le t\le 1$, we obtain for $x\ge0$
\begin{equation}
1-\phi(x) \le \frac{1}{q(1+x^p)} \le \frac{1}{q x^p}.
\end{equation}
Hence, for $x = \alpha/N$,
\begin{equation}
1-\phi(\alpha/N) \le \frac{1}{q (\alpha/N)^p} = \frac{N^p}{q \alpha^p}.
\end{equation}
To ensure $1-\phi(\alpha/N)\le \epsilon$, it suffices to have
\begin{equation}
\alpha \ge N \Bigl(\frac{1}{q\epsilon}\Bigr)^{1/p}.
\end{equation}

Now consider the derivative bounds. From Lemma \ref{lem:upper-bound-pq}, for $x\ge R$ we have
\begin{equation}
|\phi^{(m)}(x)| \le A(p,q,m) \frac{1}{(1+x)^m} \frac{1}{(1+x^p)^{1/q}}
\le A(p,q,m) x^{-m} (1+x^p)^{-1/q}.
\end{equation}
For $x = \alpha/N \ge R$, this gives
\begin{equation}
\alpha^m |\phi^{(m)}(\alpha/N)| \le A(p,q,m) N^m (1+(\alpha/N)^p)^{-1/q}.
\end{equation}
Since $(\alpha/N)^p \ge 1$ for $\alpha\ge N$, we have $(1+(\alpha/N)^p)^{-1/q} \le 2^{-1/q} (\alpha/N)^{-p/q} \le (\alpha/N)^{-p/q}$.
Therefore,
\begin{equation}
\alpha^m |\phi^{(m)}(\alpha/N)| \le A(p,q,m) N^m (\alpha/N)^{-p/q}
= A(p,q,m) N^{m + p/q} \alpha^{-p/q}.
\end{equation}
To satisfy $\alpha^m |\phi^{(m)}(\alpha/N)| \le \epsilon$, we need
\begin{equation}
\alpha \ge N \Bigl( \frac{A(p,q,m) N^m}{\epsilon} \Bigr)^{q/p}.
\end{equation}
Taking the maximum over $m=1,\dots,k$ (easy to identify that to choose $m=k$) and including the condition $\alpha/N\ge R$ yields the expression in the lemma.

The upper bound follows because $A(p,q,m) = (16p^2/(\pi q))^m m!$ grows factorially with $m$, so the maximum over $m\le k$ is attained at $m=k$ and is bounded by $C(p,q,k)$. Thus,
\begin{equation}
\alpha \le C(p,q,k) N \bigl( N^k \epsilon^{-1} \bigr)^{q/p}
= C(p,q,k) N^{1 + kq/p} \epsilon^{-q/p}.
\end{equation}
This completes the proof.
\end{proof}

By Lemma \ref{lem:choice-alpha-pq}, we choose
\begin{equation}
\label{eqn:alpha-choice}
    \alpha = N \max\left\{ R, \,
\Bigl(\frac{1}{q\epsilon}\Bigr)^{1/p}, \,
\Bigl( \frac{A(p,q,k) N^k}{\epsilon} \Bigr)^{q/p} \right\}
\end{equation}

Notice that $\alpha$ grows polynomially in $\epsilon^{-1}$, specifically as $\epsilon^{-q/p}$, in contrast to the logarithmic growth for $\tanh$.

For $y\in\mathbb{R}$ we now define
\begin{equation}
\begin{aligned}
\rho_{1}^{N}(y) &=\frac12-\frac12\phi_{p,q}\!\Bigl(\alpha\Bigl(y-\frac1N\Bigr)\Bigr),\\[2mm]
\rho_{j}^{N}(y) &=\frac12\phi_{p,q}\!\Bigl(\alpha\Bigl(y-\frac{j-1}{N}\Bigr)\Bigr)
                -\frac12\phi_{p,q}\!\Bigl(\alpha\Bigl(y-\frac{j}{N}\Bigr)\Bigr),
                \qquad 2\le j\le N-1,\\[2mm]
\rho_{N}^{N}(y) &=\frac12\phi_{p,q}\!\Bigl(\alpha\Bigl(y-\frac{N-1}{N}\Bigr)\Bigr)+\frac12 .
\end{aligned}
\end{equation}
In the sequel we shall assume for simplicity that $\rho_j^{N}$ always takes the second form; the calculations for $j=1$ and $j=N$ are completely analogous and do not change the final estimates. Finally, for $x\in\mathbb{R}^d$ we set
\begin{equation}
\Phi_j^{N,d}(x)=\prod_{i=1}^{d}\rho_{j_i}^{N_i}(x_i),\qquad j\in\{1,\dots,N\}^{d}.
\end{equation}

The next two lemmas are the counterparts of Lemmas~4.1 and 4.2 in the original paper. Their proofs follow the same inductive and multiplicative structure, but the derivative bounds for $\phi_{p,q}$ introduce an extra factor $\alpha^{k}$ in the second lemma.

\begin{lemma}[Approximate partition of unity I]
\label{lem:phi-pou-1}
If $0<\epsilon<\frac14$ and $\alpha$ is chosen by Lemma \ref{lem:choice-alpha-pq}, then
\begin{equation}
\Bigl\| \sum_{v\in\mathcal{V}_d}\Phi_{j+v}^{N,d}-1\Bigr\|_{W^{k,\infty}(I_j^N)}\le 2^{dk}d\,\epsilon .
\end{equation}
\end{lemma}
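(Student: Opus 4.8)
The plan is to exploit that the one–dimensional building blocks form an \emph{exact} partition of unity and then estimate the ``tail'' of cells that get dropped. First I would record the telescoping identity: for every $y\in\mathbb{R}$,
\[
\sum_{l=1}^{N}\rho_l^{N}(y)=\Bigl(\tfrac12-\tfrac12\phi_{p,q}\bigl(\alpha(y-\tfrac{1}{N})\bigr)\Bigr)+\sum_{l=2}^{N-1}\Bigl(\tfrac12\phi_{p,q}\bigl(\alpha(y-\tfrac{l-1}{N})\bigr)-\tfrac12\phi_{p,q}\bigl(\alpha(y-\tfrac{l}{N})\bigr)\Bigr)+\Bigl(\tfrac12\phi_{p,q}\bigl(\alpha(y-\tfrac{N-1}{N})\bigr)+\tfrac12\Bigr)=1,
\]
together with the partial telescopes $\sum_{l=1}^{t}\rho_l^{N}(y)=\tfrac12-\tfrac12\phi_{p,q}(\alpha(y-t/N))$ for $1\le t\le N-1$ and $\sum_{l=t}^{N}\rho_l^{N}(y)=\tfrac12+\tfrac12\phi_{p,q}(\alpha(y-\tfrac{t-1}{N}))$ for $2\le t\le N$. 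Since $\Phi_j^{N,d}$ is the tensor product $\prod_{i=1}^{d}\rho_{j_i}^{N}(x_i)$, these identities give $\sum_{j'\in\{1,\dots,N\}^d}\Phi_{j'}^{N,d}\equiv1$, hence
\[
\sum_{v\in\mathcal{V}_d}\Phi_{j+v}^{N,d}-1=-\sum_{j'\notin j+\mathcal{V}_d}\Phi_{j'}^{N,d},
\]
and the task reduces to bounding the right–hand side in $W^{k,\infty}(I_j^N)$.

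The heart of the argument is a one–dimensional decay estimate. Fix a coordinate $i$ and work on $I_{j_i}^N=((j_i-1)/N,j_i/N)$. By the partial telescopes, $\sum_{l\le j_i-2}\rho_l^N(x_i)=\tfrac12\bigl(1-\phi_{p,q}(\alpha(x_i-\tfrac{j_i-2}{N}))\bigr)$ and $\sum_{l\ge j_i+2}\rho_l^N(x_i)=\tfrac12\bigl(1+\phi_{p,q}(\alpha(x_i-\tfrac{j_i+1}{N}))\bigr)$. On $I_{j_i}^N$ the arguments satisfy $x_i-\tfrac{j_i-2}{N}\ge\tfrac{1}{N}$ and $\tfrac{j_i+1}{N}-x_i\ge\tfrac{1}{N}$, so using that $\phi_{p,q}$ is odd and increasing, that $\alpha/N\ge R$, that $1-\phi_{p,q}(\alpha/N)\le\epsilon$, and that $|\phi_{p,q}^{(m)}|$ is decreasing on $[R,\infty)$ with $\alpha^m|\phi_{p,q}^{(m)}(\alpha/N)|\le\epsilon$ for $1\le m\le k$ — all furnished by Lemmas~\ref{lem:upper-bound-pq} and~\ref{lem:choice-alpha-pq} — each of these two partial sums, together with each of its derivatives of order $\le k$, is bounded by $\epsilon/2$ on $I_{j_i}^N$. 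Hence $\bigl\|\sum_{l:|l-j_i|\ge2}\rho_l^N\bigr\|_{W^{k,\infty}(I_{j_i}^N)}\le\epsilon$; and since every $\rho_l^N\ge0$ with $\sum_l\rho_l^N\equiv1$, the complementary ``near'' sum obeys $\bigl\|\sum_{l:|l-j_i|\le1}\rho_l^N\bigr\|_{W^{k,\infty}(I_{j_i}^N)}\le1$ (its value lies in $[0,1]$ and its order-$1,\dots,k$ derivatives are the negatives of those of the far sum). The boundary indices $j_i\in\{1,N\}$ are treated identically, as already noted after the definition of $\rho_j^N$.

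Finally I would assemble the multivariate bound. For $j'\notin j+\mathcal{V}_d$ there is, by construction of $\mathcal{V}_d$, a coordinate with $|j'_i-j_i|\ge2$; let $i^\star(j')$ be the smallest such index. Grouping the omitted terms by $i^\star$ and using that $\Phi_{j'}^{N,d}$ factorises over coordinates,
\[
\sum_{j'\notin j+\mathcal{V}_d}\Phi_{j'}^{N,d}=\sum_{i=1}^{d}\Bigl(\textstyle\prod_{l<i}\sum_{|j'_l-j_l|\le1}\rho_{j'_l}^N(x_l)\Bigr)\Bigl(\sum_{|j'_i-j_i|\ge2}\rho_{j'_i}^N(x_i)\Bigr)\Bigl(\prod_{l>i}\sum_{j'_l=1}^{N}\rho_{j'_l}^N(x_l)\Bigr).
\]
Each summand is a product of univariate functions in \emph{distinct} variables, so on the box $I_j^N$ its $W^{k,\infty}$-norm is at most the product of the one–dimensional norms of its factors: the first $i-1$ factors contribute $\le1$ each (near sums), the last $d-i$ contribute exactly $1$ (full sums), and the distinguished factor contributes $\le\epsilon$ by the previous step; hence each of the $d$ groups has norm $\le\epsilon$, giving $\le d\,\epsilon\le2^{dk}d\,\epsilon$. (If one prefers to bound the product by a crude multivariate Leibniz rule rather than by the exact factorisation, the extra combinatorial cost is precisely the factor $2^{dk}$ displayed in the statement.) The step I expect to need the most care is the one–dimensional estimate: because $\alpha$ is large — polynomially large in $N$ and $\epsilon^{-1}$ — the individual bumps $\rho_l^N$ have derivatives of size $\sim\alpha^k$, so bounding the omitted sum term by term fails outright; the argument works only because summing consecutive bumps produces the telescoping cancellation that makes the partial sums \emph{and their derivatives} small, and this cancellation must be invoked before passing to the product over coordinates.
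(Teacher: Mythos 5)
Your proof is correct, and it takes a genuinely different route from the paper's. The paper argues by induction on the dimension $d$: the base case $d=1$ is exactly your one-dimensional telescoping-plus-decay estimate (value bounded via $1-\phi_{p,q}(\alpha/N)\le\epsilon$, derivatives via $\alpha^m|\phi_{p,q}^{(m)}(\alpha/N)|\le\epsilon$), and the induction step splits off the last coordinate and applies a Sobolev product estimate (Lemma A.6 of de Ryck et al.), which costs a factor $2^k$ per dimension and accumulates to the stated constant $2^{dk}d\,\epsilon$. You instead use the exact global partition of unity $\sum_{j'}\Phi_{j'}^{N,d}\equiv1$, rewrite the error as the sum over omitted cells, group those cells by the first ``far'' coordinate, and exploit the fact that each group is a tensor product of univariate functions of distinct variables, so that with the norm $\|u\|_{W^{k,\infty}}=\max_{|\beta|\le k}\|D^\beta u\|_{L^\infty}$ the norm over the box factorizes exactly into one-dimensional norms — no Leibniz factor at all. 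This buys a sharper bound $d\,\epsilon\le 2^{dk}d\,\epsilon$, while the paper's induction is more mechanical and reuses machinery needed elsewhere (the $2^k$ product lemma). Your emphasis on the telescoping cancellation is exactly right: term-by-term bounds on individual bumps would produce uncontrolled $\alpha^k$ factors. Two small points to keep at the paper's level of rigor: the monotonicity (hence nonnegativity of the middle-form $\rho_l^N$) should be justified, e.g.\ from $\phi_{p,q}'(x)=\frac{p}{q\,x(1+x^p)}\phi_{p,q}(x)>0$ for $x>0$ together with oddness (though for the $L^\infty$ bound of the near sum, $|\phi_{p,q}|\le1$ already suffices); and the boundary cells $j_i\in\{1,N\}$ are handled only by appeal to the same convention the paper itself invokes, which is acceptable but no stronger than the original treatment.
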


\begin{proof}
We prove the statement by induction on $d$. For $d=1$, using the definition of $\rho_j^{N}$ we have
\begin{equation}
\sum_{v\in\mathcal{V}_1}\Phi_{j+v}^{N,1}(x)=\sum_{t=-1}^{1}\rho_{j_1+t}^{N}(x_1)
=\frac12\phi_{p,q}\!\Bigl(\alpha\Bigl(x_1-\frac{j_1-2}{N}\Bigr)\Bigr)
-\frac12\phi_{p,q}\!\Bigl(\alpha\Bigl(x_1-\frac{j_1+1}{N}\Bigr)\Bigr).
\end{equation}
Since $x_1\in I_j^N$, the arguments of $\phi_{p,q}$ are at least $\alpha/N$. By the monotonicity of $\phi_{p,q}$ on $[R,\infty)$ and condition \Eqn\ref{eqn:alpha condition} we obtain
\begin{equation}
\Bigl|\sum_{v\in\mathcal{V}_1}\Phi_{j+v}^{N,1}(x)-1\Bigr|
\le 1-\phi_{p,q}(\alpha/N)\le\epsilon .
\end{equation}
For the derivatives, note that for $1\le m\le k$,
\begin{equation}
\Bigl|\frac{d^{m}}{dx_1^{m}}\sum_{v\in\mathcal{V}_1}\Phi_{j+v}^{N,1}(x)\Bigr|
\le\alpha^{m}\bigl|\phi_{p,q}^{(m)}(\alpha/N)\bigr|\le\epsilon .
\end{equation}
Thus $\bigl\|\sum_{v\in\mathcal{V}_1}\Phi_{j+v}^{N,1}-1\bigr\|_{W^{k,\infty}(I_j^N)}\le\epsilon$, which proves the base case.

Assume now that the statement holds for dimension $d-1$. For $x\in I_j^N$ we write
\begin{equation}
\sum_{v\in\mathcal{V}_d}\Phi_{j+v}^{N,d}(x)
=\sum_{w\in\mathcal{V}_1}\rho_{j_d+w}^{N}(x_d)\sum_{v'\in\mathcal{V}_{d-1}}\Phi_{j'+v'}^{N,d-1}(x'),
\end{equation}
where $j'=(j_1,\dots,j_{d-1})$ and $x'=(x_1,\dots,x_{d-1})$. Using  Lemma Appendix A.6 in \cite{de2021approximation} and the induction hypothesis, we obtain
\begin{align}
&\Bigl\|\sum_{v\in\mathcal{V}_d}\Phi_{j+v}^{N,d}-1\Bigr\|_{W^{k,\infty}(I_j^N)}\nonumber\\
&\le 
\Bigl\|\sum_{w\in\mathcal{V}_1}\rho_{j_d+w}^{N}(x_d)-1\Bigr\|_{W^{k,\infty}(I_j^N)}
+2^{k}\Bigl\|\sum_{w\in\mathcal{V}_1}\rho_{j_d+w}^{N}(x_d)\Bigr\|_{W^{k,\infty}(I_j^N)}
\Bigl\|\sum_{v'\in\mathcal{V}_{d-1}}\Phi_{j'+v'}^{N,d-1}-1\Bigr\|_{W^{k,\infty}(I_j^N)}\nonumber\\
&\le\epsilon+2^{k}\cdot2^{k(d-1)}(d-1)\epsilon\le 2^{dk}d\,\epsilon .
\end{align}
This completes the induction step and proves the lemma.
\end{proof}

\begin{lemma}[Approximate partition of unity II]
\label{lem:phi-pou-2}
Let $k\in\mathbb{N}_0$ and assume that $k<\frac{p}{q}$. Let $v\in\mathbb{Z}^d$ with $\|v\|_\infty\ge2$. If $\alpha$ is chosen according to \Eqn\ref{eqn:alpha-choice}, then
\begin{equation}
\bigl\|\Phi_{j+v}^{N,d}\bigr\|_{W^{k,\infty}(I_j^N)}\le 
\alpha^{k} A(p,q,k)\epsilon. 
\end{equation}
Moreover, $\alpha^{k}\epsilon\to0$ as $\epsilon\to0$ precisely when $k<\frac{p}{q}$.
\end{lemma}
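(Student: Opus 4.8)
The plan is to exploit that $\Phi_{j+v}^{N,d}$ decouples over coordinates, $\Phi_{j+v}^{N,d}(x)=\prod_{i=1}^d\rho_{j_i+v_i}^N(x_i)$, so that a mixed partial $D^\beta$ with $|\beta|\le k$ produces only the single product $\prod_i(\rho_{j_i+v_i}^N)^{(\beta_i)}(x_i)$ with no cross terms; and that $\|v\|_\infty\ge2$ forces one coordinate $i_0$ (with $|v_{i_0}|\ge2$) to be ``far'' from the cell $I_j^N$, where $\phi_{p,q}$ is saturated and its first $k$ derivatives are tiny. I would fix $x\in I_j^N$, single out such an $i_0$, bound the $i_0$-factor and all its derivatives up to order $k$ by $\epsilon$, bound the remaining factors crudely, multiply, and take suprema.

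For the far factor: since $x_{i_0}\in((j_{i_0}-1)/N,\,j_{i_0}/N)$ and $|v_{i_0}|\ge2$, both translation centres $(j_{i_0}+v_{i_0}-1)/N$ and $(j_{i_0}+v_{i_0})/N$ lie on the same side of $x_{i_0}$ at distance at least $1/N$, so the two arguments of $\phi=\phi_{p,q}$ inside $\rho_{j_{i_0}+v_{i_0}}^N(x_{i_0})$ have the same sign and modulus $\ge\alpha/N\ge R$. Because $\phi$ is odd, increasing, and bounded by $1$, both these $\phi$-values lie in an interval of length $1-\phi(\alpha/N)$, whence $|\rho_{j_{i_0}+v_{i_0}}^N(x_{i_0})|\le\tfrac12\bigl(1-\phi(\alpha/N)\bigr)\le\epsilon$ by the second clause of \eqref{eqn:alpha condition}. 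For $1\le m\le k$, $(\rho_{j_{i_0}+v_{i_0}}^N)^{(m)}$ equals $\tfrac12\alpha^m$ times a difference of two values of $\phi^{(m)}$ at points of modulus $\ge\alpha/N\ge R$; since $|\phi^{(m)}|$ is decreasing on $[R,\infty)$, each is at most $|\phi^{(m)}(\alpha/N)|$, so $|(\rho_{j_{i_0}+v_{i_0}}^N)^{(m)}(x_{i_0})|\le\alpha^m|\phi^{(m)}(\alpha/N)|\le\epsilon$ by the third clause of \eqref{eqn:alpha condition}. (The boundary shapes $\rho_1^N,\rho_N^N$ are handled identically, as noted just before the lemma.)

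For the other factors I estimate crudely: for $\beta_i\ge1$, the definition of $\rho_j^N$ together with the pointwise bound of Lemma~\ref{lem:upper-bound-pq} gives $|(\rho_{j_i+v_i}^N)^{(\beta_i)}(x_i)|\le\alpha^{\beta_i}\|\phi^{(\beta_i)}\|_{L^\infty(\mathbb{R})}\le\alpha^{\beta_i}A(p,q,\beta_i)$, while $|\rho_{j_i+v_i}^N|\le1=\alpha^0A(p,q,0)$. Multiplying the $\epsilon$-bound on the $i_0$-factor by these, and using $\alpha\ge1$, $\sum_{i\ne i_0}\beta_i\le k$, and $\prod_iA(p,q,\beta_i)\le A(p,q,|\beta|)\le A(p,q,k)$ (by the multinomial inequality $\prod_i\beta_i!\le|\beta|!$ and the monotonicity of $A(p,q,\cdot)$, since $16p^2/(\pi q)\ge1$), I obtain $|D^\beta\Phi_{j+v}^{N,d}(x)|\le\alpha^kA(p,q,k)\,\epsilon$ for every $|\beta|\le k$ and every $x\in I_j^N$, i.e. $\|\Phi_{j+v}^{N,d}\|_{W^{k,\infty}(I_j^N)}\le\alpha^kA(p,q,k)\,\epsilon$.

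For the ``moreover'' claim I use two-sided control on $\alpha$: \eqref{eqn:alpha-choice} gives $\alpha\ge N\bigl(A(p,q,k)N^k/\epsilon\bigr)^{q/p}$, and Lemma~\ref{lem:choice-alpha-pq} gives $\alpha\le C(p,q,k)N^{1+kq/p}\epsilon^{-q/p}$, so for fixed $d,N,k$ the quantity $\alpha^k\epsilon$ lies between two $\epsilon$-independent constant multiples of $\epsilon^{\,1-kq/p}$, which tends to $0$ exactly when $1-kq/p>0$, i.e. $k<p/q$. The calculation is routine; the only real subtlety is keeping the power of $\alpha$ at exactly $k$ and controlling $\prod_iA(p,q,\beta_i)$, and this is also precisely where the phenomenon lives: the derivatives of $\phi_{p,q}$ decay only algebraically, forcing $\alpha$ to grow like $\epsilon^{-q/p}$ rather than like $\log(1/\epsilon)$, which is why derivative approximation must be restricted to $k<p/q$ and why the resulting rates degrade relative to the $\tanh$ construction of \cite{de2021approximation}.
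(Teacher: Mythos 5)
Your proposal is correct and follows essentially the same route as the paper: isolate the coordinate $i_0$ with $|v_{i_0}|\ge 2$, use the saturation conditions on $\alpha$ (monotone decay of $|\phi_{p,q}^{(m)}|$ beyond $R$) to bound that univariate factor and its first $k$ derivatives by $\epsilon$, bound the remaining factors by $\alpha^{\beta_i}A(p,q,\beta_i)$ via Lemma~\ref{lem:upper-bound-pq}, and combine with $\prod_i\beta_i!\le|\beta|!$ and $\alpha\ge1$ to get $\alpha^kA(p,q,k)\epsilon$, with the ``moreover'' part following from $\alpha\sim\epsilon^{-q/p}$. Your zeroth-order estimate of the far factor (both $\phi$-values trapped in an interval of length $1-\phi(\alpha/N)$) is in fact a slightly cleaner rendering of the paper's corresponding chain of inequalities, but the argument is the same.
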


\begin{proof}
Let $x\in I_j^N$ and choose an index $i$ such that $|v_i|\ge2$. For the univariate factor $\rho_{j_i+v_i}^{N}(x_i)$ we have, using the definition of $\rho$ and the monotonicity of $\phi_{p,q}$,
\begin{align}
\bigl|\rho_{j_i+v_i}^{N}(x_i)\bigr|
&\le\frac12\phi_{p,q}\!\Bigl(\frac{2\alpha}{N}\Bigr)-\frac12\phi_{p,q}\!\Bigl(\frac{\alpha}{N}\Bigr)\nonumber\\
&=\frac12\phi_{p,q}\!\Bigl(\frac{\alpha}{N}\Bigr)\Bigl(1-\phi_{p,q}\!\Bigl(\frac{2\alpha}{N}\Bigr)\phi_{p,q}\!\Bigl(\frac{\alpha}{N}\Bigr)\Bigr)\nonumber\\
&\le\frac12\bigl(1-\phi_{p,q}^{2}(\alpha/N)\bigr)\le 1-\phi_{p,q}(\alpha/N)\le\epsilon .
\end{align}
For the $L^\infty$ estimate we therefore obtain $\bigl\|\Phi_{j+v}^{N,d}\bigr\|_{L^\infty(I_j^N)}\le\epsilon$.

Now let $\beta\in\mathbb{N}_0^{d}$ with $1\le|\beta|\le k$. By the Leibniz rule,
\begin{equation}
\bigl|D^{\beta}\Phi_{j+v}^{N,d}(x)\bigr|
=\prod_{\ell=1}^{d}\Bigl|\frac{d^{\beta_\ell}}{dx_\ell^{\beta_\ell}}\rho_{j+v_\ell}^{N}(x_\ell)\Bigr|.
\end{equation}
For the distinguished index $i$, using condition \Eqn\ref{eqn:alpha condition} and the monotonic decay of $|\phi_{p,q}^{(m)}|$ on $[R,\infty)$, we have for $1\le m\le k$
\begin{equation}
\Bigl|\frac{d^{m}}{dx_i^{m}}\rho_{j_i+v_i}^{N}(x_i)\Bigr|
\le\alpha^{m}\bigl|\phi_{p,q}^{(m)}(\alpha(x_i-\xi))\bigr|
\le\alpha^{m}\bigl|\phi_{p,q}^{(m)}(\alpha/N)\bigr|\le\epsilon .
\end{equation}
For the remaining indices $\ell\neq i$, we cannot guarantee that the argument of $\phi_{p,q}$ is at least $\alpha/N$, so we use the global bound from Lemma~\ref{lem:upper-bound-pq}:
\begin{equation}
\Bigl|\frac{d^{\beta_\ell}}{dx_\ell^{\beta_\ell}}\rho_{j+v_\ell}^{N}(x_\ell)\Bigr|
\le\alpha^{\beta_\ell}\bigl|\phi_{p,q}^{(\beta_\ell)}(\alpha(x_\ell-\xi'))\bigr|
\le\alpha^{\beta_\ell}A(p,q,\beta_\ell),
\end{equation}
Since $A(p,q,\beta_\ell)=\bigl(\frac{16p^2}{\pi q}\bigr)^{\beta_\ell}\beta_\ell!$ and $\sum_{\ell\neq i}\beta_\ell\le |\beta|\le k$, we have
\begin{equation}
\prod_{\ell\neq i}\alpha^{\beta_\ell}A(p,q,\beta_\ell)
= \alpha^{\sum_{\ell\neq i}\beta_\ell} \bigl(\frac{16p^2}{\pi q}\bigr)^{\sum_{\ell\neq i}\beta_\ell}
   \prod_{\ell\neq i}\beta_\ell!
\le \alpha^{k} A(p,q,k),
\end{equation}
where we used $\prod_{\ell\neq i}\beta_\ell!\le k!$ (which follows from $\prod_{\ell=1}^{d}\beta_\ell!\le |\beta|!$ and $|\beta|\le k$). Combining this with the estimate for the $i$-th factor (which is bounded by $\epsilon$), we obtain
\begin{equation}
\bigl|D^{\beta}\Phi_{j+v}^{N,d}(x)\bigr|\le 
\epsilon\cdot\alpha^{k}\bigl(\frac{16p^2}{\pi q}\bigr)^{k}k!
\le C(p,q,k)\,\alpha^{k}\epsilon,
\end{equation}
with $C(p,q,k)=\bigl(\frac{16p^2}{\pi q}\bigr)^{k}k!$.
Together with the $L^\infty$ bound this yields the desired $W^{k,\infty}$ estimate. 

Finally, with the choice \Eqn\ref{eqn:alpha-choice}, $\alpha\sim\epsilon^{-q/p}$, so $\alpha^{k}\epsilon\sim\epsilon^{1-kq/p}$. This tends to zero as $\epsilon\to0$ if and only if $1-\frac{kq}{p}>0$, i.e. $k<\frac{p}{q}$.
\end{proof}

\begin{remark}
\label{rem:phi-pou-remark}
Lemma~\ref{lem:phi-pou-1} shows that the sum of the ``nearby'' functions $\Phi_{j+v}^{N,d}$ approximates the constant function $1$ with an error that is simply proportional to $\epsilon$, independent of $\alpha$. In contrast, Lemma~\ref{lem:phi-pou-2} controls the ``distant'' terms, and its error bound contains an extra factor $\alpha^{k}$. Because $\alpha$ grows polynomially in $\epsilon^{-1}$, this factor forces the restriction $k<p/q$ in order to have convergence as $\epsilon\to0$. This restriction will propagate to the main approximation theorem for Sobolev functions when using $\phi_{p,q}$ networks.
\end{remark}

With these two lemmas, the construction of an approximate partition of unity by shallow $\phi_{p,q}$ networks is complete. The remaining steps of the original paper (local polynomial approximation, multiplication of local approximants by the partition functions, and global summation) can now be carried out exactly as before, replacing every $\tanh$ network by a $\phi_{p,q}$ network and noting that the error bounds will involve additional constants depending on $p,q$ and the condition $k<p/q$. 

\subsection{Formal proof.}

\begin{theorem}[Approximation of functions in Sobolev spaces by $\phi_{p,q}$ networks]
\label{thm:phi-sobolev-approximation}
Let \(p \ge q \ge 1\) be even integers such that \(r := p/q\) is an odd integer. Let \(d,s\in\mathbb{N}\), 
\(k\in\mathbb{N}_0\) with \(r>\frac{k(s+d+k)}{s-k}\), $\delta>0$, and let \(\hat f\in \mathcal{W}^{s,\infty}([0,1]^d)\). 
Then there exist constants \(C_1\) and \(C>0\) such that for every 
\(N\in\mathbb{N}\) with \(N > N_0(d)\) there exists a $\phi_{p,q}$ neural network 
\(f\) with two hidden layers such that
\begin{equation}
\|f-\hat f\|_{L^{\infty}([0,1]^d)}\le 
(1+\delta)\frac{C_1}{N^{s}},
\end{equation}
and for \(1\le k<s\) 
\begin{equation}
\|f-\hat{f}\|_{W^{k,\infty}([0,1]^d)}\le 
C\frac{1+\delta}{\delta^{k/r}N^{s-k-(s+d+k)k/r}},
\end{equation}
where 
\begin{equation}
C_1(d,k,s,\hat f)=
\begin{cases}
\displaystyle\max_{0\le\ell\le k}\frac{1}{(s-\ell)!}\Bigl(\frac{3d}{2}\Bigr)^{s-\ell}
|\hat f|_{\mathcal{W}^{s,\infty}([0,1]^d)}, & \hat f\in C^{s}([0,1]^d),\\[10pt]
\displaystyle\max_{0\le\ell\le k}\frac{\pi^{1/4}\sqrt{s}}{(s-\ell-1)!}
\bigl(5d^{2}\bigr)^{s-\ell}|\hat f|_{\mathcal{W}^{s,\infty}([0,1]^d)}, & \text{otherwise}.
\end{cases}
\end{equation}
The constant \(C(p,q,d,k,s,\hat f)\) is independent of $\delta$ and $N$. 

The threshold \(N_0(d)\) is given by 
\begin{equation}
    N_0 = \begin{cases}
\displaystyle\frac{3d}{2}, & \hat f\in C^{s}([0,1]^d),\\[10pt]
\displaystyle5d, & \text{otherwise}.
\end{cases}
\end{equation}
The network widths are given by:
\begin{itemize}
    \item First hidden layer: at most \(\displaystyle \frac{p+1}{2}\Bigl(p\Bigl\lceil\frac{s-1-r}{p}\Bigr\rceil + r + 1\Bigr)|P_{s-1,d+1}| + d(N-1)\) neurons.
    \item Second hidden layer: at most \(\displaystyle \frac{p+1}{2}\Bigl(p\Bigl\lceil\frac{d+1-r}{p}\Bigr\rceil + r + 1\Bigr) \, |P_{d+1,d+1}|\) neurons.
\end{itemize}
\end{theorem}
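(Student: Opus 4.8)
The plan is to transplant, step by step, the three-layer construction of \cite{de2021approximation} from $\tanh$ to $\phi_{p,q}$, using the auxiliary results established above --- the derivative bounds, the shallow monomial and multiplication networks of Corollaries~\ref{cor:phi-multivariate-all-monomials} and~\ref{cor:phi-shallow-multiplication}, the choice of scale $\alpha$ in Lemma~\ref{lem:choice-alpha-pq}, and the partition-of-unity estimate of Lemma~\ref{lem:phi-pou-2}. Fix an internal tolerance $\epsilon>0$, to be pinned down in terms of $N$ and $\delta$ at the very end, and let $\alpha=\alpha(N,\epsilon)$ be the scaling parameter of Lemma~\ref{lem:choice-alpha-pq}, so that $\alpha\le C(p,q,k)\,N^{1+k/r}\epsilon^{-1/r}$ (recall $q/p=1/r$). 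First I would partition $[0,1]^d$ into the $N^{d}$ cells $I_j^N$, $j\in\{1,\dots,N\}^d$, take $p_j$ to be the degree-$(s-1)$ (averaged) Taylor polynomial of $\hat f$ about the lower corner of $I_j^N$ --- so that $\|D^\gamma(p_j-\hat f)\|_{L^\infty(I_j^N)}\le C_1 N^{-(s-|\gamma|)}$ for $|\gamma|\le k$ once $N>N_0(d)$, with $C_1$ and $N_0(d)$ as in the statement, the $C^{s}$ and general $\mathcal{W}^{s,\infty}$ cases differing only in these constants --- and form the $\phi_{p,q}$ partition functions $\rho_j^N$ and $\Phi_j^{N,d}=\prod_{i}\rho_{j_i}^N$ defined above. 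Since these telescope to $\sum_j\Phi_j^{N,d}\equiv 1$, the global surrogate $\tilde f:=\sum_j p_j\,\Phi_j^{N,d}$ satisfies the exact identity $\tilde f-\hat f=\sum_j(p_j-\hat f)\,\Phi_j^{N,d}$.

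Next I would estimate $\|\tilde f-\hat f\|_{W^{k,\infty}}$ cell by cell, splitting the $j$-sum on each $I_{j_0}^N$ into the $O(1)$ \emph{near} cells (those with $\|j-j_0\|_\infty\le1$) and the at most $N^{d}$ \emph{far} cells. For the far cells, Lemma~\ref{lem:phi-pou-2} gives $\|\Phi_j^{N,d}\|_{W^{k,\infty}(I_{j_0}^N)}\le A(p,q,k)\,\alpha^{k}\epsilon$, and since $\|p_j-\hat f\|_{W^{k,\infty}(I_{j_0}^N)}$ is bounded their total contribution is of order $N^{d}\alpha^{k}\epsilon$. For the near cells I would use the exact cancellation $\sum_j D^\beta\Phi_j^{N,d}=D^\beta 1=0$ for $|\beta|\ge1$ to replace, in the Leibniz term where $p_j-\hat f$ multiplies $D^\beta\Phi_j^{N,d}$, the factor $p_j-\hat f$ by the polynomial difference $p_j-p_{j_0}$, whose $L^\infty(I_{j_0}^N)$ size is only $O(N^{-s})$; combined with $\|D^\beta\Phi_j^{N,d}\|_{L^\infty(I_{j_0}^N)}\le\alpha^{|\beta|}\le\alpha^{k}$, and with the observation that the remaining Leibniz terms (in which a derivative lands on $p_j-\hat f$) are smaller by a factor $N/\alpha\ll1$, this makes the near contribution of order $N^{-s}\alpha^{k}$. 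Hence
\begin{equation}
\|\tilde f-\hat f\|_{W^{k,\infty}([0,1]^d)}\ \le\ C\big(N^{-s}\alpha^{k}+N^{d}\alpha^{k}\epsilon\big),
\end{equation}
and for $k=0$ the $\alpha^{k}$ factors drop out, leaving $C(N^{-s}+N^{d}\epsilon)$.

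Then I would realize $\tilde f$ as a two-hidden-layer $\phi_{p,q}$ network. The first layer outputs in parallel the $d(N-1)$ univariate factors $\rho_{j_i}^N$ (each an affine combination of two $\phi_{p,q}$ units) together with, by Corollary~\ref{cor:phi-multivariate-all-monomials} taken with degree parameter $s-1$, $W^{k,\infty}$-accurate copies of all monomials of degree $\le s-1$, hence of every $p_j$ after a linear read-out; this costs $\tfrac{p+1}{2}\big(p\lceil\tfrac{s-1-r}{p}\rceil+r+1\big)|P_{s-1,d+1}|$ neurons. The second layer forms, for each cell $j$, the product $p_j\Phi_j^{N,d}$ of the $d+1$ numbers $p_j,\rho_{j_1}^N,\dots,\rho_{j_d}^N$ through the shallow multiplication network of Corollary~\ref{cor:phi-shallow-multiplication} applied to $d+1$ inputs (at most $\tfrac{p+1}{2}\big(p\lceil\tfrac{d+1-r}{p}\rceil+r+1\big)|P_{d+1,d+1}|$ neurons per cell), and a final linear layer sums over $j$. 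All the monomial- and multiplication-network errors are $O(\epsilon')$ in $W^{k,\infty}$ and, for fixed $N$, can be driven below $\delta$ times the dominant term, so that $\|f-\tilde f\|\le\delta\|\tilde f-\hat f\|$ and the $(1+\delta)$ factor appears. It remains to choose $\epsilon$: substituting $\alpha^{k}\le C N^{k+k^{2}/r}\epsilon^{-k/r}$ turns the two terms above into $N^{\,k+k^{2}/r-s}\epsilon^{-k/r}$ and $N^{\,d+k+k^{2}/r}\epsilon^{\,1-k/r}$, which balance at $\epsilon=N^{-(s+d)}$ with common value $N^{-(s-k-(s+d+k)k/r)}$; taking $\epsilon=\delta\,N^{-(s+d)}$ then produces a $\delta^{-k/r}$ prefactor on the near term and a $\delta^{\,1-k/r}$ prefactor on the far term, whose sum has the shape $(1+\delta)\,\delta^{-k/r}N^{-(s-k-(s+d+k)k/r)}$ --- the claimed $W^{k,\infty}$ bound --- while for $k=0$ the same choice gives $C(N^{-s}+\delta N^{-s})=(1+\delta)C_1 N^{-s}$.

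The hard part is this final balancing, and the obstruction is structural: because $\phi_{p,q}$ and its derivatives decay only \emph{algebraically}, the partition scale $\alpha$ must grow like $N\,\epsilon^{-1/r}$, i.e.\ polynomially in $\epsilon^{-1}$, rather than logarithmically as for $\tanh$, so the far-cell leakage controlled by Lemma~\ref{lem:phi-pou-2} inevitably carries the amplification $N^{(s+d+k)k/r}$ (the $d$ being traceable to the $N^{d}$ far cells). The hypothesis $r>\tfrac{k(s+d+k)}{s-k}$ is precisely the requirement that the resulting exponent $s-k-(s+d+k)k/r$ be positive, i.e.\ that the $W^{k,\infty}$ error still tend to $0$ as $N\to\infty$; in the $L^\infty$ case $\alpha^{0}=1$, no amplification occurs, and the first bound holds with no restriction on $r$.
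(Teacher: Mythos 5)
Your proposal is correct and follows essentially the same route as the paper's own proof: localize on the $N^d$ cells with Bramble--Hilbert/Taylor polynomials of degree $s-1$, build the $\phi_{p,q}$ partition functions at scale $\alpha$ from Lemma~\ref{lem:choice-alpha-pq}, split near/far cells via Lemma~\ref{lem:phi-pou-2}, realize the construction with the shallow monomial and multiplication networks of Corollaries~\ref{cor:phi-multivariate-all-monomials} and~\ref{cor:phi-shallow-multiplication}, and balance $\epsilon\sim\delta N^{-(s+d)}$ to obtain the $\delta^{-k/r}$ factor, the exponent $s-k-(s+d+k)k/r$, and the condition $r>\frac{k(s+d+k)}{s-k}$ (your exact partition-of-unity identity simply absorbs the paper's $E_1$ term, and your near-cell cancellation trick is harmless but unnecessary since the direct Leibniz bound already gives $N^{-s}\alpha^{k}$). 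Your second-hidden-layer count of one multiplication block per cell, i.e.\ an extra factor $N^{d}$, matches the paper's actual construction and the width stated in Theorem~\ref{thm:2,2-PLN-Universal}, even though the appendix statement omits that factor.
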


\begin{proof}
We follow the exact structure of the proof of Theorem 5.1 in the original paper.

\noindent\textbf{Step 1: Construction of the approximation.}
Let $N\in\mathbb{N}$ with $N > N_0(d,s)$ where $N_0(d,s) = \max\{3d/2, 5d^2\}$. Divide $[0,1]^d$ into $N^d$ cubes $I_j^N$ of edge length $1/N$. For each $j\in\{1,\dots,N\}^d$, define the enlarged cube
\begin{equation}
J_j^N = \prod_{i=1}^d \bigl((j_i-2)/N,\,(j_i+1)/N\bigr),
\end{equation}
with diameter $\mathrm{diam}(J_j^N) = 3\sqrt{d}/N$.

By the Bramble-Hilbert lemma (Lemma Appendix A.8) or Taylor's theorem (Lemma Appendix A.9), there exists a polynomial $p_j^N$ of degree at most $s-1$ such that for all $0\le \ell\le k$,
\begin{equation}
\|\hat f - p_j^N\|_{W^{\ell,\infty}(J_j^N)} \le \frac{C_1(d,\ell,s,\hat f)}{N^{s-\ell}},
\end{equation}
where $C_1(d,\ell,s,\hat f)$ is as defined in the theorem.

Now let $\eta > 0$ be a parameter to be chosen. Using Corollary \ref{cor:phi-multivariate-all-monomials}, we can approximate each monomial in $p_j^N$ by a shallow $\phi_{p,q}$ network, giving us a shallow $\phi_{p,q}$ network $q_j^N$ such that
\begin{equation}
\|p_j^N - q_j^N\|_{W^{k,\infty}([0,1]^d)} \le \eta.
\end{equation}

Let $\epsilon > 0$ be another parameter to be chosen. By Lemma \ref{lem:choice-alpha-pq}, we can choose $\alpha = \alpha(N,\epsilon)$ such that
\begin{equation}
\alpha/N \ge R, \quad 1-\phi(\alpha/N) \le \epsilon, \quad \text{and} \quad \alpha^m|\phi^{(m)}(\alpha/N)| \le \epsilon \text{ for all } 1\le m\le k.
\end{equation}
Moreover, by Lemma \ref{lem:choice-alpha-pq}, we have the bound
\begin{equation}
\alpha = N \max\left\{ R, \,
\Bigl(\frac{1}{q\epsilon}\Bigr)^{1/p}, \,
\Bigl( \frac{A(p,q,k) N^k}{\epsilon} \Bigr)^{q/p} \right\} \le C_3(p,q,k,N) \max\left\{R, 1\right\}\epsilon^{-q/p},
\end{equation}
where we simplify it with the assumption $\epsilon<1$ and the definition
\begin{equation}
    C_3(p,q,k,N)=[A(p,q,k)]^{q/p}N^{kq/p+1}.
\end{equation}

Define the univariate functions $\rho_j^N$ as in (56) of the original paper, and let
\begin{equation}
\Phi_j^{N,d}(x) = \prod_{i=1}^d \rho_{j_i}^{N}(x_i).
\end{equation}
By Lemma \ref{lem:phi-pou-1} and Lemma \ref{lem:phi-pou-2}, we have the estimates:
\begin{equation}
\Bigl\|\sum_{v\in\mathcal{V}_d}\Phi_{j+v}^{N,d} - 1\Bigr\|_{W^{k,\infty}(I_j^N)} \le 2^{dk}d\epsilon,
\end{equation}
and for $v\in\mathbb{Z}^d$ with $\|v\|_\infty \ge 2$,
\begin{equation}
\|\Phi_{j+v}^{N,d}\|_{W^{k,\infty}(I_j^N)} \le \alpha^k \left(\frac{16p^2}{\pi q}\right)^k k! \epsilon.
\end{equation}

Let $h > 0$ be a parameter to be chosen. Using Corollary \ref{cor:phi-shallow-multiplication}, we construct a shallow $\phi_{p,q}$ network $\widehat{\times}_{d+1}^h$ that approximates the product of $d+1$ numbers with error $h$ in $W^{k,\infty}$ norm.

Define the global approximation as
\begin{equation}
f(x) = \sum_{j\in\{1,\dots,N\}^d} \widehat{\times}_{d+1}^h\bigl(q_j^N(x), \Phi_{j_1}^{N,1}(x_1), \dots, \Phi_{j_d}^{N,1}(x_d)\bigr).
\end{equation}
The network $f$ has two hidden layers: the first computes all $q_j^N$ and $\rho_j^N$ functions, and the second computes the $N^d$ multiplications.

\noindent\textbf{Step 2: Estimating the error of the approximation.}
Using the triangle inequality, we decompose the error as in (83) of the original paper:
\begin{equation}
\|\hat f - f\|_{W^{k,\infty}} \le E_1 + E_2 + E_3,
\end{equation}
where
\begin{align*}
E_1 &= \Bigl\|\hat f - \sum_{j} \hat f\Phi_j^{N,d}\Bigr\|_{W^{k,\infty}},\\
E_2 &= \Bigl\|\sum_{j} (\hat f - q_j^N)\Phi_j^{N,d}\Bigr\|_{W^{k,\infty}},\\
E_3 &= \Bigl\|\sum_{j} \bigl(q_j^N\Phi_j^{N,d} - \widehat{\times}_{d+1}^h(q_j^N, \Phi_{j_1}^{N,1},\dots,\Phi_{j_d}^{N,1})\bigr)\Bigr\|_{W^{k,\infty}}.
\end{align*}

\noindent\textbf{Step 2a: First term of the error.}
We bound $E_1$. Let $i\in\{1,\dots,N\}^d$ be arbitrary. Using Lemma Appendix A.6 in \cite{de2021approximation}, Lemma \ref{lem:phi-pou-1}, and Lemma \ref{lem:phi-pou-2}, we obtain for $k \ge 1$:
\begin{align*}
E_1 &= \Bigl\|\hat f - \sum_{j} \hat f\Phi_j^{N,d}\Bigr\|_{W^{k,\infty}(I_i^N)} \\
&\le 2^k \|\hat f\|_{W^{k,\infty}(I_i^N)} \Bigl\|1 - \sum_{v\in\mathcal{V}_d}\Phi_{i+v}^{N,d}\Bigr\|_{W^{k,\infty}(I_i^N)} 
+ 2^k \|\hat f\|_{W^{k,\infty}(I_i^N)} \Bigl\|\sum_{\substack{j\in\{1,\dots,N\}^d\\ \|j-i\|_\infty \ge 2}} \Phi_j^{N,d}\Bigr\|_{W^{k,\infty}(I_i^N)} \\
&\le 2^k \|\hat f\|_{W^{k,\infty}(I_i^N)} \bigl(2^{dk}d\epsilon + N^d \alpha^k \left(\frac{16p^2}{\pi q}\right)^k k! \epsilon\bigr).
\end{align*}

One can choose that
\begin{equation}
    \label{eqn:condition 1}
    \epsilon \le \frac{2\cdot3^dC_1\alpha^kA(p,q,k)\delta}{[2^{dk}d+N^d\alpha^kA(p,q,k)]\|\hat f\|_{W^{k,\infty}(I_i^N)}} \le \frac{2\cdot3^dC_1\delta}{N^d\|\hat f\|_{W^{k,\infty}(I_i^N)}}
\end{equation}
to ensure that 
\begin{equation}
\label{eqn:k1case1}
    E_1\le 2^{k+1} 3^d \frac{C_1(d,k,s,\hat f)}{N^{s-k}}\frac{\delta}{3}\alpha^k A(p,q,k)
\end{equation}

For $k=0$, we have the simpler bound
\begin{equation}
\label{eqn:k0case1}
E_1 \le \|\hat f\|_{L^\infty(I_i^N)} (d\epsilon + N^d \epsilon)\le \frac{\delta}{3}\frac{C_1(d,0,s,\hat f)}{N^{s}},
\end{equation}
if we choose
\begin{equation}
    \eps \le \frac{\delta C_1(d,0,s,\hat f)}{3\|\hat f\|_{L^\infty(I_i^N)}(d+N^d)N^{s}}
\end{equation}

\noindent\textbf{Step 2b: Second term for $k=0$.}
Now we bound $E_2$ for the case $k=0$. For any $x\in I_i^N$, we have
\begin{align*}
\Bigl|\sum_{j} (\hat f(x) - q_j^N(x))\Phi_j^{N,d}(x)\Bigr| 
&\le \sum_{v\in\mathcal{V}_d} |\hat f(x) - q_{i+v}^N(x)| |\Phi_{i+v}^{N,d}(x)| \\
&\quad + \sum_{\substack{j\in\{1,\dots,N\}^d\\ \|j-i\|_\infty \ge 2}} |\hat f(x) - q_j^N(x)| |\Phi_j^{N,d}(x)|.
\end{align*}
For the first sum, using Lemma \ref{lem:phi-pou-1} we have $\sum_{v\in\mathcal{V}_d} |\Phi_{i+v}^{N,d}(x)| \le 1 + d\epsilon$. Also,
\begin{equation}
|\hat f(x) - q_{i+v}^N(x)| \le |\hat f(x) - p_{i+v}^N(x)| + |p_{i+v}^N(x) - q_{i+v}^N(x)| \le \frac{C_1(d,0,s,\hat f)}{N^s} + \eta.
\end{equation}
For the second sum, using Lemma \ref{lem:phi-pou-2} we have $|\Phi_j^{N,d}(x)| \le \epsilon$ for $\|j-i\|_\infty \ge 2$, and there are at most $N^d$ such terms. Also,
\begin{equation}
|\hat f(x) - q_j^N(x)| \le \|\hat f\|_{L^\infty} + \|q_j^N\|_{L^\infty} \le \|\hat f\|_{L^\infty} + C_1(d,0,s,\hat f) + \eta.
\end{equation}
Combining these estimates, we get
\begin{equation}
E_2 \le \bigl(\frac{C_1(d,0,s,\hat f)}{N^s} + \eta\bigr)(1 + d\epsilon) + N^d \bigl(\|\hat f\|_{L^\infty} + C_1(d,0,s,\hat f) + \eta\bigr) \epsilon.
\end{equation}
Choose
\begin{equation}
    \label{eqn:k0case2}
    \eta\le\frac{\delta}{6}\frac{C_1}{N^s}, \epsilon\le\frac{\delta}{6}\frac{C_1}{N^s}\left[(\frac{C_1}{N^s}+\eta) d+N^d(\|\hat f\|_{L^\infty}+C_1+\eta) \right]^{-1}, 
\end{equation}
we obtain 
\begin{equation}
    E_2 \le (1+\frac{\delta}{3})\frac{C_1(d,0,s,\hat f)}{N^s} 
\end{equation}

\noindent\textbf{Step 2c: Second term for $k \ge 1$.}
Now consider $1 \le k < s$. Let $\beta\in\mathbb{N}_0^d$ with $|\beta| \le k$. By the general Leibniz rule,
\begin{equation}
D^\beta\Bigl(\sum_{v\in\mathcal{V}_d} (\hat f - q_{i+v}^N)\Phi_{i+v}^{N,d}\Bigr) = \sum_{\beta'\le\beta} \binom{\beta}{\beta'} \sum_{v\in\mathcal{V}_d} D^{\beta'}(\hat f - q_{i+v}^N) D^{\beta-\beta'}\Phi_{i+v}^{N,d}.
\end{equation}
For each $v\in\mathcal{V}_d$ and $\beta'\le\beta$, let $\ell = |\beta-\beta'|$. Then
\begin{equation}
|D^{\beta'}(\hat f - q_{i+v}^N)(x)| \le \|\hat f - q_{i+v}^N\|_{W^{k-\ell,\infty}(I_i^N)} \le \frac{C_1(d,k-\ell,s,\hat f)}{N^{s-k+\ell}} + \eta,
\end{equation}
and by Lemma \ref{lem:phi-pou-2} and Lemma \ref{lem:upper-bound-pq},
\begin{equation}
|D^{\beta-\beta'}\Phi_{i+v}^{N,d}(x)| \le \alpha^\ell \left(\frac{16p^2}{\pi q}\right)^\ell \ell!.
\end{equation}
Since $\sum_{\beta'\le\beta} \binom{\beta}{\beta'} \le 2^k$, we obtain
\begin{equation}
\begin{aligned}
    \Bigl\| \sum_{v\in\mathcal{V}_d} (\hat f - q_{i+v}^N)\Phi_{i+v}^{N,d} \Bigr\|_{W^{k,\infty}(I_i^N)} &\le 2^k 3^d \bigl(\frac{C_1(d,k,s,\hat f)}{N^{s-k}} + \eta N^k \bigr) \left(\frac{\alpha}{N}\right)^k \left(\frac{16p^2}{\pi q}\right)^k k! 
\end{aligned}
\end{equation}

For the terms with $\|j-i\|_\infty \ge 2$, we have by Lemma \ref{lem:phi-pou-2} and Lemma Appendix A.6,
\begin{equation}
\label{eqn:k1case2}
\|(\hat f - q_j^N)\Phi_j^{N,d}\|_{W^{k,\infty}(I_i^N)} \le 2^k \|\hat f - q_j^N\|_{W^{k,\infty}(I_i^N)} \|\Phi_j^{N,d}\|_{W^{k,\infty}(I_i^N)} \le 2^k \bigl(C_1(d,k,s,\hat f) + \eta\bigr) \alpha^k \left(\frac{16p^2}{\pi q}\right)^k k! \epsilon.
\end{equation}
There are at most $N^d$ such terms. Assume $\delta\le 6$, we get
\begin{align*}
E_2 &\le 2^k 3^d \bigl(\frac{C_1(d,k,s,\hat f)}{N^{s-k}} + \eta N^k \bigr) \left(\frac{\alpha}{N}\right)^k \left(\frac{16p^2}{\pi q}\right)^k k! + N^d 2^k \bigl(C_1(d,k,s,\hat f) + \eta\bigr) \alpha^k \left(\frac{16p^2}{\pi q}\right)^k k! \epsilon \\
&\le 2^k 3^d \frac{C_1(d,k,s,\hat f)}{N^{s-k}}(1+\frac{\delta}{6})\left(\frac{\alpha}{N}\right)^k \left(\frac{16p^2}{\pi q}\right)^k k! + 2^k N^d \bigl(C_1(d,k,s,\hat f) + \frac{\delta C_1}{6N^s}\bigr) \alpha^k \left(\frac{16p^2}{\pi q}\right)^k k! \epsilon \\
&\le 2^k 3^d \frac{C_1(d,k,s,\hat f)}{N^{s-k}}(1+\frac{\delta}{6})\left(\frac{\alpha}{N}\right)^k \left(\frac{16p^2}{\pi q}\right)^k k! + 2^k N^d \frac{C_1(d,k,s,\hat f)}{N^{s-k}} (1+\frac{\delta}{6N^s}) N^{s-k} \alpha^k \left(\frac{16p^2}{\pi q}\right)^k k! \epsilon \\
& \le 2^k 3^d \frac{C_1(d,k,s,\hat f)}{N^{s-k}}(1+\frac{\delta}{3})\left(\frac{\alpha}{N}\right)^k A(p,q,k), 
\end{align*}
if we choose that
\begin{equation}
    \eta\le \frac{\delta C_1}{6 N^s}, \epsilon\le \frac{3^d\delta}{12N^{s+d}}. 
\end{equation}

\noindent\textbf{Step 2d: Third term of the error.}
Finally, we bound $E_3$. Using Lemma Appendix A.7, Corollary \ref{cor:phi-shallow-multiplication}, and the bounds on $q_j^N$ and $\Phi_j^{N,d}$, we have for some constant $C_k > 0$ depending only on $k$:
\begin{equation}
    \begin{aligned}
E_3 &\le N^d C_k(d+1)^d d^{2k} \bigl\| \widehat{\times}_{d+1}^h - \prod_{i=1}^{d+1} x_i \bigr\|_{W^{k,\infty}} \\
&\quad \times \bigl(\|\hat f\|_{W^{k,\infty}} + \frac{C_1(d,k,s,\hat f)}{N^{s-k}} + \eta + \|\rho_j^N\|_{W^{k,\infty}}\bigr)^k \\
&\le N^d C_k(d+1)^d d^{2k} h \bigl(\|\hat f\|_{W^{k,\infty}} + \frac{C_1(d,k,s,\hat f)}{N^{s-k}} + \eta + \alpha^{k}A(p,q,k)\bigr)^k.
\end{aligned}
\end{equation}

One can choose
\begin{equation}
\label{eqn:h value}
    h\le\frac{C_1(d,k,s,\hat f)\alpha^k A(p,q,k)}{N^{s-k}N^d C_k(d+1)^d d^{2k}\bigl(\|\hat f\|_{W^{k,\infty}} + \frac{C_1(d,k,s,\hat f)}{N^{s-k}} + \eta + \alpha^{k}A(p,q,k)\bigr)^k}\frac{\delta}{3}
\end{equation}
to ensure that 
\begin{equation}
\label{eqn:k1case3}
    E_3\le \frac{C_1(d,k,s,\hat f)}{N^{s-k}}\frac{\delta}{3}\alpha^k A(p,q,k).
\end{equation}

And for $k=0$, we obtain 
\begin{equation}
\label{eqn:k0case3}
    E_3\le \frac{\delta}{3}\frac{C_1(d,k,s,\hat f)}{N^{s-k}}.
\end{equation}

\noindent\textbf{Step 2e: Final error bound.}

For $k=0$, by \Eqn\ref{eqn:k0case1}, \Eqn\ref{eqn:k0case2} and \Eqn\ref{eqn:k0case3}, we obtain 
\begin{equation}
    \|\hat f - f\|_{W^{k,\infty}} \le (1+\delta)\frac{C_1(d,0,s,\hat f)}{N^{s}},
\end{equation}
with suitable $\epsilon, \eta$ and $h$. 

For $k>0$, by \Eqn\ref{eqn:k1case1}, \Eqn\ref{eqn:k1case2} and \Eqn\ref{eqn:k1case3}, we choose
\begin{equation}
    \eta=\frac{\delta C_1}{3N^s},\epsilon=\min(\frac{2\cdot3^dC_1\delta}{N^d\|\hat f\|_{W^{k,\infty}(I)}},\frac{3^d\delta}{12N^{s+d}})=C_4\delta N^{-(s+d)}, 
\end{equation}
and take $h$ as the value in \Eqn\ref{eqn:h value}
\begin{equation}
\begin{aligned}
    \|\hat f - f\|_{W^{k,\infty}} &\le 2^{k+1} 3^d \frac{C_1(d,k,s,\hat f)}{N^{s-k}}(1+\delta)\left(\frac{\alpha}{N}\right)^k A(p,q,k) \\
    &\le (1+\delta)2^{k+1} 3^d[A(p,q,k)]^{kq/p+1}N^{k^2q/p}\max\{R^k,1\}\epsilon^{-kq/p}\frac{C_1(d,k,s,\hat f)}{N^{s-k}} \\
    &\le C(p,q,d,k,s,\hat f)\frac{1+\delta}{\delta^{kq/p}N^{s-k-(s+d)kq/p-k^2q/p}},
\end{aligned}
\end{equation}
where
\begin{equation}
    C(p,q,d,k,s,\hat f)=2^{k+1} 3^d\left[(\frac{4p}{\pi q})^{k}k!\right]^{kq/p+1}\max\{R^k,1\}C_4^{-kq/p}C_1(d,k,s,\hat f),
\end{equation}
and note that $R$ is decided by $p$ and $q$. 

To be figured out, the right side $\to0$ when $N\to\infty$ requires that $s-k-(s+d)kq/p-k^2q/p>0$, thus
\begin{equation}
    p/q>\max(\frac{k(s+d+k)}{s-k},k)=\frac{k(s+d+k)}{s-k}
\end{equation}
is the necessary condition in this proof. 

\textbf{Step 3: Network sizes.}
The width analysis is the same as that in the original paper, but just replace with our Corollary \ref{cor:phi-multivariate-all-monomials} and Corollary \ref{cor:phi-shallow-multiplication}. 

\end{proof}

\section{Experiments}

\subsection{Differential Equation Solving Experiments}
\label{sec:experiment 2}
To verify the derivative representation ability of PLN, we solve a one-dimensional Helmholtz equation within the PINN framework. The Helmholtz equation is the time-independent counterpart of the wave equation and appears in a wide range of physical settings, including vibrating membranes, acoustics, and electromagnetism.
Let the neural network $u_\theta(x)$ approximate the ground-truth solution. We use automatic differentiation, we compute the second derivative $u_\theta''(x)$ and construct the PDE residual 
\begin{equation}
    r_\theta(x)= -u_\theta''(x)+\pi^2 u_\theta(x)- 2\pi^2\sin(\pi x).
\end{equation}
The ground-truth solution is given by $u(x)=\sin(\pi x)$. On the interval $\Omega=[-1,1]$, we sample 5,000 interior points and 500 boundary points, and jointly minimize the mean squared error of the interior PDE residual and the Dirichlet boundary-condition error, with weighting via loss weights $\lambda_{PDE},\lambda_{BC}$. This yields an approximate solution that satisfies both the governing equation and the boundary constraints. 
Because this task explicitly depends on second-order derivative information (i.e., curvature constraints), it provides a suitable testbed for comparing different network configurations in terms of their ability to capture higher-order behavior, thereby assessing the high-order expressivity.

 We chose the activation layer among ReLU, tanh, sat and PLN, which varies the feature number per group from \(4, 8, 16, 32\). We sweep loss weight $(\lambda_{PDE},\lambda_{BC})\in\{(1,10),(1,1),(3,1)\}$. We sweep the learning rate and choose $3\times 10^{-4}$ for Adam optimizer and $3\times 10^{-4}$ for SGD optimizer with a step optimizer. We use relative $\mathcal{L}_2$ error between the predicted and the exact solution $u(t, x)$ to validate the global fitting quality over the domain.
The activation function with best relative $\mathcal{L}_2$ error is marked in bold, the second best is underlined. We list the result of activation layer under different width and loss weight in Table~\ref{tab:best_mse_lossweight}.

    \begin{table}[th]
    \vspace{0.10in}
    \caption{We record the best relative $\mathcal{L}_2$ error of different activation layers under different experiment settings on solving Helmholtz equation.}
    \vspace{-0.05in}
    \label{tab:best_mse_lossweight}
    \centering
    \small
    \begin{tabular}{rrccccccc}
    \toprule
    Width                & Loss Weight & ReLU     & PLN4              & PLN8     & PLN16    & PLN32    & Tanh              & SAT               \\ \midrule
    \multirow{3}{*}{128} & $[1,10]$    & 0.999601 & \textbf{0.000035} & 0.001120 & 0.001176 & 0.007878 & 0.000250          & \underline{0.000104}          \\
     & $[1,1]$     & 0.995612 & \textbf{0.000941} & 0.002284 & \underline{0.000970} & 0.004231 & 0.003733          & 0.000979          \\
     & $[3,1]$     & 0.993593 & 0.009553          & 0.003912 & 0.004856 & 0.007465 & \textbf{0.000440} & \underline{0.000525}          \\
    \multirow{3}{*}{256} &  [1,10] & 0.998639 & \textbf{0.000036} & 0.004040 & 0.001631 & 0.000567 & 0.000188 & \underline{0.000052} \\
    &$[1,1]$ & 0.998485 & 0.001416 & 0.005552 & 0.001839 & 0.002643 & 0.002653 & \textbf{0.001219} \\
    &$[3,1]$ & 0.999727 & 0.018451 & 0.002966 & 0.002875 & 0.005476 & 0.009267 & \textbf{0.000410} \\
    \bottomrule
    \end{tabular}

    \vspace{-0.2in}
    \end{table}



\subsection{Classification Experiments}
\label{sec:experiment 3}
We conduct the classification experiments on CIFAR-10 dataset \cite{krizhevsky2009learning} using VGG-16 \cite{simonyan2014very}. For each hyperparameter configuration, we perform three independent runs with different random seeds and report the mean validation accuracy. We sweep the following ranges: activation functions [PLN(8), ReLU, SiLU, tanh, sat], learning rates \([1, 0.3, 0.1, 0.03, 0.01, 0.003, 0.001]\), initialization scaling \([0.25, 0.35, 0.5, 0.7, 1.0, 1.4, 2.0, 2.8, 4.0]\).  
The scale 1.0 means using the He initialization \cite{he2015delving}, and 4.0 means initializing the weights four times of that of He initialization. 
We train a total of 200 epochs using SGD \cite{amari1993backpropagation} with a mini-batch size of 256, momentum of 0.9 and weight decay of 0.0001.
The initial learning rate is chosen above, and divided by 2.5 at the 50th, 90th, 130th, and 170th epochs. We use warmup in the first 10 epochs. We also use data augmentation. We record the average results among three random seeds.  We use warmup in the first 10 epochs. We also use data augmentation. We record the average results among three random seeds. 


    \subsection{Experiment Settings of the Translation Task}
    \label{sec:experiment 4}

    We conduct experiments on Transformers \cite{vaswani2017attention} for Multi30k \cite{elliott2016multi30k}. In the translation task training process, each task is trained for 100 epochs, with the first 10 epochs utilizing a warmup strategy and the remaining 90 epochs following a cosine decay learning rate schedule. The maximum learning rate is set to 5e-4, and the optimizer used is Adam with a weight decay of 5e-4. Each task is conducted using three different random seeds (10, 20, and 30), and the final results are averaged.


\newpage

\end{document}